\newtheorem{theorem}{Theorem}[section]
\newtheorem{proposition}{Proposition}[section]
\newtheorem{example}{Example}[section]
\newtheorem{definition}{Definition}[section]
\newtheorem{lemma}{Lemma}[section]
\newtheorem{corollary}{Corollary}[section]
\newtheorem{theoremA}{Theorem}[section]
\newtheorem{propositionA}{Proposition}[section]
\newtheorem{lemmaA}{Lemma}[section]
\newtheorem{corollaryA}{Corollary}[section]
\newcommand{\Body}{\textit{bd}}
\newcommand{\Pos}{\textit{\Body{$^+$}}} 
\newcommand{\Neg}{\textit{\Body{$^-$}}}
\newcommand{\Head}{\textit{hd}}
\newcommand{\Not}{\textit{not \,}}
\newcommand{\rto}{\rightarrow}
\newcommand{\lto}{\leftarrow}
\newcommand{\Rto}{\Rightarrow}
\newcommand{\LRto}{\Leftrightarrow}
\newcommand{\ILPSM}{\textsc{ilpsm}}
\newcommand{\ILPSMmin}{\textsc{ilpsm}min}
\newcommand{\ILSMmin}{\textsc{ilsm}min}
\newcommand{\ILASP}{\textsc{ilasp}}
\newcommand{\size}[1]{\mathit{size(#1)}}
\newcommand{\XFOLD}{\textsc{xfold}}
\newcommand{\PossILP}{\textsc{P}oss\textsc{ILP}}
\newcommand{\tuple}[1]{{\langle{#1}\rangle}}
\newcommand\ie{{\it i.e.}}
\newcommand\wrt{{\it w.r.t.\ }}
	\newif\ifcomments\commentstrue
	\newif\ifcolors\colorstrue
	\definecolor{darkred}{rgb}{0.7,0,0}
	\newcommand{\ysm}[2]{{}{\ysa{#2}}}   
	\newcommand{\ysa}[1]{{\leavevmode\color{darkred}#1}} %%% add #1 in \color
	\newcommand{\ysd}[1]{}  %%% delete #1   
	\newcommand{\ryd}[1]{}  %%% delete #1  
    \newcommand{\hbl}[1]{{\leavevmode\color{darkred}#1}}
	\newcommand{\ysm}[2]{#2}
	\newcommand{\ysa}[1]{#1}
	\newcommand{\ysd}[1]{}  
    \newcommand{\hbl}[1]{#1}
\begin{document}

% 设置全局的itemindent
%\setlist[enumerate]{itemindent=2em}
%\setlist[itemize]{itemindent=3.5em}
\setlist[enumerate]{leftmargin=*}
\setlist[itemize]{leftmargin=*}

\lefttitle{Hongbo Hu et al.}

\jnlPage{xx}{xx}
\jnlDoiYr{2024}
\doival{10.1017/xxxxx}

\title[Inductive Learning for Possibilistic Logic Programs Under Stable Models]{Inductive Learning for Possibilistic Logic Programs Under Stable Models
%\thanks{This work is partially supported by the National Natural Science Foundation of P. R. China under grants 62376066 and 61976065.  Yisong’s work was also partially supported by Guizhou Science Support Project (2022-259). Yi’s work is also partially supported by the Natural Science Foundation of Chongqing, China  (Grant No. CSTB2024NSCQ-LZX0034).}
%\thanks{This work is partially supported by the National Natural Science Foundation of P. R. China under grants 62376066 and 61976065, the  Natural Science Foundation of Chongqing, China (Grant No. CSTB2024NSCQ-LZX0034), and the Science and Technology Research Program of Chongqing Municipal Education Commission (Grant No. KJZD-M202501304).  Yisong’s work was also partially supported by Guizhou Science Support Project (2022-259). }
}

\begin{authgrp}
\author{\sn{Hongbo} \gn{Hu}}
\affiliation{Multi-dimensional data perception and intelligent recognition Chongqing Engineering Research Center, Chongqing University of Arts and Sciences, Chongqing, 402160, China}
\affiliation{College of Computer Science and Technology, Guizhou University, Guiyang, 550025, China}
\author{\sn{Yisong} \gn{Wang}}
\affiliation{State Key Laboratory of Public Big Data; Key Laboratory of Advanced Medical Imaging and Intelligent Computing of Guizhou Province; College of Computer Science and Technology; Institute for Artificial Intelligence, Guizhou University, Guiyang, 550025, China}
%\affiliation{Department of Computer Science and Technology, Guizhou University, Guiyang, China}
%\affiliation{Institute for Artificial Intelligence, Guizhou University, Guiyang, China}
%\affiliation{Key Laboratory of Advanced Medical Imaging and Intelligent Computing of Guizhou Province}
\email{yswang@gzu.edu.cn}
\author{\sn{Yi} \gn{Huang}}
\affiliation{Multi-dimensional data perception and intelligent recognition Chongqing Engineering Research Center, Chongqing University of Arts and Sciences, Chongqing, 402160, China}
\author{\sn{Kewen} \gn{Wang}}
\affiliation{School of Information and Communication Technology, Griffith University, QLD 4111, Australia}
\email{k.wang@griffith.edu.au}
\end{authgrp}

\history{\sub{xx xx xxxx;} \rev{xx xx xxxx;} \acc{xx xx xxxx}}

\maketitle

\begin{abstract}
Possibilistic logic programs (poss-programs) under stable models are a major variant of answer set programming (ASP).
While its semantics (possibilistic stable models) and properties have been well investigated, the problem of inductive reasoning has not been investigated yet.  
This paper presents an approach to extracting poss-programs from a background program and  examples (parts of intended possibilistic stable models). To this end, the notion of induction tasks is first formally defined, its properties are investigated and two algorithms {\ILPSM} and {\ILPSMmin} for computing induction solutions are presented. An implementation of {\ILPSMmin} is also provided and experimental results show that when inputs are ordinary logic programs, the prototype outperforms a major inductive learning system for normal logic programs \hbl{from stable models} on the datasets that are randomly generated. This paper is under consideration in Theory and Practice of Logic Programming (TPLP).
\end{abstract}

\begin{keywords}
stable models, possibilistic logic programs, inductive logic programming
\end{keywords}

\section{Introduction}
%Logic programming under stable models (also known as answer set programming or ASP) is one of the major formalisms for representing incomplete information and nonmonotonic reasoning~\citep{GelfondLifschitz88}. Several efficient ASP systems are available and have been applied in various practical applications such as natural language processing, multi-agent systems and bioinformatics~\citep{DBLP:journals/ml/MuggletonRPBFIS12,DBLP:journals/cacm/GulwaniHKMSZ15,DBLP:journals/ml/CropperDEM22}. 
Inductive logic programming plays an important role in knowledge discovering, which has been applied in various practical applications such as natural language processing, multi-agent systems and bioinformatics~\citep{DBLP:journals/ml/MuggletonRPBFIS12,DBLP:journals/cacm/GulwaniHKMSZ15,DBLP:journals/ml/CropperDEM22}. Logic programming under stable models (also known as answer set programming or ASP in short) is one of the major formalisms for representing incomplete knowledge and nonmonotonic reasoning~\citep{GelfondLifschitz88,Brewka:CACM:2011,Erdem:AIM:2016}. 
As the semantics of standard stable models is unable to handle priority and uncertain information, various extensions of ASP have been proposed in the literature, including prioritized logic programming~\citep{SchaubW01,baral02a} and probabilistic logic programs~\citep{RaedtK15}\ysa{, $\textsc{lp}^{\textsc{mln}}$  ~\citep{DBLP:conf/kr/LeeW16} among others.}

Another major extension of ASP are the possibilistic logic programs~\citep{possSM/Nicolas/2005, nicolas2006possibilistic, dubois2021possibility}, in which each rule is assigned a weight (also called necessity). Possibility theory has been applied in several areas ~\citep{dubois2021possibility} such as belief revision, information fusion, and preference modelling. For instance, the statement `if a person is a resident in New York, then they are a USA citizen with a possibility of $0.7$' can be conveniently expressed as a possibilistic rule below, which is a pair of a rule and a number.
\[
(citizenUSA \lto residentNY, 0.7).
\]
The semantics of possibilistic logic programs is defined by an extension of stable models called \emph{possibilistic stable models} or \emph{poss-stable models}~\citep{possSM/Nicolas/2005, nicolas2006possibilistic} (see Section 2 for details). The possibilistic extension of ASP is different from probabilistic ones, as ~\citet{zadeh1999computing} has commented, if our focus is on the meaning of information rather than with its measure, the proper framework for information analysis is possibilistic rather than probabilistic in nature. We are not going to discuss this statement in detail but point out that possibilistic information is more on the side of representing the priority of formulas and rules. Moreover, as probability axioms are not enforced in possibilistic reasoning, it is easier for the user to manage `possibilistic' information than `probabilistic' information~\citep{dubois2000possibility}.

Because the poss-stable models can deal with non-monotonicity and uncertainty simultaneously, they can be applied to reasoning about uncertain epistemic beliefs and possibilistic dynamic systems.  
In addition to analysing the epistemic belief of a single agent, poss-stable models can be applied to reasoning about trust and belief among autonomous agents via argumentation~\citep{maia2016reasoning}, 
or to reasoning about possibilities across multiple information sources via a possibilistic multi-context system\footnote{
In general, multi-context systems differ from
multi-agent systems in that, unlike an agent, a context is not autonomous while there is information ﬂow between contexts.}~\citep{jin2012possibilistic}.
Besides, a poss-NLP can also represent a dynamic system~\citep{hu2025learning} whose dynamic characterization is depicted via the possibilistic interpretation transitions. 
Since an early study~\citep{inoue2011logic} theoretically revealed a strong mathematical relationship between the attractors/steady states of Boolean networks (BNs for short)  and the stable models, some ASP-based methods~\citep{mushthofa2014asp, khaled2023using} have been developed to find attractors in Boolean networks such as Genetic Regulatory Networks (GRNs for short).
It follows that the poss-stable models can correspond to the steady states of the dynamic system.

To our best knowledge, the problem of inductive reasoning with possibilistic logic programs has not been investigated in the literature. Informally, given a set of positive examples and a set of negative examples, the task is to induce a possibilistic logic program that satisfies these examples. 

Let us consider the following example that is adapted from Example 1 and Example 8 in Nicolas et al. (\citeyear{nicolas2006possibilistic}), which shows how possibilistic logic programs under poss-stable model semantics are used for representing quantitative priority information.
% the usefulness of poss-stable models. 

\hbl{
\begin{example}\label{exam:PAS}
    %Assume \ysa{we know} that an expert system on clinical medicine \ysm{currently comprises of the following pieces of knowledge}{contains the following knowledge:}  
    Assume that a clinical expert system contains the following knowledge:
    \begin{itemize}[label=-]
        %\item A pregnant woman suffers from vomiting.  
        \item If medicine A is taken, the possibility of relieving the vomiting is $0.7$; if medicine B is taken, the possibility of relieving the vomiting is $0.6$. 
        \item A physician prescribes medicine B if a vomiting patient has not taken medicine A. 
        \item If medicine A is taken, pregnancy causes malnutrition with a possibility of $0.7$; if medicine B is taken, pregnancy causes malnutrition with a possibility of $0.1$. 
    \end{itemize}
    
    This set of background (medical) knowledge can be expressed as the possibilistic logic program $\overline{P_{med}}$ below (under possibilistic stable models, see Section~\ref{sec:preliminary}).
    
    \[
        \overline{P_{med}} = \left \{ 
            \begin{array}{c}
                    %(pregnancy \lto , 1), (vomiting \lto , 1), \\
                    (\mathit{relief} \lto vomiting, medA, 0.7), \\
                    (\mathit{relief} \lto vomiting, medB, 0.6), \\ 
                    (medB \lto vomiting, \Not medA, 1), \\
                    (malnutrition \lto medA, pregnancy, 0.7), \\ 
                    (malnutrition \lto medB, pregnancy, 0.1)
            \end{array}	
        \right\}
	\]
\ysa{Intuitively, the first rule says that if someone is vomiting and she takes the medicine A then her vomiting symptom will be relieved with the possible degree 0.7. Similarly, the second rule says that the medicine B can relieve the vomiting symptom with possible degree 0.6. These possible degrees are given by experts in this field.
}

%However, such a knowledge base is usually incomplete or unobservable and it can be enriched by learning new rules from some relational datasets.    
%Latest technologies in natural language processing allow to extract large volume of relational data from various medical documents. 
\ysa{Suppose that ``A woman is definitely in pregnancy and she suffers from vomiting'', which can be presented as the two rules in $\overline{P_{fact}}$.  
\[\overline{P_{fact}} = \{ (pregnancy \lto , 1),\quad (vomiting \lto , 1) \}\]
%and we have the following observations:  
The knowledge base expressed as program $\overline{B_{med}} = \overline{P_{fact}} \cup \overline{P_{med}}$ can be expanded further by learning new rules from new observations (examples). 

Let us assume that we have the following three examples:
\begin{itemize}
    \item[$\overline{A_1}$] $\{ (pregnancy,1), (vomiting,1), (medA,1), (\mathit{relief},0.7), (malnutrition,0.7) \}$,
    \item[$\overline{A_2}$] $\{ (pregnancy,1), (vomiting,1), (medB,1), (\mathit{relief},0.6), (malnutrition,0.1) \}$, 
    \item[$\overline{A_3}$] $\{ (pregnancy,1), (vomiting,1), (medA,0.7), (\mathit{relief},0.7) \}$
\end{itemize}
where $\overline{A_1}$ and $\overline{A_2}$ are positive examples (possibilistic stable models), while  $\overline{A_3}$ is a negative example. 
$\overline{{A_1}}$ is not a possibilistic stable model of the current $\overline{B_{med}}$, nor is $\overline{A_2}$. 
This means that some knowledge of the expert system is missing from the complete $\overline{B_{med}}$. 
What are the missing rules?
%Please note that for the possibilistic logic program $\overline{B_{med}} = \overline{P_{fact}} \cup \overline{P_{med}}$, \ysd{neither} $\overline{{A_1}}$ is a possibilistic  stable model, neither is $\overline{A_2}$. 
%It means that some knowledge of the expert system is absent in $\overline{B_{med}}$. What is the absent knowledge? 

For instance, if we \ysm{insert}{add} another rule
\[\overline{r} = (medA \lto vomiting, \Not medB, 1)\]
into $\overline{B_{med}}$, then both $\overline{A_1}$ and $\overline{A_2}$ are possibilistic stable models of $\overline{B_{med}} \cup \{ \overline{r} \}$, while $\overline{A_3}$ is not.  How can we discover such a rule?
}
\end{example}
}

Thus, in this paper we aim to establish a framework for learning possibilistic rules from a given background possibilistic program, a set of positive examples and a set of negative examples. For instance, our method will be able to generate the rule $(medA \lto vomiting, \Not medB, 1)$ from the background knowledge $\overline{B_{med}}$, positive examples $E^+$ and negative examples $E^-$.
In order to set up our framework for induction in possibilistic programs, we first introduce the notion of induction tasks and then investigate its properties including useful characterizations of solutions for induction tasks. Based on these results, we present algorithms for computing solutions for induction tasks. We have also implemented and evaluated our algorithm using three randomly generated datasets.

In this paper, we will focus on the class of possibilistic normal logic programs (poss-NLPs) and our main contributions are summarised as follows.
\begin{itemize}
    \item We propose a definition of induction in poss-NLPs for the first time and investigate its properties.
    %explain why this problem is more challenging than the induction in ordinary logic programs. 
    \item We present two algorithms {\ILPSM} and {\ILPSMmin} for computing induction solutions for poss-NLPs. We show that our algorithms are sound and complete. The first algorithm computes a poss-NLP that is a solution for the given induction task, while the second one finds a minimal solution. 
    \item We study two special cases of inductive reasoning for poss-NLPs. 
    \hbl{
    \ysm{First, we observe that in some applications, }{The first one is that }\ysa{observations are complete, \ie, the given positive examples are exactly the possibilistic stable models, while all other possibilistic interpretations are negative ones.}. %if an example  is not positive, then it is negative. 
    }
    In such a special case, we obtain an elegant characterisation for induction solutions in terms of poss-stable models. The other special case is when an input poss-NLP is an ordinary NLP. In this case, we show that the induction problem coincides with the standard induction for NLPs.
    \item We also generalise our definition of induction for poss-NLPs to a more general case where an interpretation is a partial interpretation. We show that this generalised induction problem can be reduced to the induction problem of poss-NLPs as defined in Definition~\ref{def:ILT}. Thus, the algorithms ILPSM and ILPSMmin are applicable to the more general case of induction for poss-NLPs.
    \item We have implemented the algorithm {\ILSMmin} to learn ordinal logic programs from stable models, which calls the answer set solver Clingo~\citep{Multishot}. 
    %Comparing with a major inductive learning solver {\ILASP}~\cite{ILASP} for NLPs, experimental results show that {\ILSMmin} outperforms {\ILASP} on the datasets that are randomly generated.    
    \hbl{Comparison experimental results show that {\ILSMmin} outperforms {\ILASP}~\citep{ILASP} on the randomly generated tasks of inducing NLP from stable models, in which {\ILASP} can learn non-ground answer set programs from partial interpretations. }
\end{itemize}

The rest of this paper is organized as follows. In Section~\ref{sec:preliminary}, we briefly recall some basics of possibilistic logic programs and poss-stable models. In Section~\ref{sec:learning:PNLP}, we propose the induction task for poss-NLPs and analyze its properties. 
Its two different algorithms are proposed and analyzed in Section~\ref{sec:learning:algorithm}. 
Section~\ref{sec:variants} discusses some variants of the induction for poss-NLPs. 
Section~\ref{sec:implementation:experiment} presents an efficient implementation and reports the results of the experimental evaluation. 
Section~\ref{sec:related:work} discusses related work. Finally, Section~\ref{sec:conclusion} concludes the paper. 
All proofs have been relegated to the appendix.

\section{Preliminaries}\label{sec:preliminary}
In this section, we briefly introduce some basics of possibilistic (normal) logic programs (poss-NLPs) and fix the notations that will be used in the paper~\citep{nicolas2006possibilistic}. 
For a set $O$, $\vert O \vert$ denotes cardinality of $O$. 

\subsection{Possibilistic logic programs}
We first introduce the syntax of poss-NLPs in this subsection.
We assume a propositional logic $\cal L$ over a finite set  ${\cal A}$ of {\em atoms}. 
%Namely, any atom occurring in $\cal L$ must be an element of ${\cal A}$. 
Literals (positive and negative), terms, clauses, models, and satisfiability are defined as usual.  
%By default, every appearing atom is an element of the finite countable set ${\cal A}$. 
A {\em possibilistic formula} $\overline{\phi}$ is a pair $(\phi,\alpha)$ with $\phi$ being a (propositional) formula of $\mathcal L$ and
$\alpha$ being the weight of $\phi$. 
In general, this $\alpha$ is an element of a given lattice $({\cal Q},\le)$~\citep{dubois1998possibility}. 
\ysd{\hbl{
Each task has its own set ${\cal Q}$ of weights. 
} *** you have not defined "task"}
In this paper, $({\cal Q},\le)$ is assumed to be a totally ordered set where ${\cal Q}$ is finite. 
Intuitively, every pair of weights in the finite set ${\cal Q}$ of a given induction task is comparable.

%\footnote{A lattice consists of a partially ordered set in which every pair of elements has a unique supremum (also called a least upper bound or join) and a unique infimum (also called a greatest lower bound or meet). Formally, a partial order is a homogeneous binary relation that is reflexive, transitive and antisymmetric. A partially ordered set (poset for short) is a set on which a partial order is defined.} 
For example, ${\cal Q}= \{ 0.1, 0.6, 0.7, 1 \}$ and $\leq$ is the `less than or equal' relation for real numbers. 
%such that $0.1 \leq 0.6 \leq 0.7 \leq 1$. 
The supremum of ${\cal Q}$ in this $({\cal Q},\le)$ is $1$. 
Except for a finite set of decimals in $[0, 1]$, ${\cal Q}$ can also be a set of adverbs representing necessities. For instance, ${\cal Q}= \{ slightly, highly, extremely, absolutely \}$ 
and $slightly \leq highly \leq extremely \leq absolutely$. 
The supremum of ${\cal Q}$ in this $({\cal Q},\le)$ is $absolutely$. 
%These two kinds of examples can be both used in Example~\ref{exam:PAS}. 
Hereafter, we use $\mu$ to denote the supremum of ${\cal Q}$ in $({\cal Q},\le)$.

The possibilistic formula $(\phi,\alpha)$ intuitively states that the ordinary formula $\phi$ is certain at least to the level $\alpha$. 
%Formally, $N(\overline{\phi})\ge\alpha$ where $N$ is the necessity associated with the possibility distribution expressing the underlying semantics. 
If $\phi$ is an atom (resp. a literal, a clause and a term)  then $(\phi,\alpha)$
is a {\em possibilistic atom} (resp. literal, clause and term). 
Hereafter, we use the symbol $X$ to denote the classic projection ignoring all uncertainties from its possibilistic counterpart $\overline{X}$. 
Given a set $\cal A$ of atoms and a complete lattice $(\cal Q, \le)$, a set $\overline{I}\subseteq {\cal A}\times{\cal Q}$ of possibilistic atoms is called a {\em possibilistic interpretation} over $\cal A$ and $(\cal Q,\le)$ if $\vert \{ (x,\alpha) \in \overline{I} \mid \alpha \in {\cal Q} \} \vert \leq 1$ for each $x \in {\cal A}$. 
Thus, for a possibilistic interpretation $\overline{I}$, $(x,\alpha_1)\in \overline{I}$ and $(x,\alpha_2)\in \overline{I}$ imply $\alpha_1=\alpha_2$. 
Given a possibilistic interpretation $\overline{I}$, we use $I$ to denote the classic interpretation $\{p\mid (p,\alpha)\in \overline{I} \}$.
%corresponding to a given possibilistic interpretation $\overline{I}$. 
%There is a one-to-one correspondence between a possibilistic interpretation and its classic projection. 

%For the clarity of the sequel, we need to introduce the following additional notations. 
Before introducing possibilistic logic programs, we recall the set operations for finite sets of possibilistic formulas. 

Let $\overline{\Gamma_1}$ and $\overline{\Gamma_2}$ be two finite sets of possibilistic formulas. 
\begin{itemize}
    \item $\overline{\Gamma_1} \sqsubseteq \overline{\Gamma_2}$ if for each $(\phi,\alpha)\in \overline{\Gamma_1} $ there exists $(\phi,\beta)\in \overline{\Gamma_2}$ such that $\alpha \le \beta$. Consequently, $\overline{\Gamma_1} \sqsubset \overline{\Gamma_2}$ if $\overline{\Gamma_1} \sqsubseteq \overline{\Gamma_2}$ and $\overline{\Gamma_1} \neq \overline{\Gamma_2}$. 
    \item $ \overline{\Gamma_1}\sqcup  \overline{\Gamma_2}=\{(x, \alpha)\mid (x,\alpha)\in  \overline{\Gamma_1}, x \notin \Gamma_2\}\cup \{(x,\beta)\mid (x,\beta)\in \overline{\Gamma_2},x \notin \Gamma_1\}\cup \{(x,\max\{\alpha,\beta\})\mid (x,\alpha)\in  \overline{\Gamma_1}, (x,\beta)\in \overline{\Gamma_2}\}$.
    \item $ \overline{\Gamma_1}\sqcap  \overline{\Gamma_2}=\{(x, \min\{\alpha,\beta\})\mid (x,\alpha)\in  \overline{\Gamma_1}, (x,\beta)\in \overline{\Gamma_2}\}$.       
\end{itemize}

A {\em possibilistic normal logic program} ({\em poss-NLP} for short) on a complete lattice $({\cal Q},\le)$ is a finite set of {\em possibilistic normal rules} (or {\em rules}) of the form $\overline{r} = (r,\alpha)$, where
\begin{itemize}
	\item $\alpha\in {\cal Q}$ is the weight of $\overline{r}$, also written as $N(\overline{r})$, and
	\item $r$ is a classic (normal) rule of the form:
	\begin{align}
	\label{eq:normal:rule}
	p_0\lto p_1,\ldots, p_m,\Not p_{m+1}, \ldots, \Not p_n
	\end{align}
	with $n \geq 0$ and $p_i\in{\cal A}~(0\le i\le n)$.
\end{itemize}
%By default, every appearing weight is an element of the finite set ${\cal Q}$. 
%The concomitant binary relation $\le$ is also given along with ${\cal Q}$. 
A possibilistic logic program is also referred to as a {\em possibilistic logic knowledge base} or a {\em possibility theory} in the literature.

Given a poss-NLP $\overline{P}$, its classic counterpart, consisting of all classic rules in the poss-NLP, is denoted as $P$. 
Let $r$ be a classic rule of the form (\ref{eq:normal:rule}).
We denote $\Head(r)=p_0$,  $\Pos(r)=\{p_1,\ldots, p_m\}$,   $\Neg(r)=\{p_{m+1},\ldots, p_n\}$ and $\Body(r)=\Pos(r)\cup\Not\Neg(r)$. Thus, the rule $r$ can be written
as
\begin{align*}
%\label{normal:rule:set}
\Head(r)\lto \Pos(r),\Not\Neg(r) \qquad\textit{or}\qquad \Head(r)\lto \Body(r)
\end{align*}
where $\Not S=\{\Not p\mid p\in S\}$. The rule $r$ is {\em definite} if $\Neg(r)=\emptyset$. For a possibilistic normal logic rule $\overline{r}$, we also denote $\Head(\overline{r})=\Head(r)$, $\Pos(\overline{r})=\Pos(r)$ and $\Neg(\overline{r})=\Neg(r)$. 
A {\em possibilistic definite (logic) program} is a finite set of possibilistic definite rules.

Two classic rules $r_1$ and $r_2$ of the form (\ref{eq:normal:rule}) are identical if $\Head(r_1) = \Head(r_2)$ and $\Body(r_1) = \Body(r_2)$. For instance, rule $p_0\lto p_1,  p_2,\Not p_3, \Not p_4$ is the same as rule $p_0\lto p_2,  p_1,\Not p_4, \Not p_3$. 
%Similar to a formula in a set of possibilistic formulas, 
Similar to the assumption in Possibilistic Logic, we assume that every classic rule occurs at most once in a poss-NLP. %This requirement comes from ~\citep{DBLP:conf/ijcai/GarciaLPSW18} too!

Given two poss-NLPs $\overline{P_1}$ and $\overline{P_2}$, the operations $\cup$ and $-$ over poss-NLPs can be defined as follows~\citep{DBLP:conf/ijcai/GarciaLPSW18}.
\begin{itemize}
%    \item $P_1 = \{ r \mid \overline{r} \in \overline{P_1} \}$.
    \item $\overline{P_1}\sqcup  \overline{P_2}=\{(r, \alpha)\mid (r,\alpha)\in  \overline{P_1}, r \notin P_2\}\cup \{(r,\beta)\mid (r,\beta)\in \overline{P_2},r \notin P_1\}\cup \{(r,\max\{\alpha,\beta\})\mid (r,\alpha)\in  \overline{P_1}, (r,\beta)\in \overline{P_2}\}$.
    \item $\overline{P_1} - \overline{P_2} = \{ (r, \alpha) \mid (r,\alpha)\in  \overline{P_1}, r \notin P_2 \} \cup \{ (r, \alpha) \mid (r,\alpha)\in  \overline{P_1}, (r,\beta)\in \overline{P_2}, \alpha > \beta \}$.
\end{itemize}
%As we can see, some operations follow the same intention as operations over sets of possibilistic formulas.

%\subsection{Stable model semantics}
\subsection{Possibilistic Stable Models}
In this subsection, we introduce the semantics of poss-NLPs, that is, \emph{possibilistic stable models} or \emph{poss-stable models}~\citep{nicolas2006possibilistic}.
To this end, we first recall basics of normal logic programs under stable models~\citep{GelfondLifschitz88}.

An atom set $S$ satisfies a definite logic program $P$, written as $S \models P$, if $\Pos(r) \subseteq S$ implies $\Head(r) \in S$ for each $r \in P$. $S$ is a {\em stable model} of $P$, written as $S \in \mathit{SM}(P)$, if $S \models P$ and there exists no $S' \subset S$ such that $S' \models P$.
In fact, a definite logic program $P$ has the unique stable model which is its least Herbrand model $\mathit{Cn}(P)$~\citep{lloyd2012foundations} (the set of consequences of $P$). 
% In other words, the stable model of a definite logic program is its model that is minimal relative to set inclusion. 
%The unique minimal Herbrand model is also called $\mathit{Cn}(P)$~\citep{lloyd2012foundations} the set of consequences of $P$. 
$\mathit{Cn}(P)$ is also the least fixpoint $\mathit{lfp}(T_P)$ of the immediate consequence operator $T_P:2^{\cal A} \rto 2^{\cal A}$ defined by $T_P(A)=\Head(\mathit{App}(P, A))$, where $\Head(P) = \{ \Head(r) \mid r \in P \}$ and $\mathit{App}(P, A) = \{ r \in P \mid \Pos(r) \subseteq A \}$. 
That is, for a definite logic program $P$, we have $\mathit{Cn}(P) = \mathit{lfp}(T_P)$.
%where $\mathit{lfp}(T_P)$ denotes the least fixpoint of the consequence operator $T_P$. 

The following proposition is useful for checking whether $S = \mathit{lfp}(T_P)$ for a set $S$ of atoms.
A definite logic program $P$ is \emph{grounded} if it can be ordered as a sequence $(r_1, \dots, r_n)$ such that $ r_i \in \mathit{App}(P,\Head(\{ r_1, \dots, r_{i-1} \}))$ for each $1 \leq i \leq n$.

%a necessary and sufficient condition in Proposition~\ref{prop:DLP:LHM} as follows.

\begin{proposition}[Proposition~1 of~\citep{nicolas2006possibilistic}]\label{prop:DLP:LHM}	
    Let $P$ be a definite logic program and $S$ be an atom set. $S$ is a least Herbrand model of $P$ if and only if 
    \begin{itemize}
        \item $S = \Head(\mathit{App}(P,S))$, and 
        \item $\mathit{App}(P,S)$ is grounded.
    \end{itemize}
\end{proposition}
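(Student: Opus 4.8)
The plan is to lean on the characterisation recalled just above the statement, namely that the least Herbrand model of a definite program $P$ coincides with $\mathit{lfp}(T_P)$, the least fixpoint of the immediate consequence operator obtained by iterating $T_P$ from $\emptyset$. Writing $S_0 = \emptyset$ and $S_{i+1} = T_P(S_i)$, finiteness of $\cal A$ guarantees that the chain $S_0 \subseteq S_1 \subseteq \cdots$ stabilises at $\mathit{lfp}(T_P)$ after finitely many steps. The first observation is that the condition $S = \Head(\mathit{App}(P,S))$ is precisely $S = T_P(S)$, i.e. that $S$ is a fixpoint of $T_P$. So the whole proposition reduces to showing that $S$ equals the least fixpoint iff $S$ is a fixpoint whose set of applicable rules $\mathit{App}(P,S)$ is grounded.

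For the forward direction, I would assume $S = \mathit{lfp}(T_P)$. Then $S = T_P(S) = \Head(\mathit{App}(P,S))$ immediately, which gives the first bullet. For groundedness I would stratify $\mathit{App}(P,S)$ by the stage at which each rule becomes applicable: for $r \in \mathit{App}(P,S)$ set $\mathrm{lev}(r) = \min\{ i \geq 0 : \Pos(r) \subseteq S_i \}$, which is well defined since $\Pos(r) \subseteq S$. Listing the rules in non-decreasing order of level yields a grounded sequence: if $\mathrm{lev}(r) = i \geq 1$, then each $p \in \Pos(r) \subseteq S_i = \Head(\mathit{App}(P, S_{i-1}))$ is the head of some $r' \in \mathit{App}(P, S_{i-1}) \subseteq \mathit{App}(P,S)$ with $\mathrm{lev}(r') \leq i-1$, so $r'$ precedes $r$; rules at level $0$ have empty positive body and cause no trouble.

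For the converse, I would assume $S = \Head(\mathit{App}(P,S))$ and that $\mathit{App}(P,S)$ is grounded, ordered as $(r_1, \dots, r_n)$. The first assumption again says $S = T_P(S)$, so $S$ is a fixpoint; since $T_P$ is monotone, Knaster--Tarski gives $\mathit{lfp}(T_P) \subseteq S$. For the reverse inclusion I would induct along the grounded ordering to show $\Head(r_i) \in \mathit{lfp}(T_P)$ for every $i$: assuming $\Head(r_j) \in \mathit{lfp}(T_P)$ for all $j < i$, groundedness gives $\Pos(r_i) \subseteq \Head(\{r_1, \dots, r_{i-1}\}) \subseteq \mathit{lfp}(T_P)$, so $r_i \in \mathit{App}(P, \mathit{lfp}(T_P))$ and hence $\Head(r_i) \in T_P(\mathit{lfp}(T_P)) = \mathit{lfp}(T_P)$. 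Thus $S = \Head(\mathit{App}(P,S)) = \{\Head(r_1), \dots, \Head(r_n)\} \subseteq \mathit{lfp}(T_P)$, and combining the two inclusions yields $S = \mathit{lfp}(T_P)$, the least Herbrand model.

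The routine ingredients here are the fixpoint facts (monotonicity, finite convergence, the least fixpoint sitting below every fixpoint). The genuine content, and the step I would be most careful about, is the correct use of groundedness on both sides: manufacturing a grounded ordering out of the fixpoint iteration in the forward direction (and checking that the level-ordered rules indeed all lie in $\mathit{App}(P,S)$), and running the induction along the given ordering in the converse. A small subtlety worth flagging is the level-$0$, empty-body base case, which is exactly what anchors both the ordering and the induction.
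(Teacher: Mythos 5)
Your proof is correct. Note, however, that the paper itself does not prove this proposition: it is recalled verbatim as Proposition~1 of Nicolas et al.\ (2006), and no proof appears in the paper's appendix, so there is no in-paper argument to compare yours against. Your self-contained argument is sound on its own terms: the identification of the first bullet with the fixpoint equation $S = T_P(S)$, the level stratification $\mathrm{lev}(r)=\min\{i : \Pos(r)\subseteq S_i\}$ to manufacture a grounded ordering in the forward direction, and the combination of Knaster--Tarski (giving $\mathit{lfp}(T_P)\subseteq S$) with induction along the grounded ordering (giving $S\subseteq \mathit{lfp}(T_P)$) in the converse are exactly the right ingredients, and the base case of empty-body rules is handled correctly.
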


An atom set $S$ is a {\em stable model} of normal logic program (NLP) $P$ if $S$ is the stable model of $P^S$, the reduct of $P$ \wrt $S$, where $P^S$ denotes the definite logic program $\{\Head(r) \lto \Pos(r) \mid r \in P, \Neg(r) \cap S = \emptyset\}$~\citep{GelfondLifschitz88}. 
A stable model of $P$ is also referred to as an answer set of $P$.
The set of all stable models for $P$ is denoted as $\mathit{SM}(P)$.
%Hence, $S \in \mathit{SM}(P)$ holds if $S=\mathit{lfp}(T_{P^S})$. 

%A stable model is usually called an {\em answer set} in stable model semantics~\citep{GelfondLifschitz88}, especially for logic programs with negation as failure. This semantics is the basis of answer set programming~\citep{Brewka:CACM:2011}, which seems more popular than the term stable model due to its refined syntax and wide-ranging applications. Consequently, readers tend to associate the term answer set with an answer set program. 
%This makes readers prefer to associate an answer set program when they see the term answer set. 
% Our study centers on poss-NLPs which lack the complex syntax of answer set programs. To prevent confusion, we consistently use the term stable model throughout this paper, unless using the term answer set facilitates readers' comprehension of other related work. 

Now we are ready to introduce the notion of poss-stable models~\citep{DUBOIS2023109028} , which is a generalisation of the stable models for ordinary NLPs.
%Afterwards, the idea of a stable model is generalized to poss-stable model~\citep{nicolas2006possibilistic} in poss-NLPs. 
%The semantics of poss-stable model belongs to qualitative possibility theory~\citep{DUBOIS2023109028} where the set ${\cal Q}$ of necessities serves only as a bounded finite linear ordinal scale. 
%This generalization builds on possibilistic applicability and possibilistic consequence operator apart from a similar reduct.
First, we extend the definitions of reduct, rule applicability, and consequence operator to the case of poss-NLPs.

Given a possibilistic definite rule $\overline{r} = (r, \alpha)$ where $r$ is of the form $p \lto \{p_1,\ldots,p_m$, if there exists weights $\alpha_1, \ldots, \alpha_m$ such that $(p_i,\alpha_i)\in  \overline{A}$ for all $i~(1\le i\le m)$ ($1\le i\le m$), then we say that $\overline{r}$ is  {\em $\beta$-applicable} in a possibilistic atom set $\overline{A}$.

A definite rule $r$ is applicable in an atom set $A$ if $\Pos(r) \subseteq A$. Correspondingly, a possibilistic definite rule $(r,\alpha)$ with $\Pos(r)=\{p_1,\ldots,p_m\}$ is {\em $\beta$-applicable} in a possibilistic atom set $\overline{A}$ with $\beta=\min\{\alpha,\alpha_1,\ldots, \alpha_m\}$ if there exists $(p_i,\alpha_i)\in  \overline{A}$ for every $i~(1\le i\le m)$. Otherwise, it is not applicable in $\overline{A}$. 
For a certain atom $q \in {\cal A}$ and a possibilistic definite program $\overline{P}$, define 
\begin{align}\label{eq:app:atom}
\mathit{App}(\overline{P}, \overline{A}, q) = \{ \overline{r} \in \overline{P} \mid \Head(\overline{r}) = q,\overline{r} \mbox{ is $\beta$-applicable in } \overline{A} \}.
\end{align} 
%$\mathit{App}(\overline{P}, \overline{A}, q) = \{ \overline{r} \in \overline{P} \mid \Head(\overline{r}) = q,\overline{r} \mbox{ is $\beta$-applicable in } \overline{A}, \beta > 0 \}$. 
Additionally, $\mathit{App}(\overline{P}, \overline{A}) = \bigcup_{q \in A} \mathit{App}(\overline{P}, \overline{A}, q)$.

Given a possibilistic definite logic program $\overline{P}$ and a possibilistic atom set $\overline{A}$, the immediate possibilistic consequence operator ${\cal T}_{\overline{P}}$ as Definiton 9 in ~\citep{nicolas2006possibilistic} maps a possibilistic atom set $\overline{A}$ to another one as follows.
\begin{align}\label{eq:Tp}
    {\cal T}_{\overline{P}}(\overline{A}) = &\{(q,\delta) \mid q \in {\cal A}, \mathit{App}(\overline{P},\overline{A},q) \neq \emptyset, \delta=\max\{\beta\mid \overline{r} \in \mathit{App}(\overline{P},\overline{A},q)\mbox{ is $\beta$-applicable in } \overline{A}\}\}.
\end{align}   
Then the iterated operator ${\cal T}_{\overline{P}}^k$ is deﬁned by ${\cal T}_{\overline{P}}^0 = \emptyset$ and ${\cal T}_{\overline{P}}^{n+1} = {\cal T}_{\overline{P}}({\cal T}_{\overline{P}}^{n})$ for each $n \geq 0$. 
For a possibilistic definite logic program $\overline{P}$, we can compute the set $\mathit{Cn}(\overline{P})$ of possibilistic consequences via $\mathit{Cn}(\overline{P}) = \mathit{lfp}({\cal T}_{\overline{P}})$ where $\mathit{lfp}({\cal T}_{\overline{P}}) = \bigsqcup_{n\geq0}{\cal T}_{\overline{P}}^n$ is the least fixpoint of the immediate possibilistic consequence operator ${\cal T}_{\overline{P}}$.

The possibilistic reduct of a poss-NLP $\overline{P}$ \wrt an atom set $S$ is the possibilistic definite logic program 
\[
\overline{P}^S = \{(\Head(r) \lto \Pos(r), N(\overline{r})) \mid \overline{r} \in \overline{P},\Neg(r) \cap S = \emptyset\}.
\]
A set $\overline{S}$ of possibilistic atom is a poss-stable model (or poss-stable model) of poss-NLP $\overline{P}$ if $\overline{S} = \mathit{Cn}(\overline{P}^S)$. 
The set of all poss-stable models of $\overline{S}$ is denoted $\mathit{PSM}(\overline{P})$. 
%\hbl{The set $\mathit{PSM}(\overline{P})$ is called the \emph{complete} poss-stable models of $\overline{P}$. }

The following example illustrates some of the above notions for poss-NLPs. 
\begin{example}
    Let ${\cal A} = \{ a,b,c \}$, $\overline{P} = \{(a \lto \Not b,0.6),(a \lto ,0.9),(b \lto , 0.6),(c \lto a,b,0.8)\}$ and $\overline{S} = \{ (a,0.9), (b,0.6), (c,0.6) \}$. Then $\mathit{Cn}(\overline{P}^S) = \{ (a,0.9),(b,0.6),(c,0.6) \}$ since $\overline{P}^S = \{(a \lto ,0.9),(b \lto , 0.6),(c \lto a,b,0.8)\}$ and 
    \begin{align*}
        {\cal T}_{\overline{P}^S}^0 = &{\cal T}_{\overline{P}^S}(\emptyset) = \{ (a,0.9),(b,0.6) \}, \\
        {\cal T}_{\overline{P}^S}^1 = &{\cal T}_{\overline{P}^S}(\{ (a,0.9),(b,0.6) \}) = \{ (a,0.9),(b,0.6),(c,0.6) \}, \\
        {\cal T}_{\overline{P}^S}^2 = &{\cal T}_{\overline{P}^S}(\{ (a,0.9),(b,0.6),(c,0.6) \}) = \{ (a,0.9),(b,0.6),(c,0.6) \} = \overline{S}.
    \end{align*}  
    As a result, $\overline{S} \in \mathit{PSM}(\overline{P})$.
\end{example}

The next proposition shows that there is a one-to-one correspondence between the set $\mathit{PSM}(\overline{P})$ of poss-stable models for $\overline{P}$ and the set $\mathit{SM}(P)$ of stable models for $P$. 

%the poss-stable models of a poss-NLP $\overline{P}$ to the stable models of its classical part $P$.  ~\citep{nicolas2006possibilistic}

\begin{proposition}[Proposition~10 of ~\citep{nicolas2006possibilistic}]\label{prop:SM:maps}	
    Let $\overline{P}$ be a poss-NLP. 
    \begin{itemize}
        \item If $A$ is a stable model of $P$, then $\mathit{Cn}(\overline{P}^A)$ is a poss-stable model of $\overline{P}$. 
        \item If $\overline{A}$ is a poss-stable model of $\overline{P}$, then $A$ is a stable model of $P$.  
    \end{itemize}
%    Namely, $A \in \mathit{SM}(P) \Rto \mathit{Cn}(\overline{P}^A) \in \mathit{PSM}(\overline{P})$ and $\overline{A} \in \mathit{PSM}(\overline{P})  \Rto A \in \mathit{SM}(P)$.
\end{proposition}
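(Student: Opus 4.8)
The plan is to reduce both statements to a single lemma relating the classic projection of the possibilistic consequence operator to the ordinary one for definite programs. Throughout, write $B$ for the classic projection $\{q \mid (q,\delta) \in \overline{B}\}$ of a possibilistic atom set $\overline{B}$, and observe first that the possibilistic reduct $\overline{P}^S$ and the classic reduct $P^S$ are governed by the very same side condition $\Neg(r) \cap S = \emptyset$; hence the classic counterpart of $\overline{P}^S$ is exactly $P^S$. It therefore suffices to establish the following projection lemma: for every possibilistic definite program $\overline{Q}$ with classic counterpart $Q$, the classic projection of $\mathit{Cn}(\overline{Q})$ equals $\mathit{Cn}(Q)$, i.e. $\{q \mid (q,\delta)\in \mathit{Cn}(\overline{Q})\} = \mathit{Cn}(Q)$.

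Granting the lemma, both directions follow quickly. For the first statement, if $A$ is a stable model of $P$ then $A = \mathit{Cn}(P^A)$; setting $\overline{B} = \mathit{Cn}(\overline{P}^A)$ and letting $B$ be its classic projection, the lemma applied to $\overline{Q} = \overline{P}^A$ gives $B = \mathit{Cn}(P^A) = A$. Since the reduct depends only on the classic part, $\overline{P}^B = \overline{P}^A$, so $\overline{B} = \mathit{Cn}(\overline{P}^A) = \mathit{Cn}(\overline{P}^B)$, which is precisely the condition for $\overline{B}$ to be a poss-stable model of $\overline{P}$. For the second statement, if $\overline{A} = \mathit{Cn}(\overline{P}^A)$ is a poss-stable model, taking classic projections and applying the lemma yields $A = \mathit{Cn}(P^A)$, so $A$ is the least Herbrand model of its reduct and hence a stable model of $P$.

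To prove the projection lemma I would argue by induction on the iterations of the two operators, showing that the classic projection of ${\cal T}_{\overline{Q}}^{\,n}$ coincides with $T_Q^{\,n}$ for every $n$. The base case is immediate since both are empty. For the inductive step, the crux is that $\beta$-applicability of a possibilistic rule $(r,\alpha)$ in $\overline{A}$ — the existence of some $(p_i,\alpha_i) \in \overline{A}$ for each $p_i \in \Pos(r)$ — holds exactly when $\Pos(r) \subseteq A$; therefore $\mathit{App}(\overline{Q},\overline{A},q) \neq \emptyset$ iff $q \in T_Q(A)$, so the head atoms produced at each stage agree on the two sides regardless of the particular weights assigned.

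\textbf{Main obstacle.} The delicate point is the passage to the least fixpoint: I must confirm that taking classic projections commutes with the $\bigsqcup$ used to form $\mathit{lfp}({\cal T}_{\overline{Q}})$. Here the finiteness of ${\cal A}$ and of $({\cal Q},\le)$ is essential — the $\sqsubseteq$-monotone iteration stabilises after finitely many steps, so the fixpoint is reached at some finite stage $m$, and the stagewise identity at $m$ delivers the lemma directly without any genuine limit argument. A minor point to verify along the way is that ${\cal T}_{\overline{Q}}$ really returns a well-defined possibilistic interpretation in which each atom carries a single weight, which is guaranteed by the $\max$ in the definition of ${\cal T}_{\overline{Q}}$.
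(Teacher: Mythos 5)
Your proposal cannot be compared against an in-paper argument, because the paper never proves this statement: it is imported verbatim as Proposition~10 of~\citet{nicolas2006possibilistic}, and the appendix's proofs begin only at Proposition~\ref{prop:eq:con} (the equivalent-consequence result). Judged on its own, your reconstruction is correct. The reduction of both bullets to the projection lemma --- that the classic projection of $\mathit{Cn}(\overline{Q})$ equals $\mathit{Cn}(Q)$ for a possibilistic definite program $\overline{Q}$ --- is sound, because the possibilistic and classic reducts are filtered by the identical condition $\Neg(r)\cap S=\emptyset$ (so the projection of $\overline{P}^A$ is $P^A$), and because a poss-stable model is defined by $\overline{S}=\mathit{Cn}(\overline{P}^S)$ with the reduct taken w.r.t.\ the classic projection $S$, which is exactly what lets you conclude $\overline{P}^B=\overline{P}^A$ from $B=A$ in the first bullet. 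The stagewise induction is also right: $\beta$-applicability of a definite possibilistic rule in $\overline{A}$ holds precisely when $\Pos(r)\subseteq A$, regardless of weights, so the head atoms produced at each iteration coincide; and your treatment of the ``main obstacle'' is adequate, since ${\cal T}_{\overline{Q}}$ is $\sqsubseteq$-monotone for definite programs and the finiteness of ${\cal A}$ and ${\cal Q}$ makes the chain stabilise at a finite stage, so $\bigsqcup_{n\ge 0}{\cal T}_{\overline{Q}}^{\,n}$ is attained and projection trivially commutes with it. One detail deserves an explicit sentence: after taking the reduct, two distinct possibilistic rules of $\overline{P}$ with different negative bodies can collapse to the same classic definite rule carrying different weights, so the paper's standing assumption that each classic rule occurs at most once may fail for $\overline{P}^S$. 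This is harmless for your argument --- the classic projection of $\overline{P}^S$ is still the set $P^S$, and both applicability and the $\max$ in ${\cal T}$ are computed rule by rule --- but since your lemma quantifies over ``every possibilistic definite program,'' it is worth noting that such programs arise here in a slightly more general form than the paper's convention allows.
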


The immediate possibilistic consequence operator introduced in~\citet{nicolas2006possibilistic} is for possibilistic definite logic programs only, but it can be generalised to poss-NLPs as follows.
%further, we should extend the immediate possibilistic consequence operator over poss-NLP. 
Beforehand, let us define the applicability of poss-rules in a poss-NLP.
%as Definition~\ref{def:applicable}. 

\begin{definition}[Poss-rule applicability]\label{def:applicable}
	Given a possibilistic interpretation $ \overline{I}$ and a weight $\beta$, 
    a possibilistic normal rule $(r,\alpha)$ %with $\Pos(r)=\{p_1,\ldots,p_m\}$
	is {\em $\beta$-applicable} in $ \overline{I}$ if
	%$\beta=\min(\{\alpha\}\cup\{\beta\mid p\in \Pos(r), (p,\beta)\in I\})$
	the possibilistic definite rule $(\Head(r)\lto\Pos(r),\alpha)$ is $\beta$-applicable in $ \overline{I}$ 
	%$ _1,\ldots, \alpha_m\}$ if  $(p_i,\alpha_i)\in  I$ for every $i~(1\le i\le m)$
	and,
	$\Neg(r)\cap I=\emptyset$. Otherwise, it is not applicable in $\overline{I}$.
\end{definition}

With this new definition of applicability in hand, Equation~(\ref{eq:app:atom}), the definition of $\mathit{App}(\overline{P}, \overline{A}, q)$ of applicable rules over a poss-NLP $\overline{P}$, still works for poss-NLPs. Consequently, the immediate possibilistic consequence operator over a poss-NLP, Equation~\ref{eq:Tp}, can be easily extended to poss-NLPs as follows.
%except that the target program is generalized to be a poss-NLP. 

\begin{definition}[Immediate possibilistic consequence operator ${\cal T}_{\overline{P}}$]\label{def:Tp}
	Let $\overline{P}$ be a poss-NLP and $\overline{I} \in 2^{{\cal A}\times {\cal Q}}$ be a possibilistic interpretation. We define ${\cal T}_{\overline{P}}: 2^{{\cal A}\times {\cal Q}}\rto 2^{{\cal A}\times {\cal Q}}$ as follows.
    \begin{align*}
        {\cal T}_{\overline{P}}(\overline{A}) = &\{(q,\delta) \mid q \in {\cal A}, \mathit{App}(\overline{P},\overline{A},q) \neq \emptyset, \delta=\max\{\beta\mid \overline{r} \in \mathit{App}(P,\overline{A},q)\mbox{ is $\beta$-applicable in } \overline{A}\}\}.
    \end{align*}  
 %    \begin{align*} %必须承认，以上（前人的）这个定义比下方（我的）定义要清晰明确更多，尽管表达的是同一个含义。
 %        {\cal T}_{\overline{P}}(\overline{I}) = &\{(q,\delta) \mid q=\Head(r) \mbox{ for some } \overline{r} \in \mathit{App}(\overline{P},\overline{I},q) \mbox{ and,} \delta=\max\{\beta\mid \overline{r'} \in \mathit{App}(\overline{P},\overline{I},q)\mbox{ is $\beta$-applicable in } \overline{I}\}\}.
	% \end{align*} 
\end{definition}

The following example illustrates how the reduct of a poss-NLP wrt a poss-interpretation and its least fixpoint can be computed.
\begin{example}  \label{exam:Tp}
    Let $\overline{I}=\{(q, 0.9), (s, 0.7)\}$ be a possibilistic interpretation and $\overline{P} = \{ r_1, r_2, r_3, r_4 \}$ be a poss-NLP where 
	\begin{align*}
    	& r_1 = ( p \lto q,s, 0.9),\\
    	&r_2 = ( p \lto \Not r, 0.9),\\
    	&r_3 = ( p \lto \Not s, 0.7),\\
    	&r_4 = ( p \lto r, 0.7). 
	\end{align*}
    By Definition~\ref{def:applicable}, we have   
	\begin{itemize}
		\item $r_1$ is 0.7-applicable in $\overline{I}$ since $\Pos(r_1) \subseteq I$ and $\Neg(r_1) \cap I = \emptyset$ and $\min\{0.9, 0.7, 0.9\}=0.7$;
		\item $r_2$ is 0.9-applicable in $\overline{I}$ since $\Pos(r_2) \subseteq I$ and $\Neg(r_2) \cap I = \emptyset$ and $\min\{0.9\}=0.9$;
		\item $r_3$ is not applicable in $\overline{I}$ since $\Neg(r_3) \cap I \neq \emptyset$;
		\item $r_4$ is not applicable in $\overline{I}$ since $\Pos(r_4) \not\subseteq I$.
	\end{itemize}
	By Definition~\ref{def:Tp}, it follows that ${\cal T}_{\overline{P}}(\overline{I})=\{(p,0.9)\}$ as $\max\{0.9,0.7\}=0.9$.
\end{example}

As the next proposition shows, the immediate consequence operator for poss-NLPs in Definition~\ref{def:Tp} reserves the key properties of the original immediate consequence operator for ordianry NLPs.
%, viz. determining the consequence of a possibilistic interpretation. Additionally, the reduct over a poss-NLP is integrated into a complete operator. 

\begin{proposition}[Equivalent consequence]\label{prop:eq:con}
    For a given poss-NLP $\overline{P}$ and a possibilistic interpretation $\overline{I}$, ${\cal T}_{\overline{P}^I}(\overline{I}) = {\cal T}_{\overline{P}}(\overline{I})$.  
\end{proposition}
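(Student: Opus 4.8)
The plan is to prove the identity atom by atom: since ${\cal T}_{\overline{P}}$ and ${\cal T}_{\overline{P}^I}$ are given by the same defining formula (Definition~\ref{def:Tp}), differing only in the program whose applicable rules are collected, it suffices to show that for every atom $q\in{\cal A}$ the two operators see the same achievable applicability degrees, i.e. that
\[
\{\beta \mid \overline{r} \in \mathit{App}(\overline{P},\overline{I},q) \text{ is } \beta\text{-applicable in } \overline{I}\} = \{\beta \mid \overline{r'} \in \mathit{App}(\overline{P}^I,\overline{I},q) \text{ is } \beta\text{-applicable in } \overline{I}\}.
\]
Once this set equality holds for each $q$, both the nonemptiness test $\mathit{App}(\cdot,\overline{I},q)\neq\emptyset$ and the value $\delta=\max\{\beta\mid\cdots\}$ agree, so the same possibilistic atom $(q,\delta)$ is contributed on both sides and the proposition follows.

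The core step is a degree-preserving correspondence between the normal rules of $\overline{P}$ and the definite rules of the reduct $\overline{P}^I$. I would fix a normal rule $\overline{r}=(r,\alpha)\in\overline{P}$ with $\Head(r)=q$ and pair it with its reduct image $\overline{r'}=(\Head(r)\lto\Pos(r),\alpha)$. By the definition of the reduct, $\overline{r'}\in\overline{P}^I$ precisely when $\Neg(r)\cap I=\emptyset$, while by Definition~\ref{def:applicable} the rule $\overline{r}$ is $\beta$-applicable in $\overline{I}$ precisely when the definite part $(\Head(r)\lto\Pos(r),\alpha)$ is $\beta$-applicable in $\overline{I}$ \emph{and} $\Neg(r)\cap I=\emptyset$. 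Reading these two characterisations side by side, ``$\overline{r}$ is $\beta$-applicable as a normal rule in $\overline{I}$'' is literally the conjunction of ``$\overline{r'}$ survives into $\overline{P}^I$'' and ``$\overline{r'}$ is $\beta$-applicable as a definite rule in $\overline{I}$''. The decisive observation is that the degree $\beta=\min\{\alpha,\alpha_1,\dots,\alpha_m\}$ depends only on $\alpha$ and on the weights $\alpha_i$ of the positive body atoms $(p_i,\alpha_i)\in\overline{I}$; the negative literals, which are exactly what the reduct deletes, never enter the computation of $\beta$, so the correspondence is degree-preserving.

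The two inclusions then fall out directly. For $\subseteq$, any $\beta$-applicable $\overline{r}\in\mathit{App}(\overline{P},\overline{I},q)$ produces, through $\overline{r'}$, a rule of $\overline{P}^I$ that is $\beta$-applicable in $\overline{I}$, hence a member of $\mathit{App}(\overline{P}^I,\overline{I},q)$ with the same degree; for $\supseteq$, every rule of $\overline{P}^I$ is the reduct image of some $\overline{r}\in\overline{P}$ with $\Neg(r)\cap I=\emptyset$, so its $\beta$-applicability lifts back to $\beta$-applicability of $\overline{r}$ as a normal rule. The only point I expect to require care is that two distinct normal rules of $\overline{P}$, differing only in their negative bodies, may collapse to the same classic definite rule in $\overline{P}^I$; because I compare the \emph{sets} of achievable $\beta$-values rather than the rules themselves and the operator aggregates them via $\max$, this collapse leaves $\delta$ unchanged. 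Hence the degree sets coincide for every $q$, giving ${\cal T}_{\overline{P}^I}(\overline{I})={\cal T}_{\overline{P}}(\overline{I})$.
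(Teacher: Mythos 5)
Your proposal is correct and takes essentially the same route as the paper's own proof: the paper argues via a chain of equivalences that, by Definition~\ref{def:Tp} and Definition~\ref{def:applicable}, membership of $(q,\delta)$ in ${\cal T}_{\overline{P}}(\overline{I})$ is equivalent to membership in ${\cal T}_{\overline{P}^I}(\overline{I})$, which is exactly your degree-preserving correspondence between a normal rule and its reduct image. Your write-up merely spells out the two inclusions and the (harmless) collapse of rules differing only in negative bodies, details the paper leaves implicit.
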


Such an integration simplifies the reasoning and representation concerning poss-stable models. 
For instance, Corollary~\ref{cor:SM:absorption} discussed a property of a poss-stable model without directly touching upon the GL-reduct. This corollary states that a possibilistic interpretation $\overline{I}$ is still a poss-stable model of the extension of a poss-NLP $\overline{P}$ when $\overline{P}$ absorbs a new poss-NLP $\overline{B}$ such that ${\cal T}_{\overline{B}}(\overline{I}) \sqsubseteq \overline{I}$.

\begin{corollary}[Absorption for a poss-stable model]\label{cor:SM:absorption}
    Given a possibilistic interpretation $\overline{I}$ and two poss-NLPs $\overline{P}$ and $\overline{B}$, $\overline{I} \in \mathit{PSM}(\overline{P} \sqcup \overline{B})$ if $\overline{I} \in \mathit{PSM}(\overline{P})$ and ${\cal T}_{\overline{B}}(\overline{I}) \sqsubseteq \overline{I}$.
\end{corollary}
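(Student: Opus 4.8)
The plan is to unfold the definition of poss-stable model and verify that $\overline{I}$ is exactly the least fixpoint of ${\cal T}_{(\overline{P}\sqcup\overline{B})^I}$. By definition, $\overline{I}\in\mathit{PSM}(\overline{P}\sqcup\overline{B})$ means $\overline{I}=\mathit{Cn}((\overline{P}\sqcup\overline{B})^I)=\mathit{lfp}({\cal T}_{(\overline{P}\sqcup\overline{B})^I})$, so the whole argument reduces to two sub-claims: (i) $\overline{I}$ is a fixpoint of ${\cal T}_{(\overline{P}\sqcup\overline{B})^I}$, and (ii) it is the \emph{least} fixpoint. Throughout I write $I$ for the classic projection of $\overline{I}$, so that all reducts are taken \wrt the same $I$.

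Before the two sub-claims I would isolate an operator-level decomposition: for every poss-interpretation $\overline{A}$, ${\cal T}_{\overline{P}\sqcup\overline{B}}(\overline{A})={\cal T}_{\overline{P}}(\overline{A})\sqcup{\cal T}_{\overline{B}}(\overline{A})$. This is the technical heart, and it holds because $\mathit{App}$ collects \emph{all} applicable rules with a given head while ${\cal T}$ takes the maximal $\beta$ among them, and $\sqcup$ merges the weights of shared classic rules by maximum; for a rule occurring in both programs one checks the identity $\min\{\max\{\alpha_P,\alpha_B\},c\}=\max\{\min\{\alpha_P,c\},\min\{\alpha_B,c\}\}$, with $c$ the minimum of the relevant body weights, which is valid since $({\cal Q},\le)$ is totally ordered. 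I would also record two monotonicity facts that follow the same way: ${\cal T}$ is $\sqsubseteq$-monotone, and $\mathit{Cn}$ is monotone under the rule-domination order induced by $\sqsubseteq$ (replacing a rule by one with a larger weight can only increase the $\beta$ of every applicable instance).

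For (i), combining Proposition~\ref{prop:eq:con} with the decomposition gives
\[
{\cal T}_{(\overline{P}\sqcup\overline{B})^I}(\overline{I})={\cal T}_{\overline{P}\sqcup\overline{B}}(\overline{I})={\cal T}_{\overline{P}}(\overline{I})\sqcup{\cal T}_{\overline{B}}(\overline{I})={\cal T}_{\overline{P}^I}(\overline{I})\sqcup{\cal T}_{\overline{B}}(\overline{I}).
\]
Since $\overline{I}\in\mathit{PSM}(\overline{P})$ makes $\overline{I}$ a fixpoint of ${\cal T}_{\overline{P}^I}$, the right-hand side equals $\overline{I}\sqcup{\cal T}_{\overline{B}}(\overline{I})$, and the hypothesis ${\cal T}_{\overline{B}}(\overline{I})\sqsubseteq\overline{I}$ collapses this to $\overline{I}$ by the absorption property of $\sqcup$ on poss-interpretations (if $\overline{C}\sqsubseteq\overline{A}$ then $\overline{A}\sqcup\overline{C}=\overline{A}$, using that each atom carries a single weight). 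For (ii), I would use a two-sided $\sqsubseteq$ squeeze. Being a fixpoint, $\overline{I}$ is in particular a pre-fixpoint, so the Kleene construction $\mathit{lfp}({\cal T})=\bigsqcup_{n}{\cal T}^n(\emptyset)$ together with $\sqsubseteq$-monotonicity yields $\mathit{lfp}({\cal T}_{(\overline{P}\sqcup\overline{B})^I})\sqsubseteq\overline{I}$. Conversely, every rule of $\overline{P}^I$ reappears in $(\overline{P}\sqcup\overline{B})^I$ with a weight no smaller (the condition $\Neg(r)\cap I=\emptyset$ is unchanged and $\sqcup$ only raises weights), so $\overline{P}^I\sqsubseteq(\overline{P}\sqcup\overline{B})^I$ and monotonicity of $\mathit{Cn}$ gives $\overline{I}=\mathit{Cn}(\overline{P}^I)\sqsubseteq\mathit{Cn}((\overline{P}\sqcup\overline{B})^I)=\mathit{lfp}({\cal T}_{(\overline{P}\sqcup\overline{B})^I})$. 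Antisymmetry of $\sqsubseteq$ on poss-interpretations then forces equality, establishing $\overline{I}\in\mathit{PSM}(\overline{P}\sqcup\overline{B})$.

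The step I expect to be the main obstacle is the decomposition lemma and, more generally, the interplay between the reduct operation and $\sqcup$: one must check that forming the reduct before or after taking $\sqcup$ induces the same ${\cal T}$-behaviour, being careful about the case where two distinct normal rules collapse to the same definite rule under the reduct, and that the min/max weight bookkeeping is faithful to the totally ordered lattice $({\cal Q},\le)$. Once these operator-level identities and the two monotonicity properties are in place, the fixpoint squeeze in (i)--(ii) is routine.
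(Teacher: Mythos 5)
Your proof is correct, and its first half is essentially the paper's argument: the paper likewise unfolds $\mathit{PSM}$ into the least-fixpoint condition, uses Proposition~\ref{prop:eq:con} to pass between ${\cal T}_{\overline{P}}$ and ${\cal T}_{\overline{P}^I}$, and concludes that $\overline{I}$ remains a fixpoint of the combined reduct operator. Where you diverge is in (a) isolating the decomposition ${\cal T}_{\overline{P}\sqcup\overline{B}}(\overline{A})={\cal T}_{\overline{P}}(\overline{A})\sqcup{\cal T}_{\overline{B}}(\overline{A})$ as an explicit lemma with the $\min$/$\max$ distributivity check, which the paper uses only implicitly, and (b) the leastness step: the paper transfers the property ``no $\overline{J}\sqsubset\overline{I}$ is a fixpoint'' from ${\cal T}_{\overline{P}^I}$ to ${\cal T}_{\overline{B}^I\sqcup\overline{P}^I}$ (a minimal-fixpoint characterization), whereas you run a two-sided squeeze, getting $\mathit{lfp}\sqsubseteq\overline{I}$ from Kleene iteration and monotonicity, $\overline{I}\sqsubseteq\mathit{lfp}$ from $\overline{P}^I\sqsubseteq(\overline{P}\sqcup\overline{B})^I$ and monotonicity of $\mathit{Cn}$, then invoking antisymmetry of $\sqsubseteq$ on poss-interpretations. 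Your route is longer but arguably more self-contained: the paper's transfer step, as written, actually needs the same pre-fixpoint/Knaster--Tarski reasoning you make explicit (a fixpoint of the larger operator is only a \emph{pre}-fixpoint of ${\cal T}_{\overline{P}^I}$, so one must use that the least fixpoint lies below every pre-fixpoint), so your version fills a gap the paper glosses over; the paper's version buys brevity by leaning on that standard fact silently.
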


\section{Induction Tasks for Possibilistic Logic Programs}\label{sec:learning:PNLP}
%\section{Induction from Poss-stable Models}\label{sec:learning:PNLP}
In this section, we first present the definition of induction tasks for poss-NLPs and then investigates some properties. These results will further be applied in computing (minimal) induction solutions. 
%Two algorithms are further proposed to solve such a task. 
The necessary and sufficient condition for existing a solution for an induction task relies on the relationships such as (in)comparability and coherency among the components of the task. 
To solve such an induction task, we propose an algorithm {\ILPSM} for the construction of a particular solution and further propose another algorithm {\ILPSMmin} to identify a minimal solution.  
To narrow the search scope of the algorithm {\ILPSMmin}, the concept of solution space is introduced and its properties are investigated.

%\subsection{Inductive learning task}
Informally, assume that there is a given background knowledge base formalised as a set of possibilistic rules, and a set of (positive and negative) examples, the induction task is to extract a new poss-NLP that covers positive examples but none of the negative examples. 
We first formally define the notion of induction tasks for poss-NLPs in Definition~\ref{def:ILT}. 
% \hbl{
% A hypothesis is required to cover all the positive  but none of the negative examples. 
% In this ILP task, the hypothesis $\overline{H}$ \em{covers} an example $e$ if $e \in \mathit{PSM}(\overline{B} \sqcup \overline{H})$\footnote{If $e \in \mathit{PSM}(\overline{H})$ and $\overline{B}$ is not given, we can  assume $\overline{B} = \emptyset$ and also say $\overline{H}$ covers $e$. }; the hypothesis $\overline{H}$ covers a set $E$ of examples if $E \subseteq \mathit{PSM}(\overline{B} \sqcup \overline{H})$. 
% The set of positive examples is asked to be a subset of the complete poss-stable models. 
% }

\begin{definition}[Induction task for poss-NLPs]\label{def:ILT} Let ${\cal A}$ be a set of atoms and ${\cal Q}$ be a complete lattice.
An {\em induction task} for poss-NLPs is a tuple $T = \tuple{ \overline{B}, E^+, E^-}$, where the background knowledge $\overline{B}$ is a poss-NLP over ${\cal A}$ and ${\cal Q}$, $E^+$ and $E^-$ are two sets of possibilistic interpretations called the set of positive examples and the set of negative examples, respectively. A hypothesis $\overline{H}$ is a NLP over ${\cal A}$ and ${\cal Q}$. We say $\overline{H}$ is a solution of the induction task $T$ if the following two conditions are satisfied:
\begin{align}
    E^+\subseteq \mathit{PSM}(\overline{B} \sqcup \overline{H}), \tag{{G1}} \label{eq:G1}\\
    E^-\cap \mathit{PSM}(\overline{B} \sqcup \overline{H})=\emptyset. \tag{{G2}} \label{eq:G2}
\end{align}
The set of all solutions of the induction task is denoted $\mathit{ILP_{LPoSM}}(T)$.

% belongs to the set of induction solutions of $T$, written as $\overline{H} \in \mathit{ILP_{LPoSM}}(T)$, if it achieves two goals 
% \begin{align}
%     E^+\subseteq \mathit{PSM}(\overline{B} \sqcup \overline{H}), \tag{{G1}} \label{eq:G1}\\
%     E^-\cap \mathit{PSM}(\overline{B} \sqcup \overline{H})=\emptyset. \tag{{G2}} \label{eq:G2}
% \end{align}
\end{definition}

% \hbl{
% Example~\ref{exam:PAS:solutions} shows some intuitive instances. 
% As we can see from this running example, some tasks have solutions and some tasks do not. 
% From now on, we further investigate the conditions under which the induction task has a solution. 
% }

\ysa{For convenience, we assume that $\cal A$ and $\cal Q$ consists of the atoms and weights occurring in the induction task $T$, receptively, unless explicitly stated otherwise. \ysd{In the above definition, an example is a possibilistic interpretation, which can be positive, or negative, or none.}}

Note that Condition~\ref{eq:G1} requires only $E^+$ is a portion of the poss-stable models of $\overline{B} \sqcup \overline{H}$. 
This ``partiality of examples'' should not be confused with the ``model partiality''. 
The following example shows some instances of induction tasks.

\begin{example}\label{exam:PAS:solutions}
    For the scenario in Example~\ref{exam:PAS}, the induction task can be represented as $T_1 = \tuple{\overline{B_{med}}, \{ \overline{A_1},\overline{A_2} \}, \{ \overline{A_3} \} }$ where 
    \begin{align*}
    	& \overline{A_1} = \{ (pregnancy,1), (vomiting,1), (medA,1), (\mathit{relief},0.7), (malnutrition,0.7) \},\\
    	& \overline{A_2} = \{ (pregnancy,1), (vomiting,1), (medB,1), (\mathit{relief},0.6), (malnutrition,0.1) \},\\
    	& \overline{A_3} = \{ (pregnancy,1), (vomiting,1), (medA,0.7), (\mathit{relief},0.7) \}.   
    \end{align*} 
    Here ${\cal A} = \{ pregnancy, vomiting, medA, medB, \mathit{relief}, malnutrition \}$ and ${\cal Q} = \{ 0.1,0.6,0.7,1 \}$ and $0.1 \leq 0.6 \leq 0.7 \leq 1$.  
    The induction task $T_1$ has a solution $\overline{H_1} = \{ (medA \lto vomiting, \Not medB, 1) \}$. 
    Also,  $\overline{H_2} = \overline{H_1} \cup \{ (\mathit{relief} \lto vomiting, 0.6) \}$ is a solution of $T_1$. 
    Let $\overline{P_{med}} = \overline{B_{med}} \cup \overline{H_1}$. 
    Then $\overline{A_1}$ and $\overline{A_2}$ are poss-stable models of $\overline{P_{med}}$, while $\overline{A_3}$ is not.

    However, the induction task $T_2 = \tuple{\overline{B_{med}}, \{ \overline{A_1},\overline{A_2}, \{ (pregnancy, 0.6) \} \}, \{ \overline{A_3} \} }$ has no solution, that is, $\mathit{ILP_{LPoSM}}(T_2) = \emptyset$. 

    For induction task $T_3  = \tuple{\emptyset, \{ \{ (p, 0.3), (q,0.3) \} \}, \emptyset}$, we have ${\cal A} = \{ p, q \}$ and ${\cal Q} = \{ 0.3 \}$. 
    This induction task has more than one solution. 
    For instance,  $\overline{H_3}$, $\overline{H_4}$, $\overline{H_5}$, $\overline{H_6}$ are all solutions of $T_3$, that is, $\{ \overline{H_3}, \overline{H_4}, \overline{H_5}, \overline{H_6} \} \subseteq \mathit{ILP_{LPoSM}}(T_3)$, where $\overline{H_3} = \{ (p \lto , 0.3), (q \lto p, 0.3) \}$, $\overline{H_4} = \{ (p \lto q, 0.3), (q \lto , 0.3) \}$, $\overline{H_5} = \{ (p \lto , 0.3), (q \lto , 0.3) \}$, and $\overline{H_6} = \{ (p \lto , 0.3), (q \lto , 0.3), (q \lto p, 0.3), (p \lto q, 0.3) \}$.   

    For induction task $T_4  = \tuple{ \emptyset, \{ \{ (p, 0.3), (q,0.3) \} \}, \{ \{ (p, 0.3), (q,0.3) \} \}}$, it is obvious that $\mathit{ILP_{LPoSM}}(T_4) = \emptyset$. 

    For induction task $T_5  = \tuple{ \{ (p \lto , 1) \}, \{\{(q,1),(p,1)\}\}, \{\{(q,1)\}\} }$, it can be checked that $\{ (q \lto , 1) \} \subseteq \mathit{ILP_{LPoSM}}(T_5)$. 
\end{example}

We recall that any two stable models of a NLP are $\subseteq$-incomparable. 
Namely, $\mathit{SM}(P)$ of any NLP $P$ is $\subseteq$-incomparable. 
For a poss-NLP, there is a similar property.
Before formally stating the property, we note that the (in)comparability of interpretations can be similarly defined for poss-interpretations. 

\begin{definition}[Incomparability for possibilistic interpretations]\label{def:poss:comparable}
    Let $\overline{I}$ and $\overline{J}$ be two possibilistic interpretations, and $\overline{S}$ be a set of possibilistic interpretations.
    \begin{enumerate}
        \item $\overline{I}$ and $\overline{J}$ are comparable, written as $\overline{I} \parallel \overline{J}$, if $I\subseteq J$ or $J\subseteq I$. Otherwise, they are incomparable, written as $\overline{I} \not\parallel \overline{J}$. 
        \item $\overline{I}$ is incomparable \wrt $\overline{S}$, written as $\overline{I} \not\parallel \overline{S}$, if $\overline{I} \not\parallel \overline{J}$ for every $\overline{J}\in S$. Otherwise, $\overline{I}$ is comparable \wrt $\overline{S}$, written as $\overline{I} \parallel \overline{S}$.
        \item $\overline{S}$ is incomparable if $\overline{I} \not\parallel \overline{J}$ for every pair of different interpretations $\overline{I}\in \overline{S}$ and $\overline{J} \in \overline{S}$. Otherwise, $\overline{S}$ is comparable.     
    \end{enumerate}     
\end{definition}

By the minimality of stable models, two different $\subseteq$-comparable interpretations cannot simultaneously be poss-stable models of the same poss-NLP. 

\begin{proposition}[Incomparability between poss-stable models]\label{prop:possSM:incomparable}
    Given two different possibilistic interpretations $\overline{I}$ and $\overline{J}$ such that $\overline{I} \parallel \overline{J}$, $\{ \overline{I}, \overline{J} \} \not\subseteq \mathit{PSM}(\overline{P})$ for any poss-NLP $\overline{P}$.
\end{proposition}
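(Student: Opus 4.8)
The plan is to argue by contradiction, reducing the claim to the corresponding (and well-known) incomparability of ordinary stable models through the one-to-one correspondence recorded in Proposition~\ref{prop:SM:maps}. Suppose, for contradiction, that $\{\overline{I},\overline{J}\}\subseteq \mathit{PSM}(\overline{P})$ for some poss-NLP $\overline{P}$, where $\overline{I}\neq\overline{J}$ and $\overline{I}\parallel\overline{J}$. By Definition~\ref{def:poss:comparable}, $\overline{I}\parallel\overline{J}$ means $I\subseteq J$ or $J\subseteq I$; without loss of generality I would assume $I\subseteq J$, and then split into the two cases $I\subsetneq J$ and $I=J$.

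For the case $I\subsetneq J$, I would invoke the second item of Proposition~\ref{prop:SM:maps}: since $\overline{I}$ and $\overline{J}$ are poss-stable models of $\overline{P}$, their classical projections $I$ and $J$ are stable models of the classical counterpart $P$. But any two stable models of a NLP are $\subseteq$-incomparable (by the minimality built into the stable model definition), so $I\subsetneq J$ is impossible, which yields the desired contradiction.

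The case $I=J$ is the only genuinely delicate point, because the classical incomparability result does not by itself rule out two \emph{distinct} poss-stable models sharing the same projection. Here I would use the fact that a poss-stable model is completely determined by its classical projection: by definition $\overline{I}=\mathit{Cn}(\overline{P}^{I})$ and $\overline{J}=\mathit{Cn}(\overline{P}^{J})$, and the possibilistic reduct $\overline{P}^{(\cdot)}$ depends on the interpretation only through its classical projection. Hence $I=J$ forces $\overline{P}^{I}=\overline{P}^{J}$, so that $\overline{I}=\mathit{Cn}(\overline{P}^{I})=\mathit{Cn}(\overline{P}^{J})=\overline{J}$, contradicting $\overline{I}\neq\overline{J}$. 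Since both cases are contradictory, no poss-NLP $\overline{P}$ can have $\overline{I}$ and $\overline{J}$ simultaneously among its poss-stable models.

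I expect the only real obstacle to be the $I=J$ branch, where one must notice that the classical antichain property is insufficient and instead exploit the definitional uniqueness of a poss-stable model given its projection; the $I\subsetneq J$ branch is a direct transfer through Proposition~\ref{prop:SM:maps} combined with the classical fact that stable models form a $\subseteq$-antichain.
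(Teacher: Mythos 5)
Your proposal is correct and follows essentially the same route as the paper's proof: both argue by contradiction with the identical case split into $I = J$ and $I \subset J$, resolve the strict-inclusion case by projecting through Proposition~\ref{prop:SM:maps} to the $\subseteq$-incomparability of classical stable models, and resolve the $I = J$ case by noting that the possibilistic reduct depends only on the classical projection, so $\overline{I} = \mathit{Cn}(\overline{P}^I) = \mathit{Cn}(\overline{P}^J) = \overline{J}$. Your explicit remark that the $I = J$ branch is the genuinely delicate point (where classical incomparability alone does not suffice) is a fair observation, and your handling of it matches the paper's.
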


Intuitively, two different comparable possibilistic interpretations cannot simultaneously appear in the set of stable models of the same poss-NLP. This result are useful for optimising the algorithms for solving induction tasks in poss-NLPs, which actually provides two heuristics for early termination of some search paths. First, the induction task has no solution if two positive examples in the given $E^+$ are comparable. Moreover, a negative example in $E^-$ can be ignored immediately if it is comparable with a positive example in $E^+$.

Induction task $T = \tuple{ \overline{B}, E^+, E^-}$ of poss-NLP from poss-stable models in two aspects at least. On one hand, an induction task has no solution if two positive examples in the given $E^+$ are comparable as Example~\ref{exam:PSM:incomparable}. On the other hand, a negative example $\overline{e} \in E^-$ can be ignored before the solving process if $\overline{e}$ is comparable with a positive example in $E^+$.

\begin{example}\label{exam:PSM:incomparable}
    Let $\overline{S} = \{ \overline{I}, \overline{J} \} $ be a set of possibilistic interpretations, where $\overline{I} = \{ (p,0.3), (q,0.5) \}$ and $\overline{J} = \{ (p,0.4), (q,0.4) \}$. It is evident that $\overline{I} \parallel \overline{J}$, but $\{ \overline{I}, \overline{J} \} \not\subseteq \mathit{PSM}(\overline{P})$ for any poss-NLP $\overline{P}$. Thus, $T_{21}  = \tuple{\emptyset, E^+, \emptyset}$ has no solution when $E^+ = \overline{S}$ or $E^+ = \{ \{ (p,0.3), (q,0.5) \}, \{ (p, 0.4) \} \}$.
\end{example}

In the process of extracting a poss-NLP from both positive and negative examples, we are going to obtain some other necessary conditions for a solution of the induction task, which will be useful for implementation of algorithms for induction for poss-NLPs.
%should also pay attention to the necessities except for the aforementioned incomparability between stable models. According to 

By Proposition~\ref{prop:SM:maps}, $\mathit{PSM}(\overline{P}) = \overline{S}$ implies $\mathit{SM}(P) = S$, but not vice versa. 

\begin{example}\label{exp:no:CE}
    Let $\overline{S} = \{ \overline{I}, \overline{J} \} $ and $\overline{P} = \{ \overline{r_1}, \overline{r_2}, \overline{r_3} \}$ where $\overline{I} = \{ (p,0.3), (q,0.5) \}$, $\overline{J} = \{ (p,0.4), (r,0.4) \}$, $\overline{r_1} = (p \lto ,0.3)$, and $\overline{r_2} = (q \lto \Not r, 0.6)$, $\overline{r_3} = (r \lto \Not q,0.4)$. It is evident that $\mathit{SM}(P) = S$ but $\mathit{PSM}(\overline{P}) \neq \overline{S}$. 

        We note that changing the weights of rules in $\overline{P}$ does not make that the resulting poss-NLP $\overline{H}$ to satisfy the condition $\mathit{PSM}(\overline{H}) = \overline{S}$, here $H = P$. To see this, on the contrary we assume that $\mathit{PSM}(\overline{H}) = \overline{S}$ for some $\overline{H}$ obtained from $\overline{P}$ by updating the weights of three rules. We note that $\overline{I} \in \mathit{PSM}(\overline{H})$ and $r_1$ is the only rule in $P$ that supports the atom $p$. Thus, the weight $\alpha_1$ of $r_1$ has to be $0.3$. On the other hand, $\overline{J} \in \mathit{PSM}(\overline{H})$ would imply $\alpha_1 = 0.4$ for the identical rationale, a contradiction.
        
    % Furthermore, the annotation of any necessities to $P$ does not enable the construction of a poss-NLP $\overline{H}$ such that $H = P$ and $\mathit{PSM}(\overline{H}) = \overline{S}$. Assuming $\mathit{PSM}(\overline{H}) = \overline{S}$ and $(r_1, \alpha) \in \overline{H}$ subsequent to annotating three necessities to rules within $P$, it follows that $\overline{I} \in \mathit{PSM}(\overline{H})$ necessitates $\alpha = 0.3$, as only $r_1 \in P$ is capable of deriving atom $p$ within the stable model semantics framework. Conversely, $\overline{J} \in \mathit{PSM}(\overline{H})$ would imply $\alpha = 0.4$ for the identical rationale. This leads to a contradiction.
    %What's more, annotating whatever necessities to $P$ cannot construct a poss-NLP $\overline{H}$ such that $H = P$ and $\mathit{PSM}(\overline{H}) = \overline{S}$. Suppose $\mathit{PSM}(\overline{H}) = \overline{S}$ and $(r_1, \alpha) \in \overline{H}$ after annotating three necessities to rules in $P$. $\overline{I} \in \mathit{PSM}(\overline{H})$ implies $\alpha = 0.3$ since only $r_1 \in P$ has the ability to conclude atom $p$ under stable model semantics. Whereas $\overline{J} \in \mathit{PSM}(\overline{H})$ implies $\alpha = 0.4$ for the same reason. A \hbl{contradiction}. 
\end{example}

From the above example, given a collection $\overline{S}$ of poss-interpretations, an ordinary NLP $H$ can be constructed from $\overline{S}$ such that $\mathit{SM}(H) = S$, but there may be no $\overline{H}$ such that $\mathit{PSM}(\overline{H}) = \overline{S}$.
This shows that, due to the introduction of weights in rules, the problem of induction for poss-NLPs is not as easy as in the case of ordinary NLPs.

% from poss-stable models is more complex than inducing a NLP from stable models due to the introduction of necessities. As in Example~\ref{exp:no:CE}, an induced NLP cannot necessarily be weighted into a desired poss-NLP although such a NLP can be regarded as a promising seed for the whole task.
%\subsection{The existence of a solution}
% Taking all these factors into account, we can further analyze the existence of a solution for these induction tasks from poss-stable models. 

% \hbl{
% We will use the categorical discussion method to analyze the existence of a solution for an induction task.
% }
In the rest of this section, we will investigate some properties of induction tasks for poss-programs, especially, the existence of induction solutions. To this aim, we first introduce two notions that are used for characterising rules that can cover a candidate model (supporting positive examples and blocking negative examples, respectively).
% Before formal analysis, we introduce the following additional notations since these particular poss-NLPs help simplify the statements to a large extent. 
% \hbl{
% The constructions of these particular poss-NLPs will gradually be used in the proofs of the upcoming properties which eventually result in the desired Theorem~\ref{thm:taskNLP:exist}  revealing the necessary and sufficient condition for the existence of a solution for $\tuple{\overline{B}, E^+, E^-}$. 
% Additinally, these constructions will be used in a solving algorithm directly. 
% }

%\newpage

Let $\overline{S}$ be a set of possibilistic interpretations. We define %can construct  a possibilistic program 
    \begin{equation} \label{eq:PPE}
        \mathit{PPE}(\overline{S}) = \bigsqcup_{\overline{I} \in \overline{S}} \mathit{PPE}(\overline{I})
    \end{equation} 
    where $\mathit{PPE}(\overline{I}) = \{ (x \lto \Not ({\cal A} - I), \alpha) \mid \text{ for all } (x,\alpha) \in \overline{I} \}$.

    % \hbl{
    % Intuitively, the set $\mathit{PPE}(\overline{I})$ is to guarantee that the first condition in the definition of induction tasks is satisfied.
    % For an induction task $T = \tuple{\overline{B}, E^+, \emptyset}$, we will show that $\mathit{PPE}(E^+)$ corresponds to a candidate solution of $T$ under certain conditions. 
    % }

In an induction task, if $\overline{I}$ in $E^+$, then $\mathit{PPE}(\overline{I})$ contains the possibilistic rules that potentially cover the positive example. 
Thus, $\mathit{PPE}(E^+)$ contains the rules that are potentially cover $E^+$ (i.~e., all positive examples) 
if $E^+$ is incomparable. 
% \hbl{

% When constructing $\mathit{PPE}(\overline{I})$, we can choose $\overline{I}$ from $E^+$ so that positive examples are covered. 
% In contrast, the construction of $\mathit{PNE}(\overline{I})$ is to handle the negative example $\overline{I}$. 
% These two constructions are usually parts of the final solution. 
% }

% The next result shows that given an incomparable set $\overline{S}$ of possibilistic interpretations, the poss-NLP $\mathit{PPE}(\overline{S})$ indeed covers $S$. In this way, we are able to construct a poss-NLP to cover the set of positive examples.

% By Proposition~\ref{prop:possNLP:exist}, the poss-NLP $\mathit{PPE}(\overline{S})$ can exactly cover an incomparable set $\overline{S}$ of possibilistic interpretations. 
% Intuitively, we can directly construct a special poss-NLP as a candidate hypothesis to ensure the coverage of an incomparable set of positive examples. 
% Example~\ref{exam:possNLP:incomparable} shows more details about such a special construction.

%Based on these programs, here come the formal properties.

\begin{proposition}[Satisfiability for program $\mathit{PPE}(\overline{S})$]	\label{prop:possNLP:exist} 
    For a given set $\overline{S}$ of possibilistic interpretations, $\mathit{PSM}(\mathit{PPE}(\overline{S})) = \overline{S}$ if $\overline{S}$ is incomparable. 
    %Otherwise, there is no poss-NLP $\overline{P}$ such that $\mathit{PSM}(\overline{P}) = \overline{S}$.    
\end{proposition}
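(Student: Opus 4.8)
The plan is to prove the two inclusions $\overline{S} \subseteq \mathit{PSM}(\mathit{PPE}(\overline{S}))$ and $\mathit{PSM}(\mathit{PPE}(\overline{S})) \subseteq \overline{S}$ separately, letting a single reduct computation carry the weight. Write $\overline{P} = \mathit{PPE}(\overline{S})$. The first thing I would record is that incomparability of $\overline{S}$ forces the classical projections $\{I \mid \overline{I} \in \overline{S}\}$ to be pairwise $\subseteq$-incomparable, hence distinct (if $I=I'$ then $\overline{I}\parallel\overline{I'}$). Consequently the rules contributed by distinct $\overline{I'} \in \overline{S}$ carry distinct negative bodies $\Not({\cal A} - I')$, so no two are identical and the $\sqcup$ in the definition of $\mathit{PPE}(\overline{S})$ merges no weights: $\overline{P}$ is just the union of the sets $\mathit{PPE}(\overline{I'})$ with their original weights. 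The central observation is then the form of the possibilistic reduct $\overline{P}^M$, which depends only on the classical projection $M$: a rule $(x \lto \Not({\cal A} - I'), \alpha)$ coming from $\overline{I'}$ survives (as the fact $(x \lto, \alpha)$) exactly when $\Neg(r) \cap M = ({\cal A} - I') \cap M = \emptyset$, i.e. when $M \subseteq I'$.

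For the forward inclusion I would fix $\overline{I} \in \overline{S}$ and take $M = I$. By the survival condition a rule from $\overline{I'}$ remains iff $I \subseteq I'$; since $\overline{I},\overline{I'} \in \overline{S}$ are incomparable this forces $I' = I$, so that $\overline{P}^I$ is exactly the fact set $\{(x \lto, \alpha) \mid (x,\alpha) \in \overline{I}\}$. Its heads are pairwise distinct because $\overline{I}$ is a possibilistic interpretation, so the immediate consequence operator computes $\mathit{Cn}(\overline{P}^I) = \overline{I}$ in one step. Hence $\overline{I} = \mathit{Cn}(\overline{P}^I)$ and $\overline{I} \in \mathit{PSM}(\overline{P})$.

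For the reverse inclusion I would take $\overline{M} \in \mathit{PSM}(\overline{P})$, so $\overline{M} = \mathit{Cn}(\overline{P}^M)$. Reading off classical projections, $\overline{P}^M$ contributes exactly the facts of those $\overline{I'}$ with $M \subseteq I'$, so $M = \bigcup\{I' \mid \overline{I'} \in \overline{S},\, M \subseteq I'\}$. Each such $I'$ then satisfies both $I' \subseteq M$ (it is a summand) and $M \subseteq I'$ (it is in the index set), whence $I' = M$; so the index set is the singleton $\{\overline{I_0}\}$ with $I_0 = M \in \overline{S}$, provided it is nonempty. It is nonempty whenever $\overline{S} \neq \emptyset$: if it were empty then $M = \emptyset$, but $\emptyset \subseteq I'$ for every $\overline{I'}$, so the index set would be all of $\overline{S}$, a contradiction. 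Finally $\overline{P}^M = \overline{P}^{I_0}$ is the fact set of $\overline{I_0}$, and the computation of the forward inclusion gives $\overline{M} = \mathit{Cn}(\overline{P}^{I_0}) = \overline{I_0} \in \overline{S}$.

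The main obstacle is this reverse inclusion: I must rule out spurious poss-stable models whose projection is a union of several of the given interpretations, and incomparability is precisely what collapses that union to a single member of $\overline{S}$. (One could equivalently route the projection argument through Proposition~\ref{prop:SM:maps} to get $M \in \mathit{SM}(P)$ and argue $\mathit{SM}(P) = \{I \mid \overline{I} \in \overline{S}\}$, but the direct reduct computation is self-contained.) A secondary point requiring care is the degenerate case $\overline{S} = \emptyset$, where the statement as literally written fails since $\mathit{PPE}(\emptyset)$ is the empty program with unique poss-stable model $\emptyset$; this is why I use the nonemptiness of $\overline{S}$, which is guaranteed in the intended application where $\overline{S}$ is a set of positive examples.
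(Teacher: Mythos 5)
Your proof is correct. The forward inclusion is essentially the paper's own argument: incomparability guarantees that for $\overline{I} \in \overline{S}$ the reduct $\mathit{PPE}(\overline{S})^I$ consists exactly of the facts $\{(x \lto , \alpha) \mid (x,\alpha) \in \overline{I}\}$, whose consequence set is $\overline{I}$. Where you diverge is the reverse inclusion. The paper proves the contrapositive with a case split on $\overline{K} \notin \overline{S}$: if $K$ is $\subseteq$-comparable with some member of $S$, it appeals to the already-established forward inclusion together with Proposition~\ref{prop:possSM:incomparable}; if $K$ is incomparable with every member, it shows $\mathit{PPE}(\overline{S})^K = \emptyset$, hence $\mathit{Cn}(\mathit{PPE}(\overline{S})^K) = \emptyset \neq \overline{K}$. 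You instead start from an arbitrary $\overline{M} \in \mathit{PSM}(\mathit{PPE}(\overline{S}))$, read off $M = \bigcup \{ I' \mid \overline{I'} \in \overline{S},\, M \subseteq I' \}$ from the fixpoint equation, and collapse the union to a single member of $\overline{S}$ via incomparability. Your route is a single self-contained reduct computation that never invokes Proposition~\ref{prop:possSM:incomparable}; the paper's route is shorter per case but leans on that auxiliary result. Both hinge on the same mechanism (the negative bodies of $\mathit{PPE}$ rules eliminating all ``foreign'' rules in the reduct), and your explicit observation that distinct members of $\overline{S}$ have distinct classical projections, so that $\sqcup$ merges no weights, pins down a detail the paper leaves implicit.

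Your caveat on $\overline{S} = \emptyset$ is also warranted, and here you are more careful than the paper: its proof opens by declaring the empty case ``evident,'' but $\mathit{PPE}(\emptyset)$ is the empty program, whose unique poss-stable model is the empty interpretation, so $\mathit{PSM}(\mathit{PPE}(\emptyset)) = \{\emptyset\} \neq \emptyset = \overline{S}$, even though $\emptyset$ is vacuously incomparable. The statement thus fails literally in that degenerate case, and your restriction to nonempty $\overline{S}$ (harmless in the intended application to positive examples) is the correct repair.
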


\begin{example}\label{exam:possNLP:incomparable}
    Let $T_{22}  = \tuple{\emptyset, E^+, \emptyset}$ be an induction task, where $E^+ = \{ \{ (p, 0.5), (r, 0.5) \}, \{ (q, 0.3), (r, 0.8) \} \}$. Then ${\cal A} = \{ p,q,r \}$, ${\cal Q} = \{ 0.3, 0.5, 0.8 \}$ and $0.3 \leq 0.5 \leq 0.8$. Obviously, $E^+$ is incomparable. It is easy to see that $\mathit{PSM}(\mathit{PPE}(E^+)) = E^+$. So, $\mathit{PPE}(E^+) \in \mathit{ILP_{LPoSM}}(T_{22})$, here $\mathit{PPE}(E^+) = \{ (p \lto \Not q, 0.5), (r \lto \Not q, 0.5), (q \lto \Not p, 0.3), (r \lto \Not p, 0.8) \}$. 
\end{example}

Apart from the positive examples, the impact of negative examples on the induction solution can be also handled by constructing a special poss-NLP.  
Let $\overline{S_N}$ and $\overline{S_P}$ be two sets of possibilistic interpretations. We define %can construct a possibilistic program 
    \begin{equation} \label{eq:PNE}
        \mathit{PNE}(\overline{S_N},\overline{S_P}) = \bigsqcup_{\overline{I} \in \overline{S_N}, I \neq {\cal A}, \overline{I} \not\parallel \overline{S_P}} \mathit{PNE}(\overline{I})
    \end{equation}
    where $\mathit{PNE}(\overline{I}) = \{ (x_0 \lto I, \Not ({\cal A} - I), \mu) \mid \text{ for some } x_0 \in ({\cal A} - I) \}$. Here $x_0$ is an arbitrary element in ${\cal A} - I$ and $\mu$ is %\ysd{used to denote} 
    the supremum of ${\cal Q}$ in $({\cal Q},\le)$. In other words, $\mathit{PNE}(\overline{I})$ contains exactly one rule that guarantees the second condition in the definition of induction tasks is satisfied.

    For an induction task $T = \tuple{\overline{B}, E^+, E^-}$, it will be proved that $\mathit{PNE}(E^-,E^+)$ corresponds to a poss-NLP blocking some of possibilistic interpretations in $E^-$ to become the poss-stable models. 
    \ysa{Please note that   $|\mathit{PPE(\overline S)}|\le \sum_{\overline I\in\overline S}|\overline I|$ and $|\mathit{PNE(\overline{S_N},\overline{S_P})}|\le |S_N|$. Both of them can be computed in polynomial time.} 

\begin{proposition}[Unsatisfiability for program $\mathit{PNE}(\overline{S_N},\overline{S_P})$]\label{prop:PSM:PNE}
    Given a poss-NLP $\overline{P}$ and two sets $\overline{S_N}$ and $\overline{S_P}$ of possibilistic interpretations, $\overline{I} \notin \mathit{PSM}(\mathit{PNE}(\overline{S_N},\overline{S_P}) \sqcup \overline{P})$ if $\overline{I} \in \overline{S_N}$, $I \neq {\cal A}$, and $ \overline{I} \not\parallel \overline{S_P}$.
\end{proposition}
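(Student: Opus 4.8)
The plan is to argue by contradiction, reducing the possibilistic statement to a purely classical one via the correspondence between poss-stable models and stable models (Proposition~\ref{prop:SM:maps}). Write $\overline{Q} = \mathit{PNE}(\overline{S_N},\overline{S_P}) \sqcup \overline{P}$ and suppose toward a contradiction that $\overline{I} \in \mathit{PSM}(\overline{Q})$. By the second item of Proposition~\ref{prop:SM:maps}, the classical projection $I$ is then a stable model of $Q$, that is, $I = \mathit{Cn}(Q^I)$.

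The first key step is to locate the offending blocking rule. Because $\overline{I} \in \overline{S_N}$, $I \neq {\cal A}$, and $\overline{I} \not\parallel \overline{S_P}$, the three conditions in the index of the $\sqcup$ in Equation~(\ref{eq:PNE}) are all met, so $\mathit{PNE}(\overline{I})$ is one of the joined summands. Hence the single rule it contributes, $\overline{r} = (x_0 \lto I, \Not({\cal A}-I), \mu)$ with $x_0 \in {\cal A}-I$ (which exists precisely because $I \neq {\cal A}$), occurs in $\mathit{PNE}(\overline{S_N},\overline{S_P})$, and therefore its classical counterpart $r$ occurs in $Q$, since the $\sqcup$ operation retains every classical rule present in either operand and so $\overline{P}$ cannot delete it.

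Next I would push $r$ through the reduct and fire it. Since $\Neg(r) = {\cal A}-I$ and $({\cal A}-I) \cap I = \emptyset$, the rule survives the reduct, so the definite rule $x_0 \lto I$ belongs to $Q^I$. Its positive body is $\Pos(r) = I$, and $I \subseteq \mathit{Cn}(Q^I)$ by the assumption $I = \mathit{Cn}(Q^I)$; hence the rule is applicable and $x_0 \in \mathit{Cn}(Q^I) = I$. This contradicts $x_0 \in {\cal A}-I$, i.e.\ $x_0 \notin I$, which establishes $\overline{I} \notin \mathit{PSM}(\overline{Q})$.

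The argument is short once it is routed through the classical projection; the only place demanding care is the bookkeeping of the $\sqcup$ with the arbitrary background program $\overline{P}$. I expect the main (minor) obstacle to be verifying that combining $\mathit{PNE}(\overline{S_N},\overline{S_P})$ with $\overline{P}$ neither removes the blocking rule nor suppresses its effect: the definition of $\sqcup$ on poss-NLPs only ever takes the maximum of coinciding weights and never drops a classical rule, and the extra rules contributed by $\overline{P}$ can only enlarge $\mathit{Cn}(Q^I)$, so the derivation $x_0 \in \mathit{Cn}(Q^I)$ persists. If one instead prefers a self-contained possibilistic derivation, the same contradiction follows directly from the fixpoint identity ${\cal T}_{\overline{Q}^I}(\overline{I}) = \overline{I}$: the reduced rule $(x_0 \lto I, \mu)$ is $\beta$-applicable in $\overline{I}$ for some weight $\beta$ because every atom of $I$ appears in $\overline{I}$, so $\mathit{App}(\overline{Q}^I,\overline{I},x_0) \neq \emptyset$, forcing $(x_0,\delta) \in {\cal T}_{\overline{Q}^I}(\overline{I}) = \overline{I}$ and hence $x_0 \in I$, again contradicting $x_0 \in {\cal A}-I$.
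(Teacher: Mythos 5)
Your proposal is correct and follows essentially the same route as the paper's proof: both reduce to the classical projection via Proposition~\ref{prop:SM:maps}, single out the blocking rule $(x_0 \lto I, \Not({\cal A}-I), \mu)$ contributed by $\mathit{PNE}(\overline{I})$ (which survives the $\sqcup$ with $\overline{P}$), and show that $I$ satisfies its body but not its head. The only cosmetic difference is that the paper concludes directly from $I \not\models r$ that $I$ is not a classical model (hence not a stable model) of $H \cup P$, whereas you unfold the same violation through the reduct and the fixpoint identity by contradiction.
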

By the above proposition, the poss-NLP $\mathit{PNE}(\overline{S_N},\overline{S_P})$ blocks some possibilistic interpretations in $\overline{S_N}$ to become poss-stable models. Consider the example below.
%where $\overline{S_N}$ and $\overline{S_P}$ are two sets of possibilistic interpretations. For further insight into this construction, refer to Example~\ref{exam:PSM:PNE}.

\begin{example}\label{exam:PSM:PNE}
    Let $\overline{S_N} = \{ \overline{I_1}, \overline{I_2}, \overline{I_3} \}$ and $\overline{S_P}= \{ \overline{J} \}$ be two sets of possibilistic interpretations over ${\cal A} = \{ p,q,r \}$ where $\overline{I_1} = \{ (p, 0.3), (q, 0.3), (r, 0.5) \}$,  $\overline{I_2} = \{ (p, 0.5), (r, 0.5) \}$,  $\overline{I_3} = \{ (q, 0.3), (r, 0.8) \}$ and $\overline{J} = \{ (p, 0.3) \}$. It is evident that $I_1 = {\cal A}$, $I_1 \supseteq J$ and $I_2 \supseteq J$. Hence, $\{ \overline{I} \in \overline{S_N} \mid  I \neq {\cal A}, \overline{I} \not\parallel \overline{S_P}\} = \{ \overline{I_3} \}$. Let $\mathit{PNE}(\overline{S_N},\overline{S_P}) = \mathit{PNE}(\overline{I_3}) = \{ \overline{r} \} = \{ (p \lto q, r, \Not p, 0.8) \}$ so that $I_3 \not\models r$. As a result, $\overline{I_3} \notin \mathit{PSM}(\mathit{PNE}(\overline{S_N},\overline{S_P}) \sqcup \overline{P})$ for any poss-NLP $\overline{P}$.

    Assume $\overline{S_N} = \{ \overline{I_3}, \overline{I_4} \}$ where $\overline{I_4} = \{ (r, 0.5) \}$. 
    When other variables remain the same, it is evident that $I_4 \neq {\cal A}$ and $\overline{I_4} \not\parallel \overline{J}$. 
    Now we have $\{ \overline{I} \in \overline{S_N} \mid  I \neq {\cal A}, \overline{I} \not\parallel \overline{S_P}\} = \{ \overline{I_3}, \overline{I_4} \}$. 
    For $\overline{I_3}$, $\mathit{PNE}(\overline{I_3})$ must be $ \{ (p \lto q, r, \Not p, 0.8) \}$ since ${\cal A} - I_3 = \{ p \}$ contains only one atom. 
    In contrast, $\mathit{PNE}(\overline{I_4})$ can be $ \{ (p \lto r, \Not p, \Not q, 0.8) \}$ or $ \{ (q \lto r, \Not p, \Not q, 0.8) \}$ since ${\cal A} - I_4 = \{ p, q \}$ contains two atoms. 
    The rule $\overline{r_4} = (x \lto r, \Not p, \Not q, 0.8)$ always satisfies the condition $I_4 \not\models r_4$ when $x \in {\cal A} - I_4$. 
    Therefore, $\mathit{PNE}(\overline{S_N},\overline{S_P}) = \mathit{PNE}(\overline{I_3}) \sqcup \mathit{PNE}(\overline{I_4})$ can be $\{ (p \lto q, r, \Not p, 0.8), (p \lto r, \Not p, \Not q, 0.8) \}$ or $\{ (p \lto q, r, \Not p, 0.8), (q \lto r, \Not p, \Not q, 0.8) \}$. 
    Regardless of which of these two poss-NLPs assigned to $\mathit{PNE}(\overline{S_N},\overline{S_P})$, $\overline{S_N} \cap \mathit{PSM}(\mathit{PNE}(\overline{S_N},\overline{S_P}) \sqcup \overline{P}) = \emptyset$ for any poss-NLP $\overline{P}$.
    
\end{example}

As explained, the poss-NLP $\mathit{PNE}(\overline{S_N},\overline{S_P})$ only considers a part of possibilistic interpretations in $\overline{S_N}$. A possibilistic interpretation $\overline{G}$ such that $G = {\cal A}$ goes beyond the consideration. So we also construct another special poss-NLP $\mathit{PPE}(\overline{G})$.
As Lemma~\ref{lm:SM:PPI} shows, $\mathit{PPE}(\overline{G})$ can be used to generate the poss-stable model $\overline{G}$ under certain conditions. Intuitively, a possibilistic interpretation $\overline{I}$ such that $I = {\cal A}$ is so particular since it is comparable to any possibilistic interpretation. Hereafter, 
we denote $\overline{\mathbb{A}}=\{\overline I\mid \mbox{$\overline I$ is a possibilistic interpretation with $I=\cal A$}\}$.
%
%we use $\overline{\mathbb{A}}$ to denote the set of all possibilistic interpretations such that $I = {\cal A}$ for each $\overline{I} \in \overline{\mathbb{A}}$. 

\begin{lemma}[Unique stable model for program $\mathit{PPE}(\overline{G})$]\label{lm:SM:PPI}
    Given a poss-NLP $\overline{B}$ and a possibilistic interpretation $\overline{G}$ with $G = {\cal A}$, if ${\cal T}_{\overline{B}}(\overline{G}) \sqsubseteq \overline{G}$, then $\mathit{PSM}(\overline{B} \sqcup \mathit{PPE}(\overline{G})) = \{ \overline{G} \}$.
\end{lemma}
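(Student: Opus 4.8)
The plan is to write $\overline Q=\overline B\sqcup\mathit{PPE}(\overline G)$ and split the claim into the inclusion $\overline G\in\mathit{PSM}(\overline Q)$ and the reverse $\mathit{PSM}(\overline Q)\subseteq\{\overline G\}$. The single observation driving everything is that, since $G={\cal A}$, we have ${\cal A}-G=\emptyset$, so $\Not({\cal A}-G)=\emptyset$ and therefore $\mathit{PPE}(\overline G)=\{(x\lto,\alpha)\mid (x,\alpha)\in\overline G\}$ is just a set of \emph{facts}, one for every atom of ${\cal A}$.

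For membership I would invoke the two results already established rather than compute a fixpoint by hand. A singleton set of interpretations is vacuously incomparable, so Proposition~\ref{prop:possNLP:exist} applied to $\overline S=\{\overline G\}$ gives $\mathit{PSM}(\mathit{PPE}(\overline G))=\{\overline G\}$, i.e. $\overline G\in\mathit{PSM}(\mathit{PPE}(\overline G))$. The hypothesis ${\cal T}_{\overline B}(\overline G)\sqsubseteq\overline G$ is exactly the side condition of Corollary~\ref{cor:SM:absorption}, so, using that $\sqcup$ is symmetric, that corollary yields $\overline G\in\mathit{PSM}(\mathit{PPE}(\overline G)\sqcup\overline B)=\mathit{PSM}(\overline Q)$.

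For uniqueness I would descend to the classical level. Let $Q$ denote the classical counterpart of $\overline Q$; then $Q=B\cup\{x\lto\mid x\in{\cal A}\}$ contains a fact for every atom, so every stable model must contain all of ${\cal A}$ and hence $\mathit{SM}(Q)=\{{\cal A}\}$. By the second item of Proposition~\ref{prop:SM:maps}, any $\overline H\in\mathit{PSM}(\overline Q)$ satisfies $H\in\mathit{SM}(Q)$, so $H={\cal A}=G$. Now the possibilistic reduct $\overline Q^{H}$ depends only on the classical part $H$ of $\overline H$, so $\overline Q^{H}=\overline Q^{G}$ and therefore $\overline H=\mathit{Cn}(\overline Q^{H})=\mathit{Cn}(\overline Q^{G})$ takes the same value for every poss-stable model. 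Combined with the membership step this forces every poss-stable model to equal $\overline G$, giving $\mathit{PSM}(\overline Q)=\{\overline G\}$.

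The argument is short because the heavy lifting is delegated to Proposition~\ref{prop:possNLP:exist} and Corollary~\ref{cor:SM:absorption}, and uniqueness drops out of the observation that the reduct sees only the classical part of an interpretation. The only place that genuinely uses the hypothesis is membership: if one instead computed $\mathit{Cn}(\overline Q^{G})$ directly, the delicate point would be checking that no rule of $\overline B$ raises the weight of any atom above its weight in $\overline G$ — and this is precisely what ${\cal T}_{\overline B}(\overline G)\sqsubseteq\overline G$ guarantees, given that $\mathit{PPE}(\overline G)$ already supplies each atom at its $\overline G$-weight. Routing membership through the absorption corollary sidesteps this weight bookkeeping entirely, so I expect the main (and rather minor) obstacle to be merely phrasing the ``reduct depends only on the classical part'' step cleanly enough to obtain uniqueness without any weight computation.
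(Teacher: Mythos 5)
Your proof is correct, and it is worth separating its two halves. The membership half takes the paper's own route: the paper verifies $\overline{G}\in\mathit{PSM}(\mathit{PPE}(\overline{G}))$ by computing the reduct and its least fixpoint directly, whereas you obtain the same fact by citing Proposition~\ref{prop:possNLP:exist} for the singleton (hence vacuously incomparable) set $\{\overline{G}\}$ — a legitimate shortcut since $\mathit{PPE}(\{\overline{G}\})=\mathit{PPE}(\overline{G})$ — and then both proofs invoke Corollary~\ref{cor:SM:absorption} with the hypothesis ${\cal T}_{\overline{B}}(\overline{G})\sqsubseteq\overline{G}$ in exactly the same way. Where you genuinely diverge is uniqueness: the paper disposes of it in one line by observing that $G={\cal A}$ makes $\overline{G}$ comparable to every possibilistic interpretation, so Proposition~\ref{prop:possSM:incomparable} rules out any second poss-stable model; you instead descend to the classical level, use the second item of Proposition~\ref{prop:SM:maps} to force $H={\cal A}$ for every $\overline{H}\in\mathit{PSM}(\overline{Q})$ (your $\overline{Q}=\overline{B}\sqcup\mathit{PPE}(\overline{G})$, whose classical part contains a fact for every atom), and then exploit the fact that the possibilistic reduct, and hence $\mathit{Cn}$, depends only on the classical part of an interpretation, so every poss-stable model equals $\mathit{Cn}(\overline{Q}^G)=\overline{G}$. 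Both uniqueness arguments are sound and short: the paper's buys brevity by reusing the incomparability machinery it already established for induction tasks, while yours is more self-contained — it isolates the real reason for uniqueness, namely that the reduct is blind to weights, and would go through even without Proposition~\ref{prop:possSM:incomparable} being available.
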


With constructions of the above special poss-NLPs, for a given poss-interpretation $\overline{I}$, the existence of a poss-NLP $\overline{G}$ with $\overline{I}$ being a poss-stable mode of $\overline{G}$ is guaranteed by the closeness of $\overline{I}$ under the immediate consequence operator of $\overline{G}$.

% With the help of these special poss-NLPs aforementioned, we propose Proposition~\ref{prop:possNLP:interpretation:exist}. It reveals a necessary and sufficient condition for the existence of a program w.r.t. a poss-stable model. When we expend a poss-stable model into a set, it is easy to get Corollary~\ref{cor:possNLP:B:exist}. Example~\ref{exam:possNLP:B:exist} provides more intuitive details about this necessary and sufficient condition. 

\begin{proposition}[Existence of a program w.r.t. a poss-stable model]	\label{prop:possNLP:interpretation:exist} 
    For a possibilistic interpretation $\overline{I}$ and a poss-NLP $\overline{B}$, there exists a poss-NLP $\overline{P}$ such that $\overline{I} \in \mathit{PSM}(\overline{B} \sqcup \overline{P})$ if and only if ${\cal T}_{\overline{B}}(\overline{I}) \sqsubseteq \overline{I}$.
\end{proposition}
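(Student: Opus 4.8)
The plan is to prove the two directions of the biconditional separately, using the program $\mathit{PPE}(\overline{I})$ as the explicit witness for the sufficiency direction and a monotonicity property of the operator ${\cal T}$ for the necessity direction. The two key imported facts are Proposition~\ref{prop:possNLP:exist} together with Corollary~\ref{cor:SM:absorption} (for sufficiency) and Proposition~\ref{prop:eq:con} (for necessity).

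For the sufficiency direction, suppose ${\cal T}_{\overline{B}}(\overline{I}) \sqsubseteq \overline{I}$. I would take $\overline{P} = \mathit{PPE}(\overline{I})$ as the required program. Since the singleton $\{ \overline{I} \}$ is vacuously incomparable, Proposition~\ref{prop:possNLP:exist} applied to $\overline{S} = \{ \overline{I} \}$ yields $\mathit{PSM}(\mathit{PPE}(\overline{I})) = \{ \overline{I} \}$, hence $\overline{I} \in \mathit{PSM}(\mathit{PPE}(\overline{I}))$. I would then invoke Corollary~\ref{cor:SM:absorption} with $\mathit{PPE}(\overline{I})$ in the role of the absorbing program: since $\overline{I} \in \mathit{PSM}(\mathit{PPE}(\overline{I}))$ and ${\cal T}_{\overline{B}}(\overline{I}) \sqsubseteq \overline{I}$ by assumption, the corollary gives $\overline{I} \in \mathit{PSM}(\mathit{PPE}(\overline{I}) \sqcup \overline{B}) = \mathit{PSM}(\overline{B} \sqcup \mathit{PPE}(\overline{I}))$. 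This witnesses the existence of $\overline{P}$ uniformly, whether $I = {\cal A}$ or $I \neq {\cal A}$, so no case split is needed.

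For the necessity direction, suppose some $\overline{P}$ satisfies $\overline{I} \in \mathit{PSM}(\overline{B} \sqcup \overline{P})$. By definition this means $\overline{I} = \mathit{Cn}((\overline{B} \sqcup \overline{P})^I) = \mathit{lfp}({\cal T}_{(\overline{B} \sqcup \overline{P})^I})$, so $\overline{I}$ is a fixpoint of ${\cal T}_{(\overline{B} \sqcup \overline{P})^I}$. Applying Proposition~\ref{prop:eq:con} to the program $\overline{B} \sqcup \overline{P}$ gives ${\cal T}_{(\overline{B} \sqcup \overline{P})^I}(\overline{I}) = {\cal T}_{\overline{B} \sqcup \overline{P}}(\overline{I})$, and therefore ${\cal T}_{\overline{B} \sqcup \overline{P}}(\overline{I}) = \overline{I}$. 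It then remains to prove the inequality ${\cal T}_{\overline{B}}(\overline{I}) \sqsubseteq {\cal T}_{\overline{B} \sqcup \overline{P}}(\overline{I})$, from which ${\cal T}_{\overline{B}}(\overline{I}) \sqsubseteq \overline{I}$ follows at once.

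To establish this inequality I would argue directly from the definitions of $\sqcup$ and ${\cal T}$. Every rule $(r, \alpha) \in \overline{B}$ reappears in $\overline{B} \sqcup \overline{P}$ as $(r, \gamma)$ with $\gamma \geq \alpha$ (namely $\gamma = \alpha$ when $r \notin P$, and $\gamma = \max\{\alpha, \beta\}$ when $(r, \beta) \in \overline{P}$), and with the identical body. Because $\beta$-applicability in $\overline{I}$ is a minimum over the rule weight and the body-atom weights while the $\Neg$ condition depends only on the body, raising the weight from $\alpha$ to $\gamma$ leaves the applicability condition intact and cannot decrease the applicability degree. Hence for each atom $q$, the maximal degree contributing to $(q, \delta) \in {\cal T}_{\overline{B}}(\overline{I})$ is matched by some $(q, \delta') \in {\cal T}_{\overline{B} \sqcup \overline{P}}(\overline{I})$ with $\delta' \geq \delta$, which is precisely ${\cal T}_{\overline{B}}(\overline{I}) \sqsubseteq {\cal T}_{\overline{B} \sqcup \overline{P}}(\overline{I})$. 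I expect this bookkeeping — tracking weights through the $\max$ of $\sqcup$ and the $\min$ of $\beta$-applicability while keeping bodies (and thus $\Neg$ conditions) fixed — to be the main technical obstacle, though each step is routine once the rule correspondence between $\overline{B}$ and $\overline{B} \sqcup \overline{P}$ is in place.
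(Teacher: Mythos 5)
Your proof is correct and takes essentially the same route as the paper's: the same witness $\mathit{PPE}(\overline{I})$ combined with Proposition~\ref{prop:possNLP:exist} and Corollary~\ref{cor:SM:absorption} for the sufficiency direction, and Proposition~\ref{prop:eq:con} together with monotonicity of ${\cal T}$ under $\sqcup$ for the necessity direction. The only difference is presentational: the paper runs the necessity direction as a proof by contradiction at the level of reducts, hiding the monotonicity fact inside the rewriting $(\overline{B} \sqcup \overline{P})^I = \overline{B}^I \sqcup (\overline{B} \sqcup \overline{P})^I$, whereas you argue it directly and make that weight-bookkeeping step explicit.
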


By the above proposition, we are able to state useful corollaries in the following.

\begin{corollary}%[existence of a program w.r.t. poss-stable models and background]	 
\label{cor:possNLP:B:exist}
For an incomparable set $\overline{S}$ of possibilistic interpretations and a poss-NLP $\overline{B}$, there exists a poss-NLP $\overline{P}$ such that $\overline{S} \subseteq \mathit{PSM}(\overline{B} \sqcup \overline{P})$ if and only if ${\cal T}_{\overline{B}}(\overline{I}) \sqsubseteq \overline{I}$ for each $\overline{I} \in \overline{S}$.
\end{corollary}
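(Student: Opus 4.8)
The plan is to prove the two directions separately, treating the forward (``only if'') direction as a pointwise application of Proposition~\ref{prop:possNLP:interpretation:exist} and handling the backward (``if'') direction by exhibiting the explicit witness $\overline{P} = \mathit{PPE}(\overline{S})$.

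For the forward direction, I would assume there is a poss-NLP $\overline{P}$ with $\overline{S} \subseteq \mathit{PSM}(\overline{B} \sqcup \overline{P})$ and fix an arbitrary $\overline{I} \in \overline{S}$. Then $\overline{I} \in \mathit{PSM}(\overline{B} \sqcup \overline{P})$, so this very $\overline{P}$ witnesses the existence clause in Proposition~\ref{prop:possNLP:interpretation:exist}. The ``only if'' part of that proposition then immediately yields ${\cal T}_{\overline{B}}(\overline{I}) \sqsubseteq \overline{I}$, and since $\overline{I}$ was arbitrary the inequality holds for every member of $\overline{S}$. This direction is essentially a restatement of Proposition~\ref{prop:possNLP:interpretation:exist} applied one interpretation at a time, with the same $\overline{P}$ serving as the common witness.

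For the backward direction, I would assume ${\cal T}_{\overline{B}}(\overline{I}) \sqsubseteq \overline{I}$ for each $\overline{I} \in \overline{S}$ and take $\overline{P} = \mathit{PPE}(\overline{S})$. Since $\overline{S}$ is incomparable, Proposition~\ref{prop:possNLP:exist} gives $\mathit{PSM}(\mathit{PPE}(\overline{S})) = \overline{S}$, so every $\overline{I} \in \overline{S}$ is a poss-stable model of $\overline{P}$. I would then invoke Corollary~\ref{cor:SM:absorption} with this $\overline{P}$ and the fixed background $\overline{B}$: from $\overline{I} \in \mathit{PSM}(\overline{P})$ together with ${\cal T}_{\overline{B}}(\overline{I}) \sqsubseteq \overline{I}$ it follows that $\overline{I} \in \mathit{PSM}(\overline{P} \sqcup \overline{B}) = \mathit{PSM}(\overline{B} \sqcup \overline{P})$, where the last equality uses the symmetry of $\sqcup$ on poss-NLPs (immediate from its definition). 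Ranging over all $\overline{I} \in \overline{S}$ gives $\overline{S} \subseteq \mathit{PSM}(\overline{B} \sqcup \overline{P})$, as required. Note that only the subset relation is claimed, so spurious additional poss-stable models of $\overline{B} \sqcup \overline{P}$ cause no difficulty.

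The step that deserves the most care is the backward direction, and in particular why one cannot simply glue together the per-interpretation witnesses produced by Proposition~\ref{prop:possNLP:interpretation:exist}: a program realising one $\overline{I}$ as a poss-stable model may well destroy another member of $\overline{S}$. The incomparability hypothesis is exactly what lets a single canonical program $\mathit{PPE}(\overline{S})$ retain all of $\overline{S}$ as its poss-stable models simultaneously (Proposition~\ref{prop:possNLP:exist}), while the hypothesis ${\cal T}_{\overline{B}}(\overline{I}) \sqsubseteq \overline{I}$ is precisely the condition guaranteeing that absorbing the fixed background $\overline{B}$ leaves each $\overline{I}$ intact. Once these two ingredients are combined through Corollary~\ref{cor:SM:absorption}, no residual calculation remains.
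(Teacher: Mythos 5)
Your proof is correct and follows essentially the same route as the paper's: the forward direction is a pointwise application of Proposition~\ref{prop:possNLP:interpretation:exist} (the paper phrases it contrapositively, you argue it directly, which is an immaterial difference), and the backward direction uses exactly the paper's witness $\mathit{PPE}(\overline{S})$ together with Proposition~\ref{prop:possNLP:exist} and Corollary~\ref{cor:SM:absorption}. Your explicit remark about the symmetry of $\sqcup$ is a small point the paper's proof silently glosses over.
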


\begin{example}\label{exam:possNLP:B:exist}
    Let $T_{23}  = \tuple{\overline{B}, E^+, \emptyset}$ be an induction task, where $\overline{B} = \{ (r \lto , 0.3) \}$ and $E^+ = \{ \{ (p, 0.5), (r, 0.5) \}, \{ (q, 0.3), (r, 0.8) \} \}$. Then $E^+$ is incomparable and ${\cal T}_{\overline{B}}(\overline{I}) \sqsubseteq \overline{I}$ for each $\overline{I} \in E^+$. It is easy to verify $E^+ \subseteq \mathit{PSM}(\overline{B} \sqcup \mathit{PPE}(E^+))$, where $\mathit{PPE}(E^+) = \{ (p \lto \Not q, 0.5), (r \lto \Not q, 0.5), (q \lto \Not p, 0.3), (r \lto \Not p, 0.8) \}$. Namely, $\mathit{PPE}(E^+) \in \mathit{ILP_{LPoSM}}(T_{23})$ in this case. 
    
    Let $T_{24}  = \tuple{\overline{B}, E^+, \emptyset}$, where $\overline{B} = \{ (r \lto , 0.8) \}$ and $E^+ = \{ \{ (p, 0.5), (r, 0.5) \} \} $. $\mathit{ILP_{LPoSM}}(T_{24}) = \emptyset$ since ${\cal T}_{\overline{B}}(\{ (p, 0.5), (r, 0.5) \}) = \{ (r, 0.8) \} \not\sqsubseteq \{ (p, 0.5), (r, 0.5) \}$.  
    %In contrast, $\mathit{ILP_{LPoSM}}(T_{23}) = \emptyset$ when  $\overline{B} = \{ (r \lto , 0.8) \}$ since ${\cal T}_{\overline{B}}(\{ (p, 0.5), (r, 0.5) \}) = \{ (r, 0.8) \} \not\sqsubseteq \{ (p, 0.5), (r, 0.5) \}$.
\end{example}

% As this condition is so important, it is eventually formalized into a definition called coherency as Definition~\ref{def:incoherent}. Subsequently, we present \hbl{Corollary}~\ref{cor:B:pos} to reveal a necessary and sufficient condition for the existence of a solution for a task without negative examples.
The condition ${\cal T}_{\overline{B}}(\overline{I}) \sqsubseteq \overline{I}$ is important and often referred in our subsequent discussions. 
Given the above corollary, it makes sense to give the definition below.

\begin{definition}[Coherency]\label{def:incoherent}
Let  $\overline{B}$ be a poss-NLP.
\begin{enumerate}
    \item A possibilistic interpretation $\overline{I}$ is {\em coherent} with poss-NLP $\overline{B}$ if 
    ${\cal T}_{\overline{B}}(\overline{I}) \sqsubseteq \overline{I}$. Otherwise, $\overline{I}$ is {\em incoherent} with $\overline{B}$.
    \item A set $\overline{S}$ of possibilistic interpretations is {\em coherent} with poss-NLP $\overline{B}$ if $\overline{I}$ is coherent with $\overline{B}$ for each  $\overline{I} \in \overline{S}$. Otherwise, $\overline{S}$ is {\em incoherent} with $\overline{B}$. 
\end{enumerate}
\end{definition}

Given this definition, the next corollary is straightforward.

\begin{corollary}[Existence of a solution]\label{cor:B:pos}	
For an induction task $T = \tuple{\overline{B}, E^+, \emptyset}$, $\mathit{ILP_{LPoSM}}(T) \neq \emptyset$ if and only if $E^+$ is incomparable and $E^+$ is coherent with $\overline{B}$.
\end{corollary}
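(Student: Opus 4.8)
The plan is to reduce the corollary directly to Corollary~\ref{cor:possNLP:B:exist} and Proposition~\ref{prop:possSM:incomparable}, exploiting the fact that an empty set of negative examples collapses the two solution conditions to a single requirement. First I would observe that when $E^- = \emptyset$, condition~\ref{eq:G2}, namely $E^- \cap \mathit{PSM}(\overline{B} \sqcup \overline{H}) = \emptyset$, holds vacuously for every hypothesis $\overline{H}$. Hence $\overline{H} \in \mathit{ILP_{LPoSM}}(T)$ if and only if $E^+ \subseteq \mathit{PSM}(\overline{B} \sqcup \overline{H})$, so that $\mathit{ILP_{LPoSM}}(T) \neq \emptyset$ is equivalent to the existence of a poss-NLP $\overline{H}$ with $E^+ \subseteq \mathit{PSM}(\overline{B} \sqcup \overline{H})$.

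For the ``only if'' direction I would assume such an $\overline{H}$ exists. To obtain incomparability of $E^+$, I would argue by contradiction: if $E^+$ were comparable, there would be two distinct $\overline{I}, \overline{J} \in E^+$ with $\overline{I} \parallel \overline{J}$; since both lie in $\mathit{PSM}(\overline{B} \sqcup \overline{H})$, this contradicts Proposition~\ref{prop:possSM:incomparable}. Having established that $E^+$ is incomparable, I can then invoke the forward implication of Corollary~\ref{cor:possNLP:B:exist} with $\overline{S} = E^+$ and $\overline{P} = \overline{H}$ to conclude ${\cal T}_{\overline{B}}(\overline{I}) \sqsubseteq \overline{I}$ for each $\overline{I} \in E^+$, which is precisely coherency of $E^+$ with $\overline{B}$ by Definition~\ref{def:incoherent}.

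For the ``if'' direction I would assume $E^+$ is incomparable and coherent with $\overline{B}$. Unpacking coherency via Definition~\ref{def:incoherent} gives ${\cal T}_{\overline{B}}(\overline{I}) \sqsubseteq \overline{I}$ for each $\overline{I} \in E^+$, so the backward implication of Corollary~\ref{cor:possNLP:B:exist} supplies a poss-NLP $\overline{P}$ with $E^+ \subseteq \mathit{PSM}(\overline{B} \sqcup \overline{P})$. Because \ref{eq:G2} is satisfied vacuously, this $\overline{P}$ is a solution, whence $\mathit{ILP_{LPoSM}}(T) \neq \emptyset$.

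Since every step is a direct application of already-established results, I do not expect a serious obstacle here; the genuine content lives in Proposition~\ref{prop:possNLP:interpretation:exist} and Corollary~\ref{cor:possNLP:B:exist}, whose proofs rest on the $\mathit{PPE}$/$\mathit{PNE}$ constructions and the absorption lemma (Corollary~\ref{cor:SM:absorption}). The one point requiring care is ensuring the incomparability hypothesis of Corollary~\ref{cor:possNLP:B:exist} is in force \emph{before} it is invoked in the forward direction, which is exactly why the contradiction argument through Proposition~\ref{prop:possSM:incomparable} must be carried out first.
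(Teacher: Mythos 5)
Your proof is correct and takes essentially the same route as the paper: the paper's own proof is a one-line reduction to Proposition~\ref{prop:possNLP:exist} and Corollary~\ref{cor:possNLP:B:exist}, and your expansion (vacuous~\ref{eq:G2}, incomparability via Proposition~\ref{prop:possSM:incomparable}, then coherency via Corollary~\ref{cor:possNLP:B:exist}) is exactly how the paper argues the corresponding parts of the more general Theorem~\ref{thm:taskNLP:exist}. If anything, your version is more careful than the paper's terse citation, since you make explicit that incomparability must be established (via Proposition~\ref{prop:possSM:incomparable}) before the forward implication of Corollary~\ref{cor:possNLP:B:exist} may be invoked.
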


In a special case when the set of positive examples may be empty, a necessary and sufficient condition for the existence of an induction solution can be provided as follows. 
% \hbl{
% Here, $\overline{\mathbb{A}}=\{\overline I\mid \mbox{$\overline I$ is a possibilistic interpretation with $I=\cal A$}\}$. 
% For example, $\overline{\mathbb{A}} = \{ \{ (p, 0.5), (q, 0.5) \}, \{ (p, 0.5), (q, 0.8) \}$, $ \{ (p, 0.8), (q, 0.5) \}, \{ (p, 0.8), (q, 0.8) \} \}$ when ${\cal A} = \{ p,q \}$ and ${\cal Q} = \{ 0.5, 0.8 \}$. 
% }
% An induction solution of such a task relies on special constructions such as $\mathit{PNE}(E^-,\emptyset)$ and $\mathit{PPE}(\overline{G})$. Example~\ref{exam:task:B:neg:exist} shows more relative details.

\begin{proposition}%[existence of a poss-NLP solution for a task without positive examples]	
\label{prop:task:B:neg:exist}  
Let $T = \tuple{\overline{B}, \emptyset, E^-}$ be an induction task and $\overline{\mathbb{A}}=\{\overline I\mid \overline{I}\mbox{ is a poss-interpretation with } I={\cal A}\}$. 
Then $\mathit{ILP_{LPoSM}}(T) = \emptyset$ if and only if the following three conditions are satisfied. 
\begin{enumerate}[label=(c\arabic*)]
    \item $\mathit{lfp}(T_{B^{\cal A}}) = {\cal A}$.
    \item $\overline{\mathbb{A}} - E^- \neq \overline{\mathbb{A}}$.  
    \item $\overline{\mathbb{A}} - E^- = \emptyset$, or ${\cal T}_{\overline{B}}(\overline{G}) \not\sqsubseteq \overline{G}$ for each $\overline{G} \in \overline{\mathbb{A}} - E^-$.    
\end{enumerate}
\end{proposition}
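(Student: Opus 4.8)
The plan is to exploit a structural dichotomy driven by condition (c1). For the task $T=\tuple{\overline B,\emptyset,E^-}$ condition (G1) is vacuous, so $\overline H$ is a solution exactly when $E^-\cap\mathit{PSM}(\overline B\sqcup\overline H)=\emptyset$, and $\mathit{ILP_{LPoSM}}(T)=\emptyset$ means that \emph{every} hypothesis leaves some negative example as a poss-stable model. The first step is a structural lemma about (c1). The definite rules of $B$ are exactly the reduct $B^{\cal A}$, and since definite rules survive every reduct, $B^{\cal A}\subseteq (B\cup H)^S$ for any $H$ and any candidate $S$, whence $\mathit{Cn}((B\cup H)^S)\supseteq\mathit{lfp}(T_{B^{\cal A}})$. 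Thus if (c1) holds, $\mathit{Cn}((B\cup H)^S)={\cal A}$ for every $S$, so the only possible classical stable model is ${\cal A}$ and it is indeed one; by Proposition~\ref{prop:SM:maps} this forces $\mathit{PSM}(\overline B\sqcup\overline H)$ to be a single full poss-interpretation $\overline G\in\overline{\mathbb{A}}$, which by the same proposition is coherent with $\overline B$. Conversely, Lemma~\ref{lm:SM:PPI} shows every coherent $\overline G\in\overline{\mathbb{A}}$ is realizable, as $\mathit{PSM}(\overline B\sqcup\mathit{PPE}(\overline G))=\{\overline G\}$.

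With this in hand the ``if'' direction is short. Assume (c1)--(c3) and take an arbitrary $\overline H$; the previous step gives $\mathit{PSM}(\overline B\sqcup\overline H)=\{\overline G\}$ with $\overline G$ a coherent member of $\overline{\mathbb{A}}$. Condition (c3) says that either $\overline{\mathbb{A}}-E^-=\emptyset$ or every member of $\overline{\mathbb{A}}-E^-$ is incoherent; in both cases a coherent full interpretation must lie in $E^-$, so $\overline G\in E^-$ and (G2) fails. As $\overline H$ was arbitrary, $\mathit{ILP_{LPoSM}}(T)=\emptyset$. (I note in passing that (c2) is redundant: the full interpretation $\overline G_{\max}$ assigning the supremum $\mu$ to every atom always satisfies ${\cal T}_{\overline B}(\overline G_{\max})\sqsubseteq\overline G_{\max}$, so (c3) already forces $\overline G_{\max}\in E^-$, which is (c2).)

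For the ``only if'' direction I argue by contraposition, producing a solution whenever one condition fails. If (c3) fails under (c1), there is a coherent $\overline G\in\overline{\mathbb{A}}-E^-$, and then $\overline H=\mathit{PPE}(\overline G)$ is a solution by Lemma~\ref{lm:SM:PPI}, since $\mathit{PSM}(\overline B\sqcup\overline H)=\{\overline G\}$ avoids $E^-$; the witness $\overline G_{\max}$ likewise handles a failure of (c2). The remaining and main obstacle is a failure of (c1). Writing $D=\mathit{lfp}(T_{B^{\cal A}})\subsetneq{\cal A}$, I propose the legal hypothesis $\overline H=\{(x\lto\Not x,\mu)\mid x\in{\cal A}-D\}$ and claim it makes the classical counterpart $B\cup H$ inconsistent, so that $\mathit{PSM}(\overline B\sqcup\overline H)=\emptyset$ by the correspondence of Proposition~\ref{prop:SM:maps}. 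The verification runs as follows: in any stable model $S$ each $x\in{\cal A}-D$ must belong to $S$ (otherwise $x\lto\Not x$ reduces to the fact $x\lto$ and forces $x\in S$); hence all these rules vanish from the reduct, $(B\cup H)^S=B^S$, and $S\in\mathit{SM}(B)$ with $S\supseteq{\cal A}-D$. But the definite rules of $B$ give $S=\mathit{Cn}(B^S)\supseteq\mathit{Cn}(B^{\cal A})=D$, so $S={\cal A}$ and therefore $D=\mathit{Cn}(B^{\cal A})={\cal A}$, contradicting $D\subsetneq{\cal A}$. Thus $B\cup H$ has no stable model and $\overline H$ is a solution.

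Combining the pieces, the ``only if'' direction yields (c1) from the contraposition just proved, then (c3) from the first bullet under (c1), and (c2) from the redundancy remark, while the ``if'' direction is the middle paragraph. I expect the technical heart to be the inconsistency gadget for the failure of (c1): the only delicate point is ruling out that the non-definite rules of $B$ resurrect a stable model containing all of ${\cal A}-D$, and the reduct-monotonicity inequality $S\supseteq D$ is exactly what forces such an $S$ up to ${\cal A}$ and thereby closes the gap.
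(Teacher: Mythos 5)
Your proof is correct, and while it shares the paper's overall skeleton---contraposition over the three conditions for the ``only if'' direction, and the interplay of (c1) and (c3) for the ``if'' direction---two of your three witness constructions are genuinely different. Where the paper handles both the failure of (c1) and the failure of (c2) with the single hypothesis $\mathit{PNE}(E^-,\emptyset)$ (blocking every non-full negative example via Proposition~\ref{prop:PSM:PNE}, and using $\mathit{lfp}(T_{B^{\cal A}})\neq{\cal A}$ to exclude the full interpretation from the stable models), you instead use the inconsistency gadget $\{(x\lto\Not x,\mu)\mid x\in{\cal A}-D\}$, $D=\mathit{lfp}(T_{B^{\cal A}})$, for the failure of (c1)---making $\mathit{PSM}(\overline B\sqcup\overline H)=\emptyset$ outright---and $\mathit{PPE}(\overline{G}_{\max})$ for the failure of (c2), exploiting the observation that the top interpretation $\overline{G}_{\max}$ (every atom at weight $\mu$) is coherent with every $\overline B$. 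Both constructions check out: a hypothesis with no poss-stable models is a legal solution when $E^+=\emptyset$, since (G1) and (G2) are then vacuous, and your reduct argument ruling out stable models of $B\cup H$ is sound. What your route buys is independence from $\mathit{PNE}$ and Proposition~\ref{prop:PSM:PNE} altogether, plus the genuinely new remark---correct, and not noted in the paper---that (c2) is implied by (c3) via the always-coherent $\overline{G}_{\max}$, so (c2) is redundant in the statement; what the paper's route buys is a single uniform witness for the two cases. Your ``if'' direction (under (c1) every $\overline B\sqcup\overline H$ has exactly one poss-stable model, a coherent member of $\overline{\mathbb A}$, which (c3) forces into $E^-$) is the same argument as the paper's proof by contradiction, just run forwards. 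One small repair: the coherence of that unique poss-stable model $\overline G$ with $\overline B$ does not follow from Proposition~\ref{prop:SM:maps} as you claim, but from the ``only if'' half of Proposition~\ref{prop:possNLP:interpretation:exist} applied with $\overline P=\overline H$ (or from the monotone estimate ${\cal T}_{\overline B}(\overline G)\sqsubseteq{\cal T}_{\overline B\sqcup\overline H}(\overline G)=\overline G$); this is a citation slip, not a gap.
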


Let us look at an example.

\begin{example}\label{exam:task:B:neg:exist}
    Let $T_{25}  = \tuple{\overline{B}, \emptyset, E^-}$ be an induction task, where $\overline{B} = \{ (p \lto , 0.5), (q \lto p, 0.5) \}$ and $E^- = \{ \{ (p, 0.5), (q, 0.5) \} \}$. Then we have ${\cal A} = \{ p,q \}$, ${\cal Q} = \{ 0.5 \}$ and $0.5 \leq 0.5$. So, $\overline{\mathbb{A}} = \{ \{ (p, 0.5), (q, 0.5) \} \}$. It is clear that $\mathit{lfp}(T_{B^{\cal A}}) = {\cal A}$ and $\overline{\mathbb{A}} - E^- = \emptyset \neq \overline{\mathbb{A}}$. Thus, $\mathit{ILP_{LPoSM}}(T_{25}) = \emptyset$.   

    Let $T_{26}  = \tuple{\overline{B}, \emptyset, E^-}$ be another induction task, where $\overline{B} = \{ (p \lto , 0.8), (q \lto p, 0.5) \}$ and $E^- = \{ \{ (p, 0.8), (q, 0.5) \}$, $\{ (p, 0.8), (q, 0.8) \} \}$. 
   Note that ${\cal A} = \{ p,q \}$, ${\cal Q} = \{ 0.5, 0.8 \}$ and $0.5 \leq 0.8$. 
   Consequently, $\overline{\mathbb{A}} = \{ \{ (p, 0.5), (q, 0.5) \}, \{ (p, 0.5), (q, 0.8) \}$, $ \{ (p, 0.8), (q, 0.5) \}, \{ (p, 0.8), (q, 0.8) \} \}$. It is clear that $\mathit{lfp}(T_{B^{\cal A}}) = {\cal A}$, $\overline{\mathbb{A}} - E^- \neq \overline{\mathbb{A}}$, and ${\cal T}_{\overline{B}}(\overline{G}) \not\sqsubseteq \overline{G}$ for each $\overline{G} \in \overline{\mathbb{A}} - E^-$. Thus, $\mathit{ILP_{LPoSM}}(T_{26}) = \emptyset$.
\end{example}

%The definition of compatibility is proposed as Definition~\ref{def:compatible} on account of Proposition~\ref{prop:task:B:neg:exist} too. 

By Proposition~\ref{prop:task:B:neg:exist}, if we consider only negative examples, the above three conditions provide some clue on the existence or nonexistence of induction solutions (even if an induction task contains positive examples). Thus, we have the follwoing definition.

\begin{definition}[Compatibility]\label{def:compatible}    
    A set $E^-$ of possibilistic interpretations is {\em incompatible} with a poss-NLP $\overline{B}$ if  
    \begin{enumerate}[label=(c\arabic*)]
        \item $\mathit{lfp}(T_{B^{\cal A}}) = {\cal A}$, and
        \item $\overline{\mathbb{A}} - E^- \neq \overline{\mathbb{A}}$, and 
        \item $\overline{\mathbb{A}} - E^- = \emptyset$, or ${\cal T}_{\overline{B}}(\overline{G}) \not\sqsubseteq \overline{G}$ for each $\overline{G} \in \overline{\mathbb{A}} - E^-$.
    \end{enumerate}
    Otherwise, $E^-$ is {\em compatible} with  $\overline{B}$.
\end{definition}

Putting all these considerations together, we present Theorem~\ref{thm:taskNLP:exist}. It reveals a necessary and sufficient condition for the existence of a solution for a general induction task. All indispensable relations in $\tuple{\overline{B}, E^+, E^-}$ has been formally exhibited. 

Now we are ready to present the major result in this section, which provides a necessary and sufficient condition for the existence of a solution for a general induction task. 
% \hbl{
% All four verifications in this theorem can be completed in polynomial time.
% So the complexity of deciding whether an induction task has an solution is polynomial.
% }

\begin{theorem}[Existence of a poss-NLP for an induction solution]	\label{thm:taskNLP:exist} 
For an induction task $T = \tuple{\overline{B}, E^+, E^-}$, $\mathit{ILP_{LPoSM}}(T) \neq \emptyset$ if and only if 
\begin{enumerate}[label=(C\arabic*)]
    \item $E^+$ is incomparable, and
    \item $E^+$ is coherent with $\overline{B}$, and 
    \item $E^-$ is compatible with  $\overline{B}$, and 
    \item $E^+ \cap E^- = \emptyset$.
\end{enumerate}
\end{theorem}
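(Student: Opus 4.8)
The plan is to prove the two directions of the biconditional separately, with the bulk of the work in the ``if'' direction (sufficiency of the four conditions). For necessity, I would assume $\mathit{ILP_{LPoSM}}(T) \neq \emptyset$ and pick a witnessing solution $\overline{H}$, so that $E^+ \subseteq \mathit{PSM}(\overline{B} \sqcup \overline{H})$ and $E^- \cap \mathit{PSM}(\overline{B} \sqcup \overline{H}) = \emptyset$. Condition (C1) follows because any two poss-stable models of the single program $\overline{B} \sqcup \overline{H}$ are incomparable by Proposition~\ref{prop:possSM:incomparable}, and $E^+$ sits inside $\mathit{PSM}(\overline{B} \sqcup \overline{H})$. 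Condition (C2) follows from Corollary~\ref{cor:possNLP:B:exist} applied to $\overline{S} = E^+$: since $E^+ \subseteq \mathit{PSM}(\overline{B} \sqcup \overline{H})$ the required program exists, forcing ${\cal T}_{\overline{B}}(\overline{I}) \sqsubseteq \overline{I}$ for each $\overline{I} \in E^+$, i.e.\ coherency. Condition (C4) is immediate: any $\overline{I} \in E^+ \cap E^-$ would have to be both in and out of $\mathit{PSM}(\overline{B} \sqcup \overline{H})$. The slightly more delicate point is (C3): I would argue that the restricted task $\tuple{\overline{B}, \emptyset, E^-}$ also has $\overline{H}$ (or $\overline{B}\sqcup\overline{H}$ suitably reinterpreted) as a solution, so by Proposition~\ref{prop:task:B:neg:exist} its solution set is nonempty, which is exactly the negation of incompatibility, hence $E^-$ is compatible with $\overline{B}$.

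For the harder ``if'' direction I would construct an explicit solution from the building blocks already developed. The natural candidate is
\[
\overline{H} = \mathit{PPE}(E^+) \sqcup \mathit{PNE}(E^-, E^+) \sqcup \mathit{PPE}(\overline{G}),
\]
where the third summand $\mathit{PPE}(\overline{G})$ is included only when a witness $\overline{G} \in \overline{\mathbb{A}} - E^-$ that is coherent with $\overline{B}$ exists (this is exactly the escape clause that compatibility of $E^-$ leaves open). I would then verify the two defining goals. For \eqref{eq:G1}, since (C1) gives incomparability of $E^+$ and (C2) gives coherency, Corollary~\ref{cor:possNLP:B:exist} together with Proposition~\ref{prop:possNLP:exist} guarantees $E^+ \subseteq \mathit{PSM}(\overline{B} \sqcup \mathit{PPE}(E^+))$; I must then check that adjoining $\mathit{PNE}(E^-,E^+)$ and $\mathit{PPE}(\overline{G})$ does not destroy these poss-stable models, which follows from Corollary~\ref{cor:SM:absorption} provided ${\cal T}$ of the added rules stays $\sqsubseteq \overline{I}$ on each $\overline{I} \in E^+$ (the $\mathit{PNE}$ rules have bodies falsified by every positive example by incomparability, and the $\mathit{PPE}(\overline{G})$ rules likewise since $G = {\cal A}$ is strictly larger).

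For \eqref{eq:G2} I would split $E^-$ into three kinds of interpretations and dispatch each. Any $\overline{I} \in E^-$ that is comparable with some positive example is excluded from $\mathit{PSM}(\overline{B} \sqcup \overline{H})$ automatically by Proposition~\ref{prop:possSM:incomparable} combined with (C1) and \eqref{eq:G1}. Any $\overline{I} \in E^-$ with $I \neq {\cal A}$ that is incomparable with $E^+$ is blocked directly by $\mathit{PNE}(E^-,E^+)$ via Proposition~\ref{prop:PSM:PNE}. The remaining case is $\overline{I} \in E^-$ with $I = {\cal A}$, i.e.\ $\overline{I} \in \overline{\mathbb{A}}$; here I rely on compatibility of $E^-$, which by the negation of Definition~\ref{def:compatible} means at least one of its three clauses fails, and I would show that in each failure scenario either no element of $\overline{\mathbb{A}}$ can be a poss-stable model at all, or the chosen coherent witness $\overline{G}$ is the \emph{unique} member of $\overline{\mathbb{A}}$ produced (Lemma~\ref{lm:SM:PPI} gives $\mathit{PSM}(\overline{B}\sqcup\mathit{PPE}(\overline{G})) = \{\overline{G}\}$ and incomparability of full interpretations forces uniqueness), with $\overline{G} \notin E^-$ by its choice. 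The main obstacle is precisely this last case analysis: reconciling the three-clause compatibility condition with the presence or absence of the $\mathit{PPE}(\overline{G})$ summand, and confirming that exactly the intended full interpretation survives while every $\overline{I} \in E^- \cap \overline{\mathbb{A}}$ is eliminated.
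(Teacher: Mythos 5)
Your necessity direction is sound and is essentially the paper's argument: (C1) via Proposition~\ref{prop:possSM:incomparable}, (C2) via Corollary~\ref{cor:possNLP:B:exist}, (C3) by noting that any solution of $T$ is also a solution of $\tuple{\overline{B},\emptyset,E^-}$ and invoking Proposition~\ref{prop:task:B:neg:exist}, and (C4) trivially.

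The sufficiency direction, however, contains a genuine error. You include the summand $\mathit{PPE}(\overline{G})$ in $\overline{H}$ whenever a coherent witness $\overline{G}\in\overline{\mathbb{A}}-E^-$ exists, and you justify preservation of the positive examples by claiming that the rules of $\mathit{PPE}(\overline{G})$ have bodies falsified by every positive example ``since $G={\cal A}$ is strictly larger.'' This is backwards: since $G={\cal A}$, we have ${\cal A}-G=\emptyset$, so $\mathit{PPE}(\overline{G})=\{(x\lto\,,\alpha)\mid(x,\alpha)\in\overline{G}\}$ is a set of \emph{facts} with empty bodies, which are applicable in every interpretation. Hence ${\cal T}_{\mathit{PPE}(\overline{G})}(\overline{I})=\overline{G}\not\sqsubseteq\overline{I}$ for every positive example with $I\subsetneq{\cal A}$, so Corollary~\ref{cor:SM:absorption} cannot be applied and goal \eqref{eq:G1} fails for your candidate. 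Concretely, for the task $T_{22}$ of Example~\ref{exam:possNLP:incomparable} (empty background, $E^+=\{\{(p,0.5),(r,0.5)\},\{(q,0.3),(r,0.8)\}\}$, $E^-=\emptyset$) a coherent witness exists, because every full poss-interpretation is coherent with the empty background; your $\overline{H}$ would then contain facts for $p$, $q$ and $r$, so neither positive example is a poss-stable model of $\overline{B}\sqcup\overline{H}$, although the task does have solutions. The repair is exactly the paper's case split: $\mathit{PPE}(\overline{G})$ is only ever needed when $E^+=\emptyset$ (the paper delegates this case to Proposition~\ref{prop:task:B:neg:exist}, whose proof chooses between $\mathit{PNE}(E^-,\emptyset)$ and $\mathit{PPE}(\overline{G})$ according to which compatibility clause fails), whereas for $E^+\neq\emptyset$ the paper takes simply $\overline{H}=\mathit{PPE}(E^+)\sqcup\mathit{PNE}(E^-,E^+)$: every negative example with $I={\cal A}$ is then automatically comparable to (and, by (C4), distinct from) each positive example, so it is already excluded by Proposition~\ref{prop:possSM:incomparable}, and no extra rules are required for it.
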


\begin{example}\label{exam:PAS:solutions:check}
    Let us revisit the induction tasks in Example~\ref{exam:PAS:solutions}.
    %to see the conditions in Theorem~\ref{thm:taskNLP:exist}. 
    For the three induction tasks $T_1$, $T_3$ and $T_5$, the necessary conditions for the existence of a solution are satisfied. 
    Induction task $T_2$ has no solution, since $E^+$ is incoherent with $\overline{B}$. To see this, we note that ${\cal T}_{\overline{B}}(\overline{I}) = \{ (pregnancy, 1), (vomiting, 1)  \} \not\sqsubseteq \overline{I}$, where $\overline{I} = \{ (pregnancy, 0.6) \} \in E^+$.
    For induction task $T_4  = \tuple{ \emptyset, \{ \{ (p, 0.3), (q,0.3) \} \}, \{ \{ (p, 0.3), (q,0.3) \} \}}$, it has no solution since $E^+ \cap E^- \neq \emptyset$.
\end{example}
%$T=\langle \{p\gets 1\}, \{\{(a,1),(p,1)\}\}, \{\{(a,1)\}\}\rangle$
%Given $T  = \tuple{ \{ (p \lto , 1) \}, \{\{(a,1),(p,1)\}\}, \{\{(a,1)\}\}}$, $\overline{H} = \{ (a \lto , 1) \}$ is a solution of it.

Intuitively, $E^+$ is utilized to construct the candidate induction solution. Conversely, in a general induction task $T = \tuple{\overline{B}, E^+, E^-}$, $E^-$ is employed to eliminate unintended candidate solutions covering $E^+$ though. It functions akin to a hard constraint in answer set programming~\citep{Brewka:CACM:2011}. After a candidate solution is constructed, we check whether the candidate conflicts with $E^-$. Then, either the candidate solution satisfy the requirements (return as a solution) or repeat this process.

% Besides the general induction task analyzed earlier, we would like to discuss a kind of special task here. The presence of $\mu$ usually brings about some special properties. Proposition~\ref{prop:task:B:upper} highlights some of them. 

In the following, we consider a special case of induction task, that is, when a fact rule is in the background knowledge. 

\begin{proposition}[Solution for induction task containing fact rules]	\label{prop:task:B:upper} 
    Let $T = \tuple{\overline{B}, E^+, E^-}$ be an induction task.
    If there exists $a \in {\cal A}$ such that $(a \lto , \mu) \in \overline{B}$, then 
    \begin{enumerate}[label=(\roman*)]
        \item $\mathit{ILP_{LPoSM}}(T) = \emptyset$ when there exists $\overline{e} \in E^+$ such that $(a, \mu) \notin \overline{e}$, and
        \item for any poss-NLP $\overline{H}$, $(a, \mu) \notin \overline{e}$ implies $\overline{e} \notin \mathit{PSM}(\overline{B} \sqcup \overline{H})$ where $\overline{e} \in E^-$, and  
        \item if $\overline{H} \in \mathit{ILP_{LPoSM}}(T)$ and $a \in \Head(H)$, then there exists $\overline{K} \in \mathit{ILP_{LPoSM}}(T)$ such that $\vert K \vert < \vert H \vert$. 
    \end{enumerate}
\end{proposition}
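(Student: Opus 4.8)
The plan is to isolate a single structural fact forced by the hypothesis $(a\lto,\mu)\in\overline B$ and then read all three items off it. That fact is: \emph{for every poss-NLP $\overline H$, every $\overline S\in\mathit{PSM}(\overline B\sqcup\overline H)$ satisfies $(a,\mu)\in\overline S$}. To establish it I would first note that $\sqcup$ preserves the fact rule, since $\max\{\mu,\gamma\}=\mu$ for every $\gamma\in{\cal Q}$; hence $(a\lto,\mu)\in\overline B\sqcup\overline H$. Because $\Neg(a\lto)=\emptyset$, this fact rule survives into the reduct $(\overline B\sqcup\overline H)^S$ for every $S$, where it is $\mu$-applicable in any poss-atom set (its positive body is empty). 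By Definition~\ref{def:Tp} we get $(a,\mu)\in{\cal T}_{(\overline B\sqcup\overline H)^S}(\emptyset)$, and since $\mu$ is the supremum of $\cal Q$ no rule can ever raise the weight of $a$ above $\mu$; consequently $(a,\mu)\in\mathit{lfp}({\cal T}_{(\overline B\sqcup\overline H)^S})=\overline S$.

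Items (i) and (ii) are then immediate. For (i), if some $\overline e\in E^+$ has $(a,\mu)\notin\overline e$, then $\overline e$ cannot be a poss-stable model of any $\overline B\sqcup\overline H$, so Condition~\ref{eq:G1} fails for every candidate and $\mathit{ILP_{LPoSM}}(T)=\emptyset$. For (ii), the same observation says that any $\overline e\in E^-$ with $(a,\mu)\notin\overline e$ can never lie in $\mathit{PSM}(\overline B\sqcup\overline H)$, which is exactly the asserted implication.

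The substantive part is (iii), where the plan is to show that every rule of $\overline H$ with head $a$ is redundant. Given a solution $\overline H$ with $a\in\Head(H)$, I set $\overline K=\{\overline r\in\overline H\mid\Head(\overline r)\neq a\}$, so $|K|<|H|$ because at least one head-$a$ rule is deleted. It suffices to prove $\mathit{PSM}(\overline B\sqcup\overline H)=\mathit{PSM}(\overline B\sqcup\overline K)$, as this transfers Conditions~\ref{eq:G1} and~\ref{eq:G2} from $\overline H$ to $\overline K$. I would prove the stronger equality $\mathit{Cn}((\overline B\sqcup\overline H)^S)=\mathit{Cn}((\overline B\sqcup\overline K)^S)$ for every atom set $S$. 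The two reducts agree on all rules whose head differs from $a$; they can differ only in definite rules with head $a$, and \emph{both} still contain the definite fact rule $(a\lto,\mu)$. A routine induction on the stages of the consequence operator then gives ${\cal T}_{(\overline B\sqcup\overline H)^S}^{\,n}={\cal T}_{(\overline B\sqcup\overline K)^S}^{\,n}$ for all $n$: the fact rule already pins $(a,\mu)$ at the first stage, every other head-$a$ rule can contribute only some $(a,\beta)$ with $\beta\le\mu$ (so the max leaves $a$ at $\mu$), and no head-$a$ rule influences any atom other than $a$. Taking least fixpoints yields the equality of poss-stable models, whence $\overline K\in\mathit{ILP_{LPoSM}}(T)$. (One inclusion also follows directly from Corollary~\ref{cor:SM:absorption}, since the deleted head-$a$ rules are absorbed by $\overline B\sqcup\overline K$.)

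I expect the main obstacle to be the bookkeeping in the inductive step of (iii): I must argue carefully that deleting the head-$a$ rules changes neither the weight assigned to $a$ (thanks to the fact rule together with the maximality of $\mu$) nor the applicability of any other rule (the weight of $a$ stays $\mu$, so its occurrences in other rule bodies are unaffected). The interaction of $\sqcup$ with head-$a$ rules possibly already present in $\overline B$—whose weights $\overline H$ might raise—also needs a line, but since all such weights remain $\le\mu$ it leaves the fixpoint comparison intact.
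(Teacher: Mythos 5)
Your proof is correct, and it follows the same overall decomposition as the paper — in particular, for item (iii) you construct exactly the same witness $\overline{K}=\{\overline{r}\in\overline{H}\mid \Head(\overline{r})\neq a\}$ — but the way you justify the key facts differs. For (i) and (ii), the paper does not prove your ``structural fact'' directly; it observes that $(a,\mu)\in{\cal T}_{\overline{B}}(\overline{e})$ together with $(a,\mu)\notin\overline{e}$ makes $\overline{e}$ incoherent with $\overline{B}$ (Definition~\ref{def:incoherent}), and then invokes the already-proved coherence machinery: Corollary~\ref{cor:B:pos} for (i) and Proposition~\ref{prop:possNLP:interpretation:exist} for (ii). Your route instead unfolds the fixpoint semantics from scratch (the fact rule survives $\sqcup$ and every reduct, is $\mu$-applicable in $\emptyset$, and $\mu$ being the supremum pins the weight), which is more elementary and self-contained but re-derives what those lemmas already encapsulate. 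For (iii), the situation is reversed: the paper's proof asserts $\mathit{PSM}(\overline{B}\sqcup\overline{H})=\mathit{PSM}(\overline{B}\sqcup\overline{K})$ in a single line, justified only by ``since $(a\lto,\mu)\in\overline{B}$,'' whereas you actually prove this equality by showing $\mathit{Cn}((\overline{B}\sqcup\overline{H})^S)=\mathit{Cn}((\overline{B}\sqcup\overline{K})^S)$ via induction on the stages of ${\cal T}$, including the two delicate points (head-$a$ rules only affect the atom $a$, whose weight stays pinned at $\mu$; and $\sqcup$ may raise weights of head-$a$ rules already in $\overline{B}$, which is harmless for the same reason). So your writeup is the more complete one on the only genuinely substantive item, at the cost of not reusing the paper's lemmas where they would shorten (i) and (ii).
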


Intuitively, for a special induction task $T = \tuple{\overline{B}, E^+, E^-}$ whose background knowledge base contains a fact rule $(a \lto , \mu)$, the construction of an induction solution is significantly simplified. 

%a special case in $\overline{B}$ simplifies the task in the following three aspects. 
\begin{enumerate}[label=(\roman*)]
    \item The induction task $T$ has no solution if we see a positive example without $(a, \mu)$. 
    \item For each $\overline{e} \in E^-$ such that $(a, \mu) \notin \overline{e}$, $\overline{e}$ can be skipped during the construction of a solution. 
    \item When constructing a minimal solution of $T$ (i.e., with a minimum number of rules), we can ignore the rules whose head is $a$.
\end{enumerate}

Thus, Proposition~\ref{prop:task:B:upper} provides a way for computing a minimal solution (that is, the number of rules in the induction solution is minimal).
% The third aspect of Proposition~\ref{prop:task:B:upper} refers to a solution with fewer rules. 
% We call a solution with the least rules as a \emph{minimal} solution as Definition~\ref{def:minimal:solution}. Namely, the minimum here is in the sense of the total number of rules in a solution. 
% In Example~\ref{exam:PAS:solutions}, $\overline{H_1}$ is a minimal solution of $T_1$, while $\overline{H_2}$ is not. 
% Other properties of a minimal solution will be further discussed in the next subsection on inductive algorithms.

\begin{definition}[Minimal solution]\label{def:minimal:solution} 
    Given an induction task $T = \tuple{ \overline{B}, E^+, E^-}$ and a poss-NLP $\overline{H}$ such that $\overline{H} \in \mathit{ILP_{LPoSM}}(T)$, $\overline{H}$ is called a \emph{minimal} solution if there exists no $\overline{K} \in \mathit{ILP_{LPoSM}}(T)$ such that $\vert K \vert < \vert H \vert$.
\end{definition}

Theorem~\ref{thm:taskNLP:exist} provides a method (Algorithm~\ref{alg:Existence}) for checking the existence of a \ysm{minimal}{(minimal)} solution
%revealed the condition for the existence of a solution.
%The algorithm~\ref{alg:Existence} is proposed according to this theorem.
\ysa{ of an induction task. It is not difficult to verify that its time complexity is $O(n^c)$ where $n$ is the size of its input and $c$ is a constant}.

\begin{algorithm}[t]
	\caption{{Existence}$(\overline{B}, E^+, E^-)$}\label{alg:Existence}
	\begin{algorithmic}[1]
		\Statex \textbf{Input:} An induction task $T = \tuple{\overline{B}, E^+, E^-}$.
		\Statex \textbf{Output:} {\bf true} if $\mathit{ILP_{LPoSM}}(T) \neq \emptyset$, and {\bf false} otherwise. 
        \State {\bf if} $E^+$ is comparable {\bf then} {\bf return} {\bf false}
		\State {\bf if} $E^+$ is incoherent with $\overline{B}$ {\bf then} {\bf return} {\bf false}
		\State {\bf if} $E^-$ is incompatible with  $\overline{B}$ {\bf then} {\bf return} {\bf false}
		\State {\bf if} $E^+ \cap E^- \neq \emptyset$ {\bf then return} {\bf false}
        \State {\bf return} {\bf true}
	\end{algorithmic}
\end{algorithm}
\ysm{Note that the size of an induction task $T(\overline{B}, E^+, E^-)$ can be defined as $\size{T} = \size{\overline{B}} + \size{E^+} +\size{E^-}$, here $\size{\overline{B}} = |B|\cdot |{\cal Q}|$, $\size{E^+} = \Sigma_{\overline{I}\in E^+} |\overline{I}|$ and $\size{E^-}= \Sigma_{\overline{I}\in E^-} |\overline{I}|$ (the numbers of atoms in $E^+$ and $E^-$, respectively). 
Let $\size{T} = n$, then each of the four steps in the above algorithm can be done in polynomial time. Thus, the complexity of Algorithm~\ref{alg:Existence} is polynomial in the size of the input induction task. }

%\subsection{Inductive algorithms}
\section{Algorithms for Computing Induction Solutions}\label{sec:learning:algorithm}
Theorem~\ref{thm:taskNLP:exist} not only provides a sufficient and necessary condition for the existence of an induction solution, but its proof also paves a way for constructing a particular solution. Thus, the method for computing an induction solution is formulated as Algorithm~\ref{alg:ILPSM1}.

\begin{algorithm}[t]
	\caption{{\ILPSM}$(\overline{B}, E^+, E^-)$}\label{alg:ILPSM1}
	\begin{algorithmic}[1]
		\Statex \textbf{Input:} An induction task $T = \tuple{\overline{B}, E^+, E^-}$.
		\Statex \textbf{Output:} A solution of $T$ if $\mathit{ILP_{LPoSM}}(T) \neq \emptyset$, and {\bf fail} otherwise. 
        \State {\bf if} $\mathit{Existence}(\overline{B}, E^+, E^-)$ is {\bf false} {\bf then} {\bf return} {\bf fail}
        \State $\overline{H} \lto \emptyset$
        \State {\bf if} $E^+ \neq \emptyset$  {\bf then}
        \State \quad $\overline{H} \lto \mathit{PPE}(E^+)$        
        \State \quad $\overline{E} \lto \{ \overline{e} \in E^- \mid {\cal T}_{\overline{B} \sqcup \overline{H}}(\overline{e}) \sqsubseteq \overline{e} \}$
        \State \quad $\overline{H} \lto \overline{H} \sqcup \mathit{PNE}(\overline{E},E^+) - \overline{B}$
        \State {\bf else}
        \State \quad {\bf if} $\mathit{lfp}(T_{B^{\cal A}}) \neq {\cal A}$ or $\overline{\mathbb{A}} - E^- =\overline{\mathbb{A}}$  {\bf then} 
        \State \quad \quad $\overline{H} \lto \mathit{PNE}(E^-,\emptyset)  - \overline{B}$
        \State \quad {\bf else} 
        %\blue{
        \State \quad \quad Find a poss-interpretation $\overline{G}$ in $\overline{\mathbb{A}} - E^-$  s.~t. ${\cal T}_{\overline{B}}(\overline{G}) \sqsubseteq \overline{G}$
        %}
        \State \quad \quad $\overline{H} \lto \mathit{PPE}(\overline{G})$ 
        \State {\bf return} $\overline{H}$
	\end{algorithmic}
\end{algorithm}

%Thus, we propose the algorithm {\ILPSM} as Algorithm~\ref{alg:ILPSM1}.

If the set of positive examples is not empty, $\mathit{PPE}(E^+)$ in line (4) and $\mathit{PNE}(\overline{E},E^+)$ in line (6) respectively ensure that condition~\eqref{eq:G1} and condition~\eqref{eq:G2} in Definition~\ref{def:ILT} are achieved. 
By Proposition~\ref{prop:possNLP:interpretation:exist}, negative example satisfying ${\cal T}_{\overline{B} \sqcup \overline{H}}(\overline{e}) \not\sqsubseteq \overline{e}$ does not require further consideration. Thus, such negative examples are eliminated in line (6).
% To simplify the construction in line (6), we get a subset $\overline{E} \in E^-$ in line (5) by eliminating $\{ \overline{e} \in E^- \mid {\cal T}_{\overline{B} \sqcup \overline{H}}(\overline{e}) \not\sqsubseteq \overline{e} \}$ in which each negative example does not require further consideration by Proposition~\ref{prop:possNLP:interpretation:exist}. 
When $E^+ \neq \emptyset$, condition \eqref{eq:G2} in Definition~\ref{def:ILT} is guaranteed by either $\mathit{PNE}(E^-,\emptyset)$ in line (9) or $\mathit{PPE}(\overline{G})$ in line (12). 

Consider the following example. 
\begin{example}\label{exam:alg1:running}
    Let $T_{31}  = \tuple{\overline{B}, E^+, E^-}$ be an induction task, where $\overline{B} = \{ (p \lto q, 0.3), (q \lto \Not r, 0.5) \}$, $E^+ = \{ \{ (r, 0.3) \} \}$ and $E^- = \{ \{ (q, 0.3), (r, 0.5) \}, \{ (p, 0.3), (q, 0.5) \} \}$. Then ${\cal A} = \{ p,q,r \}$, ${\cal Q} = \{ 0.3, 0.5 \}$ and $0.3 \leq 0.5$. 
   % Input $T_{31}$ into algorithm {\ILPSM}. 
   For the input $T_{31}$, we can trace the algorithm as follows.
    \begin{itemize}
        \item In line (1), $\mathit{Existence}(\overline{B}, E^+, E^-)$ returns {\bf true}.
        \item Line (4) is executed as $E^+ \neq \emptyset$. Then $\overline{H}$ is assigned the poss-NLP $\mathit{PPE}(E^+) = \{ (r \lto \Not p, \Not q, 0.3) \}$.
        \item In line (5), $\overline{E} = \{ (p, 0.3), (q, 0.5) \} $. 
        \item In line (6), let $\overline{H} = \{ (r \lto \Not p, \Not q, 0.3), (r \lto p, q, \Not r, 0.3) \}$ as $\mathit{PNE}(\overline{E},E^+) = \mathit{PNE}(\{ (p, 0.3), (q, 0.5) \})$ is assigned $\{ (r \lto p, q, \Not r, 0.5) \}$. 
        \item In line (13), the solution $\overline{H}  = \{ (r \lto \Not p, \Not q, 0.3), (r \lto p, q, \Not r, 0.5) \}$ is returned in the end. 
    \end{itemize}
    It is easy to see that $\{ (r \lto \Not p, \Not q, 0.3), (r \lto p, q, \Not r, 0.5) \} \in \mathit{ILP_{LPoSM}}(T_{31})$.    
\end{example}

We note that, in the definition of $\mathit{PNE}(\overline{E},E^+)$ in Equation~\ref{eq:PNE}, the head $x$ of the rule $(x \lto I, \Not ({\cal A} - I), \mu)$ is an arbitrary element in ${\cal A} - I$, which is sufficient to guarantee that the output of Algorithm~\ref{alg:ILPSM1} is an induction solution. However, for different choices of the head $x$, we may come up with different induction solutions.
% A solution returned by algorithm {\ILPSM} is alterable as two constructions lead to alteration. Firstly, the construction of $\mathit{PNE}(\overline{E}, E^+)$ in line (6) and $\mathit{PNE}(\overline{E}, \emptyset)$ in line (9) can vary. Secondly, line (11) chooses an arbitrary interpretation. Example~\ref{exam:alg1:running} demonstrates the construction process.

%Although a poss-NLP yielded by algorithm {\ILPSM} can be different due to the definition of Equation~\ref{eq:PNE}, it is guaranteed that a solution for the induction task is returned as the following proposition shows.

%the returned poss-NLP must be the solution of the given task. This can be proved by Proposition~\ref{prop:alg1:correct}. 

\begin{proposition}[Correctness of algorithm ILPSM]\label{prop:alg1:correct}
    For an induction task $T = \tuple{\overline{B}, E^+, E^-}$, algorithm {\ILPSM} returns {\bf fail} when $\mathit{ILP_{LPoSM}}(T) = \emptyset$. Otherwise, it returns a solution of $T$.
\end{proposition}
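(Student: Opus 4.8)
The plan is to prove correctness of Algorithm {\ILPSM} by separately handling the failure case and the success case, tracking which lines of the algorithm establish conditions \eqref{eq:G1} and \eqref{eq:G2} of Definition~\ref{def:ILT}. The key enabling result is Theorem~\ref{thm:taskNLP:exist}, which tells us that the four conditions (C1)--(C4) checked by $\mathit{Existence}$ are exactly the necessary and sufficient conditions for $\mathit{ILP_{LPoSM}}(T) \neq \emptyset$. So line (1) returns {\bf fail} precisely when $\mathit{ILP_{LPoSM}}(T) = \emptyset$, settling the failure direction immediately. The substance of the proof is to verify that whenever $\mathit{Existence}$ returns {\bf true}, the poss-NLP $\overline{H}$ that the algorithm constructs genuinely satisfies \eqref{eq:G1} and \eqref{eq:G2}.

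First I would dispose of the case $E^+ \neq \emptyset$ (lines 4--6). Here $\overline{H}$ is built in two stages. I would argue that setting $\overline{H} \lto \mathit{PPE}(E^+)$ secures \eqref{eq:G1}: since $\mathit{Existence}$ returned {\bf true}, $E^+$ is incomparable (C1) and coherent with $\overline{B}$ (C2), so by Corollary~\ref{cor:possNLP:B:exist} each $\overline{I} \in E^+$ satisfies ${\cal T}_{\overline{B}}(\overline{I}) \sqsubseteq \overline{I}$, and Proposition~\ref{prop:possNLP:exist} gives $\mathit{PSM}(\mathit{PPE}(E^+)) = E^+$. The delicate point is that line (6) modifies $\overline{H}$, so I must confirm \eqref{eq:G1} survives this. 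I would invoke Corollary~\ref{cor:SM:absorption} (absorption): each positive example remains a poss-stable model after joining the blocking rules, provided those rules do not fire on it — which holds because $\mathit{PNE}(\overline{E},E^+)$ is constructed only from negative examples incomparable with every positive one. For \eqref{eq:G2}, I would apply Proposition~\ref{prop:PSM:PNE} to each $\overline{e} \in \overline{E}$: the rule from $\mathit{PNE}(\overline{E},E^+)$ blocks $\overline{e}$ from being a poss-stable model. The negative examples \emph{not} in $\overline{E}$ (those with ${\cal T}_{\overline{B}\sqcup\overline{H}}(\overline{e}) \not\sqsubseteq \overline{e}$) are excluded from $\mathit{PSM}$ automatically by Proposition~\ref{prop:possNLP:interpretation:exist}, so \eqref{eq:G2} holds for all of $E^-$.

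Next I would handle $E^+ = \emptyset$ (lines 8--12), which splits according to the compatibility analysis behind Proposition~\ref{prop:task:B:neg:exist}. Since $\mathit{Existence}$ succeeded, $E^-$ is compatible with $\overline{B}$, meaning at least one of conditions (c1)--(c3) of Definition~\ref{def:compatible} fails. In the branch of line (9) (where $\mathit{lfp}(T_{B^{\cal A}}) \neq {\cal A}$ or $\overline{\mathbb{A}} - E^- = \overline{\mathbb{A}}$), I would show $\mathit{PNE}(E^-,\emptyset) - \overline{B}$ blocks every negative example via Proposition~\ref{prop:PSM:PNE}, while the failure of (c1) or (c2) guarantees some poss-interpretation escapes being blocked, so $\mathit{PSM}$ is nonempty-compatible and \eqref{eq:G2} holds vacuously for \eqref{eq:G1}. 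In the branch of lines (11)--(12), the witness $\overline{G} \in \overline{\mathbb{A}} - E^-$ with ${\cal T}_{\overline{B}}(\overline{G}) \sqsubseteq \overline{G}$ exists because (c3) must have failed; Lemma~\ref{lm:SM:PPI} then yields $\mathit{PSM}(\overline{B} \sqcup \mathit{PPE}(\overline{G})) = \{\overline{G}\}$, and since $\overline{G} \notin E^-$, conditions \eqref{eq:G1} (trivially, as $E^+ = \emptyset$) and \eqref{eq:G2} both hold.

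\textbf{The main obstacle} I anticipate is the interaction in line (6) between the join operation $\sqcup$ and the subtraction $-\,\overline{B}$, together with verifying that $\mathit{PNE}$ rules do not accidentally fire on any positive example and so preserve \eqref{eq:G1}. Because $\sqcup$ takes pointwise maxima of weights on shared rules and subtraction removes rules already dominated by $\overline{B}$, I must check carefully that the reduct $\overline{H}^I$ behaves as intended on each $\overline{I} \in E^+$ — specifically that the absorption hypothesis ${\cal T}_{\mathit{PNE}(\overline{E},E^+) - \overline{B}}(\overline{I}) \sqsubseteq \overline{I}$ of Corollary~\ref{cor:SM:absorption} is met. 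This reduces to observing that each rule of $\mathit{PNE}(\overline{e})$ has body $\Not(\,{\cal A} - E_{\overline{e}})$ content tied to $\overline{e}$, whose incomparability with $\overline{I}$ ensures the rule is not applicable in $\overline{I}$; the rest is routine bookkeeping with the definitions of $\sqcup$ and $-$.
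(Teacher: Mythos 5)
Your overall strategy matches the paper's: line (1) is settled by Theorem~\ref{thm:taskNLP:exist}; the branch $E^+\neq\emptyset$ is verified via Proposition~\ref{prop:possNLP:exist}, Proposition~\ref{prop:PSM:PNE} and Corollary~\ref{cor:SM:absorption}; and the branch $E^+=\emptyset$ replays the case analysis behind Proposition~\ref{prop:task:B:neg:exist} (Proposition~\ref{prop:PSM:PNE} for line (9), Lemma~\ref{lm:SM:PPI} for lines (11)--(12)). Your treatment of \eqref{eq:G1} --- including the observation that $\overline{B}\sqcup(\overline{X}-\overline{B})=\overline{B}\sqcup\overline{X}$ and that the $\mathit{PNE}$ rules cannot fire on any positive example, so absorption applies --- is correct and in fact more explicit than the paper's own write-up.

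There is, however, one concrete gap in your \eqref{eq:G2} argument. You ``apply Proposition~\ref{prop:PSM:PNE} to each $\overline{e}\in\overline{E}$'', but that proposition only covers those $\overline{e}$ with $e\neq{\cal A}$ and $\overline{e}\not\parallel E^+$; by Equation~(\ref{eq:PNE}), $\mathit{PNE}(\overline{E},E^+)$ contains \emph{no rule at all} for a negative example in $\overline{E}$ that is comparable with some positive example or whose classical projection is ${\cal A}$, so nothing in your proof excludes such an $\overline{e}$ from $\mathit{PSM}(\overline{B}\sqcup\overline{H})$. Such examples genuinely occur: take ${\cal A}=\{p,q\}$, $\overline{B}=\emptyset$, $E^+=\{\{(p,1)\}\}$, $E^-=\{\{(p,1),(q,1)\}\}$; the unique negative example is coherent with $\overline{B}\sqcup\mathit{PPE}(E^+)$ (no rule of $\mathit{PPE}(E^+)$ fires on it), hence lies in $\overline{E}$, yet its projection is ${\cal A}$ and it is comparable with the positive example, so Proposition~\ref{prop:PSM:PNE} says nothing about it. The missing step is short but needed: since $E^+\cap E^-=\emptyset$ (condition (C4) of Theorem~\ref{thm:taskNLP:exist}) and every positive example is a poss-stable model of the final program, Proposition~\ref{prop:possSM:incomparable} excludes every negative example that is comparable with a positive one; the case $e={\cal A}$ is subsumed because ${\cal A}\supseteq I$ for any $\overline{I}\in E^+\neq\emptyset$. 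An analogous repair is needed in your line-(9) branch when $\mathit{lfp}(T_{B^{\cal A}})\neq{\cal A}$: a negative example with $e={\cal A}$ is not blocked by Proposition~\ref{prop:PSM:PNE}, but rather by the fact that all $\mathit{PNE}$ rules have nonempty negative bodies, so $\mathit{lfp}(T_{(B\cup H)^{\cal A}})=\mathit{lfp}(T_{B^{\cal A}})\neq{\cal A}$ and ${\cal A}\notin\mathit{SM}(B\cup H)$; your remark that ``some poss-interpretation escapes being blocked'' does not do this work (and is not needed for \eqref{eq:G1}, which is vacuous when $E^+=\emptyset$).
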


We note that checking the existence of induction solutions and the subset relation ${\cal T}_{\overline{B} \sqcup \overline{H}}(\overline{e}) \sqsubseteq \overline{e}$ can be done in polynomial time. Also, the tasks of computing $\mathit{PPE}(E^+)$, $\mathit{PNE}(\overline{E},E^+)$ are also in polynomial time. The task of line 11 \ysm{requires to `guess' a poss-interpretation $\overline{G}$ and thus the complexity of this subtask is NP-complete in the worst case. Therefore, the complexity of Algorithm~\ref{alg:ILPSM1} is NP-complete in the worst case.}{can be achieved by enumerating $\overline G\in \overline{\mathbb A}-E^-$ and checking if ${\cal T}_{\overline B}(\overline G)\sqsubseteq \overline G$, which is bounded by $O(k2^m)$ where $m=|\overline{\mathbb  A}|$ and $k=|B|+|G|$ (we assume ${\cal T}_{\overline B}(\overline G)$ is computable in linear time). Therefore, the time complexity of this algorithm is $O(n2^n)$ where $n$ is the size of its input. }

%\hbl{

% ???? (COMPLEXITY) Given an induction task $T = \tuple{\overline{B}, E^+, E^-}$, algorithm $Existence$ decides whether $\mathit{ILP_{LPoSM}}(T)$ is $\emptyset$ in polynomial time because the four conditions in Proposition~ref{???} can be done in polynomial time. 
% For instance, it is polynomial time to compute $\mathit{lfp}(T_{B^{\cal A}})$~~\citep{nicolas2006possibilistic} and thus checking the compatibility is in polynomial time.  
% %compatibility as an example, the computation of $\mathit{lfp}(T_{B^{\cal A}})$ can be done polynomially~\citep{nicolas2006possibilistic}.  
% Then the time complexity of algorithm ${\ILPSM}$ is polynomial, since all steps after line (1) can be done in polynomial time. 

% \begin{lemma}[complexity of algorithm $Existence$]	\label{lm:comp:Existence} 
%     Given a task $T = \tuple{\overline{B}, E^+, E^-}$, algorithm $Existence$ decides whether $\mathit{ILP_{LPoSM}}(T)$ is $\emptyset$ in polynomial time.
%     %in time polynomial in $\vert E^+ \vert \times \vert {\cal A} \vert \times \vert \overline{B} \vert \times \vert Q \vert \times \vert E^- \vert$. 
% \end{lemma}

% \begin{proposition}[complexity of algorithm ILPSM]	\label{prop:comp:Existence} 
%     Given a task $T = \tuple{\overline{B}, E^+, E^-}$, algorithm ${\ILPSM}$ terminates in polynomial time.
%     %in time polynomial in $\vert E^+ \vert \times \vert {\cal A} \vert \times \vert \overline{B} \vert \times \vert Q \vert \times \vert E^- \vert$. 
% \end{proposition}
%}

While algorithm {\ILPSM} always returns an induction solution if it exists, the solution may not minimal \wrt the number of rules in the solution. 
% As we can see, algorithm {\ILPSM} indeed returns a solution. 
% However, the returned solution is not minimal with respected to the number of rules in the solution. 
We note that all rules in $\mathit{PPE}(E^+)$ has are negative (i.~e., no positive body atoms), which causes some minimal solutions are missed in the construction process.
For instance in Example~\ref{exam:PAS:solutions}, the minimal solution $\overline{H_1}$ will be missed by this algorithm since the positive body of rule $(medA \lto vomiting, \Not medB, 1)$ is not empty. 
%Fundamentally, this algorithm is too limited in its solution space. 
%In other words, we still have not answered question~\ref{questions2} posed in the introduction. 
% Next, we should continue to explore the relationship between a task and its solution so that our approach sheds light on a more general induction solution. To this end, we need to research the underlying solution space of the given examples. 

In the rest of this section, we introduce an algorithm that computes a minimal induction solution. This algorithm is completely different from the previous one and we need to first investigate some properties of minimal induction solutions.

Let  $\alpha\in \cal{Q}$ and  $\star\in\{>,\ge, =\}$. The {\em  $\alpha^\star$-relevant atoms} of a possibilistic interpretation $\overline{I}$, written as ${RA}^\star(\overline{I},\alpha)$, is defined by ${RA}^\star(\overline{I},\alpha)=\{ p \mid (p,\beta)\in\overline{I}, \beta\star\alpha\}$. Note that when $\star$ is the equality sign $=$, ${RA}^= (\overline{I},\alpha)$ is empty if there is no atom $p$ such that $(p,\beta)\in\overline{I}$. So, the set ${RA}^=(\overline{I},\alpha)$ is not always be $\{p\}$.

%Beforehand, let us recall the concept of hitting set and introduce the concept of relevant atoms. 
Given a collection $A$ of sets, a set $B$ is a {\em hitting set} of $A$ if $X \cap B \neq \emptyset$ for each $X \in A$. A hitting set $B$ of $A$ is called {\em minimal} if there exists no $C \subset B$ such that $C$ is still a hitting set of $A$. We use $\mathit{SMHS}(A)$ to denote the set of all minimal hitting sets of $A$. 
Now we can define the notion of positive solution spaces in the following definition.
%Definition~\ref{def:pos:SS}.

\begin{definition}[Positive solution space]\label{def:pos:SS}
      Let $\overline{I}$ be a possibilistic interpretation and $\epsilon=(p,\alpha)$ be a poss-atom in $\overline{I}$. Then
      \begin{enumerate}
          \item The {\em positive solution space} of $\epsilon$ under $\overline{I}$, 
      written as  $S^+(\overline{I}, \epsilon)$, is the following set of possibilistic rules:
            \begin{equation}\label{eq:positive:SS}
         \{(p\lto \Pos, \Not \Neg, \; \beta)\mid \Pos\subseteq RA^\ge(\overline{I},\alpha), \Neg\subseteq {\cal A}-I, \text{ and for all }\beta \in W\}   
      \end{equation}
      where 
      \[W = 
        \left\{
          \begin{array}{ll}
            \{ \alpha \}        &  , \hbox{$\Pos\cap RA^=(\overline{I},\alpha)=\emptyset$;} \\
            \{ \gamma \mid \gamma \in {\cal Q}, \gamma \geq \alpha \}  & ,  \hbox{otherwise.}
          \end{array}
        \right.
      \]
      % \begin{equation}\label{eq:positive:SS}
      %    \{(p\lto \Pos, \Not \Neg, \; \beta)\mid \Pos\subseteq RA^\ge(\overline{I},\alpha), \Neg\subseteq {\cal A}-I\}   
      % \end{equation}
      % where 
      % \[\beta \in 
      %   \left\{
      %     \begin{array}{ll}
      %       \{ \alpha \}        &  , \hbox{$\Pos\cap RA^=(\overline{I},\alpha)=\emptyset$;} \\
      %       \{ \gamma \mid \gamma \in {\cal Q}, \gamma \geq \alpha \}  & ,  \hbox{otherwise.}
      %     \end{array}
      %   \right.
      % \]
        \item The {\em positive solution space} of $\overline{I}$, written as  $S^+(\overline{I})$, is defined as $\mathit{SMHS}(\{ S^+(\overline{I}, \epsilon) \mid \epsilon \in \overline{I}\})$.
          \end{enumerate}
\end{definition}

\ysa{ The next propositions shows that $|P\cap S^+(\overline I,\epsilon)|=1$ for each $P\in S^+(I)$ and $\epsilon\in \overline I$.}
\ysm{In this definition, the computation of $\mathit{SMHS}(.)$ is usually quite complex. 
Therefore, we provide an easy construction to get the $S^+(\overline{I})$ via the the following proposition.         
By default, we compute $S^+(\overline{I})$ according to the construction in Proposition~\ref{prop:SS:construction}. }

\begin{proposition}%[Construction of $S^+(\overline{I})$]
\label{prop:SS:construction}
Given a possibilistic interpretation $\overline{I}$, $S^+(\overline{I}) = \{ P \mid P \subseteq \cup_{\epsilon \in \overline{I}} S^+(\overline{I}, \epsilon) \mbox{ and } \forall \epsilon \in \overline{I}, \vert P \cap S^+(\overline{I}, \epsilon) \vert = 1 \}$. 
\end{proposition}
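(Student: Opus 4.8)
The plan is to reduce the statement to a purely combinatorial fact about minimal hitting sets of a pairwise disjoint family of sets. By definition $S^+(\overline{I}) = \mathit{SMHS}({\cal C})$ where ${\cal C} = \{ S^+(\overline{I}, \epsilon) \mid \epsilon \in \overline{I} \}$, so the task is to characterise the minimal hitting sets of ${\cal C}$. The right-hand side of the proposition is precisely the collection of \emph{transversals} of ${\cal C}$, i.e. the sets $P \subseteq \bigcup_{\epsilon \in \overline{I}} S^+(\overline{I}, \epsilon)$ that meet each member of ${\cal C}$ in exactly one element. Thus it suffices to show that, for the particular family ${\cal C}$ arising here, the minimal hitting sets coincide with the transversals.

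The key structural observation, which I expect to carry most of the weight, is that ${\cal C}$ is pairwise disjoint. Indeed, every possibilistic rule in $S^+(\overline{I}, \epsilon)$ with $\epsilon = (p,\alpha)$ has head $p$ by the construction in Equation~\eqref{eq:positive:SS}. Since $\overline{I}$ is a possibilistic interpretation, each atom carries at most one weight, so two distinct poss-atoms $\epsilon = (p,\alpha)$ and $\epsilon' = (p',\alpha')$ of $\overline{I}$ satisfy $p \neq p'$; consequently no rule can lie simultaneously in $S^+(\overline{I}, \epsilon)$ and $S^+(\overline{I}, \epsilon')$, as its head cannot equal both $p$ and $p'$. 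I would also record that each $S^+(\overline{I}, \epsilon)$ is nonempty, since taking $\Pos = \Neg = \emptyset$ forces $W = \{\alpha\}$ and yields the rule $(p \lto , \alpha) \in S^+(\overline{I}, \epsilon)$.

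With disjointness established, I would prove the combinatorial lemma by two inclusions. For the easy direction, any transversal $P$ clearly hits every member of ${\cal C}$, and it is minimal: deleting an element $x \in P$, which by disjointness belongs to exactly one $S^+(\overline{I}, \epsilon_0)$, leaves $P \cap S^+(\overline{I}, \epsilon_0) = \emptyset$, so no proper subset of $P$ remains a hitting set. For the converse, let $B$ be a minimal hitting set. First, $B \subseteq \bigcup_{\epsilon} S^+(\overline{I}, \epsilon)$, for otherwise an element outside the union could be deleted without destroying the hitting property, contradicting minimality. Second, $\vert B \cap S^+(\overline{I}, \epsilon) \vert = 1$ for each $\epsilon$: it is at least $1$ because $B$ is a hitting set, and were it at least $2$ for some $\epsilon_0$, then by disjointness removing one of the two witnesses would keep every $S^+(\overline{I}, \epsilon)$ hit (the remaining witness still covers $\epsilon_0$, and no other member loses an element), again contradicting minimality. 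Hence $B$ is a transversal, completing the characterisation. The only subtlety to watch is that disjointness is genuinely used in both the minimality argument and the cardinality argument; without it a single element could cover several members of ${\cal C}$ at once, and the transversal description would fail.
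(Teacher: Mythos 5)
Your proof is correct and follows essentially the same route as the paper's: both directions reduce to showing that, for the family $\{ S^+(\overline{I},\epsilon) \mid \epsilon \in \overline{I}\}$, the minimal hitting sets are exactly the transversals, with pairwise disjointness of the family doing the work in the converse direction. In fact you are more explicit than the paper, which asserts the disjointness directly from Definition~\ref{def:pos:SS} without spelling out the head-based argument (distinct poss-atoms of an interpretation have distinct atoms, and every rule in $S^+(\overline{I},\epsilon)$ with $\epsilon=(p,\alpha)$ has head $p$) that you supply.
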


Intuitively, $S^+(\overline{I})$ is a collection of all poss-NLPs containing exactly $\vert I \vert$ rules, to which each $\epsilon \in \overline{I}$ provides exactly one rule from $S^+(\overline{I}, \epsilon)$.  
\ysa{Please note that the size of $S^+(\overline I,\epsilon)$ is bounded by $O(k2^n)$ where $n=|{\cal A}|$ and $k=|{\cal Q}|$ since there are possibly $2^n$ number of different $\Pos$ and $\Neg$ in Equation~\ref{eq:positive:SS}. Thus, the size of $S^+(\overline I)$ is bounded by $O((n2^n)^m)=O(n^m\times 2^{nm})$ where $n=|\overline I|+|{\cal Q}|$ and $m=|\overline I|$.}
%Intuitively, $S^+(\overline{I})$ is a collection of all poss-NLPs containing exactly $\vert I \vert$ rules in $S^+(\overline{I}, \epsilon)$ for each $\epsilon \in \overline{I}$. 

Let us look at some examples of $S^+(\overline{I}, \epsilon)$ and $S^+(\overline{I})$.

% Intuitively, $S^+(\overline{I})$ is a set of all collections of $\vert I \vert$ rules each of which comes from $S^+(\overline{I}, \epsilon)$ per $\epsilon \in \overline{I}$. A poss-NLP $\overline{P}$ belongs to the {\em positive solution space} of $\overline{I}$, written as  $\overline{P} \in S^+(\overline{I})$, if $\overline{P}$ is a set of $\vert I \vert$ possibilistic rules such that  for each $\epsilon \in \overline{I}$ there exists $\overline{r} \in \overline{P}$ such that $\overline{r} \in S^+(\overline{I}, \epsilon)$. Example~\ref{exam:pos:SS} provides some corresponding instances.

\begin{example}\label{exam:pos:SS}
    Given ${\cal A} = \{ p,q,r \}$, ${\cal Q} = \{ 0.3, 0.5 \}$ and $0.3 \leq 0.5$, let $\overline{I} = \{ (r, 0.3) \}$ and $\overline{J} = \{ (q, 0.5), (r, 0.3) \}$. Then 
    \begin{itemize}
        \item $S^+(\overline{I}, (r, 0.3)) = \{ (r \lto r, 0.3), (r \lto r, 0.5), (r \lto r, \Not p, 0.3), (r \lto r, \Not p, 0.5), (r \lto r, \Not q, 0.3), (r \lto r, \Not q, 0.5), (r \lto r, \Not p, \Not q, 0.3), (r \lto r, \Not q, \Not q, 0.5), (r \lto , 0.3), (r \lto \Not p, 0.3), (r \lto \Not q, 0.3), (r \lto \Not p, \Not q, 0.3)$, 
        \item $S^+(\overline{J}, (q, 0.5)) = \{ (q \lto q, 0.5), (q \lto q, \Not p,  0.5), (q \lto , 0.5), (q \lto \Not p, 0.5) \}$, 
        \item $S^+(\overline{J}, (r, 0.3)) = \{ (r \lto r, 0.3), (r \lto r, 0.5), (r \lto r, q, 0.3), (r \lto r, q, 0.5), (r \lto , 0.3), (r \lto q, 0.3), (r \lto r, \Not p, 0.3), (r \lto r, \Not p, 0.5), (r \lto r, q, \Not p, 0.3), (r \lto r, q, \Not p, 0.5), (r \lto \Not p, 0.3), (r \lto q, \Not p, 0.3) \}$. 
    \end{itemize}
    As $\overline{I}$ contains exactly one element, we have $S^+(\overline{I}) = \{ \{ \overline{r} \} \mid \overline{r} \in S^+(\overline{I}, (r, 0.3)) \}$. As $\vert S^+(\overline{J}, (q, 0.5)) \vert = 4$ and $\vert S^+(\overline{J}, (r, 0.3)) \vert = 12$, $S^+(\overline{J})$ has $4 \times 12 = 48$ elements. For instance, $\{ (q \lto q, 0.5), (r \lto r, 0.3) \} \in S^+(\overline{J})$ and $\{ (q \lto q, 0.5), (r \lto r, 0.5) \} \in S^+(\overline{J})$.
\end{example}

The notion of positive solution space is helpful for guiding the search process for a minimal solution. 
The below lemma reveals a relation between the least fixpoint of a poss-definite program and the positive solution spaces.

% The positive solution space is a key element in upholding a poss-stable model. Lemma~\ref{lm:SM:LFP} establishes a connection between the least fixpoint of a possibilistic definite logic program and the positive solution space. Building on this connection, Proposition~\ref{prop:SS:exists} reveals the positive solution space within a poss-stable model. These characteristics can direct the search for a solution that covers the positive examples in an induction task.

\begin{lemma}[Positive solution space for the least fixpoint]\label{lm:SM:LFP}
    Let possibilistic interpretation $\overline{I}$ be the least fixpoint of a possibilistic definite logic program $\overline{P}$. There exists a program $\overline{H} \subseteq \overline{P}$ such that $\overline{H}  \in S^+(\overline{I})$ and $H$ is grounded.
\end{lemma}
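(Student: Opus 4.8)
The plan is to exhibit the required $\overline{H}$ explicitly, by selecting for every poss-atom $(p,\alpha_p)\in\overline{I}$ a single supporting rule of $\overline{P}$, and then to verify the two demands separately: that the resulting set lies in $S^+(\overline{I})$, and that its classical projection is grounded. Since $\overline{I}$ is the least fixpoint, $\overline{I}={\cal T}_{\overline{P}}(\overline{I})$, so for each $(p,\alpha_p)\in\overline{I}$ the set $\mathit{App}(\overline{P},\overline{I},p)$ is nonempty and $\alpha_p=\max\{\beta\mid \overline{r}\in\mathit{App}(\overline{P},\overline{I},p)\text{ is }\beta\text{-applicable in }\overline{I}\}$; hence at least one rule $\overline{r}_p\in\overline{P}$ with head $p$ is $\alpha_p$-applicable in $\overline{I}$. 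Taking $\overline{H}=\{\overline{r}_p\mid (p,\alpha_p)\in\overline{I}\}$ gives $\overline{H}\subseteq\overline{P}$ with exactly one rule per poss-atom, and $|H|=|I|$ because the heads are pairwise distinct.

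For membership in $S^+(\overline{I})$ I would first prove the claim that every rule $(r,\gamma)\in\overline{P}$ with $\Head(r)=p$ that is $\alpha_p$-applicable in $\overline{I}$ already belongs to $S^+(\overline{I},(p,\alpha_p))$. Indeed, $\alpha_p$-applicability means $\Pos(r)\subseteq I$, that $(q,\alpha_q)\in\overline{I}$ for each $q\in\Pos(r)$, and $\min(\{\gamma\}\cup\{\alpha_q\mid q\in\Pos(r)\})=\alpha_p$; in particular each such $\alpha_q\geq\alpha_p$, so $\Pos(r)\subseteq RA^{\geq}(\overline{I},\alpha_p)$, and $\Neg(r)=\emptyset\subseteq{\cal A}-I$ since $\overline{P}$ is definite. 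It remains to check $\gamma\in W$: if some $q\in\Pos(r)$ has $\alpha_q=\alpha_p$ then $\Pos(r)\cap RA^{=}(\overline{I},\alpha_p)\neq\emptyset$, so $W=\{\gamma'\in{\cal Q}\mid\gamma'\geq\alpha_p\}\ni\gamma$; otherwise all $\alpha_q>\alpha_p$, forcing $\gamma=\alpha_p$ for the minimum to be $\alpha_p$, and $W=\{\alpha_p\}\ni\gamma$. This proves the claim. Since the rules of $S^+(\overline{I},\epsilon)$ all share the head of $\epsilon$ and the heads of the $\overline{r}_p$ are distinct, each $\overline{r}_p$ meets exactly one set $S^+(\overline{I},\epsilon)$, namely $S^+(\overline{I},(p,\alpha_p))$; hence $|\,\overline{H}\cap S^+(\overline{I},\epsilon)\,|=1$ for every $\epsilon\in\overline{I}$ and, by Proposition~\ref{prop:SS:construction}, $\overline{H}\in S^+(\overline{I})$.

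The main obstacle is groundedness, which an arbitrary choice of maximising rules does not guarantee (a cyclic choice such as $\{p\lto q,\ q\lto p\}$ is admissible for $S^+$ yet not grounded), so the rules must be chosen to make the positive-dependency digraph acyclic. Here I would use the fixpoint iteration: write $\overline{I}_n={\cal T}_{\overline{P}}^{\,n}$, recall $\overline{I}_n\sqsubseteq\overline{I}_{n+1}\sqsubseteq\overline{I}$, and let $\mathit{rank}(p)$ be the least $n$ with $(p,\alpha_p)\in\overline{I}_n$. I would then choose $\overline{r}_p$ to be a rule producing $(p,\alpha_p)$ at that stage, i.e. one that is $\alpha_p$-applicable in $\overline{I}_{\mathit{rank}(p)-1}$ (such a rule exists because the weight of $p$ first attains $\alpha_p$ at stage $\mathit{rank}(p)$). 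Monotonicity together with the maximality of $\alpha_p$ shows this $\overline{r}_p$ is still $\alpha_p$-applicable in $\overline{I}$, so the membership argument above applies. The decisive observation is that for each $q\in\Pos(r_p)$ one has $\alpha_q\geq\alpha_p$, and whenever $\alpha_q=\alpha_p$ the weight of $q$ at stage $\mathit{rank}(p)-1$ already equals its final value $\alpha_p$, whence $\mathit{rank}(q)<\mathit{rank}(p)$.

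Finally I would conclude acyclicity. Orient an edge $p\to q$ whenever $q\in\Pos(r_p)$. Along any edge the weight is non-decreasing, so on a putative cycle all weights coincide; but then every edge of the cycle joins atoms of equal weight and therefore strictly decreases $\mathit{rank}$, which is impossible around a cycle. Hence the digraph is acyclic, a topological order of its atoms yields an enumeration $(r_{p_1},\dots,r_{p_n})$ with $\Pos(r_{p_i})\subseteq\Head(\{r_{p_1},\dots,r_{p_{i-1}}\})$, and so $H$ is grounded. Combined with the previous paragraph this gives $\overline{H}\subseteq\overline{P}$, $\overline{H}\in S^+(\overline{I})$ and $H$ grounded, as required.
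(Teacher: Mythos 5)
Your proof is correct and follows essentially the same route as the paper's: both select, for each poss-atom of $\overline{I}$, one rule of $\overline{P}$ that derives it at the first iteration stage where it attains its final weight, and assemble these $\vert I \vert$ rules into $\overline{H}$. The paper's own proof merely asserts that the resulting $\overline{H}$ lies in $S^+(\overline{I})$ and that $H$ is grounded, whereas your case analysis of the weight set $W$ and your weight/rank acyclicity argument supply exactly the verifications the paper leaves implicit.
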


This result can be generalised to general poss-NLPs. 

\begin{proposition}[Positive solution space for a poss-stable model]\label{prop:SS:exists}	
    If $\overline{I} \in \mathit{PSM}(\overline{P})$ for a poss-NLP $\overline{P}$ and a possibilistic interpretation $\overline{I}$, then there exists $\overline{H} \subseteq \overline{P}$ such that $\overline{H}  \in S^+(\overline{I})$ and $H^I$ is grounded.
\end{proposition}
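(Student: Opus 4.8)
The plan is to reduce the statement to its definite counterpart, Lemma~\ref{lm:SM:LFP}, by passing through the possibilistic reduct, and then to lift the witnessing subprogram back to $\overline{P}$. First I would use the hypothesis $\overline{I} \in \mathit{PSM}(\overline{P})$, which by definition means $\overline{I} = \mathit{Cn}(\overline{P}^I) = \mathit{lfp}({\cal T}_{\overline{P}^I})$. Hence $\overline{I}$ is exactly the least fixpoint of the possibilistic \emph{definite} program $\overline{P}^I$, so Lemma~\ref{lm:SM:LFP} applies to $\overline{P}^I$ and yields a subprogram $\overline{H'} \subseteq \overline{P}^I$ with $\overline{H'} \in S^+(\overline{I})$ and $H'$ grounded.

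Next I would lift $\overline{H'}$ back to the original program. Every rule $\overline{s} \in \overline{H'} \subseteq \overline{P}^I$ is the reduct $\overline{r}^I = (\Head(r) \lto \Pos(r), N(\overline{r}))$ of at least one rule $\overline{r} \in \overline{P}$ satisfying $\Neg(r) \cap I = \emptyset$; I would pick one such $\overline{r}$ for each $\overline{s}$ and let $\overline{H}$ be the collection of these choices. Because the preimages of distinct rules of $\overline{H'}$ under the reduct are disjoint, and the reduct preserves head, positive body and weight, the chosen lifts are pairwise distinct and none is deleted by the reduct, so $\overline{H}^I = \overline{H'}$; in particular $H^I = H'$ is grounded, which gives the second conclusion.

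It remains to verify $\overline{H} \in S^+(\overline{I})$, and this is the crux. For a lifted rule $\overline{r} = (\Head(r) \lto \Pos(r), \Not\Neg(r), N(\overline{r}))$ whose reduct $\overline{s}$ lies in $S^+(\overline{I}, \epsilon)$ with $\epsilon = (\Head(r), \alpha)$, I would check the membership conditions of Definition~\ref{def:pos:SS}: the head is unchanged, so $\epsilon$ remains the unique poss-atom of $\overline{I}$ with that head; the positive body $\Pos(r) \subseteq RA^\ge(\overline{I}, \alpha)$ and the weight $N(\overline{r}) \in W$ are inherited verbatim from $\overline{s}$, since $W$ depends only on $\Pos(r)$ and $\alpha$; and the new negative body satisfies $\Neg(r) \subseteq {\cal A} - I$, which is precisely the reduct condition $\Neg(r) \cap I = \emptyset$. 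Thus each lift stays in the same $S^+(\overline{I}, \epsilon)$ as its reduct, so $\overline{H}$ contains exactly one rule of $S^+(\overline{I}, \epsilon)$ for each $\epsilon \in \overline{I}$, and Proposition~\ref{prop:SS:construction} gives $\overline{H} \in S^+(\overline{I})$. The only genuinely delicate point is this last matching: it is the observation that re-attaching the default-negated literals stripped off by the reduct is exactly what the clause $\Neg \subseteq {\cal A} - I$ in the positive solution space permits, so the lifting never leaves $S^+(\overline{I})$.
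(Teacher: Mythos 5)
Your proof is correct and takes essentially the same route as the paper's: both pass to the reduct, note that $\overline{I} = \mathit{Cn}(\overline{P}^I) = \mathit{lfp}({\cal T}_{\overline{P}^I})$, and invoke Lemma~\ref{lm:SM:LFP} for the definite program $\overline{P}^I$. The only difference is one of detail: you spell out the lifting of the witnessing subprogram of $\overline{P}^I$ back to rules of $\overline{P}$ (checking that the restored negative bodies satisfy $\Neg(r) \cap I = \emptyset$, i.e.\ $\Neg(r) \subseteq {\cal A} - I$, so membership in each $S^+(\overline{I},\epsilon)$ is preserved), a step the paper's three-line proof leaves implicit.
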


By Definition~\ref{def:pos:SS}, it is straightforward to get $\overline{H}  \in S^+(\overline{I})$ in Proposition~\ref{prop:SS:exists}. 
The next step is to check whether $H^I$ is grounded. 
As each stable model has at least one acyclic support graph~\citep{cabalar2023model}\footnote{The dependency graph~\citep{li2021graph} of a NLP is defined on its literals s.t. there exists a positive (resp. negative) edge from $p$ to $q$ if $p$ appears positively resp. negatively) in the body of a rule with head $q$. Positive loops are loops with no negative edge.}, Proposition~\ref{prop:SS:noloop} provides a default method for checking the groundness of $H^I$ via the positive loop in its support graph.

\begin{proposition}[Loops in grounded program]\label{prop:SS:noloop}	
    Let $\overline{I}$ be a possibilistic interpretation and $\overline{P}$ be a poss-NLP such that $\overline{P} \in S^+(\overline{I})$. Then $P^I$ is grounded if and only if the dependency graph of $P$ does not have a positive loop.
\end{proposition}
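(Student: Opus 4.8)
The plan is to first pin down the precise shape of the reduct $P^I$ for an arbitrary $\overline{P} \in S^+(\overline{I})$, and then to reduce the groundedness question to the acyclicity of a single directed graph. By Proposition~\ref{prop:SS:construction}, $\overline{P}$ contains exactly one rule per poss-atom of $\overline{I}$: for each $\epsilon = (p,\alpha) \in \overline{I}$ there is a unique $\overline{r_p} \in \overline{P} \cap S^+(\overline{I},\epsilon)$, whose head is $p$. By Equation~\ref{eq:positive:SS} this rule satisfies $\Pos(r_p) \subseteq RA^\ge(\overline{I},\alpha) \subseteq I$ and $\Neg(r_p) \subseteq {\cal A}-I$. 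The key consequence is $\Neg(r_p)\cap I = \emptyset$ for every rule, so the reduct deletes nothing and merely strips the negative literals; hence $P^I = \{\, p \lto \Pos(r_p) \mid p \in I \,\}$ is a definite program with exactly $|I|$ rules, pairwise distinct heads (the atoms of $I$), and all positive bodies contained in $I$.

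Next I would rewrite groundedness as a topological condition. Unfolding the definition, $P^I$ is grounded iff its rules admit an ordering $(r_{p_1},\dots,r_{p_n})$ with $\Pos(r_{p_i}) \subseteq \{p_1,\dots,p_{i-1}\}$ for every $i$; since the heads are pairwise distinct, this is exactly an enumeration of $I$ in which every atom is preceded by all atoms of its positive body. That is precisely a topological ordering of the directed graph $D$ on $I$ carrying an edge $p \to q$ whenever $p \in \Pos(r_q)$. A finite digraph admits a topological ordering iff it is acyclic, so $P^I$ is grounded iff $D$ has no directed cycle.

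Finally I would identify $D$ with the positive part of the dependency graph of $P$. A positive edge $p \to q$ of that graph exists exactly when $p$ occurs positively in the body of the rule with head $q$, i.e.\ $p \in \Pos(r_q)$; because the heads of $P$ are the atoms of $I$ and all positive bodies lie in $I$, every positive edge has both endpoints in $I$, so the positive edges are exactly those of $D$. The reduct removes only negative literals, so $P$ and $P^I$ share the same positive bodies and hence the same positive edges. Since a positive loop is by definition a cycle using positive edges only, the two equivalences chain together: $P^I$ is grounded iff $D$ is acyclic iff the dependency graph of $P$ has no positive loop, which is the claim.

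I expect the only delicate points to be the bookkeeping that \emph{no} rule is discarded by the reduct --- which is what lets the positive-body structure of $P$ pass intact to $P^I$ --- and the treatment of degenerate cycles: a rule may contain its own head in its positive body (for instance $r \lto r \in S^+(\overline{I},(r,0.3))$ in Example~\ref{exam:pos:SS}), which is simultaneously a length-one positive loop and an obstruction to groundedness, so the cycle/topological-ordering equivalence must be phrased to cover self-loops as well.
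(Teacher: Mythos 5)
Your proof is correct and follows essentially the same route as the paper's: both arguments rest on the two observations that the reduct deletes no rule (since $\Neg(r) \subseteq {\cal A}-I$ for every rule, so the positive dependency structure of $P$ passes intact to $P^I$) and that membership in $S^+(\overline{I})$ forces exactly one rule per head, which makes a positive loop equivalent to non-groundedness. The only difference is one of detail, not of approach: the paper asserts that second equivalence in a single line, whereas you justify it explicitly via topological orderings of the positive dependency digraph and take care of the self-loop case.
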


To characterise the impact of negative examples on minimal induction solutions, we introduce the notion of negative solution spaces.
Informally, the negative solution space collects all potential rules that prevent $\overline{I} \in E^-$ from being a poss-stable model. While the positive solution space uses a minimal hitting set to ensure the outcome of each possibilistic atom in a poss-stable model, only one rule in the negative solution space (in a solution) can block the undesired poss-stable model. A negative solution space comprises two parts: one part leads to an existing interpretation bounded with an unacceptably higher necessity; and the other leads directly to an unacceptable new interpretation. 

% From the perspective of solution spaces, a negative example $\overline{I} \in E^-$ also has an impact on an induction solution, since the negative solution space in Definition~\ref{def:neg:SS} collects all potential rules that prevent $\overline{I} \in E^-$ from being a poss-stable model. While the positive solution space uses a minimal hitting set to ensure the outcome of each possibilistic atom in a poss-stable model, just one rule in the negative solution space occurring in a solution can block the undesired \hbl{possibilistic} stable model. A negative solution space comprises two parts: one part leads to an existing interpretation bounded with an unacceptably higher necessity; and the other leads directly to an unacceptable new interpretation. 
% This is shown in Example~\ref{exam:neg:SS} to assist the understanding of 
% Definition~\ref{def:neg:SS}.

\begin{definition}[Negative solution space]\label{def:neg:SS}
      Let $\overline{I}$ be a possibilistic interpretation and $\epsilon=(p,\alpha) \in \overline{I}$. 
      \begin{enumerate}
          \item The {\em negative solution space} of $\epsilon$ under $\overline{I}$, 
            written as  $S^-(\overline{I}, \epsilon)$, is the following set of possibilistic rules:
            \begin{equation}\label{eq:negative:SS}
            \{(p\lto \Pos, \Not \Neg, \quad \beta)\mid \Pos\subseteq RA^>(\overline{I},\alpha), \Neg\subseteq {\cal A}-I, \beta \in {\cal Q}, \beta > \alpha \}.   
            \end{equation}
        \item The {\em negative solution space} of $\overline{I}$, written as  $S^-(\overline{I})$, is defined as
            \begin{equation}
            (\bigcup_{\epsilon\in \overline{I}}S^-(\overline{I}, \epsilon))  \cup 
            \{(p\lto \Pos, \Not \Neg, \quad \beta)\mid p \notin I, \Pos\subseteq I, \Neg\subseteq {\cal A}-I, \beta \in {\cal Q} \} 
            \end{equation}
      \end{enumerate}
\end{definition}

\ysa{Please note that the size of $S^-(\overline I,\epsilon)$ is bounded by $O(k2^n)$ where $n=|{\cal A}|$ and $k=|{\cal Q}|$ since there are $2^n$ number of different $\Pos$ and $\Neg$. Thus, the size of $S^-(\overline I)$ is bounded by $O(n2^n)$ where $n=|\overline I|+|{\cal Q}|$.}

% This is shown in Example~\ref{exam:neg:SS} to assist the understanding of 
% Definition~\ref{def:neg:SS}.

\begin{example}\label{exam:neg:SS}
    Let ${\cal A} = \{ p,q,r \}$, ${\cal Q} = \{ 0.3, 0.5 \}$ and $0.3 \leq 0.5$. For $\overline{I} = \{ (r, 0.3) \}$, we have $S^-(\overline{I}, (r, 0.3)) = \{ (r \lto ,0.5), (r \lto \Not p, 0.5), (r \lto \Not q, 0.5), (r \lto \Not p, \Not q, 0.5) \}$. 
    Moreover, $S^-(\overline{I}) = S^-(\overline{I}, (r, 0.3)) \cup \{ (a \lto r, \beta), (a \lto r, \Not p, \beta), (a \lto r, \Not q, \beta), (a \lto r, \Not p, \Not q, \beta), (a \lto ,\beta), (a \lto \Not p, \beta), (a \lto \Not q, \beta), (a \lto \Not p, \Not q, \beta) \}$ where $\beta \in \{ 0.3, 0.5 \}$ and $a \in \{ p,q \}$.

     For $\overline{J} = \{ (p, 0.3), (q, 0.5) \}$, $S^-(\overline{J})$ can be constructed similarly.  For instance, we have $\{ (p \lto \Not r, 0.5), (p \lto q, 0.5), (r \lto , 0.5), (r \lto , 0.3) \} \subseteq S^-(\overline{J})$.
\end{example}

\begin{lemma}[Incoherent interpretation for a poss-NLP]\label{lm:neg:SS}	
For a possibilistic interpretation $\overline{I}$ and poss-NLP $\overline{P}$, $\overline{I}$ is incoherent for $\overline{P}$ if and only if $\overline{P} \cap S^-(\overline{I}) \neq \emptyset$.    
\end{lemma}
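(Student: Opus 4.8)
The plan is to unfold both sides of the biconditional into a statement about a single non-dominated consequence pair and then match such a pair against the two syntactic families constituting $S^-(\overline{I})$. First I would rewrite the coherency condition: by Definition~\ref{def:incoherent}, $\overline{I}$ is incoherent for $\overline{P}$ exactly when ${\cal T}_{\overline{P}}(\overline{I}) \not\sqsubseteq \overline{I}$, and by the definition of $\sqsubseteq$ this holds iff some pair $(q,\delta)\in{\cal T}_{\overline{P}}(\overline{I})$ is \emph{not dominated} by $\overline{I}$, i.e.\ either $q\notin I$, or $(q,\gamma)\in\overline{I}$ for the (unique) weight $\gamma<\delta$. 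By Definition~\ref{def:Tp}, such a pair exists iff some rule $\overline{r}=(r,\alpha)\in\overline{P}$ with $\Head(r)=q$ is $\delta$-applicable in $\overline{I}$, where $\delta=\min\{\alpha,\alpha_1,\dots,\alpha_m\}$ and $\alpha_i$ is the weight $\overline{I}$ assigns to the $i$-th positive body atom; by Definition~\ref{def:applicable} this applicability forces $\Pos(r)\subseteq I$ and $\Neg(r)\cap I=\emptyset$, i.e.\ $\Neg(r)\subseteq{\cal A}-I$. The whole argument then reduces to matching this witnessing rule against $S^-(\overline{I})$ under the two cases ``$q\notin I$'' and ``$q\in I$ with $\gamma<\delta$''.

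For the forward direction I would fix a non-dominated pair $(q,\delta)$ together with a witnessing rule $\overline{r}$ realising the maximum $\delta$. If $q\notin I$, then $\overline{r}=(q\lto\Pos(r),\Not\Neg(r),\alpha)$ satisfies $q\notin I$, $\Pos(r)\subseteq I$, $\Neg(r)\subseteq{\cal A}-I$ and $\alpha\in{\cal Q}$, so $\overline{r}$ lies in the second family of $S^-(\overline{I})$. If instead $(q,\gamma)\in\overline{I}$ with $\gamma<\delta$, then $\delta=\min\{\alpha,\alpha_1,\dots,\alpha_m\}>\gamma$ forces every term in the minimum above $\gamma$: the rule weight satisfies $\alpha>\gamma$, and each $\alpha_i>\gamma$ gives $p_i\in RA^>(\overline{I},\gamma)$, hence $\Pos(r)\subseteq RA^>(\overline{I},\gamma)$; combined with $\Neg(r)\subseteq{\cal A}-I$ this places $\overline{r}$ in $S^-(\overline{I},(q,\gamma))$, with $\gamma$ in the role of the atom weight and $\alpha$ in the role of the rule weight $\beta$ of Definition~\ref{def:neg:SS}. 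Either way $\overline{P}\cap S^-(\overline{I})\neq\emptyset$.

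For the converse I would start from a rule $\overline{r}=(r,\beta)\in\overline{P}\cap S^-(\overline{I})$ and show it manufactures a non-dominated consequence pair. If $\overline{r}\in S^-(\overline{I},(p,\alpha))$, the defining constraints $\Pos(r)\subseteq RA^>(\overline{I},\alpha)$ and $\beta>\alpha$ guarantee $\Pos(r)\subseteq I$, $\Neg(r)\cap I=\emptyset$, and that $\overline{r}$ is $\delta'$-applicable with $\delta'=\min\{\beta,\alpha_1,\dots,\alpha_m\}>\alpha$; hence ${\cal T}_{\overline{P}}(\overline{I})$ contains $(p,\delta)$ with $\delta\ge\delta'>\alpha$, while the only pair for $p$ in the interpretation $\overline{I}$ is $(p,\alpha)$, so $(p,\delta)$ is not dominated. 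If $\overline{r}$ instead comes from the second family ($p\notin I$, $\Pos(r)\subseteq I$, $\Neg(r)\subseteq{\cal A}-I$), then $\overline{r}$ is $\delta'$-applicable for some $\delta'\in{\cal Q}$, so ${\cal T}_{\overline{P}}(\overline{I})$ contains a pair $(p,\delta)$ with $p\notin I$, again non-dominated. In both cases ${\cal T}_{\overline{P}}(\overline{I})\not\sqsubseteq\overline{I}$, i.e.\ $\overline{I}$ is incoherent.

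The main obstacle, and the part I would write most carefully, is the weight bookkeeping rather than the logical shape of the proof: specifically, justifying the equivalence ``$\min\{\alpha,\alpha_1,\dots,\alpha_m\}>\gamma$ iff $\alpha>\gamma$ and $\Pos(r)\subseteq RA^>(\overline{I},\gamma)$'', and verifying that $S^-(\overline{I},\epsilon)$ — which bounds the \emph{body} weights through $RA^>(\overline{I},\alpha)$ and the \emph{rule} weight $\beta$ separately — captures exactly the single quantity $\delta$ computed by ${\cal T}_{\overline{P}}$. A secondary check is that the two families of $S^-(\overline{I})$ are jointly exhaustive for the two non-domination cases, and that the argument admits the fact-rule case $m=0$ (where $\delta=\beta$) without silently assuming $\Pos(r)\neq\emptyset$.
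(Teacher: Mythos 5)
Your proposal is correct and takes essentially the same approach as the paper: the paper's proof compresses into one step ("$\overline{r} \in S^-(\overline{I})$ iff ${\cal T}_{\{\overline{r}\}}(\overline{I}) \not\sqsubseteq \overline{I}$, by Definitions~\ref{def:neg:SS} and~\ref{def:Tp}") exactly the rule-versus-two-families matching and weight bookkeeping that you carry out explicitly, and then concludes by Definition~\ref{def:incoherent}. Your case analysis (head outside $I$ versus head in $I$ with strictly larger derived weight) is precisely the justification the paper leaves implicit, including the $\min/\!\max$ interplay and the $m=0$ fact-rule case.
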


Lemma~\ref{lm:neg:SS} connects the incoherency and the negative solution space. 
%Namely, the essence of incoherency between $\overline{I}$ and $\overline{P}$ is exactly $\overline{P} \cap S^-(\overline{I}) \neq \emptyset$.  
Combining both $S^-(\overline{I})$ and $S^+(\overline{I})$, we have the following result, which provides a characterisation of poss-stable models. Intuitively, $\overline{I}$ being a poss-stable model of $\overline{P}$ if and only if (\romannumeral 1) $\overline{P}$ does not contain any superfluous support for $\overline{I}$, and  (\romannumeral 2) $\overline{P}$ provides sufficient support for $\overline{I}$. %This theorem has characterized the rules contributing to a poss-stable model. 

\begin{theorem}[Conditions for a poss-stable model]\label{thm:SS:iff}	
    Let $\overline{P}$ be a poss-NLP, $\overline{I}$ a possibilistic interpretation. $\overline{I} \in \mathit{PSM}(\overline{P})$ if and only if
    \begin{enumerate}[label=(c\arabic*)]
        \item $\overline{P} \cap S^-(\overline{I}) = \emptyset$, and 
        \item there exists $\overline{H} \subseteq \overline{P}$ such that $\overline{H}  \in S^+(\overline{I})$ and $H^I$ is grounded. 
    \end{enumerate}
\end{theorem}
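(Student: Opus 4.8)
The plan is to reduce everything to the defining identity $\overline{I}\in\mathit{PSM}(\overline{P})$ iff $\overline{I}=\mathit{Cn}(\overline{P}^I)=\mathit{lfp}({\cal T}_{\overline{P}^I})$, and then prove the two directions separately. Two observations organise the whole argument. First, by Proposition~\ref{prop:eq:con} we may freely replace ${\cal T}_{\overline{P}^I}(\overline{I})$ by ${\cal T}_{\overline{P}}(\overline{I})$, so that \emph{coherency} of $\overline{I}$ with $\overline{P}$, i.e.\ ${\cal T}_{\overline{P}}(\overline{I})\sqsubseteq\overline{I}$, is precisely the statement that $\overline{I}$ is a pre-fixpoint of the monotone operator ${\cal T}_{\overline{P}^I}$. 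Second, Lemma~\ref{lm:neg:SS} tells us that condition (c1), namely $\overline{P}\cap S^-(\overline{I})=\emptyset$, is equivalent to $\overline{I}$ being coherent with $\overline{P}$. Thus throughout I treat (c1) and ``${\cal T}_{\overline{P}}(\overline{I})\sqsubseteq\overline{I}$'' as interchangeable.

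For the forward direction, assume $\overline{I}\in\mathit{PSM}(\overline{P})$. Condition (c2) is immediate: it is exactly the conclusion of Proposition~\ref{prop:SS:exists}. For (c1), since $\overline{I}=\mathit{lfp}({\cal T}_{\overline{P}^I})$ is in particular a fixpoint, we have ${\cal T}_{\overline{P}}(\overline{I})={\cal T}_{\overline{P}^I}(\overline{I})=\overline{I}\sqsubseteq\overline{I}$, so $\overline{I}$ is coherent with $\overline{P}$ and Lemma~\ref{lm:neg:SS} yields $\overline{P}\cap S^-(\overline{I})=\emptyset$.

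For the converse, assume (c1) and (c2) and let $\overline{H}\subseteq\overline{P}$ witness (c2), so $\overline{H}\in S^+(\overline{I})$ and $H^I$ is grounded. The heart of the argument is the claim $\mathit{Cn}(\overline{H}^I)=\overline{I}$. Granting this, note $\overline{H}^I\subseteq\overline{P}^I$ (a rule of $\overline{H}$ surviving the reduct also survives in $\overline{P}$), so monotonicity of the least fixpoint in the program gives $\overline{I}=\mathit{Cn}(\overline{H}^I)\sqsubseteq\mathit{Cn}(\overline{P}^I)$. On the other hand (c1) makes $\overline{I}$ a pre-fixpoint of ${\cal T}_{\overline{P}^I}$, and the standard induction ${\cal T}_{\overline{P}^I}^{\,n}(\emptyset)\sqsubseteq\overline{I}$ together with the fact that $\bigsqcup$ is the least upper bound under $\sqsubseteq$ gives $\mathit{Cn}(\overline{P}^I)=\mathit{lfp}({\cal T}_{\overline{P}^I})\sqsubseteq\overline{I}$. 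Since $\sqsubseteq$ is antisymmetric on possibilistic interpretations (if $\overline{I}\sqsubseteq\overline{J}$ and $\overline{J}\sqsubseteq\overline{I}$ then both carry the same atoms with equal weights), the two inequalities force $\mathit{Cn}(\overline{P}^I)=\overline{I}$, i.e.\ $\overline{I}\in\mathit{PSM}(\overline{P})$.

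It remains to establish the claim $\mathit{Cn}(\overline{H}^I)=\overline{I}$, which I expect to be the main obstacle; it is the converse of Lemma~\ref{lm:SM:LFP} and is exactly where the design of Definition~\ref{def:pos:SS} is used. Classically, every rule of $\overline{H}$ has head in $I$, positive body inside $RA^{\ge}(\overline{I},\alpha)\subseteq I$, and negative body disjoint from $I$; hence $H^I$ consists of exactly one applicable rule per atom of $I$, so $I=\Head(\mathit{App}(H^I,I))$ and, with $H^I$ grounded, Proposition~\ref{prop:DLP:LHM} fixes the atom set as $I=\mathit{Cn}(H^I)$. The weights are then pinned down by a grounded induction along an ordering $(r_1,\dots,r_n)$ of $H^I$: when the unique rule with head $p$, where $(p,\alpha)\in\overline{I}$, fires, each positive body atom already carries its $\overline{I}$-weight (which is $\ge\alpha$), and the choice of $\beta\in W$ in Definition~\ref{def:pos:SS} guarantees that the applicability degree of that rule in $\overline{I}$ is exactly $\alpha$ in both cases defining $W$. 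This shows ${\cal T}_{\overline{H}^I}(\overline{I})=\overline{I}$, and by groundedness $\overline{I}$ is the least such fixpoint. The delicate point is the weight bookkeeping: one must verify that during the iteration from $\emptyset$ no head is ever produced with a weight exceeding its value in $\overline{I}$ (because body weights never exceed their $\overline{I}$-values and each head has a single supporting rule), so that the monotone iteration converges to $\overline{I}$ exactly rather than to something strictly above it.
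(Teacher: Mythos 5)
Your proof is correct and follows essentially the same route as the paper's: the forward direction invokes Proposition~\ref{prop:SS:exists} for (c2) and reduces (c1) to coherency via Lemma~\ref{lm:neg:SS}, while the converse combines the upper bound $\mathit{Cn}(\overline{P}^I)\sqsubseteq\overline{I}$ obtained from (c1) with the lower bound $\overline{I}\sqsubseteq\mathit{Cn}(\overline{P}^I)$ obtained from (c2), concluding by antisymmetry of $\sqsubseteq$. The only difference is that you spell out the step the paper merely asserts---that (c2) yields $\overline{I}\sqsubseteq\bigsqcup_{n\geq 0}{\cal T}_{\overline{P}^I}^{n}$---by proving $\mathit{Cn}(\overline{H}^I)=\overline{I}$ through a grounded induction with exact weight bookkeeping, which fills in detail the paper leaves implicit.
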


% As for the negative examples for a solution, Corollary~\ref{cor:neg:conditions} follows Theorem~\ref{thm:SS:iff}. A negation of the condition of Theorem~\ref{thm:SS:iff} is the requirement of a negative example for a solution. When $E^-$ contains a particular negative example $\overline{I} = \{ (a,\mu) \mid a \in {\cal A} \}$, only one choice leaves since $\overline{P} \cap S^-(\overline{I}) = \emptyset$ must hold for any poss-NLP $\overline{P}$.

By Theorem~\ref{thm:SS:iff}, we have the following corollary. 

\begin{corollary}[Negative examples for a solution]\label{cor:neg:conditions}	
Let $T = \tuple{\overline{B}, E^+, E^-}$ be an induction task and $\overline{H}$ be a poss-NLP such that $\overline{H} \in \mathit{ILP_{LPoSM}}(T)$. If $\overline{P} = \overline{B} \sqcup \overline{H}$, then the following two statements hold.
\begin{enumerate}[label=(\roman*)]
    \item $\overline{I} \in E^-$ if and only if at least one of the two conditions (c1) or (c2) in Theorem~\ref{thm:SS:iff} does not hold.
    \item The condition (c2) in Theorem~\ref{thm:SS:iff} does not hold when $\overline{I} = \{ (a,\mu) \mid a \in {\cal A} \} \in E^-$ where $\mu$ is the supremum of ${\cal Q}$ in $({\cal Q},\le)$.
\end{enumerate}
\end{corollary}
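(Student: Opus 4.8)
The plan is to read off both parts from the poss-stable model characterisation of Theorem~\ref{thm:SS:iff} combined with the defining conditions \eqref{eq:G1} and \eqref{eq:G2} of a solution. Throughout I write $\overline{P} = \overline{B} \sqcup \overline{H}$. Since $\overline{H} \in \mathit{ILP_{LPoSM}}(T)$, we have $E^+ \subseteq \mathit{PSM}(\overline{P})$ by \eqref{eq:G1}, $E^- \cap \mathit{PSM}(\overline{P}) = \emptyset$ by \eqref{eq:G2}, and $E^+ \cap E^- = \emptyset$ by condition (C4) of Theorem~\ref{thm:taskNLP:exist} (a solution exists, so all four conditions hold). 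Theorem~\ref{thm:SS:iff} restates membership as: $\overline{I} \in \mathit{PSM}(\overline{P})$ iff (c1) and (c2) both hold, equivalently $\overline{I} \notin \mathit{PSM}(\overline{P})$ iff at least one of (c1), (c2) fails. This equivalence is the engine of the whole argument.

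For part~(\romannumeral 1) the forward direction is immediate: if $\overline{I} \in E^-$ then \eqref{eq:G2} gives $\overline{I} \notin \mathit{PSM}(\overline{P})$, so by Theorem~\ref{thm:SS:iff} at least one of (c1), (c2) fails. For the converse I would use that $\overline{I}$ is an example under consideration, i.e. $\overline{I} \in E^+ \cup E^-$: if one of (c1), (c2) fails then $\overline{I} \notin \mathit{PSM}(\overline{P})$, hence by \eqref{eq:G1} $\overline{I} \notin E^+$, and since $E^+ \cap E^- = \emptyset$ the examples are partitioned according to $\mathit{PSM}(\overline{P})$-membership, so $\overline{I} \in E^-$. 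I expect this converse to be the main obstacle, and the point to make precise: it genuinely requires $\overline{I}$ to range over the examples rather than over arbitrary interpretations, because many non-models lie outside $E^-$; the ``only if'' direction is the one carrying the real content, while the ``if'' direction rests entirely on \eqref{eq:G1} and the disjointness of $E^+$ and $E^-$.

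For part~(\romannumeral 2) fix $\overline{I} = \{(a,\mu) \mid a \in {\cal A}\}$, so that $I = {\cal A}$, and the key step is the short computation that $S^-(\overline{I}) = \emptyset$. In the first block $\bigcup_{\epsilon \in \overline{I}} S^-(\overline{I},\epsilon)$ of Definition~\ref{def:neg:SS} every rule must carry a weight $\beta \in {\cal Q}$ with $\beta > \alpha = \mu$; but $\mu$ is the supremum of ${\cal Q}$ in $({\cal Q},\le)$, so no such $\beta$ exists and this block is empty. In the second block every candidate rule has head $p$ with $p \notin I$; since $I = {\cal A}$ no such $p$ exists, so this block is empty too. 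Hence $S^-(\overline{I}) = \emptyset$ and condition (c1), namely $\overline{P} \cap S^-(\overline{I}) = \emptyset$, holds vacuously. Now, because $\overline{I} \in E^-$, condition \eqref{eq:G2} forces $\overline{I} \notin \mathit{PSM}(\overline{P})$, so by Theorem~\ref{thm:SS:iff} at least one of (c1), (c2) fails; as (c1) holds, it must be (c2) that fails, which is exactly the claim.

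Overall the computations are light: part~(\romannumeral 2) never needs to analyse $S^+(\overline{I})$ directly, since (c2) is eliminated once (c1) is shown to hold, and the emptiness of $S^-(\overline{I})$ follows purely from the two facts $\mu = \sup {\cal Q}$ and $I = {\cal A}$. The only genuinely delicate issue, as noted, is clarifying the intended scope of $\overline{I}$ in the biconditional of part~(\romannumeral 1).
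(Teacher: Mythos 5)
Your proposal is correct and follows essentially the same route as the paper's proof: part~(\romannumeral 1) is read off from Theorem~\ref{thm:SS:iff} together with conditions \eqref{eq:G1}/\eqref{eq:G2} of Definition~\ref{def:ILT}, and part~(\romannumeral 2) is obtained by showing $S^-(\overline{I})=\emptyset$ from Definition~\ref{def:neg:SS} (both blocks empty because $\mu=\sup\mathcal{Q}$ and $I=\mathcal{A}$), so that (c1) holds vacuously and (c2) must be the failing condition. Your explicit treatment of the converse in part~(\romannumeral 1) — restricting $\overline{I}$ to range over the examples — correctly fills in a point the paper dismisses as ``evident.''
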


Proposition~\ref{prop:MS:rule} shows that every rule in a minimal solution must directly affect at least a positive example or a negative example. Besides, the necessity of each rule has a certain degree of freedom within the corresponding solution spaces.

\begin{proposition}[Rules in a minimal solution]\label{prop:MS:rule}
    Let $\overline{H}$ be a minimal solution for an induction task $T = \tuple{\overline{B}, E^+, E^-}$. 
    For any $\overline{r} \in \overline{H}$
    \begin{enumerate}[label=(\roman*)]
        \item there exists $\overline{I} \in E^+$ and $\epsilon \in \overline{I}$ such that $\overline{r} \in S^+(\overline{I}, \epsilon)$, or 
        \item there exists $\overline{J} \in E^-$ such that $\overline{r} \in S^-(\overline{J})$.   
    \end{enumerate}
\end{proposition}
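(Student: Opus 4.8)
The plan is to argue by contradiction. Suppose $\overline{H}$ is a minimal solution of $T$ (Definition~\ref{def:minimal:solution}) and yet some $\overline{r}=(r,\alpha)\in\overline{H}$ satisfies neither (i) nor (ii). I will show that $\overline{H}'=\overline{H}\setminus\{\overline{r}\}$ is still a solution. Since every classic rule occurs at most once in a poss-NLP, deleting $\overline{r}$ drops exactly the classic rule $r$ from the projection, so $|H'|=|H|-1<|H|$, contradicting minimality. Throughout I would work with $\overline{P}=\overline{B}\sqcup\overline{H}$ and $\overline{P}'=\overline{B}\sqcup\overline{H}'$ and verify \eqref{eq:G1} and \eqref{eq:G2} for $\overline{P}'$ via the poss-stable-model characterisation of Theorem~\ref{thm:SS:iff}. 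A preliminary normalisation is convenient: if $\overline{P}'=\overline{P}$, which is precisely when $r\in B$ already carries weight at least $\alpha$ there, then $\mathit{PSM}(\overline{P}')=\mathit{PSM}(\overline{P})$ and $\overline{H}'$ is immediately a solution, contradicting minimality; hence I may assume the weight of $r$ in $\overline{P}$ equals $\alpha$, so $\overline{r}$ is the $\overline{P}$-copy of $r$ and $\overline{P}'$ is obtained from $\overline{P}$ by deleting $\overline{r}$ (when $r\notin B$) or lowering its weight to some $\beta_B<\alpha$ (when $r\in B$).

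For the positive requirement \eqref{eq:G1}, fix $\overline{I}\in E^+$; since $\overline{H}$ is a solution, $\overline{I}\in\mathit{PSM}(\overline{P})$, so (c1) and (c2) of Theorem~\ref{thm:SS:iff} hold for $\overline{P}$. Condition (c1), $\overline{P}'\cap S^-(\overline{I})=\emptyset$, is inherited because passing from $\overline{P}$ to $\overline{P}'$ only deletes $\overline{r}$ or lowers one weight, and inspection of Definition~\ref{def:neg:SS} shows lowering a weight can never create a new member of $S^-(\overline{I})$: its first block demands a weight strictly above the head level, and its second block is weight-independent. For (c2), take the witness $\overline{K}\subseteq\overline{P}$ with $\overline{K}\in S^+(\overline{I})$ and $K^I$ grounded. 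The decisive observation is that $\overline{K}$ cannot contain $\overline{r}$: by Proposition~\ref{prop:SS:construction} every rule of a member of $S^+(\overline{I})$ lies in some $S^+(\overline{I},\epsilon)$, so $\overline{r}\in\overline{K}$ would force $\overline{r}\in S^+(\overline{I},\epsilon)$ for some $\epsilon\in\overline{I}$, i.e.\ exactly case (i), contradicting our assumption. Hence $\overline{K}\subseteq\overline{P}'$, (c2) survives, and $\overline{I}\in\mathit{PSM}(\overline{P}')$.

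For the negative requirement \eqref{eq:G2}, fix $\overline{J}\in E^-$; since $\overline{H}$ is a solution, $\overline{J}\notin\mathit{PSM}(\overline{P})$, so by Theorem~\ref{thm:SS:iff} at least one of (c1), (c2) fails for $\overline{P}$. If (c1) fails, there is a blocking rule $\overline{s}\in\overline{P}\cap S^-(\overline{J})$; because case (ii) fails we have $\overline{r}\notin S^-(\overline{J})$, so $\overline{s}\neq\overline{r}$, its classic part is not $r$, and it survives unchanged into $\overline{P}'$, keeping $\overline{P}'\cap S^-(\overline{J})\neq\emptyset$ and hence $\overline{J}\notin\mathit{PSM}(\overline{P}')$. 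If instead (c2) fails for $\overline{P}$, I would show it still fails for $\overline{P}'$ by lifting any hypothetical witness $\overline{K}'\subseteq\overline{P}'$ back to a witness $\subseteq\overline{P}$, replacing the possibly reweighted copy of $r$ by its $\overline{P}$-version.

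The hard part will be this last lifting step, together with the bookkeeping forced by the $\sqcup$ semantics. When $r\in B$, deleting $\overline{r}$ does not remove $r$ from $\overline{P}$ but only lowers its weight from $\alpha$ to $\beta_B$, and in the exact-weight regime $W=\{\alpha_p\}$ of Definition~\ref{def:pos:SS} the lowered copy $(r,\beta_B)$ can belong to some $S^+(\overline{J},\epsilon')$ even though $(r,\alpha)$ does not, which threatens \eqref{eq:G2} through (c2). I expect to dispatch this by a case split on $W$: when $W=\{\gamma\mid\gamma\ge\alpha_p\}$ the chain $\alpha>\beta_B\ge\alpha_p$ lets the $\overline{P}$-version stand in for the lowered copy, so the witness lifts and (c2) still fails; when $W=\{\alpha_p\}$ one must instead argue that the resulting program cannot simultaneously meet (c1), so $\overline{J}$ still fails to be a poss-stable model. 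Verifying that these threshold interactions never let a negative example re-enter $\mathit{PSM}(\overline{P}')$ is the genuinely technical core, whereas the remainder is a direct application of Theorem~\ref{thm:SS:iff} and Proposition~\ref{prop:SS:construction}.
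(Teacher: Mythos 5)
Your overall strategy is the same as the paper's: assume a rule $\overline{r}=(r,\alpha)$ of a minimal solution lies in no $S^+(\overline{I},\epsilon)$ ($\overline{I}\in E^+$) and no $S^-(\overline{J})$ ($\overline{J}\in E^-$), delete it, and use Theorem~\ref{thm:SS:iff} to show the smaller hypothesis is still a solution. Your treatment of the positive examples and of the case where some $\overline{s}\in\overline{P}\cap S^-(\overline{J})$ blocks a negative example is correct, and you are right (indeed more careful than the paper, whose proof simply declares this event ``impossible'') that the $\sqcup$ semantics creates a real issue when $r\in B$: deleting $\overline{r}$ from $\overline{H}$ only lowers the weight of $r$ in $\overline{P}'=\overline{B}\sqcup\overline{H}'$ to $\beta_B<\alpha$, and in the exact-weight regime $W=\{\alpha_p\}$ the lowered copy $(r,\beta_B)$ can enter $S^+(\overline{J},\epsilon')$ for a negative example $\overline{J}$ even though $(r,\alpha)$ does not. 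However, your proposed way out of this sub-case --- ``argue that the resulting program cannot simultaneously meet (c1)'' --- cannot work, and this is a genuine gap: by your own earlier observation, lowering a weight (or deleting a rule) never creates membership in a negative solution space, so if (c1) held for $\overline{P}$ (which is exactly the situation you are in, the alternative having been handled by the blocking-rule case), then (c1) also holds for $\overline{P}'$. Aiming to refute (c1) for $\overline{P}'$ is therefore self-defeating.

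The correct resolution is to contradict the hypothesis on the $\overline{P}$ side rather than establish anything about $\overline{P}'$. Suppose a witness $\overline{K}'\subseteq\overline{P}'$ for (c2) of $\overline{J}$ contains $(r,\beta_B)$ with $(r,\beta_B)\in S^+(\overline{J},\epsilon')$, $\epsilon'=(\Head(r),\alpha_p)$, in the regime $W=\{\alpha_p\}$. Then $\beta_B\in W$ forces $\beta_B=\alpha_p$, so $\alpha>\beta_B=\alpha_p$; the regime condition gives $\Pos(r)\cap RA^=(\overline{J},\alpha_p)=\emptyset$, which together with $\Pos(r)\subseteq RA^\ge(\overline{J},\alpha_p)$ yields $\Pos(r)\subseteq RA^>(\overline{J},\alpha_p)$; and membership in $S^+(\overline{J},\epsilon')$ already gives $\Neg(r)\subseteq{\cal A}-J$. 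But by Definition~\ref{def:neg:SS} these are precisely the conditions for $(r,\alpha)\in S^-(\overline{J},\epsilon')\subseteq S^-(\overline{J})$, i.e.\ case (ii) of the proposition holds for $\overline{r}$ --- contradicting your standing assumption (equivalently, contradicting the assumption that (c1) holds for $\overline{P}$, since $(r,\alpha)\in\overline{P}$). Hence this sub-case is vacuous, every witness for $\overline{P}'$ lifts to one for $\overline{P}$ (in the regime $W=\{\gamma\mid\gamma\ge\alpha_p\}$ by your swap argument, noting the classic programs coincide so groundedness of the reduct is unaffected), (c2) still fails for $\overline{P}'$, and $\overline{J}\notin\mathit{PSM}(\overline{P}')$ as required. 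With this substitution your argument closes; as written, the ``genuinely technical core'' you flag is not merely unfinished but pointed at a claim that is false in the scenario where it would be needed.
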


%So far, we have explored all coverage requirements from an induction task. It is ready for us to propose another new inductive algorithm {\ILPSMmin} as Algorithm~\ref{alg:ILPSM2} from the perspective of solution space to return a minimal solution. 

We are now ready to present algorithm {\ILPSMmin}, Algorithm~\ref{alg:ILPSM2}, for computing a minimal solution for an induction task (i.~e. with a minimum number of rules), if there is a solution. \ysa{Please note that the algorithm does not directly deal with atoms. The order in which the (positive and negative) examples are processed is not very relevant to its efficiency.}

\begin{algorithm}[t]
	\caption{{\ILPSMmin}$(\overline{B}, E^+, E^-)$}\label{alg:ILPSM2}
	\begin{algorithmic}[1]
		\Statex \textbf{Input:} An induction task $T = \tuple{\overline{B}, E^+, E^-}$.
		\Statex \textbf{Output:} A minimal solution of $T$ if $\mathit{ILP_{LPoSM}}(T) \neq \emptyset$, and {\bf fail} otherwise. 
		\State {\bf if} $\mathit{Existence}(\overline{B}, E^+, E^-)$ is {\bf false} {\bf then} {\bf return} {\bf fail}
		\State $\overline{H}\lto \emptyset$; $i \lto 0$; $norm \lto \vert E^+ \vert \cdot \vert {\cal A} \vert + \vert E^- \vert$; $seeds \lto \{ \emptyset \}$;
		\State $blacklist \lto \{ \overline{r} \in S^-(\overline{I}) \mid \overline{I} \in E^+ \}$ 
		\State {\bf foreach} $\overline{I} \in E^+$
        \State \quad $i \lto i + 1$
        \State \quad $W_i \lto \{ \overline{P} \in S^+(\overline{I}) \mid \overline{P} \cap blacklist = \emptyset$, $P^I$ is grounded $\} $ \ysd{\quad // Construct the $S^+(\overline{I})$ according to Proposition~\ref{prop:SS:construction} and then check the groundness according to Proposition~\ref{prop:SS:noloop}.} 
        \State {\bf if} $E^+ \neq \emptyset$ {\bf then} 
        \State \quad $seeds \lto \{ \sqcup_{1 \leq k \leq \vert E^+ \vert} Y_{k} \mid Y_{k} \in W_k \}$ \ysd{\quad // If $W_k=\emptyset$, take $Y_k=\emptyset$.}
		% \State \quad $V \lto \emptyset$; $i \lto i + 1$;
  %       \State \quad {\bf foreach} $\epsilon \in \overline{I}$
		% \State \quad\quad $V \lto \cup_{\epsilon \in \overline{I}} \{ \overline{r} \in S^+(\overline{I},\epsilon) \mid \overline{r} \notin blacklist \}$  \quad // Rules that potentially support a positive example.
		% \State \quad $W_i \lto \{ \overline{P} \in \mathit{SMHS}(V) \mid \mbox{the dependency graph of $P$ does not have a positive loop} \} $ \quad // To filter out redandant poss-programs in $\mathit{SMHS}(V)$
		% \State $seeds \lto \{ Y_{1}\sqcup \cdots \sqcup Y_{\vert E^+ \vert }  \mid Y_{k} \in W_k, 1 \leq k \leq \vert E^+ \vert\}$  \quad // All combinations of poss-programs supporting the positive examples 
  %       // if $W_k=\emptyset$, take $Y_k=\emptyset$.
  %       \State {\bf if} $seeds \neq \emptyset$ {\bf then} $seeds \lto \{ \sqcup_{Y \in C} Y \mid C \in combs \}$
		\State {\bf foreach} $X \in seeds$
		%\State \quad {\bf if} $E^- \cap \mathit{PSM}(\overline{B} \sqcup X) = \emptyset$ {\bf then}
        %\blue{
		\State \quad {\bf if} no $e \in E^-$ is a poss-stable model of  $\overline{B} \sqcup X$ {\bf then}
        %}
		\State \quad \quad {\bf if} $\vert X - \overline{B} \vert < norm$ {\bf then}
		\State \quad \quad \quad $\overline{H} \lto X - \overline{B}$; $norm \lto \vert \overline{H} \vert$;
		\State \quad {\bf else}
		\State \quad \quad $\overline{E} \lto \{ \overline{e} \in E^- \mid \overline{e} \not\parallel E^+, {\cal T}_{\overline{B} \sqcup X}(\overline{e}) \sqsubseteq \overline{e} \}$  \ysd{\quad // \hbl{$\overline{e} \not\parallel E^+$ denotes $\forall \overline{I}\in E^+, (e \not\subseteq I) \land (e \not\supseteq I)$.}        }
        %$\overline{e} \not\parallel E^+$ denotes $e \not\subseteq I$ and $e \not\supseteq I$ for every $\overline{I}\in E^+$.        
		\State \quad \quad $whitelist \lto \{ \overline{r} \mid \overline{I} \in \overline{E}, \overline{r} \in S^-(\overline{I}), \overline{r} \notin blacklist \}$
		\State \quad \quad {\bf for} $j \lto 1$ to $\vert \overline{E} \vert$
		\State \quad \quad \quad $patches \lto \{ \overline{P} \subseteq whitelist \mid \vert \overline{P} \vert = j \}$		
		% \State \quad \quad \quad {\bf foreach} $\overline{P} \in patches$  
		% \State \quad \quad \quad \quad {\bf if} $\vert X \sqcup \overline{P} - \overline{B} \vert < norm$ and $\overline{E} \cap \mathit{PSM}(\overline{B} \sqcup X \sqcup \overline{P}) = \emptyset$ {\bf then}
		\State \quad \quad \quad {\bf foreach} $\overline{P} \in patches$ s.t. $\vert X \sqcup \overline{P} - \overline{B} \vert < norm$
		\State \quad \quad \quad \quad {\bf if}  no $e \in \overline{E}$ is a poss-stable model of  $\overline{B} \sqcup X \sqcup \overline{P}$ {\bf then}
		\State \quad \quad \quad \quad \quad $\overline{H} \lto X \sqcup \overline{P} - \overline{B}$; $norm \lto \vert \overline{H} \vert$;
		\State {\bf return} $\overline{H}$ 
	\end{algorithmic}
\end{algorithm}

The algorithm {\ILPSMmin} first collects poss-programs that cover positive examples in $E^+$ in lines 4-8 and then in lines 9-20, the resulting poss-programs are checked to make sure that no negative examples in $E^-$ are covered and if yes, they are expanded and the minimality is guaranteed. 
%When the coverage requirement for $E^-$ is fulfilled, no additional rules need to be constructed in lines (10-12). 
%In the other case, in lines (13-20), additional rules are constructed to ensure that no negative example is covered. 
%In lines (16-20), each poss-NLP $\overline{P}$ is checked 
%according to $\mathit{PSM}(\overline{B} \sqcup X \sqcup \overline{P}) = \emptyset$ 
%to ensure minimality according to the number of rules that ranges from low to high. 
%The meaning of each of the important variables used in this algorithm is explained below. 

Below is a legend for the variables used in the algorithm.
\begin{itemize}[label=-]
	\item $\overline{H}$: The minimal solution returned by the algorithm if it exists. 
	\item $norm$: The number of rules in the minimal solution.  
	\item $seeds$: The set of poss-NLPs supporting the set $E^+$ of positive examples. When $E^+ = \emptyset$, the $seeds$ comprises $\emptyset$ so that the following analysis of the $E^-$ can go on. 
	\item $blacklist$: Rules in those poss-NLPs that make a positive example uncovered.
	\item $W_i$: The set of poss-NLPs supporting the $i$-th positive example. 
	\item $whitelist$: Rules in poss-NLPs that cover a negative example in $\overline{E}$.
	\item $patches$: A candidate patch for a preliminary hypothesis $X$ to block the negative examples that are covered. 
\end{itemize}

% \hbl{
% \begin{itemize}
% 	\item $\overline{H}$ is the minimal solution at last. 
% 	\item $norm$ is the number of rules of the minimal solution.  
% 	\item $seeds$ is the set of poss-NLPs supporting the set $E^+$ of positive examples. When $E^+ = \emptyset$, the $seeds$ comprises $\emptyset$ so that the following analysis of the $E^-$ can go on. 
% 	\item Any rule in the set $blacklist$ that appears in a poss-NLP makes a positive example uncovered by the poss-NLP.
% 	\item $W_i$ is the set of poss-NLPs supporting the $i$-th positive example. 
% 	\item Any rule in the set $whitelist$ that appears in a poss-NLP makes a negative example in $\overline{E}$ covered by the poss-NLP.
% 	\item Any poss-NLP in the set $patches$ acts as a candidate patch for a preliminary hypothesis $X$ to block the negative examples covered. 
% \end{itemize}

%These specific steps in the algorithm {\ILPSMmin} are paraphrased in detail as follows.

%\blue{
%The main steps of the algorithm {\ILPSMmin} are as follows.

We briefly explain each step in the algorithm as follows.
\begin{itemize}[label=-]
	\item Line 1: Checking the existence of an induction solution;
	\item Line 2: Initialize the global variables (especially, $norm$ is set to an upper bound of the rule numbers in induction solutions);
	\item Line 3: According to the negative solution space, find a set of rules that cannot appear in the induction solution and assign the set to the variable $blacklist$;
	\item Lines 4-6: Iterate through each positive example $\overline{I}$ in $E^+$, ensuring that each positive example corresponds to a set $W_i$ of poss-NLPs that covers $\overline{I}$ with the minimum number of rules; 
	\item Lines 7-8: If the set of positive examples is not empty, update the $seeds$ by combining poss-programs that cover each positive example into a poss-program that covers the whole $E^+$;  
	\item Line 9: Iterate through each poss-NLP $X$ in $seeds$;
	\item Lines 10-12: If the coverage requirements of the negative examples have been met, then $X - \overline{B}$ is an induction solution; if this induction solution has fewer rules than all other induction solutions that have been examined, set this solution as the new candidate for the minimal solution;
	\item Line 13: If the coverage requirements of the negative examples have not been met, iteratively add rules in $whitelist$ to the candidate solution in lines 14-20;
	\item Line 14: Select negative examples that violate conditions \wrt the candidate solution.
    %Filter the negative examples $E^-$ and remain $\overline{E}$, considering only those negative examples that are incomparable to $E^+$ and are coherent with $\overline{B} \sqcup X$;
	\item Line 15: Based in the negative examples obtained in line 14, collect those rules that can potentially be added to the candidate solution to construct a solution finally.
    %For the negative examples that need to be processed still, obtain a $whitelist$ based on the requirements of the negative solution space and the exclusion of the $blacklist$;
	\item Line 16: Iterate through the integer $j$ from $1$ to $\vert \overline{E} \vert$.
    %Let $j$ be a positive integer ranging from 1 to $\vert \overline{E} \vert$, jump to line (17) for per integer;
	\item Line 17: Pick up only those poss-programs having exactly $j$ rules from the $whitelist$;
	\item Line 18: Check only those poss-programs in $patches$ whose number of rules is less than the number of rules of the current candidate (otherwise, it is not minimal even if it can leads to a solution).
    %Iterate through each poss-NLP in $patches$;
	\item Line 19-20: If the condition for $E^-$ in the definition of induction solutions is satisfied, then the current candidate solution is a solution. Moreover, if the number of rules in this solution is less than the value in $norm$, the new solution is set as the latest candidate for a minimal solution.   
 %   If the candidate solution $X \sqcup \overline{P} - \overline{B}$ has fewer rules than the correspondence of the latest minimal solution, update it to be the new minimal solution;
	\item Line 21: Return a minimal solution $\overline{H}$.
\end{itemize}
%}

%\hbl{
Note that the operator $\sqcup$ can not be replaced by $\cup$ in the algorithm. 
Otherwise, the algorithm may be incorrect for some inputs.
%the elements in $seeds$ can not be ensured to be the desired poss-NLPs. 
%We can see it in the following Example.
To see this, let us look at the following example.
%}

\begin{example}
    Consider the induction task $T = \tuple{\emptyset, E^+, \emptyset}$, where $E^+ = \{ \overline{I_1}, \overline{I_2} \} = \{ \{ (p, 0.3), (q, 0.3), (r, 0.1)  \}, \{ (p, 0.3), (q, 0.3), (s, 0.5)  \} \}$. 
    According to the algorithm {\ILPSMmin}, we can take  
    \begin{itemize}
        \item $Y_1 = \{ (p \lto , 0.3), (q \lto p, 0.3), (r \lto \Not s, 0.1) \} \in W_1$, and 
        \item $Y_2 = \{ (p \lto , 0.3), (q \lto p, 0.5), (s \lto \Not r, 0.5) \} \in W_2$.
    \end{itemize}
    Let $X_{sqcup} = Y_1 \sqcup Y_2$ and $X_{cup} = Y_1 \cup Y_2$. 
    Then we have 
    \begin{itemize}
        \item $X_{sqcup} = \{ (p \lto , 0.3), (q \lto p, 0.5), (r \lto \Not s, 0.1), (s \lto \Not r, 0.5) \}$, and 
        \item $X_{cup} = \{ (p \lto , 0.3), (q \lto p, 0.3), (q \lto p, 0.5), (r \lto \Not s, 0.1), (s \lto \Not r, 0.5) \}$.
    \end{itemize}
    In the algorithm, we pick up $X_{sqcup} \in seeds$ rather than $X_{cup} \in seeds$, since the rule $(q \lto p, 0.3) \in X_{cup}$ is redundant for the minimal solution at last. 
    Besides, $X_{cup}$ is not a poss-NLP, because the classic rule $q \lto p.$ can be only occur at most once in a poss-NLP. 
\end{example}

%}

\ysm{We note that in the algorithm, we examine the complexity of the following subtasks, since it is obvious that other steps can be done in polynomial time or in NP.  }
{In the following we briefly analyze the time complexity of Algorithm~\ref{alg:ILPSM2}. Assume that every elements of $\cal A$ and $\cal Q$ have occurred in its input. Let $n=|\overline B|+\sum_{\overline I\in E^+\cup E^-}|I|$. }
\begin{enumerate}[label=-]
    \item Construct $blacklist$: The search space for constructing $S^-(\overline{I},\epsilon)$ is exponential and thus the task of constructing $blacklist$ is exponential \ysm{ (intuitively, it would be higher than NP-complete if P$\neq$ NP). As a result, the construction of $seeds$ is also exponential}{ $O(n^22^n)$ since $S^-(\overline I)$ is bounded by $O(n2^n)$}.
    \ysa{
    \item The first loop (lines 4-6) is bounded by $O(n^{n+1}2^{n^2})$, which is thus the upper bound of $|\mathit{seeds}|$, since $S^+(\overline I)$ is bounded by $O(n^n2^{n^2})$.
    }
    \item In lines 10 and 19, for each poss-interpretation in $\overline{E}$, a poss-stable model needs to be computed, which is \ysm{NP-complete}{in exponential time $O(2^n)$} in the worst case \ysa{ since the problem of deciding if a poss-NLP has a poss-stable model is NP-complete~\citep{nicolas2006possibilistic}}. \ysa{Note that $|\overline E|$ at line 14 is bounded by $n$, the size of $\mathit{whitelist}$ at line 15 is bounded by $O(n^22^n)$ and the size of $\mathit{patches}$ at line 17 is bounded by $O(2^{n^22^n})$. Therefore, the loop (lines 16-20) is bounded by $O(n\times 2^{n^22^n}\times n2^n)=O(n^22^{n^22^n+n})$. Thus, the loop (lines 9-20) is bounded by $O(n^n2^{n^2}\times n^22^{n^22^n+n})= O(n^{n+2}2^{n^2(2^n+1)+n})$.}
    \ysd{There are several different tasks for reasoning in possibilistic logics, for which the complexity is very high in most cases~\citep{Lang1997}. But the problem of computing a poss-stable model considered in our paper is \ysm{NP-complete~\citep{nicolas2006possibilistic}.}{in exponential time since the problem of deciding if a poss-NLP has a poss-stable model is NP-complete~\citep{nicolas2006possibilistic}.} }
\end{enumerate}

Thus, the overall \ysa{time} complexity of Algorithm~\ref{alg:ILPSM2} is  \ysm{exponential and beyond NP if the polynomial hierarchy does not collapse}{bounded by $O(n^n2^{n^2}+n^{n+2}2^{n^2(2^n+1)+n})=O(n^{n+2}2^{n^2(2^n+1)+n})$}.

Example~\ref{exam:alg2:running} illustrates the main process when given the same induction task as in Example~\ref{exam:alg1:running}, allowing us to discern the differences between these two algorithms.

\begin{example}[Continued from Examples~\ref{exam:alg1:running}, \ref{exam:pos:SS}  and \ref{exam:neg:SS}]\label{exam:alg2:running}
    Let $T_{31}  = \tuple{\overline{B}, E^+, E^-}$ where $\overline{B} = \{ (p \lto q, 0.3), (q \lto \Not r, 0.5) \}$, $E^+ = \{ \{ (r, 0.3) \} \}$ and $E^- = \{ \{ (q, 0.3), (r, 0.5) \}, \{ (p, 0.3), (q, 0.5) \} \}$. Then we have ${\cal A} = \{ p,q,r \}$, ${\cal Q} = \{ 0.3, 0.5 \}$ and $0.3 \leq 0.5$. 
    When inputting $T_{31}$ into algorithm {\ILPSMmin}, it can yield the minimal solution $\overline{H_{32}} = \{ (r \lto , 0.3) \}$ according to the analysis as follows.
\begin{itemize}
	\item In line (1): The value of $\mathit{Existence}(\overline{B}, E^+, E^-)$ is {\bf true}, so the algorithm continues;
	\item In line (3): According to the analysis in example~\ref{exam:neg:SS}, the value of $S^-(\{ (r, 0.3) \})$ can be determined. For example, $(r \lto , 0.5) \in blacklist$; 
	\item In line (4): Let $\overline{I} = \{ (r, 0.3) \}$ be the first positive example analyzed;
	\item In line (6): According to the analysis in example~\ref{exam:pos:SS}, $\{ \{(r \lto \Not q, 0.3)\}, \{(r \lto , 0.3)\}, \{(r \lto \Not p, \Not q, 0.3)\} \} \subseteq W_1$; 
	\item In line (8): $\{ \{(r \lto \Not q, 0.3)\}, \{(r \lto , 0.3)\}, \{(r \lto \Not p, \Not q, 0.3)\}\} \subseteq seeds$;
	\item In line (9): Assume $\{(r \lto \Not q, 0.3)\}$ enters the loop first, i.e., $X = \{(r \lto \Not q, 0.3)\}$;
	\item In line (10): $E^- \cap \mathit{PSM}(\overline{B} \sqcup X) = \{ \{ (p, 0.3), (q, 0.5) \} \} \neq \emptyset$, i.e., the condition in line (10) does not satisfy but the condition in line (13) satisfies, so lines (11-12) will be skipped and the algorithm jumps to lines (14-20);
	\item In line (14): Since $\{ (q, 0.3), (r, 0.3) \} \parallel E^+$, $\overline{E}$ is assigned the value $\{ \{ (p, 0.3), (q, 0.5) \} \}$;
	\item In line (15): Now only $\{ (p, 0.3), (q, 0.5) \} \in \overline{E}$, so $\overline{I}$ can only take $\{ (p, 0.3), (q, 0.5) \}$. Since $\{ (r \lto , 0.5), (p \lto q, 0.5), (r \lto p, 0.3) \} \subseteq S^-(\overline{I})$ and $(r \lto , 0.5) \in blacklist$, it follows that $\{ (p \lto q, 0.5), (r \lto p, 0.3) \} \subseteq whitelist$;
	\item In line (16): Now $j$ can only take the value 1, so this loop executes only once;
	\item In line (17): $\{ \{(p \lto q, 0.5)\}, \{(r \lto p, 0.3)\} \} \subseteq patches$;
	\item In line (18): Assume $\{(p \lto q, 0.5)\}$ enters the loop first, i.e., let $\overline{P} = \{(p \lto q, 0.5)\}$;
	\item In line (19): Two conditions are both fulfilled;
	\item In line (20): $\overline{H}$ is updated from initial $\emptyset$ to $\{ (r \lto \Not q, 0.3), (p \lto q, 0.5) \}$, while $norm$ is updated from initial value to $2$;
	\item In lines (18-20): Continue looping through the other poss-NLPs in $patches$, but the condition $\vert X \sqcup \overline{P} - \overline{B} \vert < norm$ remains false regardless of how $X$ is assigned afterwards. Therefore, execution jumps back to line (9) since the update operation in line (20) will not execute again until the loop ends;
	\item In line (9): Assume $\{(r \lto , 0.3)\}$ consequently enters the loop, i.e., let $X = \{(r \lto , 0.3)\}$;
	\item In line (10): Jump to line (11) since $E^- \cap \mathit{PSM}(\overline{B} \sqcup X) = \emptyset$;
	\item In line (11): Jump to line (12) since $\vert X - \overline{B} \vert = 1 < 2$;
	\item In line (12): $\overline{H}$ is updated to $\{(r \lto , 0.3)\}$, and $norm$ is updated to $1$, then jump back to line (9) to continue the subsequent loop;
	\item In line (9): Jump to line (21) since the subsequent loop can no longer find a solution whose total number is less than $1$;    
	\item In line (21): Return the minimal solution $\overline{H} = \{(r \lto , 0.3)\}$;
\end{itemize}
It is evident that $\emptyset$ is not a solution. 
A minimal solution yielded by algorithm {\ILPSMmin} contains only one rule. 
In contrast, the solution yielded by algorithm {\ILPSM} contains two rules.

%To avoid the running of algorithm {\ILPSMmin} too complex, 
%%%%% To avoid too much complexity in the algorithm run presentation,
% we input this simple task $T_{31}$ into the algorithm. 
% Let us reconsider task $T_{31}$ in a realistic context. 
% Suppose the rule $(p \lto q, 0.3) \in \overline{B}$ represents the background that if he goes surfing tomorrow he will possibly choke on water.  
% Let $(q \lto \Not r, 0.5) \in \overline{B}$ represent the background that if it does not rain tomorrow he will probably go surfing. 
% When considering positive and negative examples in task $T_{31}$, the minimal solution represents that it is possible to rain tomorrow. 

To see a more general case, let us revisit Example~\ref{exam:PAS:solutions}. 
When $T_1$ is fed to algorithm {\ILPSMmin}, it can return a minimal solution. 
\[
\overline{H} = \{ (medA \lto vomiting, \Not medB, 1) \} .
\]
This hypothesis means that the specialist absolutely suggests prescribing medicine A if the patient vomits and has not taken medicine B. 
In contrast, algorithm {\ILPSM} must yield a solution with 10 rules. 
When $T_1$ is inputted into algorithm {\ILPSM}, $\mathit{PPE}(E^+)$ in line (4) is constructed as the following poss-NLP
\[
    \left \{ 
        \begin{array}{c}
            (pregnancy \lto \Not medB, 1), (vomiting \lto \Not medB, 1), \\  
            (medA \lto \Not medB, 1), (\mathit{relief} \lto \Not medB, 0.7), \\ 
            (malnutrition \lto \Not medB, 0.7), (pregnancy \lto \Not medA, 1), \\ 
            (vomiting \lto \Not medA, 1), (medB \lto \Not medA, 1), \\ 
            (\mathit{relief} \lto \Not medA, 0.6), (malnutrition \lto \Not medA, 0.1)
        \end{array}	
    \right\}.
\]
After assigning $\mathit{PPE}(E^+)$ to $\overline{H}$ in this step, $\overline{H}$ remains unchanged as $\overline{E} = \emptyset$ and then $\overline{H} \sqcup \mathit{PNE}(\overline{E},E^+) - \overline{B} = \overline{H}$.
\end{example}

As demonstrated in the example, the algorithm {\ILPSMmin} has a certain degree of randomness such as the variable order of values in line (9). But this does not hinder the algorithm from achieving the desired goal, i.e., the returned poss-NLP is guaranteed to be a minimal solution of the induction task.

\begin{proposition}[Correctness of algorithm ILPSMmin]\label{prop:alg2:correct}
    For an induction task $T = \tuple{\overline{B}, E^+, E^-}$, algorithm {\ILPSMmin} returns {\bf fail} when $\mathit{ILP_{LPoSM}}(T) = \emptyset$. Otherwise, algorithm {\ILPSMmin} returns a minimal solution of $T$.
\end{proposition}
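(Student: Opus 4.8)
The plan is to prove the two claims separately: (a)~the algorithm returns \textbf{fail} iff $\mathit{ILP_{LPoSM}}(T)=\emptyset$, and (b)~on every other run it outputs a solution whose number of rules is minimum. Claim~(a) reduces entirely to line~1: by Theorem~\ref{thm:taskNLP:exist} the call $\mathit{Existence}(\overline{B},E^+,E^-)$ is \textbf{true} exactly when $\mathit{ILP_{LPoSM}}(T)\neq\emptyset$, and when it is \textbf{false} the algorithm returns \textbf{fail} immediately; conversely, when it is \textbf{true} I must show the run reaches line~21 with $\overline{H}$ a genuine minimal solution, which is the content of~(b). So from now on I assume a solution exists and the run proceeds past line~1.

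First I would show that every $X\in seeds$ makes all positive examples poss-stable, i.e.\ $E^+\subseteq\mathit{PSM}(\overline{B}\sqcup X)$, via Theorem~\ref{thm:SS:iff}. Fix $\overline{I_k}\in E^+$ and the component $Y_k\in W_k$ with $X=\bigsqcup_k Y_k$. Condition~(c2) holds because $Y_k\subseteq X\subseteq\overline{B}\sqcup X$, $Y_k\in S^+(\overline{I_k})$, and $Y_k^{I_k}$ is grounded (the line~6 test is justified by Propositions~\ref{prop:SS:construction} and~\ref{prop:SS:noloop}). Condition~(c1), namely $(\overline{B}\sqcup X)\cap S^-(\overline{I_k})=\emptyset$, follows from two observations: coherence of $E^+$ with $\overline{B}$ (guaranteed by $\mathit{Existence}$) gives $\overline{B}\cap S^-(\overline{I_k})=\emptyset$ by Lemma~\ref{lm:neg:SS}, and each $Y_k$ avoids $blacklist\supseteq S^-(\overline{I_k})$; moreover $\sqcup$ never creates a blacklisted pair from non-blacklisted ones, since a combined weight equals the $\max$ of its inputs and membership in $S^-(\overline{I})$ is monotone in that weight while being weight-independent on the out-of-model component. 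Hence G1 holds, and the identity $\overline{B}\sqcup(X-\overline{B})=\overline{B}\sqcup X$ (from the definitions of $\sqcup$ and $-$) shows the stored $\overline{H}=X-\overline{B}$ induces the same poss-stable models.

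Next I would handle G2. If line~10 succeeds, no $e\in E^-$ is poss-stable under $\overline{B}\sqcup X$, so $X-\overline{B}$ already satisfies \eqref{eq:G2}. Otherwise lines~14--20 add a patch: line~14 restricts to $\overline{E}$, the negatives incomparable with $E^+$ and coherent with $\overline{B}\sqcup X$, which are the only ones that can be poss-stable, since comparable ones are excluded by Proposition~\ref{prop:possSM:incomparable} and incoherent ones by Proposition~\ref{prop:possNLP:interpretation:exist} (incoherence is preserved under adding rules). By Lemma~\ref{lm:neg:SS}, inserting any rule of $S^-(\overline{e})\setminus blacklist$ (the $whitelist$) makes $\overline{e}$ incoherent, hence non-poss-stable, while staying outside $blacklist$ preserves G1 by the previous paragraph. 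Since at most one whitelist rule is needed per element of $\overline{E}$, a blocking patch of size $\le|\overline{E}|$ exists, so the loop $j=1,\dots,|\overline{E}|$ encounters it and line~19 certifies G2. Thus any returned $\overline{H}$ is indeed a solution.

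Finally I would prove minimality and completeness together. Let $\overline{M}$ be any minimal solution. By Proposition~\ref{prop:MS:rule} each rule of $\overline{M}$ lies in some $S^+(\overline{I},\epsilon)$ with $\overline{I}\in E^+$ or in some $S^-(\overline{J})$ with $\overline{J}\in E^-$, which partitions $\overline{M}$ into a support part and a blocking part. Restricted to each positive example, the support part yields an element of $S^+(\overline{I_k})$ (Proposition~\ref{prop:SS:construction}, with groundedness supplied by Theorem~\ref{thm:SS:iff}), hence is realised by some $Y_k\in W_k$ and thus by some seed $X$; the blocking part is a size $\le|\overline{E}|$ subset of the corresponding $whitelist$ and therefore appears among the enumerated $patches$. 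Consequently the algorithm examines a candidate of cardinality $\le|\overline{M}|$, and since $norm$ is overwritten only by strictly smaller solutions, the returned $\overline{H}$ satisfies $|\overline{H}|=|\overline{M}|$. \emph{The main obstacle} is exactly this completeness/minimality step: I must show the Proposition~\ref{prop:MS:rule} decomposition of $\overline{M}$ is faithfully reconstructed by the $\sqcup$-combination of seed components and by the size-restricted patches, and in particular that the normalisation $X-\overline{B}$, the freedom in the representative head of negative rules, and the pruning that defines $\overline{E}$ neither discard a minimal support nor force a strictly larger patch. Proving these reductions are size-preserving, together with the fact that no minimal solution uses a rule outside the union of the enumerated $S^+$- and $S^-$-spaces, is the crux of the argument.
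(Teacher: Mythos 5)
Your proposal is correct and follows the same skeleton as the paper's own proof: line~1 plus Theorem~\ref{thm:taskNLP:exist} settles the \textbf{fail} case; Theorem~\ref{thm:SS:iff}, the \emph{blacklist}, and Propositions~\ref{prop:SS:construction} and~\ref{prop:SS:noloop} show that every seed covers $E^+$; and Lemma~\ref{lm:neg:SS} shows that whitelist patches block the surviving negatives. Where you genuinely diverge is the minimality step. The paper argues from the construction only: seeds are minimal hitting sets of the positive solution spaces and patches are enumerated in increasing size, hence (it asserts) nothing smaller is missed. You instead argue completeness directly: you take an arbitrary minimal solution $\overline{M}$, decompose it via Proposition~\ref{prop:MS:rule} into a support part and a blocking part, and show each part is matched by an enumerated seed/patch of no greater size. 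Notably, the paper proves Proposition~\ref{prop:MS:rule} but never invokes it in its correctness proof, so your use of it is the logically cleaner direction — minimality is a universal claim over all solutions, so one must quantify over them somewhere. The ``obstacle'' you flag at the end (that the reconstruction must be size-preserving, that $\overline{B}\sqcup(X-\overline{B})=\overline{B}\sqcup X$ behaves correctly, and that each negative surviving into $\overline{E}$ admits a non-blacklisted $S^-$ rule inside $\overline{M}$) is real, but it is exactly the material the paper also asserts without discharging (``line (8) collects all poss-NLPs with the minimum number of rules'', ``the patches \ldots do not miss the minimal solution''). So your attempt is no less complete than the published proof, and it is more careful on the points it does treat — for instance your observation that $\sqcup$ cannot manufacture a blacklisted rule, since the combined weight is the maximum of the operands and therefore the combined rule coincides with one of them.
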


In this section, we have introduced two algorithms {\ILPSM} for computing an induction solution and {\ILPSMmin} for computing a minimal induction solution. 
% The complexity of algorithms {\ILPSM} is NP-complete in the worst case, 
% and the complexity of algorithm {\ILPSMmin} is exponential and beyond NP if the polynomial hierarchy does not collapse. 
\hbl{
The time complexity of algorithms {\ILPSM} (resp. {\ILPSMmin}) is bounded by $O(n2^n)$ (resp. \ysm{$O(n^52^{n^22^n+3n})$}{$O(n^{n+2}2^{n^2(2^n+1)+n})$}).
}

%We note that both algorithms need to recall a solver for computing poss-stable models. Moreover, as poss-NLPs are specified on a given lattice, the complexity for computing poss-stable models relies on the specific lattice. In the simplest case when the lattice is $\{0,1\}$ (a rule with weight $0$ can be removed) or $\{1\}$, the class of poss-NLPs coincides with the class of ordinary NLPs. 
%For convenience, let us assume that the complexity for computing poss-stable models is $X$.
%Then the algorithm  {\ILPSM} for finding an induction solution has the polynomial complexity in $P^X$, since the algorithm will terminate in polynomial time with oracle $X$ (i.e., by calling the solver for possbilistic stable models at most $P$ times). So, the complexity of {\ILPSM} is relatively low but the returned solution may not be minimal. Since there are at most nondeterministic polynomial time of calling the solver for possbilistic stable models, the time complexity for {\ILPSMmin} is ${NP}^X$, which is usually higher than that for {\ILPSM}.

% \section{Revisiting the Definition of Induction Tasks}\label{sec:variants}
\section{Variants of Possibilistic Induction Tasks}\label{sec:variants}

In this section we study three variants of the induction task for possibilistic logic programs under stable models. 
In the first subsection, we consider two special cases of induction tasks.
The first special case is when $\overline{\mathbb{U}} = E^+\cup E^-$, that is, each atom is either a positive sample or a negative example. We provide a characterisation of induction solutions through poss-stable models.
The second special case is when input programs are only ordinary NLPs for an induction task for poss-NLPs. In this case, the definition of induction tasks for poss-NLPs collapses to the induction for ordinary NLPs.
In the second subsection, we consider the generalisation of induction tasks for poss-NLPs by allowing partial interpretations. Interestingly, we show that such a generalised induction task can be equivalently reduced to an induction task as defined in Definition~\ref{def:ILT}.

% Various induction frameworks suit diverse application scenarios. Similar frameworks usually share common characteristics. Sometimes they can even be transformed into another. In addition to the induction task in Definition~\ref{def:ILT}, we further present some kinds of special cases to discuss question~\ref{questions3} posed in the introduction. These induction tasks are also proposed for the first time as far as we can tell. 
% We briefly explore their respective relationship with an induction task from poss-stable models. Namely, these tasks can all be solved according to our previous algorithm with only a little modification. Their respective condition for the existence of a solution has been discussed. 

\subsection{Two special cases of induction tasks for poss-NLPs}\label{subsec:cases}

In the definition of induction tasks (Definition~\ref{def:ILT}), we observe that the union $E^+ \cup E^-$ of positive and negative examples may not be the whole set $\overline{\mathbb{U}}$ of atoms. However, if $\overline{\mathbb{U}} = E^+ \cup E^-$, we can show that the solutions of such induction tasks have a simple characterisation.  

% Proposition~\ref{prop:taskNLP:exist} reveals the necessary and sufficient condition for a task in Definition~\ref{def:ILT:NLP}. What if the positive examples are exactly the set of all poss-stable models? 
%Formally, what is the necessary and sufficient condition for a task defined as Definition~\ref{def:ILT:comp:NLP}?

Under the condition $\overline{\mathbb{U}} = E^+ \cup E^-$, we need only to specify the set $E^+$ of positive examples as $E^- = \overline{\mathbb{U}} - E^+$. Thus, we have the following definition.

\begin{definition}[Induction task from complete poss-stable models]\label{def:ILT:comp:NLP}
An {\em induction task from complete poss-stable models} is a tuple $T = \tuple{ \overline{B}, E^+}$ where poss-NLP $\overline{B}$ is the background knowledge, $E^+$ is a set of possibilistic interpretations called the positive examples. A hypothesis $\overline{H}$ belongs to the set of induction solutions of $T$, written as $\overline{H} \in \mathit{ILP_{LCPoSM}}(T)$, if $E^+ = \mathit{PSM}(\overline{B} \sqcup \overline{H})$.
\end{definition}

Compared to the induction task in Definition~\ref{def:ILT}, the set $E^+$ of poss-stable models in Definition~\ref{def:ILT:comp:NLP} is complete, since there is no other poss-stable model outside $E^+$. 
% Not only do this definition above and Definition~\ref{def:ILT:NLP} look alike, but this task is essentially a particular situation of Definition~\ref{def:ILT:NLP}. Given a task $T_1 = \tuple{ \overline{B}, E^+}$ from Definition~\ref{def:ILT:comp:NLP}, it can be transformed as task $T_2 = \tuple{ \overline{B}, E^+, E^-}$ where $E^- = \overline{\mathbb{U}} - E^+$ and $\overline{\mathbb{U}} = \{ \overline{J} \mid \overline{J} \subseteq {\cal A}\times {\cal Q} \}$ denoting the set of all possibilistic interpretations. In this way, the properties of $T_1$ can be dedicated to task $T_2$. Intuitively, all examples outside $E^+$ belong to $E^-$. 
By Theorem~\ref{thm:taskNLP:exist}, it follows the below Proposition~\ref{prop:comp:taskNLP:exist}, which is a necessary and sufficient condition for the existence of a solution for an induction task from complete poss-stable models.

\begin{proposition}[Existence of a poss-NLP solution for task in Definition~\ref{def:ILT:comp:NLP}]	\label{prop:comp:taskNLP:exist}  
For an induction task $T = \tuple{\overline{B}, E^+}$ from Definition~\ref{def:ILT:comp:NLP}, $\mathit{ILP_{LCPoSM}}(T) \neq \emptyset$ if and only if 
\begin{enumerate}[label=(C\arabic*)]
    \item $E^+$ is incomparable, and
    \item $E^+$ is coherent with $\overline{B}$, and 
    \item (\romannumeral 1)$\mathit{lfp}(T_{B^{\cal A}}) \neq {\cal A}$, or (\romannumeral 2)$\vert {\cal Q} \vert = 1$ and $E^+ = \{ \overline{\mathbb{A}} \}$, or (\romannumeral 3)$\overline{\mathbb{A}} \cap E^+ \neq \emptyset$.
\end{enumerate}
\end{proposition}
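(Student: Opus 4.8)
The plan is to reduce this ``complete'' induction task to an ordinary induction task in the sense of Definition~\ref{def:ILT} and then invoke Theorem~\ref{thm:taskNLP:exist}. Concretely, I would associate with $T = \tuple{\overline{B}, E^+}$ the task $T' = \tuple{\overline{B}, E^+, \overline{\mathbb{U}} - E^+}$, where $\overline{\mathbb{U}}$ is the (finite) set of all possibilistic interpretations over ${\cal A}$ and ${\cal Q}$. The first step is to check that $\mathit{ILP_{LCPoSM}}(T) = \mathit{ILP_{LPoSM}}(T')$: a hypothesis $\overline{H}$ satisfies $E^+ \subseteq \mathit{PSM}(\overline{B} \sqcup \overline{H})$ together with $(\overline{\mathbb{U}} - E^+) \cap \mathit{PSM}(\overline{B} \sqcup \overline{H}) = \emptyset$ if and only if $E^+ = \mathit{PSM}(\overline{B} \sqcup \overline{H})$, because every poss-stable model lies in $\overline{\mathbb{U}}$ and hence the second condition forces $\mathit{PSM}(\overline{B} \sqcup \overline{H}) \subseteq E^+$. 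Thus $\mathit{ILP_{LCPoSM}}(T) \neq \emptyset$ iff $\mathit{ILP_{LPoSM}}(T') \neq \emptyset$, and Theorem~\ref{thm:taskNLP:exist} becomes applicable.

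Applying the theorem to $T'$, its four conditions must be translated. Conditions (C1) and (C2) are literally (C1) and (C2) of the present proposition. Condition (C4), $E^+ \cap (\overline{\mathbb{U}} - E^+) = \emptyset$, holds trivially. Hence everything reduces to showing that, under (C1) and (C2), the theorem's condition (C3) --- namely that $E^- = \overline{\mathbb{U}} - E^+$ is compatible with $\overline{B}$ --- is equivalent to the disjunction (i)$\lor$(ii)$\lor$(iii).

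For this final and main step I would unfold Definition~\ref{def:compatible}, so that ``$E^-$ is incompatible'' means the conjunction of its clauses (c1), (c2), (c3). The key computation is $\overline{\mathbb{A}} - E^- = \overline{\mathbb{A}} \cap E^+$, which rewrites the three clauses in terms of $E^+$: (c1) is $\mathit{lfp}(T_{B^{\cal A}}) = {\cal A}$, i.e.\ $\lnot$(i); (c2) is $\overline{\mathbb{A}} \cap E^+ \neq \overline{\mathbb{A}}$; and (c3) is ``$\overline{\mathbb{A}} \cap E^+ = \emptyset$, or ${\cal T}_{\overline{B}}(\overline{G}) \not\sqsubseteq \overline{G}$ for every $\overline{G} \in \overline{\mathbb{A}} \cap E^+$''. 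Using coherency (C2), which gives ${\cal T}_{\overline{B}}(\overline{G}) \sqsubseteq \overline{G}$ for each $\overline{G} \in E^+$, the second alternative of (c3) can hold only when its quantified set is empty, so (c3) collapses to $\overline{\mathbb{A}} \cap E^+ = \emptyset$, that is $\lnot$(iii). Putting these together, incompatibility of $E^-$ amounts to $\lnot$(i)$\,\land\,$(c2)$\,\land\,\lnot$(iii), so that compatibility is $(\text{i})\lor(\text{iii})\lor\lnot(\text{c2})$.

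The subtle point --- and the part I expect to be the main obstacle --- is analysing when clause (c2) fails, i.e.\ when $\overline{\mathbb{A}} \subseteq E^+$, and matching it to condition (ii). Here I would use that every $\overline{G} \in \overline{\mathbb{A}}$ has $G = {\cal A}$ and is therefore comparable (Definition~\ref{def:poss:comparable}) with every possibilistic interpretation; by incomparability (C1) together with Proposition~\ref{prop:possSM:incomparable}, $E^+$ can contain at most one element of $\overline{\mathbb{A}}$, and if it contains one then $E^+$ must be exactly that singleton. Since $\overline{\mathbb{A}}$ is itself a singleton precisely when $\vert {\cal Q} \vert = 1$, the failure of (c2) under (C1) is exactly condition (ii), and in that configuration (iii) holds as well. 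This confirms that $\lnot$(c2) is subsumed by (iii), so $(\text{i})\lor(\text{iii})\lor\lnot(\text{c2})$ equals $(\text{i})\lor(\text{ii})\lor(\text{iii})$, completing the equivalence. The only care needed is to keep track of the (always nonempty) set $\overline{\mathbb{A}}$ and the degenerate configurations of ${\cal A}$ and ${\cal Q}$ so that the comparability argument for the top interpretations is airtight.
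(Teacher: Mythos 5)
Your proposal is correct and follows essentially the same route as the paper's proof: reduce $T$ to the induction task $\tuple{\overline{B}, E^+, \overline{\mathbb{U}} - E^+}$, invoke Theorem~\ref{thm:taskNLP:exist}, rewrite compatibility via $\overline{\mathbb{A}} - E^- = \overline{\mathbb{A}} \cap E^+$, collapse the third clause of Definition~\ref{def:compatible} using coherency (C2), and identify $\overline{\mathbb{A}} \subseteq E^+$ with condition~(ii) using incomparability (C1). The only cosmetic difference is your citation of Proposition~\ref{prop:possSM:incomparable} (which concerns poss-stable models and is not actually needed there --- the comparability of top interpretations with every interpretation plus (C1) suffices, which is the argument both you and the paper in fact use).
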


Let us look at the following example.

\begin{example}\label{exam:comp:NLP:exist}
    Consider Example~\ref{exam:comp:NLP:exist} again.
    Let $T_{41}  = \tuple{\overline{B}, E^+}$ be an induction task from complete poss-stable models where $\overline{B} = \{ (r \lto , 0.3) \}$ and $E^+ = \{ \{ (p, 0.5), (r, 0.5) \}, \{ (q, 0.3), (r, 0.8) \} \}$. Then $E^+$ is incomparable and coherent with $\overline{B}$. Besides, $\mathit{lfp}(T_{B^{\cal A}}) \neq {\cal A}$ holds at least. Therefore, $T_{41}$ must have a solution.     
    It is easy to verify that $\{ (p \lto \Not q, 0.5), (r \lto \Not q, 0.5), (q \lto \Not p, 0.3), (r \lto \Not p, 0.8) \} \in \mathit{ILP_{LCPoSM}}(T_{41})$.
\end{example}

%\subsection{Induction from stable models} \label{subsec:LSM}

An ordinary NLP can be seen as a poss-NLP in which all rules have the same weight. 
An induction task for poss-NLPs degenerate to induction for ordinary NLPs 
when the induction goal is to induce a NLP $H$ such that $E^+\subseteq \mathit{SM}(B\cup H)$ and $E^-\cap \mathit{SM}(B\cup H)=\emptyset$ in which the background $B$ is a NLP, and the examples in $E^+$ and $E^-$ are interpretations.
For ease of discussion, we call this kind of induction task learning NLP from stable models as LSM induction task, and use $\mathit{ILP_{LSM}}(T)$ to denote the solution of the LSM induction task $T$.
Thus, a LSM induction task can also be solved by our two induction algorithms for poss-NLPs.

% When ignoring all considerations of the necessities, an induction task for a NLP from stable models can be defined below.

% \begin{definition}[induction task from stable models]\label{def:ILT:NLP}
% An {\em induction task from stable models} is a tuple $T = \tuple{B, E^+, E^-}$ where NLP $B$ is the background knowledge, two sets $E^+$ and $E^-$ of interpretations are respectively called the positive and negative examples. A hypothesis $H$ belongs to the set of induction solutions of $T$, written as $H \in \mathit{ILP_{LSM}}(T)$, if it achieves two goals
% \begin{align}
%     E^+\subseteq \mathit{SM}(B\cup H), \tag{{g1}} \label{eq:g1}\\
%     E^-\cap \mathit{SM}(B\cup H)=\emptyset. \tag{{g2}} \label{eq:g2}
% \end{align}
% \end{definition}

% After each certainty is stripped, an induction task of poss-NLP from poss-stable models can uniquely degenerate into an induction task of NLP from stable models. We can impose restriction $\vert {\cal Q} \vert = 1$ on $T = \tuple{\overline{B}, E^+, E^-}$. 
% Besides, the following necessary and sufficient condition holds for the existence of a NLP solution according to Theorem~\ref{thm:taskNLP:exist}.

\begin{proposition}[Existence of a NLP solution for a LSM induction task]\label{prop:taskNLP:exist}  
For an induction task $T = \tuple{B, E^+, E^-}$, $\mathit{ILP_{LSM}}(T) \neq \emptyset$ if and only if 
\begin{enumerate}[label=(c\arabic*)]
    \item $E^+$ is $\subseteq$-incomparable, and
    \item $e \models B$ for each $e \in E^+$, and 
    \item ${\cal A} \notin E^-$ or ${\cal A} \neq \mathit{lfp}(T_{B^{\cal A}})$, and 
    \item $E^+ \cap E^- = \emptyset$.
\end{enumerate}
\end{proposition}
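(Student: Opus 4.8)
The plan is to derive this proposition as the uniform-weight specialisation of Theorem~\ref{thm:taskNLP:exist}. I would embed the LSM task $T = \tuple{B, E^+, E^-}$ into the possibilistic setting by fixing the singleton lattice ${\cal Q} = \{\mu\}$ and lifting every classic object to weight $\mu$: each rule $r$ of $B$ (resp. of a candidate hypothesis $H$) becomes $(r,\mu)$, and each classic interpretation $e$ becomes the poss-interpretation $\overline{e} = \{(a,\mu) \mid a \in e\}$. Writing $\overline{T} = \tuple{\overline{B}, \overline{E^+}, \overline{E^-}}$ for the lifted task, the first step is to verify that $\mathit{ILP_{LSM}}(T) \neq \emptyset$ if and only if $\mathit{ILP_{LPoSM}}(\overline{T}) \neq \emptyset$, after which the four conditions of Theorem~\ref{thm:taskNLP:exist} need only be translated back into the classic vocabulary.

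The faithfulness of the embedding rests on Proposition~\ref{prop:SM:maps}. Since every rule carries the same weight $\mu$, the possibilistic consequence operator assigns weight $\mu = \min\{\mu,\ldots,\mu\}$ to each derived atom, so for any classic program $P$ over ${\cal Q} = \{\mu\}$ one has $\mathit{Cn}(\overline{P}^A) = \overline{A}$ whenever $A \in \mathit{SM}(P)$; combined with the two inclusions of Proposition~\ref{prop:SM:maps} this gives $\mathit{PSM}(\overline{P}) = \{\overline{A} \mid A \in \mathit{SM}(P)\}$. Because the convention ${\cal Q} = \{\mu\}$ forces every hypothesis over the lifted signature to be uniform, hypotheses for $\overline{T}$ are in bijection with ordinary NLPs $H$, and $\overline{B} \sqcup \overline{H}$ projects classically onto $B \cup H$. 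Hence $\overline{H}$ covers $\overline{E^+}$ and excludes $\overline{E^-}$ under $\mathit{PSM}$ exactly when $H$ covers $E^+$ and excludes $E^-$ under $\mathit{SM}$, establishing the claimed equivalence of solvability.

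It then remains to match conditions (C1)--(C4) of Theorem~\ref{thm:taskNLP:exist} with (c1)--(c4). Conditions (C1) and (c1) coincide because incomparability of poss-interpretations (Definition~\ref{def:poss:comparable}) is defined purely on the classic projections, and (C4) and (c4) coincide since the lifting $e \mapsto \overline{e}$ is injective. For (C2), coherence of $\overline{I}$ with $\overline{B}$ means ${\cal T}_{\overline{B}}(\overline{I}) \sqsubseteq \overline{I}$; with all weights equal to $\mu$ every derived atom receives weight $\mu$, so this reduces to the statement that every atom producible by $B$ from $I$ already lies in $I$, which is precisely $e \models B$ for the corresponding classic example $e$. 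Thus (C2) specialises to (c2).

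The delicate step, which I expect to be the main obstacle, is translating compatibility (C3) into (c3). Here I would unwind Definition~\ref{def:compatible} in the singleton lattice, where $\overline{\mathbb{A}}$ collapses to the single poss-interpretation $\overline{{\cal A}} = \{(a,\mu) \mid a \in {\cal A}\}$. Incompatibility of $E^-$ with $\overline{B}$ requires its three sub-conditions simultaneously; but when $\overline{\mathbb{A}}$ is a singleton, its second sub-condition $\overline{\mathbb{A}} - E^- \neq \overline{\mathbb{A}}$ forces $\overline{{\cal A}} \in E^-$, which in turn makes $\overline{\mathbb{A}} - E^- = \emptyset$ and renders the third sub-condition automatic. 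Consequently incompatibility collapses to the conjunction $\mathit{lfp}(T_{B^{\cal A}}) = {\cal A}$ and ${\cal A} \in E^-$, so $E^-$ is compatible with $\overline{B}$ exactly when ${\cal A} \notin E^-$ or $\mathit{lfp}(T_{B^{\cal A}}) \neq {\cal A}$, i.e. condition (c3). Chaining the solvability equivalence of the embedding with Theorem~\ref{thm:taskNLP:exist} and these four translations then yields the proposition.
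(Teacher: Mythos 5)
Your proposal is correct and follows essentially the same route as the paper: view the LSM task as a possibilistic induction task over the singleton lattice $\vert {\cal Q}\vert = 1$, invoke Theorem~\ref{thm:taskNLP:exist}, and translate conditions (C1)--(C4) one-by-one into (c1)--(c4), with the compatibility condition (C3) collapsing to (c3) exactly as you describe since $\overline{\mathbb{A}}$ becomes a singleton. The only difference is that you justify the faithfulness of the embedding (via Proposition~\ref{prop:SM:maps}) more explicitly than the paper, which simply asserts the solvability equivalence.
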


\begin{example}\label{exam:taskNLP:B:neg:exist}
    Let $T_{42}  = \tuple{B_{med}, E^+ , \emptyset}$ where $E^+ = \{ A_1, A_2 \}$, and 
    $B_{med}$ is the NLP from Example~\ref{exam:PAS}. 
    It is easy to check that $\{ medA \lto vomiting, \Not medB. \} \in \mathit{ILP_{LSM}}(T_{42})$.     
    
    In contrast, let $T_{42}  = \tuple{B, \emptyset, E^-}$ where $E^- = \{ \{ p, q \}, \{ p \} \}$ and $B = \{ p \lto . , q \lto p. \}$. 
    Then we have ${\cal A} = \{ p, q \}$ so that ${\cal A} \in E^-$ and ${\cal A} = \mathit{lfp}(T_{B^{\cal A}})$. As a result, $\mathit{ILP_{LSM}}(T_{42}) = \emptyset$. 
\end{example}

% \begin{example}\label{exam:taskNLP:B:neg:exist}
%     [Continued from Example~\ref{exam:PAS}] Let $T_{42}  = \tuple{B, E^+ , \emptyset}$ where $E^+ = \{ A_1, A_2 \}$ and $B = P_{med} -  \{ medA \lto vomiting, \Not medB. \}$. It is easy to check that $\{ medA \lto vomiting, \Not medB. \} \in \mathit{ILP_{LSM}}(T_{42})$.     
    
%     In contrast, let $T_{42}  = \tuple{B, \emptyset, E^-}$ where $E^- = \{ \{ p, q \}, \{ p \} \}$ and $B = \{ p \lto . , q \lto p. \}$. 
%     Then we have ${\cal A} = \{ p, q \}$ so that ${\cal A} \in E^-$ and ${\cal A} = \mathit{lfp}(T_{B^{\cal A}})$. As a result, $\mathit{ILP_{LSM}}(T_{42}) = \emptyset$. 
% \end{example}

An induction task $T = \tuple{B, E^+, E^-}$ here can be solved by a revised version of algorithm {\ILPSMmin} since it can be regarded as a subtask of the induction task as Definition~\ref{def:ILT}. 
%When $E^- = \emptyset$, a task $T = \tuple{B, E^+, \emptyset}$ can be solved using the theoretical foundations in algorithm {\ILPSMmin} since this $T$ can be regarded as a subtask too. 
On the other side, the induction task $T = \tuple{B, E^+, E^-}$ can also be solved by algorithm {\ILASP} ~\citep{ILASP} since a stable model can be represented as a partial interpretation.
In other words, an induction task $T = \tuple{B, E^+, E^-}$ here is both a subtask of Definition~\ref{def:ILT} and a subtask of the induction task solved by {\ILASP} ~\citep{ILASP}. 
\hbl{
It is interesting to see which approach can solve this subtask faster, although {\ILASP} induces  non-ground answer set programs from partial interpretations while {\ILPSMmin} induces ground poss-NLP from stable models.
}
So we discuss the implementation of our algorithm and show the experimental comparison between these two approaches in Section~\ref{sec:implementation:experiment}. 
Additionally, a brief theoretical comparison can be found in Section~\ref{sec:related:work} discussing the related work.

\subsection{Induction from partial stable models}

We first extend the definition of inductive learning for poss-NLPs by allowing partial interpretation. 
Recall that a partial interpretation is a pair $\tuple{E^{inc}, E^{exc}}$ of two sets $E^{inc}$ and $E^{exc}$ of atoms where $E^{inc} \cap E^{exc} = \emptyset$. Intuitively, each atom in $E^{inc}$ is true, each atom in $E^{exc}$ is false, and each element in ${\cal A} - E^{inc} - E^{exc}$ is unknown. 
% Proposition~\ref{prop:taskNLP:exist} reveals the necessary and sufficient condition for the task in Definition~\ref{def:ILT:NLP}. 
% \hbl{The interpretaions acting as examples there are used for two-valued logic and called as \em{total} interpretations too.}
% What if the examples are partial interpretations\footnote{The core idea remains the same as ~\citep{ILASP} although the expression has been modified a little to fit our learning setting.} as Definition~\ref{def:partial:I}? 
% \hbl{
% The partial interpretations are used for three-valued logic. 
% The assignment of a element in ${\cal A} - E^{inc} - E^{exc}$ is unknow. 
If $E^{inc} \cup E^{exc} = {\cal A}$, the partial interpretation $\tuple{E^{inc}, E^{exc}}$ is equivalent to the interpretation $E^{inc}$.
An interpretation $I$ extends a partial interpretation $E = \tuple{E^{inc}, E^{exc}}$, written as $I \propto E$, if $(E^{inc} \subseteq I) \land (E^{exc} \cap I = \emptyset)$.

% \begin{definition}[partial interpretation]\label{def:partial:I} 
%     A partial interpretation $E$ is a pair $E = \tuple{E^{inc}, E^{exc}}$ of sets of atoms, called the inclusions and exclusions respectively. An interpretation $I$ extends $E = \tuple{E^{inc}, E^{exc}}$, written as $I \propto E$, if $(E^{inc} \subseteq I) \land (E^{exc} \cap I = \emptyset)$.
% \end{definition}

% In our setting, the corresponding task can be defined as Definition~\ref{def:ILT:NLP:partial}. Following Example~\ref{exam:partial:taskNLP} comes from ~\citep{ILASP} to illustrate the above notations.

The generalised notion of induction tasks for poss-NLPs is formally defined as follows.

\begin{definition}[Induction task from partial stable models]\label{def:ILT:NLP:partial}
An {\em induction task from partial stable models} is a tuple $T = \tuple{B, E^+, E^-}$ where NLP $B$ is the background knowledge, two sets $E^+$ and $E^-$ of partial interpretations are respectively called the positive and negative examples. A hypothesis $H$ belongs to the set of induction solutions of $T$, written as $H \in \mathit{ILP_{LPaSM}}(T)$, if it achieves two aims
\begin{align}
    \forall e^+ \in E^+ \exists I \in \mathit{SM}(B \cup H), I \propto e^+,  \tag{{a1}} \label{eq:a1}\\
    \forall e^- \in E^- \nexists I \in \mathit{SM}(B \cup H), I \propto e^-. \tag{{a2}} \label{eq:a2}
\end{align}
\end{definition}

We borrow an Example~\ref{exam:partial:taskNLP} in~\citep{ILASP} to illustrate the above definition.

\begin{example}\label{exam:partial:taskNLP}
    Let $T_{43}  = \tuple{B, E^+, E^-}$ where $E^+ = \{ \tuple{\{ p \}, \emptyset}, \tuple{\{ q \}, \{ p \}} \}$, $E^- = \{ \tuple{\{ p,q \}, \emptyset} \}$ and $B = \{ q \lto r. \}$. Then we have ${\cal A} = \{ p, q, r \}$, $\mathit{SM}(B \cup H_{1}) = \{ \{ p \}, \{ q, r \} \}$ and $\mathit{SM}(B \cup H_{2}) = \{ p, q, r \}$ where $H_{1} = \{ p \lto \Not r. , r \lto \Not p. \}$ and $H_{2} = \{ p \lto r. , r \lto . \}$. It is easy to verify that $\{ p \} \propto \tuple{\{ p \}, \emptyset}$, $\{ q, r \} \propto \tuple{\{ q \}, \{ p \}}$ and $\{ p, q, r \} \propto \tuple{\{ p,q \}, \emptyset}$. As a result, $H_{1} \in \mathit{ILP_{LPaSM}}(T_{43})$ and $H_{2} \notin \mathit{ILP_{LPaSM}}(T_{43})$.
\end{example}

As we can see, the induction task in Definition~\ref{def:ILT:NLP:partial} and a LSM induction task reflect the same requirements. The denotation of a partial interpretation $o$ is essentially the set 
\[
    \mathit{de}(o) = \{ I \in 2^{\cal A} \mid I \propto o \} 
\]
%$\mathit{de}(E) = \{ I \in 2^{\cal A} \mid I \propto E \}$ 
of interpretations. The two induction tasks can be transformed as Proposition~\ref{prop:tasks:map}. See Example~\ref{exam:partial:taskNLP:trans} for more details.

\begin{proposition}[Transformations between induction tasks]\label{prop:tasks:map}  
Let $T = \tuple{B, O^+, O^-}$ be an induction task of NLP from partial stable models. A NLP $H$ such that $H \in \mathit{ILP_{LPaSM}}(T)$ if and only if $H \in \mathit{ILP_{LSM}}(T_1)$ for some $T_1 = \tuple{B, E^+, E^-}$ such that 
\begin{enumerate}[label=(c\arabic*)]
    \item $E^+$ is a minimal hitting set of $\{\mathit{de}(o) \mid o \in O^+ \}$, written as $E^+ \in \mathit{SMHS}(\{\mathit{de}(o) \mid o \in O^+ \})$, and 
    \item $E^- = \{ J \in \mathit{de}(o) \mid o \in O^- \}$.
\end{enumerate}
\end{proposition}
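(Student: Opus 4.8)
The plan is to unfold both solution notions into statements about how the stable model set $\mathit{SM}(B \cup H)$ interacts with the denotations $\mathit{de}(o)$, and then to recognise the positive requirement as a hitting-set condition. First I would rewrite the two defining conditions of $\mathit{ILP_{LPaSM}}(T)$. Condition~\eqref{eq:a1} says that for every $o \in O^+$ there is a stable model extending $o$; since $I \propto o$ is by definition $I \in \mathit{de}(o)$, this is equivalent to $\mathit{SM}(B \cup H) \cap \mathit{de}(o) \neq \emptyset$ for each $o \in O^+$, i.e.\ $\mathit{SM}(B \cup H)$ is a \emph{hitting set} of the collection $\{\mathit{de}(o) \mid o \in O^+\}$. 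Dually, Condition~\eqref{eq:a2} says no stable model extends any $o \in O^-$, which is $\mathit{SM}(B \cup H) \cap \mathit{de}(o) = \emptyset$ for each $o \in O^-$, equivalently $\mathit{SM}(B \cup H) \cap \bigcup_{o \in O^-} \mathit{de}(o) = \emptyset$. With $E^- = \{J \in \mathit{de}(o) \mid o \in O^-\} = \bigcup_{o \in O^-}\mathit{de}(o)$ as in (c2), this is literally the negative requirement $E^- \cap \mathit{SM}(B \cup H) = \emptyset$ of an LSM task.

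For the \emph{if} direction, suppose $T_1 = \tuple{B, E^+, E^-}$ satisfies (c1) and (c2) and $H \in \mathit{ILP_{LSM}}(T_1)$, so $E^+ \subseteq \mathit{SM}(B \cup H)$ and $E^- \cap \mathit{SM}(B \cup H) = \emptyset$. The negative side yields \eqref{eq:a2} immediately by the reformulation above. For the positive side, since $E^+$ is a hitting set of $\{\mathit{de}(o) \mid o \in O^+\}$ (minimality is not even needed here) and $E^+ \subseteq \mathit{SM}(B \cup H)$, the larger set $\mathit{SM}(B \cup H)$ also meets every $\mathit{de}(o)$, which is exactly \eqref{eq:a1}. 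Hence $H \in \mathit{ILP_{LPaSM}}(T)$.

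For the \emph{only if} direction, assume $H \in \mathit{ILP_{LPaSM}}(T)$ and set $E^- = \bigcup_{o \in O^-}\mathit{de}(o)$, so (c2) holds and the reformulation gives $E^- \cap \mathit{SM}(B \cup H) = \emptyset$. The reformulation of \eqref{eq:a1} tells us that $\mathit{SM}(B \cup H)$ is a hitting set of $\{\mathit{de}(o) \mid o \in O^+\}$. The key step is to extract from it a genuinely minimal one: since ${\cal A}$ is finite, the ground set $2^{\cal A}$ is finite, so among the subsets of $\mathit{SM}(B \cup H)$ that are hitting sets I pick one, $E^+$, minimal under inclusion. Any hitting set $C \subsetneq E^+$ would be a hitting subset of $\mathit{SM}(B \cup H)$ strictly smaller than $E^+$, contradicting that choice; hence $E^+ \in \mathit{SMHS}(\{\mathit{de}(o) \mid o \in O^+\})$, giving (c1). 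With $E^+ \subseteq \mathit{SM}(B \cup H)$ and the negative condition already established, $T_1 = \tuple{B, E^+, E^-}$ witnesses $H \in \mathit{ILP_{LSM}}(T_1)$.

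The only delicate point I anticipate is the minimal-hitting-set extraction: I must ensure the chosen $E^+$ is minimal among \emph{all} hitting sets (as $\mathit{SMHS}$ demands), not merely among the subsets of $\mathit{SM}(B \cup H)$. The argument above resolves this because any globally smaller hitting set lying below $E^+$ is automatically a subset of $\mathit{SM}(B \cup H)$, and finiteness of $2^{\cal A}$ guarantees that such a minimal element exists. Everything else is a direct unfolding of the definitions of $\propto$, $\mathit{de}$, and the two solution sets.
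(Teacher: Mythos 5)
Your proof is correct and follows essentially the same route as the paper: both unfold $\propto$ and $\mathit{de}$ to recast condition~\eqref{eq:a1} as ``$\mathit{SM}(B\cup H)$ is a hitting set of $\{\mathit{de}(o)\mid o\in O^+\}$'' and condition~\eqref{eq:a2} as disjointness of $\mathit{SM}(B\cup H)$ from $\bigcup_{o\in O^-}\mathit{de}(o)$. If anything, you are more careful than the paper on one point: the paper's chain treats ``$\mathit{SM}(B\cup H)$ is a hitting set $\LRto E^+\subseteq \mathit{SM}(B\cup H)$'' as a biconditional for a fixed $E^+$, which holds only in one direction, whereas your explicit extraction of a minimal hitting set inside $\mathit{SM}(B\cup H)$, together with the check that it is minimal among \emph{all} hitting sets, is exactly what is needed to discharge the existential quantifier over $T_1$ in the ``only if'' direction.
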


\begin{example}[Continued from Example~\ref{exam:partial:taskNLP}]\label{exam:partial:taskNLP:trans}
     We have $\{\mathit{de}(o) \mid o \in E^+ \} = \{ \{ \{ p \}, \{ p,q \},  \{ p,r \}, \{ p,q,r \} \}, \{ \{ q \}, \{ q,r \} \} \}$, $\{ J \in \mathit{de}(o) \mid o \in E^- \} = \{ \{ p,q \}, \{ p,q,r \} \}$ and $\{ \{ p \}, \{ q, r \} \} \in \mathit{SMHS}(\{\mathit{de}(o) \mid o \in O^+ \})$. As a result, induction task $T_{43}$ can generate a LSM task $T_1 = \tuple{B, \{ \{ p \}, \{ q, r \} \}, \{ \{ p,q \}, \{ p,q,r \} \}}$. It is evident that $H_{1} \in \mathit{ILP_{LSM}}(T_1)$.
\end{example}

According to Proposition~\ref{prop:tasks:map}, an induction task $T = \tuple{B, O^+, O^-}$ from partial stable models can be transformed into $\vert \mathit{SMHS}(\{\mathit{de}(o) \mid o \in O^+ \}) \vert$ induction tasks $T_i = \tuple{B, E^+_i, E^-}$ where 
\begin{itemize}
    \item $E^+_i$ is the $i$-th element in $\mathit{SMHS}(\{\mathit{de}(o) \mid o \in O^+ \})$, and 
    \item $E^- = \{ J \in \mathit{de}(o) \mid o \in O^- \}$.
\end{itemize}
For convenience, we use $\mathit{trans}(T)$ to denote the set of all corresponding induction tasks $T_i$ deriving from induction task $T$. 
A NLP $H$ is a solution of $T$ from partial stable models if and only if there exists $T_i \in \mathit{trans}(T)$ such that $H \in \mathit{ILP_{LSM}}(T_i)$. Formally, we have the necessary and sufficient condition as Corollary~\ref{cor:LPaSM:existence}. Example~\ref{exam:LPaSM:existence} helps to illustrate this conclusion.

\begin{corollary}[Existence of a NLP solution]\label{cor:LPaSM:existence}	
    Let $T = \tuple{B, O^+, O^-}$ be an induction task of NLP from partial stable models. $\mathit{ILP_{LPaSM}}(T) \neq \emptyset$ if and only if there exists $E^+ \in \mathit{SMHS}(\{\mathit{de}(o) \mid o \in O^+ \})$ and $E^- = \{ J \in \mathit{de}(o) \mid o \in O^- \}$ such that 
    \begin{enumerate}[label=(c\arabic*)]
        \item $E^+$ is $\subseteq$-incomparable, and
        \item $e \models B$ for each $e \in E^+$, and 
        \item ${\cal A} \notin E^-$ or ${\cal A} \neq \mathit{lfp}(T_{B^{\cal A}})$, and 
        \item $E^+ \cap E^- = \emptyset$.
    \end{enumerate}
\end{corollary}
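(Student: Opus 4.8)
The plan is to obtain this corollary by chaining together the transformation result (Proposition~\ref{prop:tasks:map}) with the existence criterion for LSM induction tasks (Proposition~\ref{prop:taskNLP:exist}); there is no genuinely new combinatorial content, so the whole argument amounts to tracking quantifiers carefully. First I would rewrite the claim $\mathit{ILP_{LPaSM}}(T) \neq \emptyset$ as ``there exists an NLP $H$ with $H \in \mathit{ILP_{LPaSM}}(T)$.'' Applying Proposition~\ref{prop:tasks:map}, such an $H$ is a solution precisely when $H \in \mathit{ILP_{LSM}}(T_1)$ for some $T_1 = \tuple{B, E^+, E^-}$ whose positive part satisfies $E^+ \in \mathit{SMHS}(\{\mathit{de}(o) \mid o \in O^+\})$ and whose negative part is the \emph{fixed} set $E^- = \{ J \in \mathit{de}(o) \mid o \in O^- \}$. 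Since $E^-$ is uniquely determined by $O^-$, the only free choice in $T_1$ is $E^+$, so the candidate tasks $T_1$ range over exactly the finite family $\mathit{trans}(T)$.

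The key step is to commute the two existential quantifiers. Reading Proposition~\ref{prop:tasks:map} as $H \in \mathit{ILP_{LPaSM}}(T) \iff \exists\, T_1 \in \mathit{trans}(T)\ H \in \mathit{ILP_{LSM}}(T_1)$ and prefixing $\exists H$, I would swap $\exists H\,\exists T_1$ into $\exists T_1\,\exists H$, which collapses the inner existential into nonemptiness of the solution set. This turns the statement into ``there exists $T_1 \in \mathit{trans}(T)$ with $\mathit{ILP_{LSM}}(T_1) \neq \emptyset$.''

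Next I would invoke Proposition~\ref{prop:taskNLP:exist} on each candidate $T_1 = \tuple{B, E^+, E^-}$: its solution set is nonempty if and only if that proposition's four conditions hold, namely $E^+$ is $\subseteq$-incomparable, $e \models B$ for every $e \in E^+$, the condition ${\cal A} \notin E^-$ or ${\cal A} \neq \mathit{lfp}(T_{B^{\cal A}})$, and $E^+ \cap E^- = \emptyset$. These are verbatim the conditions (c1)--(c4) of the corollary, evaluated with the fixed $E^-$. Substituting this equivalence inside the existential over $T_1$ (equivalently, over $E^+$) yields: $\mathit{ILP_{LPaSM}}(T) \neq \emptyset$ iff there is some $E^+ \in \mathit{SMHS}(\{\mathit{de}(o) \mid o \in O^+\})$ meeting (c1)--(c4). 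Both directions are then established at once, with no separate soundness/completeness step required.

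The only point demanding care -- rather than a real obstacle -- is the bookkeeping that ties the existential of Proposition~\ref{prop:tasks:map} to the existential over $E^+$ in the corollary: one must confirm that the transformation ranges $T_1$ over precisely $\mathit{trans}(T)$ and no other tasks, and that the fixed $E^-$ is consistent between the two propositions. Once this correspondence is pinned down, the biconditional is an immediate conjunction of the two cited equivalences.
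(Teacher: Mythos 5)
Your proposal is correct and follows exactly the paper's own route: the paper proves this corollary in one line by combining Proposition~\ref{prop:tasks:map} with Proposition~\ref{prop:taskNLP:exist}, which is precisely the chain you describe (your explicit quantifier swap $\exists H\,\exists E^+ \Leftrightarrow \exists E^+\,\exists H$ is the implicit bookkeeping the paper leaves to the reader, and it is valid since both quantifiers are existential and $E^-$ is uniquely determined by $O^-$).
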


\begin{example}\label{exam:LPaSM:existence}
    Let $T  = \tuple{B, O^+, O^-}$ where $O^+ = \{ \tuple{\{ p \}, \emptyset} \}$, $O^- = \{ \tuple{\{ p,q \}, \emptyset} \}$ and $B = \{ q \lto p. \}$. Then we have ${\cal A} = \{ p, q \}$, $\{\mathit{de}(o) \mid o \in O^+ \} = \{ \{ \{ p \}, \{ p,q \} \} \}$, $E^- = \{ J \in \mathit{de}(o) \mid o \in O^- \} = \{ \{ p,q \} \}$ and $\mathit{SMHS}(\{\mathit{de}(o) \mid o \in O^+ \}) = \{ \{ \{ p \} \},  \{ \{ p,q \} \} \}$. As a consequence, $\mathit{trans}(T) = \{ T_1, T_2 \}$ where $T_1 = \tuple{B,  E^+_1, E^-}$, $T_2 = \tuple{B, E^+_2, E^-} \}$, $E^+_1 = \{ \{ p \} \}$ and $E^+_2 = \{ \{ p,q \} \}$. Finally, $\mathit{ILP_{LPaSM}}(T) \neq \emptyset$ since $\{ p \} \in E^+_1, \{ p \} \not\models B$ and $E^+_2 \cap E^- \neq \emptyset$.
\end{example}

%\blue{

\section{Implementation and Experiments}\label{sec:implementation:experiment}
We have implemented a prototype for computing minimal solutions for induction tasks based on Algorithm~\ref{alg:ILPSM2}.
In this section, we first describe technical details of our implementation and report some preliminary experimental results, which show that our prototype significantly outperforms the baseline {\ILASP4}~\citep{DBLP:journals/tplp/Law23} when inducing ordinary NLPs. 

%As explained, poss-NLPs are usually defined on a lattice and thus the complexity of computing poss-stable models can be very high. 
As we have seen in the algorithm, a solver for computing poss-stable models will be called. 
While several algorithms for computing poss-stable models have been proposed in the literature, we have not seen any efficient implementation for computing poss-stable models. 
For this reason, our implementation runs only for induction tasks in ordinary NLPs. But it can be easily adapted to the case of full poss-NLPs when an efficient solver for computing poss-stable models is available.
%For this reason, we only implemented a special case of the algorithm {\ILPSMmin} when inputs are ordinary NLPs. 

% We implemented algorithm {\ILSMmin} to solve induction task of learning NLP from stable models. 
% This algorithm was implemented from the viewpoint of the combinatorial optimization problem.  
% To evaluate the performance, we did comparison experiments between {\ILSMmin} and {\ILASP}4~\citep{DBLP:journals/tplp/Law23}, which outperforms the previous {\ILASP} systems. 
% The experimental results on all three datasets show that our algorithm performs better.
The source code of implementation, testing datasets and experimental results are all available on Github\footnote{https://github.com/gzu-ai/ILP-ILSM}.

\subsection{Implementation}
%%%% As shown in Algorithm~\ref{alg:ILPSM2} under condition $\vert {\cal Q} \vert = 1$, a straightforward implementation of algorithm {\ILSMmin} would be to cover $E^+$ with minimum rules in $seeds$ and then add as few rules as possible into $\overline{H}$ to avoid $E^-$ being covered. 
% But a strictly procedural implementation would waste a lot of computational resources and perform badly due to the following issues: 
% \begin{enumerate}[label=(\roman*)]
%     \item the huge $seeds$ should be constructed and stored until one minimal solution $\overline{H}$ returned;
%     \item the choice of a poss-NLP from $seeds$ is random;
%     \item the procedure of updating $\overline{H}$ is rather complex.
% \end{enumerate}
% To address these issues, we have added the following improvements to our implementation: 
% \begin{enumerate}[label=(\roman*)]
%     \item the huge $seeds$ is divided into small fragments according to $norm$;
%     \item the search for $\overline{H}$ starts with the fragment with the fewest rules; 
%     \item complex search processes are performed by efficient solvers.
% \end{enumerate}
Our system is an implementation of the new algorithm {\ILPSMmin} using Python~3.11.5 when all inputs are ordinary NLPs. 
We call this implementation as {\ILSMmin} which is designed to solve a LSM induction task. 

To describe the implementation details, let us first recall the solution space of an induction task as defined in Definition~\ref{def:pos:SS} and Definition~~\ref{def:neg:SS}.
The basic idea of our algorithm is to explore various combinations of rules and find a minimal one that meets the coverage requirements for an induction task.
%on which this inductive approach tries various combinations of rules and finds the minimal one meeting the coverage requirement. 
Specifically, the algorithm is to reduce an induction task into a combinatorial optimization problem (COP for short) below.
%concentrating on the following three components
\begin{itemize}
    \item The discrete \emph{decision space} $\Omega$ of an induction task is determined %%, as Algorithm~\ref{alg:ILPSM2} shows, 
    by the positive solution space and negative solution space by Theorem~\ref{thm:SS:iff}. %Namely, any NLP $h$ that can be a minimal solution must come from set $\Omega$.
    \item The coverage requirements $f(h, B, E^-) = 0$ and $f(h, B, E^+) = \vert E^+ \vert$ act as the \emph{constraints} where $f(h, B, E) = \vert \mathit{SM}(h \cup B) \cap E \vert$.
    \item The \emph{optimization objective} is to minimize $\vert h \vert$.
\end{itemize}
By invoking an efficient solver for computing stable models, the search process is able to efficiently discard those candidate solutions that  violate some constraints or does not meet the optimization objective. 
%can quickly skip decision spaces that violate constraints or does not meet the optimization objective. 
The ASP solver Clingo\footnote{https://github.com/potassco/clingo/releases} was used in our implementation.

\begin{figure}[h]
	{%\centering
	\includegraphics[width=9cm]{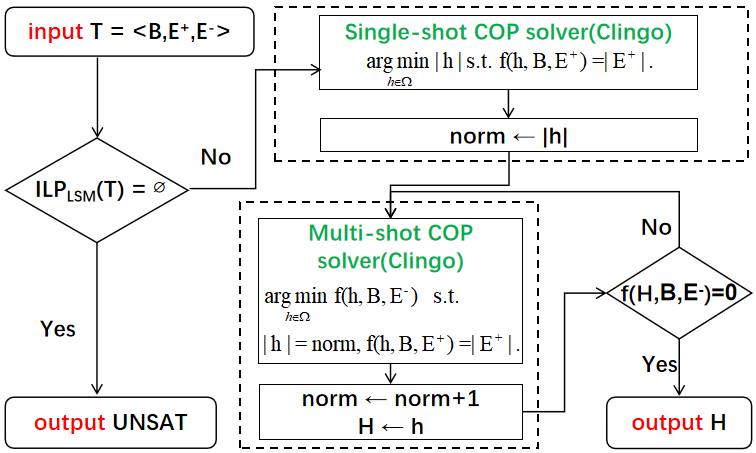}\\
    }
    \caption{The Architecture of {\ILSMmin}}
	% \caption{%Processing flow in the implementation. 
 %    An implementation for {\ILSMmin} using a combinatorial optimization problem solver. 
 %    The red texts indicate the operation commands, and the green texts indicate the invoked programs. Rounded rectangles indicate system inputs or outputs, diamonds indicate judgments, and general rectangles indicate operations.}
    \label{Fig:implementation:route}
\end{figure}

%Taking COP solvers into consideration, 
The architecture of our implementation is %we implemented the algorithm {\ILSMmin} 
depicted in  Figure~\ref{Fig:implementation:route}. % in which some skills are learned from ~\citet{law2020ilasp}. 
For an input induction task $T = \tuple{B, E^+, E^-}$, our algorithm % our implementation the entire solution process 
consists of three main stages as follows.
\begin{enumerate}
    \item %Similar to Algorithm~\ref{alg:ILPSM2}, the implementation 
    Check the existence of a solution by Proposition~\ref{prop:taskNLP:exist}.
    \item %%The implementation 
    Compute the minimum number $norm$ of rules that cover all positive examples by invoking Clingo only once (one shot calling).
    % number $norm$ of rules from solution space %needed
    % to cover all positive examples. 
    %The value of $norm$ provides guidelines for chunking the huge $seeds$ in Algorithm~\ref{alg:ILPSM2}. Additionally, starting the search from the least $norm$ can find the minimal solution as quickly as possible while consuming fewer resources. 
    \item In order to satisfy all negative examples, iteratively increase the total number $norm$ of rules by one each time. In this stage, Clingo is called for multiple times (multi-shot calling).    
    % Then %the implementation
    % it iteratively increases the total number of rules $norm$ by %while 
    % calling multi-shot\footnote{When the second COP takes $E^+$ and $E^-$ into account simultaneously, 
    In this way, our algorithm iteratively minimize the number of negative examples covered after explored a set of NLPs covering all positive examples. 
    % We employ a Clingo’s built-in multi-shot technique to maximize code reuse and minimize grounding operations. } Clingo~\citep{Multishot} to determine whether there is a logic program containing $norm$ number of rules that %can
    % satisfies the induction task until a minimal solution is found. % and outputted. 
    The NLP $H$ generated in the above process is a minimal solution of $T$ when $f(H, B, E^-) = 0$.% meaning no negative example is covered. 
\end{enumerate}

% Although the third step also sequentially processes $E^+$ and $E^-$ as in Algorithm~\ref{alg:ILPSM2}, the solution space analyzed per iteration based on $norm$ is much smaller. 
% Besides, calling Clingo to solve declarative answer set programs can fully take advantage of the constraints in COP to reduce unnecessary searches, as Clingo can skip some of its elements that are not optimal enough. 
% These advantages help this implementation improve performance by consuming less time and less memory.
% Within the same implementation framework of {\ILSMmin}, if the program implementer changes the goal of COP from NLP to poss-NLP we can implement algorithm {\ILPSMmin}. 
%The next step after our implementation is to evaluate algorithm {\ILSMmin} by comparing it with algorithm $ILASP4$~\citep{DBLP:journals/tplp/Law23}, which outperforms the previous ILASP systems.

\subsection{Experiment}

All experiments in this section were executed on an Intel(R) CPU @ 3.60GHz and Clingo version~5.2.2. 
%All experiments were conducted on a Linux server running Ubuntu 18.04 with 8 processors of Intel(R) 3.60GHz CPU. 
%The solver Clingo~5.2.2 is invoked by python~3.11.5. 
%Each program is limited to 5G of RAM per run. 
The memory limit for each induction task is set to 5G of RAM. 
%The memory limit for each program solving an induction task is set to 5G of RAM.
%All the algorithm implementations, datasets and experimental results are available on Github\footnote{https://github.com/gzu-ai/ILP-ILSM}. 
\hbl{
Similar to our algorithm {\ILSMmin}, the contrast algorithm {\ILASP4}~\citep{DBLP:journals/tplp/Law23}, as briefly discussed in Subsection~\ref{subsec:cases}, can also induce ordinary NLPs from stable models, although {\ILASP4} is originally designed to learn non-ground programs from partial interpretations.
}

We randomly generated three datasets of induction tasks with a total number of $440$ induction tasks. 
One dataset is %derives 
from the medical domain (Example~\ref{exam:PAS:solutions}), where the NLP $P_{med}$ represents a part of some physician's epistemic belief. 
The other two datasets are %derive
from two GRNs (Arabidopsis thaliana and T-cell receptor), where the corresponding NLPs represent the dynamic systems~\citep{inoue2014LF1T,LFDT,ribeiro2022learning,hu2025learning}. % extensively used in induction literatures~\citep{inoue2014LF1T,LFDT,ribeiro2022learning,hu2025learning}. 

% One datasets from the medical Example~\ref{exam:PAS}, while the other two from two GRNs, Arabidopsis thaliana and T-cell receptor, respectively.
% The former NLP $P_{med}$ represents the epistemic belief, while the latter two NLPs represents the dynamic systems extensively used in induction literatures~\citep{inoue2014LF1T,LFDT,ribeiro2022learning,hu2025learning}. 

The first dataset $Med$ contains $100$ randomly generated induction tasks, in which the total number $\vert {\cal A} \vert$ of atoms ranges from $1$ to $6$. 
A random induction task $T = \tuple{B, E^+, E^-}$ was synthesized according to the following requirements, and then we iterated this synthesis $100$ times. 
\begin{itemize}
    \item $B \subseteq P_{med}$;
    \item $E^+ \subseteq \mathit{SM}(P_{med})$;
    \item $E^- \subseteq 2^{hb(P_{med})}$ and $0 \leq \vert E^- \vert \leq 5$ where $hb(P)$ denotes the Herbrand base of NLP $P$. 
\end{itemize}
The second dataset $Ara$ contains $5 \times 4 \times 5 = 100$ randomly generated induction tasks, in which $\vert {\cal A} \vert$ ranges from $0$ to $15$.  
These induction tasks derived from the NLP $P_{Ara}$ representing a GRN of Arabidopsis thaliana, where $\vert P_{Ara} \vert = 28$, $\vert hb(P_{Ara}) \vert = 15$ and $\vert \mathit{SM}(P_{Ara}) \vert = 2$. 
Each $T = \tuple{B, E^+, E^-}$ of these $100$ induction tasks was synthesized according to the following requirements. 
\begin{itemize}
    \item $B \subseteq P_{Ara}$ where $\vert B \vert \in \{ 0, 6, 12, 18, 24 \}$;
    \item $E^+ \subseteq {\mathit{SM}(P_{Ara})}$; % where $\vert 2^{\mathit{SM}(P_{Ara})} \vert = 4$;
    \item $E^- \subseteq 2^{hb(P_{Ara})}$ where $\vert E^- \vert \in \{ 0, 1, 2, 3, 4 \}$. 
\end{itemize}
The third dataset $Tce$ contains $10 \times 3 \times 2 \times 4 = 240$ randomly generated induction tasks, in which $\vert {\cal A} \vert$ ranges from $22$ to $40$.  
These induction tasks derived from the NLP $P_{Tce}$ representing a GRN of T-cell receptor, where $\vert P_{Tce} \vert = 45$, $\vert hb(P_{Tce}) \vert = 40$ and $\vert \mathit{SM}(P_{Tce}) \vert = 1$. 
%The tasks of each scale were generated iteratively $10$ times. 
$10$ groups of induction tasks were iteratively generated where each group comprise $24$ induction tasks with different scale. 
Each $T = \tuple{B, E^+, E^-}$ of these $24$ induction tasks within one iteration was synthesized according to the following requirements. 
\begin{itemize}
    \item $B \subseteq P_{Tce}$ where $\vert B \vert \in \{ 15, 30, 45 \}$;
    \item $E^+ \in 2^{\mathit{SM}(P_{Tce})}$;% where $\vert 2^{\mathit{SM}(P_{Tce})} \vert = 2$;
    \item $E^- \subseteq 2^{hb(P_{Tce})}$ where $\vert E^- \vert \in \{ 0, 5, 10, 15 \}$. 
\end{itemize} 
%The average scale of induction tasks in these three datasets increases incrementally in this manner.

The experimental %statistical 
results %of the comparison experiments between 
%$ILASP4$ and {\ILSMmin} in the three datasets are shown in
are summarized in Table~\ref{tab:exp:comp}. 
If the program is aborted due to TO (time-out)
%ETL (Exceeding Time Limit) 
or OOM (Out Of Memory), the test % the result 
is labeled with ``Fail". 
If the program returns an answer ``no solution'' (resp., returns a solution), the test is labeled with ``UNSAT" (resp., ``Success"). 
% recognizes an unsolvable task (resp., returns a solution), %the result 
% it is classified as ``UNSAT" (resp., ``Success"). 

\begin{table}[h!]
 \centering
 \caption{
 A comparison of {\ILSMmin} and {\ILASP}4 against three benchmark datasets.
 %An experimental summarization of {\ILASP}4 and {\ILSMmin} for the three testing datasets.
 %Experimental comparison results between $ILASP4$ and {\ILSMmin} in the three datasets. 
 The id of each induction task set is in the form $D\_M\_L\_U$ where $D$ is the name of the dataset, $M$ is the number of induction tasks with $L \leq \vert {\cal A} \vert \leq U$. 
%The units of time are standardized in seconds. 
Cnt(TO) (resp., Cnt(OOM)) is the number of induction tasks that the program runs out of CPU time (resp., runs out of memory).% count of programs aborted because of TO. 
%The symbol ``-'' indicates no mean time computed since no program successfully terminates at last. 
 } 
 \label{tab:exp:comp}
 {\tablefont
    \begin{tabular}{@{\extracolsep{\fill}}lllrrrrrr}  %{@{}llrrrrrr@{}} 
       \topline
        Tasks & CPU-limit & Program & \multicolumn{2}{c}{Fail} & \multicolumn{2}{c}{UNSAT} & \multicolumn{2}{c}{Success} \\
        %\cmidrule(lr){3-4} \cmidrule(lr){5-6} \cmidrule(lr){7-8}
         & & & Cnt(TO) & Cnt(OOM) & Cnt & Time(s) & Cnt & time(s)
        \midline %\midrule
        %\addlinespace
        \multirow{2}{*}{Med\_100\_1\_6} & \multirow{2}{*}{600s} & ILASP4 & 0 & 0 & 3 & 23.18485 & 97 & 3.37363 \\
         & & ILSMmin & 0 & 0 & 3 & 0.00005 & 97 & 0.00728 \\
        \multirow{2}{*}{Ara\_100\_0\_15} & \multirow{2}{*}{600s} & ILASP4 & 62 & 33 & 1 & 0.68667 & 4 & 3.62628 \\
         & & ILSMmin & 0 & 0 & 3 & 0.00004 & 97 & 0.06177 \\
        %\addlinespace
        \multirow{2}{*}{Tce\_240\_22\_40} & \multirow{2}{*}{180s} & ILASP4 & 240 & 0 & 0 & - & 0 & - \\
         & & ILSMmin & 0 & 0 & 0 & - & 240 & 0.01538 
       \botline
    \end{tabular}
 }
\end{table}

For the first induction task set $Med\_100\_1\_6$, both {\ILSMmin} and {\ILASP}4 solved all induction tasks, but {\ILSMmin} is much faster than {\ILASP}4 for both ``Success'' and ``UNSAT''.
For the second dataset $Ara\_100\_0\_15$, {\ILSMmin} solved all $100$ induction tasks, but {\ILASP}4 solved only $5$ induction tasks.
{\ILSMmin} is also much faster than {\ILASP}4 for both ``Success'' and ``UNSAT''.
Additinally, {\ILSMmin} cost less memory than {\ILASP}4, as {\ILASP}4 is aborted due to OOM when solving the $33$ induction tasks which are successfully solved by {\ILSMmin}.
For the third dataset $Tce\_240\_22\_40$, {\ILSMmin} quickly solved all $240$ induction tasks, but {\ILASP}4 solved none of them. 

As we know, the major cost of a ILP solver takes is from the exclusion of those logic programs that cannot be used as final solutions.
%those tasks that are unsolvable. 
Our optimisation strategies helped in discarding unsolvable cases in an early stage.
As a result, the solution space is significantly reduced and our system has fewer memory issues than {\ILASP}4 .

The following example %Example~\ref{exam:comp:minimal:short} 
illustrates some subtle difference between the two solvers {\ILSMmin} and {\ILASP}4.% practical details.

\begin{example}\label{exam:comp:minimal:short}
    Induction task 11 in the set $Med\_100\_1\_6$ is $\tuple{B, E^+, E^-}$ where $B = \{ a \lto . , d \lto b,\Not c. , f \lto d,a. \}$, $E^+ = \{ \{f, b, a, c, e\} \}$ and $E^- = \{ \{f, d, e\}, \{f, b, d, a, c, e\}, \emptyset \}$. %Algorithm 
    {\ILSMmin} took $0.00491$ seconds to induce the minimal solution $H = \{ f \lto . , e \lto . , c \lto . , b \lto . \}$, while %. Algorithm 
    {\ILASP}4 took $2.26562$ seconds for the same induction task and got the same solution. %$H$. 

    Induction task 13 in the set $Med\_100\_1\_6$ is $\tuple{B, E^+, E^-}$ where $B = \{ f \lto d,a. , c \lto b,\Not d. , e \lto b,d. \}$, $E^+ = \{ \{f, b, a, d, e\} \}$ and $E^- = \{ \{f, c, d\}, \{a\}, \emptyset, \{f\} \}$. %Algorithm 
    {\ILSMmin} took $0.00521$ seconds to induce the minimal solution $H_1 = \{ a \lto . , d \lto . , b \lto a. \}$. %Algorithm 
    {\ILASP}4 took $1.56451$ seconds to induce the shortest solution $H_2 = \{ d \lto . , b \lto . , a \lto . \}$. It can be checked that $H_2$ is also in the $\Omega$. 
\end{example}

To obtain a more reliable trend in the data, we conducted multiple runs to count %compute
the average time after removing two extreme data points. 
For any induction task $T = \tuple{B, E^+, E^-}$ in induction task set $Tce\_240\_22\_40$, we can observe from the synthetic requirements that 
\begin{itemize}
    \item $E^+ = \emptyset$ or $E^+ = \mathit{SM}(P_{Tce})$, and 
    \item $\vert \{ \tuple{B_1, E^+, E_1^-} \in Tce\_240\_22\_40 \mid \vert B_1 \vert = \vert B \vert, \vert E_1^- \vert = \vert E^- \vert \} \vert = 10$.
\end{itemize}
The experimental results are divided into two groups according to $E^+$. 
For a set of execution times corresponding to $10$ induction tasks of the same scale in each group, the execution time is averaged over the remaining $8$ values after removing the highest and lowest values. 
Based on these $24$ average times, the three-dimensional subgraphs are plotted in Figure~\ref{fig:ILSMmin}. 
As can be seen from the figure, the CPU % execution 
time tends to increase with the increase of $\vert B \vert$  and $\vert E^- \vert$ when $E^+$ is fixed. 
%By comparing the two subfigures, it can be observed that the execution time tends to increase with the increase of $\vert E+ \vert$  when the scales of $B$ and $E^-$ are fixed. 

\begin{figure}[htbp]
    \centering
    \begin{subfigure}[b]{0.40\textwidth}
        \includegraphics[width=\linewidth]{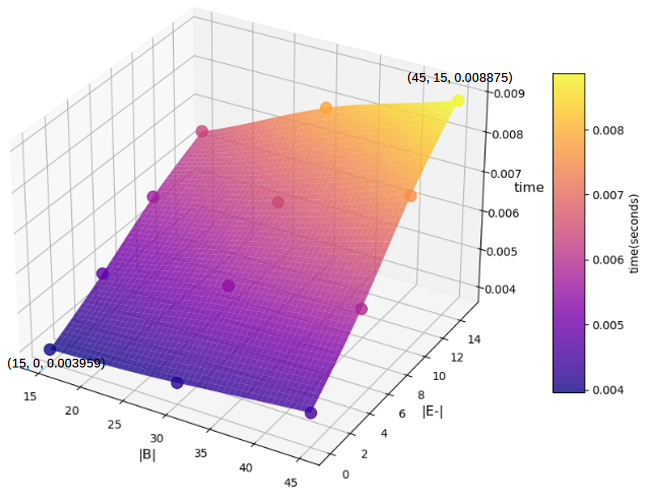}
        \caption{$E^+ = \emptyset$}
        %\label{fig:sub1}
    \end{subfigure}
    %\hfill % 填充水平间距
    \hspace{10pt}
    \begin{subfigure}[b]{0.40\textwidth}
        \includegraphics[width=\linewidth]{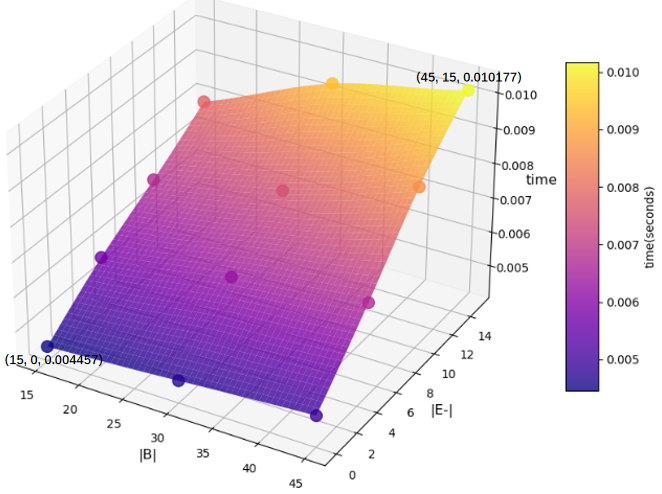}
        \caption{$E^+ = \mathit{SM}(P_{Tce})$}
        %\label{fig:sub2}
    \end{subfigure}
    \caption{Runtime of algorithm ILSMmin solving induction tasks with different scales. 
    %The subfigure (a) corresponds to $E^+ = \emptyset$, while the subfigure (b) corresponds to  $E^+ = \mathit{SM}(P_{Tce})$.
    The coordinates of the highest and lowest points are labeled in both figures. 
    The $\vert E$-$\vert$-axis indicates the scale of the negative examples.}
    \label{fig:ILSMmin}
\end{figure}

% From the above analysis in this section, it is clear that algorithm {\ILSMmin} performs faster than algorithm $ILASP$ when solving the small-scale test tasks. 
% The performance difference becomes more pronounced when solving large-scale test tasks. 
% This performance advantage in both time and memory arises from the theoretical results presented in this paper. 
% When given an unsolvable task, our inductive approach can leverage the corresponding necessary and sufficient condition for the existence of a solution to avoid a futile wide search in advance. 
% For tasks where a solution is theoretically guaranteed, the search scope of our approach is limited by the definitions and properties related to the solution space, which is smaller than the general search space used by ILASP systems. 

\section{Related Work}\label{sec:related:work}
% The majority of our work is to inductively learn poss-NLP from poss-stable models. So we review some related work from two perspectives, viz., induction task under stable model semantics, and induction of possibilistic theories. 
%\blue{
In this section we discuss some related work on induction under stable models for ordinary logic programs (Subsection~\ref{subsec:related1}) and induction for possibilistic logic programs (Subsection~\ref{subsec:related2}). 
%}
%\subsection{Induction under stable model semantics} \lable{subsec:related1}
%\blue{
\subsection{Induction under stable model semantics} \label{subsec:related1}
%}
An approach to induction in answer set programming is introduced by~\citet{sakama2005induction}.
%defined induction from answer sets in nonmonotonic logic programs. 
Their approach aims to find a rule such that an example can flip its entailment under skeptical reasoning. Specifically, given an extended logic program $B$ as background and a ground literal $L$ as an example, algorithm $IAS^{pos}$ (resp. $IAS^{neg}$) construct a rule $R$ such that $P \cup \{ R \} \models L$ (resp. $P \cup \{ R \} \not\models L$) when $P \not\models L$ (resp. $P \models L$). Here, $P \models L$ means $L \in A$ holds for each $A \in \mathit{SM}(P)$. This cautious induction regards the observation $L$ as a consequence rather than an interpretation under stable model semantics. Algorithms $IAS^{pos}$ and $IAS^{neg}$ are bottom-up as the construction starts with a specific hypothesis and then gradually generalises it. 
Later, \citet{sakama2009brave} also proposed an approach to brave induction, in which a hypothesis $H$ covers a set $O$ of ground literals under an extended disjunctive program $B$ if there exists $A \in \mathit{SM}(B \cup H)$ such that $O \subseteq A$. The bottom-up algorithm $BRAIN^{not}$ was designed to solve such an induction task. Both cautious induction and brave induction seek a rule to meet the covering requirement for a single observation. The basic idea of these two algorithms is to construct a rule based on some necessary and sufficient conditions for induction before it is generalized. The brave induction and the cautious induction are defined in two separate frameworks.

Thus, {\ILASP} (Inductive Learning of Answer Set Programs) aims to integrate both brave induction and cautious induction in the same framework~\citep{law2020ilasp}. Their induction task $T = \tuple{B, S_M, E^+, E^-}$ comprises of two ASP programs $B$ and $S_M$ and two sets $E^+$ and $E^-$ of partial interpretations. 
%\hbl{
The search space $S_M$ allows to flexibly declare some language biases such as mode declarations and syntactic choices (i.~e. normal rules, disjunctive rules, choice rules, hard constraints, weak constraints, and other common syntax used in answer set programs). 
After peeling away these syntactic sugar coats, the goal of their induction framework is to find an answer set program $H \subseteq S_M$ such that 
\begin{itemize}
    \item for each $e^+ \in E^+$, there exists $I \in \mathit{SM}(B \cup H)$ such that $ I \propto e^+$ and,
    \item for each $e^- \in E^-$, there exists no $I \in \mathit{SM}(B \cup H)$ such that $ I \propto e^-$. 
\end{itemize} 
A meta-level approach is used in {\ILASP}1~\citep{ILASP} by invoking Clingo to search for the shortest hypothesis and to accelerates the search by starting from a shorter path. 
In a recent work~\citep{law2020ilasp} on {\ILASP} systems, the form of the examples is incrementally extended, such as ordered examples in {\ILASP}2,  context dependent examples in {\ILASP}2i, and noisy examples in {\ILASP}3. 
On the other hand, the performance of algorithm {\ILASP}4~\citep{DBLP:journals/tplp/Law23} peaks in these subsequent {\ILASP} systems, as {\ILASP}4 significantly reduces the scale of the search space in each iteration. 
% $ILASP1$ uses a meta-level approach to search for a shortest hypothesis from positive solutions constrained by violating solutions, and accelerates the search by starting the search from  a smaller length. 
% In the subsequent studies~\citep{law2020ilasp} of ILASP systems, the form and approach of the induction systems were extended incrementally. 
% $ILASP2$ learns weak constraints from ordered examples and improves efficiency by eliminating violating reasons rather than single violating hypotheses. 
% $ILASP2i$ solves context-dependent induction tasks from context dependent examples and performs significantly better than $ILASP2$ since $ILASP2i$ iteratively considers the contexts of relevant examples rather than the full set. 
% $ILASP3$ performs much better than $ILASP2i$ on tasks with noisy examples, as it translates an example into a set of coverage constraints. 
% $ILASP4$~\citep{DBLP:journals/tplp/Law23} is often significantly faster than $ILASP3$ since it significantly reduces the amount of constraints constructed in each iteration. 
% These ILASP systems are widely applied for commercial collaborations\footnote{https://www.ilasp.com/} and academic research~\citep{chiariello2024ilasp}.
%}
%\hbl{

% The {\ILASP} systems are introduced for normal logic programs under stable model semantics. Our approach proposed for possibilistic logic programs can be seen as a generalisation of {\ILASP}. 
% Therefore, a LSM induction task can be viewed as a subtask cross-shared between their induction tasks and ours. 

The comparison experiments described in the last section show that our algorithm performs better than {\ILASP} in terms of both time and memory \hbl{on the task of inducing ground NLPs from stable models}. The performance improvement benefits from the theoretical results investigated in this paper including the characterisations of induction solutions.

While ILP systems like {\ILASP} algorithms generally perform an exhaustive search to discover the best hypothesis, 
a heuristic-based algorithm {\XFOLD} is proposed by
\citet{DBLP:conf/ilp/ShakerinG18} to inductively learn normal logic programs under stable models. {\XFOLD} iteratively employs a refinement operator to specialize constructed rules starting from a most general rule whose body is empty until the score of these rules is acceptable. Information gain has been chosen as its heuristic score to maximize the coverage of positive examples. The greedy approach adopted in {\XFOLD} makes the algorithm more efficient and noise-resilient. 

\hbl{
%Similar to the possibilistic logic program, the probabilistic logic program based on statistical analysis is also widely used to represent uncertain information. Furthermore, 
%The probabilistic inductive logic programming~\citep{proba/ilp/RaedtK08} %(also known as statistical relational learning) also 
%inductively learns a probabilistic logic program knowledge base with uncertainties, % In contrast to the induction task in this paper, their corresponding 
%
\cite{proba/ilp/RaedtK08} proposed an induction tasks to find out a probabilistic logic program $H^*$ such that the likelihood of the conditional probability $Pr(E \mid H^*, B)$ is maximized, where $E$ is a set of examples, $B$ is a probabilistic logic program acting as the background theory. The examples may be definite clauses, Herbrand interpretations or proof-trees depending on the notion of ``cover''.
Similarly, \cite{DBLP:journals/corr/NicklesM14} proposed to learn a probabilistic answer set program based on the above maximum likelihood.
\cite{DBLP:conf/kr/LeeW18} proposed a weight learning task for a parameterized
$\textsc{lp}^{\textsc{mln}}$ program $P$ to find the maximum likelihood
estimation of the non-$\alpha$ weights of $P$, whose observations (or training data) are conjunction of literals. All these learning can be classified as parameters learning and thus different from our induction task in this paper.
%and the background theory $B^*$.
%For an example of induction under stable model semantics, \cite{DBLP:journals/corr/NicklesM14} proposed such an induction task based on answer set programming. In this task, a set $H$ of formulas without weights is already known. The goal is to find out the probabilities of these formulas such that $Pr(E \mid H^*, B^*)$ is maximized when given $E$ and $B^*$. So this induction task is also described as parameter estimation. 
}

\subsection{Induction of possibilistic theories} \label{subsec:related2}
The approaches discussed in the last subsection are proposed for ordinary logic programs.
However, there are few works on induction in possibilistic logics and possibilistic logic programs.
A framework for induction in possibilistic logic is proposed by ~\citet{serrurier2007introducing}, which is to use possibilities to handle exceptions in classification tasks.
Given a possibilistic logic theory $B$ as the background and a set $E$ of possibilistic ground facts as the examples, the induction task is to obtain a hypothesis, a possibilistic logic theory $H$, such that $B \cup H \models_{\Pi} E$, here $\models_{\Pi}$ is the entailment relation in possibilistic logic. 
A possibilistic ground fact $(C(x, y),\alpha)$ means that the object $x$ is assigned to the class $y$ with the certainty $\alpha$. For a possibilistic formula $(\phi, \alpha)$ and a possibilistic logic theory $K$, $K \models_{\Pi} (\phi, \alpha)$ if and only if there exists $K' \subseteq K_{\alpha}$ and $K' \not\models \bot$ such that $K'$ is minimal w.r.t. $\phi$ and there exists no $K''$ minimal w.r.t. $\bot$ such that $K' \subset K'' \subseteq K_{\alpha}$ where $K_{\alpha} = \{ \phi \mid (\phi,\beta) \in K, \beta \geq \alpha \}$. 
A logic theory $T$ is minimal w.r.t. a formula $\phi$ if and only $T \models \phi$ and $T - \{ \psi \} \not\models \phi$ for each $\psi \in T$. 
The corresponding induction algorithm {\PossILP} for possibilistic logic extends the simulated annealing algorithm, but it is designed only for classification. 
The induction problem addressed in {\PossILP} is different from ours. In their approach, only positive examples are provided and the condition for covering (positive) examples is also different. 

A transformation between possibilistic logic theories and Markov logic networks (MLNs) is proposed in ~\citep{DBLP:conf/uai/KuzelkaDS15}. This implies that the problem of learning a possibilistic logic theory is reduced to that of learning a Markov logic network. 
Based on this idea, a statistical relational induction approach~\citep{DBLP:conf/ijcai/KuzelkaDS17} is introduced to induce possibilistic logic theories from relational datasets like the UWCSE dataset, which outlines relationships within the CS department at the University of Washington among students, professors, papers, subjects, terms, and projects. This method first learns Horn rules using beam search and then a greedy search assigns necessity degrees to the rules. The induction process is based on the notion of relational marginal distribution, viewed as a possibility distribution~\citep{dubois1998possibility} since possibilistic logic can encode probability distributions as well~\citep{DBLP:conf/ecai/KuzelkaDS16}. 
These approaches adopt a quantitative perspective by employing product operations on possibilities akin to statistics. In contrast, our study takes a qualitative approach, where the set ${\cal Q}$ of necessities acts as a bounded finite linear ordinal scale. For a comprehensive discussion on the distinctions between quantitative and qualitative possibility theory, refer to~\citep{DUBOIS2023109028}.

In order to develop an explainable and efficient approach to induction for possibilistic rules, 
the RIDDLE (Rule Induction with Deep Learning) algorithm uses artificial neural networks (ANNs) to extract possibilistic rules~\citep{DBLP:conf/nldl/Persia023}, which consists of two phases (ANN training and possibilistic rule extraction). It is shown that the trained ANN and the extracted possibilistic logic program are equivalent in terms of classification problems.  

The approaches discussed in this subsection are proposed for possibilistic logic under classic logic semantics, instead of possibilistic logic programs under nonmonotonic semantics like stable models. Therefore, to our best knowledge, the approach proposed in this paper is the first attempt to establish an induction framework for possibilistic logic programs under nonmonotonic semantics. While {\ILASP} is a special case of our framework, it is unclear how their algorithms can be directly extended to the inductive learning for possibilistic programs under stable models. Moreover, the theoretical results in this paper further convince that the problem of computing induction solution for possibilistic programs under stable models is challenging and novel techniques are needed to tackle the problem.

\section{Concluding Remarks}\label{sec:conclusion}
In this paper, we have proposed a framework for inductive reasoning in possibilistic logic programs under stable model semantics, which is based on an extension of induction tasks for ordinary logic theories (and logic programs). We have investigated formal properties of the framework, including several important characterizations for solutions of induction tasks and construction methods for possibilistic logic programs that satisfy certain conditions. Based on these results, we have proposed three algorithms for the existence of induction solutions, computing solutions and minimal solutions. In our algorithms, to narrow the search space for possibilistic rules, we introduced the notions of positive solution space and negative solution space. These notions and their construction methods significantly contribute to the efficiency of our algorithms and novelty of the whole work in this paper. We have shown the correctness of these algorithms, which are mathematically involving. Based on the algorithm {\ILPSMmin}, we have also implemented a prototype system for computing minimal solutions of induction tasks for possibilistic logic programs under stable models. The experimental results show that our prototype outperforms {\ILASP} \hbl{when solving induction tasks of learning ordinary NLPs from stable models}.

There are several interesting directions for future work.
First, our approach could be extended to possibilistic logic programs that allow more general language biases, such as constraints and disjunction.
Second, our approach could be adapted to some other semantics for possibilistic logic such as~\citep{BAUTERS2014739}.
In addition, while the problem of finding solutions for a given induction task for poss-programs is hard (at least NP-complete), it is still possible to significantly lift the scalability of our system by applying latest technologies in deep learning, which is another interesting research topic in future. 
\hbl{
Besides, we only implemented one algorithm inducing NLP from table models and conducted the corresponding experiments. Other efficient implementations and practical applications of algorithms, in particular Algorithm~3, deserve our further efforts.
%based on our theoretical study to solve the other tasks such as induction from partial stable models can be further studied. 
}

\section*{Competing Interests}

The authors declare none.

\bibliographystyle{tlplike}
\bibliography{LFI}

\newpage
\begin{appendices}

\section*{Appendix: Proofs}	
\setcounter{lemmaA}{0}
\setcounter{corollaryA}{0}
\setcounter{propositionA}{2}
\setcounter{theoremA}{1} 
\setcounter{section}{2}

%\section{Section title of first appendix}\label{secA1}
%
%An appendix contains supplementary information that is not an essential part of the text itself but which may be helpful in providing a more comprehensive understanding of the research problem or it is information that is too cumbersome to be included in the body of the paper.

%%=============================================%%
%% For submissions to Nature Portfolio Journals %%
%% please use the heading ``Extended Data''.   %%
%%=============================================%%

%%=============================================================%%
%% Sample for another appendix section			       %%
%%=============================================================%%

%% \section{Example of another appendix section}\label{secA2}%
%% Appendices may be used for helpful, supporting or essential material that would otherwise 
%% clutter, break up or be distracting to the text. Appendices can consist of sections, figures, 
%% tables and equations etc.

\begin{propositionA}[Equivalent consequence]
    For a given poss-NLP $\overline{P}$ and a possibilistic interpretation $\overline{I}$, ${\cal T}_{\overline{P}^I}(\overline{I}) = {\cal T}_{\overline{P}}(\overline{I})$.  
\end{propositionA}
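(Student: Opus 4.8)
The plan is to unfold both operators to the level of individual rules and exhibit a weight-preserving bijection between the applicable rules counted on each side. First I would fix an arbitrary atom $q \in {\cal A}$ and compare the sets of $\beta$-values that contribute to the head $q$ in $\mathit{App}(\overline{P}, \overline{I}, q)$ (computed with the poss-normal applicability of Definition~\ref{def:applicable}) and in $\mathit{App}(\overline{P}^I, \overline{I}, q)$ (computed with the definite applicability). Since both ${\cal T}_{\overline{P}}(\overline{I})$ and ${\cal T}_{\overline{P}^I}(\overline{I})$ are obtained by taking, for each $q$, the maximum of these $\beta$-values whenever the corresponding $\mathit{App}$ set is nonempty, it suffices to show that these two sets of $\beta$-values coincide for every $q$.

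The key step is a rule-by-rule correspondence. Consider the map sending a rule $\overline{r} = (r, \alpha) \in \overline{P}$ to the definite rule $\rho(\overline{r}) = (\Head(r) \lto \Pos(r), \alpha)$. By the definition of the possibilistic reduct, $\rho(\overline{r}) \in \overline{P}^I$ if and only if $\Neg(r) \cap I = \emptyset$, and in that case $\rho$ preserves the head and leaves $\Pos(r)$ and the weight $\alpha$ untouched. Now I would invoke Definition~\ref{def:applicable}: $\overline{r}$ is $\beta$-applicable in $\overline{I}$ exactly when $\Neg(r) \cap I = \emptyset$ \emph{and} the definite rule $(\Head(r) \lto \Pos(r), \alpha)$ is $\beta$-applicable in $\overline{I}$. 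Because the definite-applicability clause inside Definition~\ref{def:applicable} is literally the notion of $\beta$-applicability used for $\rho(\overline{r})$, the value $\beta = \min\{\alpha, \alpha_1, \dots, \alpha_m\}$ (with $(p_i,\alpha_i) \in \overline{I}$) is identical on both sides. Hence $\overline{r}$ is $\beta$-applicable as a poss-normal rule in $\overline{P}$ if and only if $\rho(\overline{r}) \in \overline{P}^I$ and $\rho(\overline{r})$ is $\beta$-applicable as a definite rule in $\overline{I}$, \emph{with the same} $\beta$ and the same head.

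Applying this correspondence headwise, I would conclude that $\rho$ restricts to a bijection between $\mathit{App}(\overline{P}, \overline{I}, q)$ and $\mathit{App}(\overline{P}^I, \overline{I}, q)$ that preserves $\beta$-values for each $q$. Consequently, one set is empty iff the other is, and when nonempty their $\beta$-maxima agree, so the pair $(q, \delta)$ produced for $q$ is the same in both operators. Taking the union over all $q$ yields ${\cal T}_{\overline{P}}(\overline{I}) = {\cal T}_{\overline{P}^I}(\overline{I})$.

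The only real obstacle I anticipate is bookkeeping: making sure the negative-body filter $\Neg(r) \cap I = \emptyset$ of the reduct (which uses the classical projection $I$) is matched precisely with the negative-literal clause of Definition~\ref{def:applicable} (which also uses $I$), and that the two distinct notions of $\beta$-applicability—the definite one underlying the reduct's operator and the normal one of Definition~\ref{def:applicable}—produce numerically the same $\beta$. Both reduce to the observation that Definition~\ref{def:applicable} is defined by layering the negative-body check directly on top of definite-rule applicability, so no genuine computation is needed beyond carefully aligning the definitions.
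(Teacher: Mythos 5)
Your proof is correct and takes essentially the same approach as the paper's, whose own proof is just a terse chain of equivalences unfolding Definition~\ref{def:Tp} and invoking Definition~\ref{def:applicable} to identify the applicable rules and their $\beta$-values in $\overline{P}$ with those in $\overline{P}^I$ --- precisely the correspondence your map $\rho$ makes explicit. One minor overstatement: $\rho$ need not be injective (two rules of $\overline{P}$ with the same head, positive body and weight but different negative bodies, each disjoint from $I$, collapse to a single reduct rule), but this is harmless, since your conclusion only needs that $\rho$ sends applicable rules to applicable rules with the same $\beta$ and is surjective onto $\mathit{App}(\overline{P}^I,\overline{I},q)$, which already guarantees that the two sets of $\beta$-values, and hence their maxima, coincide.
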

\begin{proof}
    $(q,\delta) \in {\cal T}_{\overline{P}}(\overline{I})$. \\
    $\LRto$ By Definition~\ref{def:Tp}, (\romannumeral 1)$q=\Head(r) \mbox{ for some } \overline{r} \in \mathit{App}(\overline{P},\overline{I},q)$, and (\romannumeral 2)$\delta=\max\{\beta\mid \overline{r'} \in \mathit{App}(\overline{P},\overline{I},q)$ is $\beta$-applicable in $\overline{I}\}$. \\
    $\LRto$ By Definition~\ref{def:applicable}, (\romannumeral 1)$q=\Head(r) \mbox{ for some } \overline{r} \in \mathit{App}(\overline{P}^I,\overline{I},q)$, and (\romannumeral 2)$\delta=\max\{\beta\mid \overline{r'} \in \mathit{App}(\overline{P}^I,\overline{I},q)$ is $\beta$-applicable in $\overline{I}$. \\
    $\LRto$ By Definition~\ref{def:Tp}, $(q,\delta) \in {\cal T}_{\overline{P}^I}(\overline{I})$.
\end{proof}

\begin{corollaryA}[Absorption for a poss-stable model]
    Given a possibilistic interpretation $\overline{I}$ and two poss-NLPs $\overline{P}$ and $\overline{B}$, $\overline{I} \in \mathit{PSM}(\overline{P} \sqcup \overline{B})$ if $\overline{I} \in \mathit{PSM}(\overline{P})$ and ${\cal T}_{\overline{B}}(\overline{I}) \sqsubseteq \overline{I}$.
\end{corollaryA}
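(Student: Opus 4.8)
The plan is to prove the single equality $\mathit{lfp}({\cal T}_{(\overline{P}\sqcup\overline{B})^I})=\overline{I}$, since by definition this is exactly the statement $\overline{I}=\mathit{Cn}((\overline{P}\sqcup\overline{B})^I)$, i.e.\ $\overline{I}\in\mathit{PSM}(\overline{P}\sqcup\overline{B})$. Writing $\overline{Q}=\overline{P}\sqcup\overline{B}$, I would establish the two inclusions $\mathit{lfp}({\cal T}_{\overline{Q}^I})\sqsubseteq\overline{I}$ and $\overline{I}\sqsubseteq\mathit{lfp}({\cal T}_{\overline{Q}^I})$ separately, working throughout with the Kleene iteration $\mathit{lfp}({\cal T})=\bigsqcup_{n\ge0}{\cal T}^n(\emptyset)$ and the (routine) monotonicity of ${\cal T}$ in its interpretation argument. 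First I would record the two facts I get from the hypotheses. Because $\overline{I}\in\mathit{PSM}(\overline{P})$, the interpretation $\overline{I}=\mathit{lfp}({\cal T}_{\overline{P}^I})$ is a fixpoint of ${\cal T}_{\overline{P}^I}$, so ${\cal T}_{\overline{P}^I}(\overline{I})=\overline{I}$; by Proposition~\ref{prop:eq:con} this yields ${\cal T}_{\overline{P}}(\overline{I})=\overline{I}$. The second fact, ${\cal T}_{\overline{B}}(\overline{I})\sqsubseteq\overline{I}$, is a hypothesis.

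For the upper bound I would prove the \emph{pre-fixpoint} property ${\cal T}_{\overline{Q}^I}(\overline{I})\sqsubseteq\overline{I}$. By Proposition~\ref{prop:eq:con} again, ${\cal T}_{\overline{Q}^I}(\overline{I})={\cal T}_{\overline{Q}}(\overline{I})$, so it suffices to bound ${\cal T}_{\overline{P}\sqcup\overline{B}}(\overline{I})$ rule by rule. Fix an atom $q$ receiving a value: every rule with head $q$ that is $\beta$-applicable in $\overline{I}$ comes from $\overline{P}$ only, from $\overline{B}$ only, or from both. In the first two cases its weight in $\overline{Q}$ is unchanged, so its $\beta$-value is bounded by the value it contributes to ${\cal T}_{\overline{P}}(\overline{I})=\overline{I}$ or to ${\cal T}_{\overline{B}}(\overline{I})\sqsubseteq\overline{I}$, hence by $\overline{I}$'s weight at $q$ (and forces $q\in I$). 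In the shared case the weight becomes $\max\{\alpha,\gamma\}$, and the identity $\min\{\max\{\alpha,\gamma\},m\}=\max\{\min\{\alpha,m\},\min\{\gamma,m\}\}$ (distributivity of $\min$ over $\max$ in the totally ordered $({\cal Q},\le)$, with $m$ the body minimum) shows the combined $\beta$-value equals the maximum of the two individual contributions, each already $\le$ the weight of $q$ in $\overline{I}$. Taking the maximum over all such rules gives ${\cal T}_{\overline{Q}}(\overline{I})\sqsubseteq\overline{I}$. Then a short induction, $\emptyset\sqsubseteq\overline{I}$ and ${\cal T}_{\overline{Q}^I}^{\,n+1}(\emptyset)\sqsubseteq{\cal T}_{\overline{Q}^I}(\overline{I})\sqsubseteq\overline{I}$, delivers $\mathit{lfp}({\cal T}_{\overline{Q}^I})\sqsubseteq\overline{I}$.

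For the lower bound I would use that $\overline{P}^I\sqsubseteq\overline{Q}^I$ as programs: the reduct of $\overline{P}\sqcup\overline{B}$ retains every surviving rule of $\overline{P}$ with at least as large a weight, whence ${\cal T}_{\overline{P}^I}(\overline{A})\sqsubseteq{\cal T}_{\overline{Q}^I}(\overline{A})$ for all $\overline{A}$. A parallel induction on the Kleene iterates gives ${\cal T}_{\overline{Q}^I}^{\,n}(\emptyset)\sqsupseteq{\cal T}_{\overline{P}^I}^{\,n}(\emptyset)$, and passing to the join yields $\mathit{lfp}({\cal T}_{\overline{Q}^I})\sqsupseteq\mathit{lfp}({\cal T}_{\overline{P}^I})=\overline{I}$. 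Combining the two inclusions gives $\mathit{lfp}({\cal T}_{\overline{Q}^I})=\overline{I}$, that is $\overline{I}=\mathit{Cn}(\overline{Q}^I)$, which is precisely $\overline{I}\in\mathit{PSM}(\overline{P}\sqcup\overline{B})$.

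The step I expect to be the main obstacle is the shared-rule case of the upper bound: making precise the interaction between the $\sqcup$-operation on programs, which takes the \emph{maximum} of weights, and the $\min$-based $\beta$-applicability, and confirming that the $\min$-over-$\max$ distributivity genuinely holds in $({\cal Q},\le)$ and survives the outer maxima taken over many rules sharing a head. A secondary care point is the reduct comparison $\overline{P}^I\sqsubseteq(\overline{P}\sqcup\overline{B})^I$ together with the handling of distinct normal rules that collapse to the same definite rule under the reduct; I would phrase each monotonicity argument so that it depends only on the one-sided containment needed for that inclusion, rather than on the reduct commuting exactly with $\sqcup$.
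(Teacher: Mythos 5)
Your proof is correct, and its skeleton --- reduce the claim to $\overline{I}=\mathit{lfp}({\cal T}_{(\overline{P}\sqcup\overline{B})^I})$, use Proposition~\ref{prop:eq:con} to pass between the reduct operator and the full operator, and decompose the operator of $\overline{P}\sqcup\overline{B}$ into the contributions of $\overline{P}$ and $\overline{B}$ --- is the same as the paper's. Where you diverge is in how least-fixpoint-hood is certified. The paper shows that $\overline{I}$ is an \emph{exact} fixpoint of ${\cal T}_{\overline{B}^I\sqcup\overline{P}^I}$ (absorbing ${\cal T}_{\overline{B}}(\overline{I})\sqsubseteq\overline{I}$ into ${\cal T}_{\overline{P}^I}(\overline{I})=\overline{I}$) and then transfers the property ``no $\overline{J}\sqsubset\overline{I}$ is a fixpoint'' from ${\cal T}_{\overline{P}^I}$ to the combined operator; that transfer is asserted in one line, and making it rigorous requires the Knaster--Tarski least-pre-fixpoint fact, since a fixpoint of the larger operator is only a \emph{pre}-fixpoint of ${\cal T}_{\overline{P}^I}$. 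You instead prove two inclusions separately: the upper bound from the pre-fixpoint property ${\cal T}_{(\overline{P}\sqcup\overline{B})^I}(\overline{I})\sqsubseteq\overline{I}$ together with induction on the Kleene iterates, and the lower bound by comparing the iterates of ${\cal T}_{\overline{P}^I}$ and ${\cal T}_{(\overline{P}\sqcup\overline{B})^I}$ via the program containment $\overline{P}^I\sqsubseteq(\overline{P}\sqcup\overline{B})^I$, then concluding by antisymmetry of $\sqsubseteq$ on interpretations. Your route is longer but self-contained: it makes explicit the $\min$/$\max$ distributivity behind ${\cal T}_{\overline{P}\sqcup\overline{B}}(\overline{I})={\cal T}_{\overline{P}}(\overline{I})\sqcup{\cal T}_{\overline{B}}(\overline{I})$ (which the paper uses silently), it is robust to distinct normal rules collapsing to the same definite rule under the reduct (a point you rightly flag), and it avoids the minimality-transfer step entirely, thereby closing the one gap that the paper's terser argument leaves implicit.
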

\begin{proof}
    $\overline{I} \in \mathit{PSM}(\overline{P})$.\\
    $\Rto$ $\overline{I} = \mathit{Cn}(\overline{P}^I)$.\\
    $\Rto$ $\mathit{lfp}({\cal T}_{\overline{P}^I}) = \overline{I}$.\\
    $\Rto$ ${\cal T}_{\overline{P}^I}(\overline{I}) = \overline{I}$ and $\forall \overline{J} \sqsubset \overline{I}, {\cal T}_{\overline{P}^I}(\overline{J}) \neq \overline{J}$. \\
    $\Rto$ ${\cal T}_{\overline{B}^I \sqcup \overline{P}^I}(\overline{I}) = \overline{I}$ and $\forall \overline{J} \sqsubset \overline{I}, {\cal T}_{\overline{B}^I \sqcup \overline{P}^I}(\overline{J}) \neq \overline{J}$ if ${\cal T}_{\overline{B}}(\overline{I}) \sqsubseteq \overline{I}$ by Proposition~\ref{prop:eq:con}. \\
    $\Rto$ $\overline{I} = \mathit{lfp}({\cal T}_{(\overline{B} \sqcup \overline{P})^I})$. \\
    $\Rto$ $\overline{I} \in \mathit{PSM}(\overline{B} \sqcup \overline{P})$.
\end{proof}

% \begin{lemmaA}[incomparability between stable models]	
%     Given two different interpretations $I$ and $J$ such that $I \parallel_{\subseteq} J$, $\{ I, J \} \not\subseteq \mathit{SM}(P)$ where $P$ is a NLP.
% \end{lemmaA}
% \begin{proof}
%     Assume $\{ I, J \} \subseteq \mathit{SM}(P)$. Then we have $I \in \mathit{SM}(P)$ and $J \in \mathit{SM}(P)$. Let $I \subset J$ when $I \neq J$ and $I \parallel_{\subseteq} J$. \\
%     $\Rto$ $P^{I} \supseteq P^{J}$ by the definition of GL-reduct. \\
%     $\Rto$ $\mathit{Cn}(P^{I}) \supseteq \mathit{Cn}(P^{J})$ by the definition of the consequence of a definite logic program. \\
%     $\Rto$ $I \supseteq J$ since $I \in \mathit{SM}(P)$ and $J \in \mathit{SM}(P)$. \\
%     $\Rto$ Contradict to $I \subset J$. \\
%     $\Rto$ this proposition holds.
% \end{proof}

\setcounter{section}{3}
\setcounter{propositionA}{0}
\setcounter{corollaryA}{0}
\setcounter{theoremA}{0} 

\begin{propositionA}[Incomparability between poss-stable models]
    Given two different possibilistic interpretations $\overline{I}$ and $\overline{J}$ such that $\overline{I} \parallel \overline{J}$, $\{ \overline{I}, \overline{J} \} \not\subseteq \mathit{PSM}(\overline{P})$ for any poss-NLP $\overline{P}$.
\end{propositionA}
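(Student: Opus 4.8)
The plan is to push the statement down to the well-known incomparability of ordinary stable models via the projection correspondence of Proposition~\ref{prop:SM:maps}. I would argue by contradiction: assume that both $\overline{I}$ and $\overline{J}$ lie in $\mathit{PSM}(\overline{P})$, so that by the definition of poss-stable models $\overline{I} = \mathit{Cn}(\overline{P}^I)$ and $\overline{J} = \mathit{Cn}(\overline{P}^J)$. The whole argument then splits on whether the classic projections $I$ and $J$ coincide.

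First I would dispose of the case $I = J$. Here the two possibilistic reducts agree, $\overline{P}^I = \overline{P}^J$, and hence $\overline{I} = \mathit{Cn}(\overline{P}^I) = \mathit{Cn}(\overline{P}^J) = \overline{J}$, contradicting the hypothesis that $\overline{I}$ and $\overline{J}$ are different possibilistic interpretations. This subcase is the one easy to overlook, since comparability (Definition~\ref{def:poss:comparable}) is defined purely on the classic projections and thus permits $I = J$ with differing weights; it is precisely the functional dependence of a poss-stable model on its projection that rules this out.

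Next I would treat $I \neq J$. From $\overline{I} \parallel \overline{J}$ we obtain, without loss of generality, $I \subsetneq J$. By the second clause of Proposition~\ref{prop:SM:maps}, both $I$ and $J$ are stable models of the classic counterpart $P$. But any two stable models of an NLP are $\subseteq$-incomparable, so $I \subsetneq J$ is impossible, giving the desired contradiction. Combining the two cases yields $\{ \overline{I}, \overline{J} \} \not\subseteq \mathit{PSM}(\overline{P})$.

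The main obstacle is not depth but care in the bookkeeping: I must isolate the $I = J$ subcase explicitly rather than assume comparability forces strict containment. For the $I \neq J$ subcase I lean entirely on the classic minimality argument already recalled before the statement (if $I \subseteq J$ then $P^J \subseteq P^I$, whence $J = \mathit{Cn}(P^J) \subseteq \mathit{Cn}(P^I) = I$, forcing $I = J$), so no new machinery beyond Proposition~\ref{prop:SM:maps} is needed.
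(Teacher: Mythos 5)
Your proof is correct and follows essentially the same route as the paper's: argue by contradiction, split on whether the classic projections coincide, dispose of $I = J$ via the fact that the possibilistic reduct (hence $\mathit{Cn}(\overline{P}^I)$) depends only on the projection, and dispose of $I \subsetneq J$ via Proposition~\ref{prop:SM:maps} together with the $\subseteq$-incomparability of ordinary stable models. No gaps; nothing further needed.
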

\begin{proof}
    Assume $\{ \overline{I}, \overline{J} \} \subseteq \mathit{PSM}(\overline{P})$.  Then we have $\overline{I} \in \mathit{PSM}(\overline{P})$ and $\overline{J} \in \mathit{PSM}(\overline{P})$. Suppose $I \subseteq J$ by Definition~\ref{def:Tp}.    
    There are only two cases: $I = J$ or $I \subset J$.

    {\bf Case 1:}$I = J$. \\
    $\Rto$ $\mathit{Cn}(\overline{P}^{I}) = \mathit{Cn}(\overline{P}^{J})$ when $I = J$ by the definition of the possibilistic consequence of a poss-NLP. \\
    $\Rto$ $\overline{I} = \overline{J}$ since $\overline{I} \in \mathit{PSM}(\overline{P})$ and $\overline{J} \in \mathit{PSM}(\overline{P})$.  \\
    $\Rto$ Contradict to $\overline{I} \neq \overline{J}$.
    
    {\bf Case 2:}$I \subset J$. \\
    $\Rto$ $I \subset J$ and $\{ I, J \} \subseteq \mathit{SM}(P)$ since $\{ \overline{I}, \overline{J} \} \subseteq \mathit{PSM}(\overline{P})$ and Proposition~\ref{prop:SM:maps}. \\
    $\Rto$ $I \subset J$ and $I \not\subset J$ since $\mathit{SM}(P)$ is $\subseteq$-incomparable. \\
    $\Rto$ A contradiction.
\end{proof}

\begin{propositionA}[Satisfiability for program $\mathit{PPE}(\overline{S})$]	 
    For a given set $\overline{S}$ of possibilistic interpretations, $\mathit{PSM}(\mathit{PPE}(\overline{S})) = \overline{S}$ if $\overline{S}$ is incomparable. 
    %Otherwise, there is no poss-NLP $\overline{P}$ such that $\mathit{PSM}(\overline{P}) = \overline{S}$.    
\end{propositionA}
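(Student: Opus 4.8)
The plan is to prove the two inclusions $\overline{S} \subseteq \mathit{PSM}(\mathit{PPE}(\overline{S}))$ and $\mathit{PSM}(\mathit{PPE}(\overline{S})) \subseteq \overline{S}$ separately, after first recording two structural observations about $\overline{P} = \mathit{PPE}(\overline{S})$. First, every rule of $\mathit{PPE}(\overline{I})$ has the form $(x \lto \Not({\cal A} - I), \alpha)$ with empty positive body, so under any reduct it collapses to the fact $(x \lto , \alpha)$. Second, two rules arising from distinct $\overline{I}, \overline{J} \in \overline{S}$ can share the same classic projection only if ${\cal A} - I = {\cal A} - J$, i.e.\ $I = J$; but incomparability of $\overline{S}$ forces $I \neq J$ for distinct elements, so no weight-merging occurs in the join $\bigsqcup$ and $\overline{P}$ is simply the disjoint union of the rule sets $\mathit{PPE}(\overline{I})$.

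For the inclusion $\overline{S} \subseteq \mathit{PSM}(\overline{P})$, I would fix $\overline{I} \in \overline{S}$ and compute the reduct $\overline{P}^I$. A rule $(x \lto \Not({\cal A}-J), \alpha)$ coming from $\mathit{PPE}(\overline{J})$ survives the reduct iff $({\cal A}-J) \cap I = \emptyset$, i.e.\ iff $I \subseteq J$. For $\overline{J} = \overline{I}$ this holds, and for $\overline{J} \neq \overline{I}$ incomparability rules it out. Hence $\overline{P}^I$ consists exactly of the facts $\{(x \lto , \alpha) \mid (x,\alpha) \in \overline{I}\}$; since a fact is $\alpha$-applicable in any poss-interpretation, one application of ${\cal T}_{\overline{P}^I}$ to $\emptyset$ already yields $\overline{I}$, which is then a fixpoint. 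Thus $\mathit{Cn}(\overline{P}^I) = \overline{I}$ and $\overline{I} \in \mathit{PSM}(\overline{P})$.

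For the reverse inclusion, let $\overline{M} \in \mathit{PSM}(\overline{P})$, so $\overline{M} = \mathit{Cn}(\overline{P}^M)$. The same reduct analysis shows $\overline{P}^M$ is the set of facts contributed by $\mathcal{J} = \{\overline{J} \in \overline{S} \mid M \subseteq J\}$, whence $M = \bigcup_{\overline{J} \in \mathcal{J}} J$. I first argue $\mathcal{J} \neq \emptyset$ (otherwise $\overline{M} = \emptyset$, forcing $M = \emptyset \subseteq J$ for every element and hence $\mathcal{J} = \overline{S} \neq \emptyset$, a contradiction when $\overline{S} \neq \emptyset$). Then every $\overline{J} \in \mathcal{J}$ satisfies $M \subseteq J \subseteq \bigcup_{\overline{J'} \in \mathcal{J}} J' = M$, so $J = M$ for all of them; by incomparability $\mathcal{J}$ can contain at most one element, so $\mathcal{J} = \{\overline{J_0}\}$ with $J_0 = M$. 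Then $\overline{P}^M$ carries only the facts of $\overline{J_0}$, giving $\overline{M} = \mathit{Cn}(\overline{P}^M) = \overline{J_0} \in \overline{S}$.

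The hard part will be the reverse inclusion, specifically controlling the interaction between the GL-reduct and the join $\bigsqcup$: a priori several interpretations of $\overline{S}$ may simultaneously contribute facts to $\overline{P}^M$, and the step that collapses $\mathcal{J}$ to a single interpretation whose projection is exactly $M$ is where incomparability is indispensable. I would also flag the degenerate case $\overline{S} = \emptyset$, where $\mathit{PPE}(\overline{S}) = \emptyset$ has the single poss-stable model $\emptyset$; the statement is understood for $\overline{S} \neq \emptyset$ (note $\overline{S} = \{\emptyset\}$ causes no trouble, since there too $\mathit{PPE}(\overline{S}) = \emptyset$).
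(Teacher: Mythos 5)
Your proof is correct. The forward inclusion $\overline{S} \subseteq \mathit{PSM}(\mathit{PPE}(\overline{S}))$ is essentially the paper's own argument: incomparability makes every rule contributed by $\overline{J} \neq \overline{I}$ die in the reduct with respect to $I$, so $\mathit{PPE}(\overline{S})^I$ collapses to exactly the facts $\{(x \lto , \alpha) \mid (x,\alpha) \in \overline{I}\}$, whose least fixpoint is $\overline{I}$. Where you genuinely diverge is the reverse inclusion. The paper takes an arbitrary $\overline{K} \notin \overline{S}$ and splits into two cases: if $K$ is $\subseteq$-comparable with some element of $S$, it excludes $\overline{K}$ by combining the already-proved forward inclusion with Proposition~\ref{prop:possSM:incomparable} (two distinct comparable poss-interpretations cannot both be poss-stable models); if $K$ is incomparable with all of $S$, the reduct $\mathit{PPE}(\overline{S})^K$ is empty while $K \neq \emptyset$, so $\overline{K}$ is not a fixpoint. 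You instead take an arbitrary $\overline{M} \in \mathit{PSM}(\mathit{PPE}(\overline{S}))$ and compute directly: the surviving rules are exactly those contributed by $\mathcal{J} = \{\overline{J} \in \overline{S} \mid M \subseteq J\}$, whence $M = \bigcup_{\overline{J} \in \mathcal{J}} J$, and incomparability collapses $\mathcal{J}$ to a single $\overline{J_0}$ with $\overline{M} = \overline{J_0} \in \overline{S}$. Your route is self-contained (it never invokes the minimality proposition) and makes visible precisely where incomparability is indispensable, at the cost of slightly more bookkeeping; the paper's route is shorter because it reuses Proposition~\ref{prop:possSM:incomparable}. One further point in your favour: your caveat about the degenerate case is warranted, and in fact you are more careful than the paper here. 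For $\overline{S} = \emptyset$ one has $\mathit{PSM}(\mathit{PPE}(\emptyset)) = \mathit{PSM}(\emptyset) = \{\emptyset\} \neq \emptyset$, so the statement literally fails, whereas the paper's proof dismisses this case as ``evident'' in the wrong direction; your restriction to $\overline{S} \neq \emptyset$, together with the observation that $\overline{S} = \{\emptyset\}$ is harmless, is the correct reading.
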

\begin{proof}
    %(\uppercase\expandafter{\romannumeral1}) 
    It is evident that $\overline{S} = \mathit{PSM}(\mathit{PPE}(\overline{S}))$ when $\overline{S} = \emptyset$. In the case of $\overline{S} \neq \emptyset$, it can be proved as follows.
    
    ($\Longrightarrow$)
    $\overline{S}$ is incomparable. \\
    $\Rto$ $S$ is $\subseteq$-incomparable. \\
    $\Rto$ $I_1 \not\subset I_2$ for any two interpretations $I_1 \in S$ and $I_2 \in S$. \\
    $\Rto$ $\exists y \in I_1$ such that $ y \notin I_2$ for any two interpretations $I_1 \in S$ and $I_2 \in S$. \\
    $\Rto$ $I_1 \cap ({\cal A} - I_2) \neq \emptyset$ for any two interpretations $I_1 \in S$ and $I_2 \in S$.\\
    $\Rto$ For each $I \in S$, $\mathit{PPE}(\overline{S})^I = \{ (x \lto , \alpha) \mid (x,\alpha) \in \overline{I} \}$ since $I \cap ({\cal A} - I) = \emptyset$.\\
    $\Rto$ For each $\overline{I} \in \overline{S}$, $\mathit{lfp}({\cal T}_{\mathit{PPE}(\overline{S})^I}) = \overline{I}$.\\
    $\Rto$ For each $\overline{I} \in \overline{S}$, $\overline{I} \in \mathit{PSM}(\mathit{PPE}(\overline{S}))$.\\
    $\Rto$ $\overline{S} \subseteq \mathit{PSM}(\mathit{PPE}(\overline{S}))$. 

    ($\Longleftarrow$)For an possibilistic interpretation $\overline{K}$ such that $\overline{K} \notin \overline{S}$, it is evident that $\overline{K} \notin \mathit{PSM}(\mathit{PPE}(\overline{S}))$ when $K$ is $\subseteq$-comparable with respect to an element in $S$ because of $\overline{S} \subseteq \mathit{PSM}(\mathit{PPE}(\overline{S}))$ above and Proposition~\ref{prop:possSM:incomparable}. Let $\overline{K}$ be an possibilistic interpretation such that $K$ is $\subseteq$-incomparable \wrt every interpretation in $S$. \\
    $\Rto$ $\forall \overline{I} \in \overline{S}, K \cap ({\cal A} - I) \neq \emptyset$.\\
    $\Rto$ $K \cap \Neg(r) \neq \emptyset$ for each $\overline{r} \in \mathit{PPE}(\overline{S})$.\\
    $\Rto$ $\mathit{PPE}(\overline{S})^K = \emptyset$.\\
    $\Rto$ $\mathit{lfp}({\cal T}_{\mathit{PPE}(\overline{S})^{K}}) = \emptyset$.\\
    $\Rto$ $\overline{K} \notin \mathit{PPE}(\overline{S})^{K}$ since $K \neq \emptyset$ because $K$ is $\subseteq$-incomparable \wrt every interpretation in $S$. \\
    $\Rto$ $\overline{K} \notin \mathit{PSM}(\mathit{PPE}(\overline{S}))$ whenever $\overline{K} \notin \overline{S}$.\\
    $\Rto$ $\mathit{PSM}(\mathit{PPE}(\overline{S})) \subseteq \overline{S}$.
\end{proof}

\begin{propositionA}[Unsatisfiability for program $\mathit{PNE}(\overline{S_N},\overline{S_P})$]
    Given a poss-NLP $\overline{P}$ and two sets $\overline{S_N}$ and $\overline{S_P}$ of possibilistic interpretations, $\overline{I} \notin \mathit{PSM}(\mathit{PNE}(\overline{S_N},\overline{S_P}) \sqcup \overline{P})$ if $\overline{I} \in \overline{S_N}$, $I \neq {\cal A}$, and $ \overline{I} \not\parallel \overline{S_P}$.
\end{propositionA}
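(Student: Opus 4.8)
The plan is to isolate the single rule that $\mathit{PNE}(\overline{I})$ contributes and show that this rule alone already forbids $\overline{I}$ from being a poss-stable model, via the classical correspondence of Proposition~\ref{prop:SM:maps}. First I would note that the three hypotheses $\overline{I} \in \overline{S_N}$, $I \neq {\cal A}$, and $\overline{I} \not\parallel \overline{S_P}$ are exactly the index conditions in the join of Equation~\ref{eq:PNE}, so $\mathit{PNE}(\overline{I})$ is one of the terms of $\mathit{PNE}(\overline{S_N},\overline{S_P})$. Since $I \neq {\cal A}$ we have ${\cal A} - I \neq \emptyset$, hence there is an atom $x_0 \in {\cal A} - I$ and a rule $\overline{r} = (r,\mu)$ with $r = (x_0 \lto I, \Not ({\cal A} - I))$, so that $\Head(r) = x_0$, $\Pos(r) = I$ and $\Neg(r) = {\cal A} - I$. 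By the definition of $\sqcup$ on poss-NLPs, the classical counterpart of $\mathit{PNE}(\overline{S_N},\overline{S_P}) \sqcup \overline{P}$ contains the classical rule $r$ (possibly with its weight raised to $\max$, but $\mu$ is already the supremum of ${\cal Q}$), so $r$ survives into $(\mathit{PNE}(\overline{S_N},\overline{S_P}) \sqcup \overline{P})$ whatever $\overline{P}$ is.

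The core step is to observe that $I \not\models r$. Indeed, the positive body $\Pos(r) = I$ satisfies $\Pos(r) \subseteq I$, and the negative body $\Neg(r) = {\cal A} - I$ satisfies $\Neg(r) \cap I = ({\cal A} - I) \cap I = \emptyset$; thus the body of $r$ holds under $I$, while the head $x_0 \in {\cal A} - I$ is not in $I$. Hence $I$ violates $r$ classically. Recalling that every stable model of a normal logic program is in particular a classical model of the program (each stable model satisfies all its rules), it follows that $I \notin \mathit{SM}(\mathit{PNE}(\overline{S_N},\overline{S_P}) \sqcup \overline{P})$, writing $P$ for the classical projection throughout.

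Finally I would apply the contrapositive of the second bullet of Proposition~\ref{prop:SM:maps}: if $\overline{A} \in \mathit{PSM}(\overline{Q})$ then $A \in \mathit{SM}(Q)$, so from $I \notin \mathit{SM}(\mathit{PNE}(\overline{S_N},\overline{S_P}) \sqcup \overline{P})$ we conclude $\overline{I} \notin \mathit{PSM}(\mathit{PNE}(\overline{S_N},\overline{S_P}) \sqcup \overline{P})$, as required. The only point requiring care — and the place I would be most careful — is bookkeeping around the $\sqcup$ operators: one must check that the blocking rule $r$ is genuinely present in the classical counterpart after both the internal join defining $\mathit{PNE}(\overline{S_N},\overline{S_P})$ and the outer join with $\overline{P}$, since $\sqcup$ merges rules with identical classical projection by taking the maximum weight. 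Because $r$ carries weight $\mu$, no merge can remove it, so this is routine rather than substantive; the argument otherwise proceeds entirely through the classical level.
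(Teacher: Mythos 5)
Your proof is correct and follows essentially the same route as the paper's: isolate the rule contributed by $\mathit{PNE}(\overline{I})$ (its inclusion being guaranteed by the three hypotheses), observe that $I$ satisfies its body but not its head, so $I$ is not a classical model of the combined program and hence not a stable model, and then lift to the possibilistic level via the second bullet of Proposition~\ref{prop:SM:maps}. Your extra care about the $\sqcup$ bookkeeping is sound but not strictly needed, since the classical projection of a $\sqcup$-join is always the plain union of the classical rules regardless of weights, so the blocking rule survives no matter what weight it carries.
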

\begin{proof}
    $\overline{I} \in \overline{S_N}, I \neq {\cal A}, \overline{I} \not\parallel \overline{S_P}$. \\
    $\Rto$ By Definition~\ref{def:poss:comparable}, $\overline{I} \in \overline{S_N}, I \neq {\cal A}$, and $I$ is $\subseteq$-incomparable \wrt every interpretation in $S_P$.\\
    $\Rto$ $I \in S_N - \{ {\cal A} \}$, and $I$ is $\subseteq$-incomparable \wrt every interpretation in $S_P$.\\
    $\Rto$ $I \neq {\cal A}$\\
    $\Rto$ $I \models \Body(\mathit{PNE}(\overline{I}))$ and $I \not\models \Head(\mathit{PNE}(\overline{I}))$.\\
    $\Rto$ $I \not\models r$ where $\overline{r} = \mathit{PNE}(\overline{I})$.\\
    $\Rto$ $I \not\models H$ where $\overline{H} = \mathit{PNE}(\overline{S_N},\overline{S_P})$ since $I \in S_N - \{ {\cal A} \}$, and $I$ is $\subseteq$-incomparable \wrt every interpretation in $S_P$.\\
    $\Rto$ $I \not\models H \cup P$ where $\overline{H} = \mathit{PNE}(\overline{S_N},\overline{S_P})$. \\
    $\Rto$ $I \notin \mathit{SM}(H \cup P)$ where $\overline{H} = \mathit{PNE}(\overline{S_N},\overline{S_P})$.\\
    $\Rto$ $\overline{I} \notin \mathit{PSM}(\mathit{PNE}(\overline{S_N},\overline{S_P}) \sqcup \overline{P})$ by Proposition~\ref{prop:SM:maps}.
\end{proof}

\begin{lemmaA}[Unique stable model for program $\mathit{PPE}(\overline{G})$]
    Given a poss-NLP $\overline{B}$ and a possibilistic interpretation $\overline{G}$ with $G = {\cal A}$, if ${\cal T}_{\overline{B}}(\overline{G}) \sqsubseteq \overline{G}$, then $\mathit{PSM}(\overline{B} \sqcup \mathit{PPE}(\overline{G})) = \{ \overline{G} \}$.
\end{lemmaA}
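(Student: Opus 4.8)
The plan is to split the claim into the two inclusions $\{\overline{G}\} \subseteq \mathit{PSM}(\overline{B} \sqcup \mathit{PPE}(\overline{G}))$ and $\mathit{PSM}(\overline{B} \sqcup \mathit{PPE}(\overline{G})) \subseteq \{\overline{G}\}$, leaning on the already-established results rather than computing any least fixpoint by hand. The key preliminary observation is that since $G = {\cal A}$ we have ${\cal A} - G = \emptyset$, so every rule of $\mathit{PPE}(\overline{G})$ has an empty negative body; that is, $\mathit{PPE}(\overline{G}) = \{ (x \lto , \alpha) \mid (x,\alpha) \in \overline{G} \}$ is a set of possibilistic \emph{facts}. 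This single structural fact drives the whole argument.

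For the inclusion $\{\overline{G}\} \subseteq \mathit{PSM}(\overline{B} \sqcup \mathit{PPE}(\overline{G}))$, I would first apply Proposition~\ref{prop:possNLP:exist} to the singleton $\overline{S} = \{\overline{G}\}$, which is trivially incomparable, to obtain $\mathit{PSM}(\mathit{PPE}(\overline{G})) = \{\overline{G}\}$, hence $\overline{G} \in \mathit{PSM}(\mathit{PPE}(\overline{G}))$. Then, since the hypothesis supplies ${\cal T}_{\overline{B}}(\overline{G}) \sqsubseteq \overline{G}$, Corollary~\ref{cor:SM:absorption} (instantiating its two programs by $\mathit{PPE}(\overline{G})$ and $\overline{B}$) yields $\overline{G} \in \mathit{PSM}(\mathit{PPE}(\overline{G}) \sqcup \overline{B})$, and $\sqcup$ is commutative.

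For the reverse inclusion, let $\overline{A}$ be any poss-stable model of $\overline{B} \sqcup \mathit{PPE}(\overline{G})$. Because the classical projection $P = B \cup \mathit{PPE}(G)$ contains the fact $x \lto$ for every $x \in {\cal A}$, and such facts survive in the reduct $P^S$ for any $S$ and force $\mathit{Cn}(P^S) = {\cal A}$, the program $P$ has the unique stable model ${\cal A}$. By the second item of Proposition~\ref{prop:SM:maps} we get $A = {\cal A} = G$, so $\overline{A}$ and $\overline{G}$ are comparable. Since $\overline{G}$ has already been shown to be a poss-stable model, Proposition~\ref{prop:possSM:incomparable} forbids a second, distinct, comparable poss-stable model, forcing $\overline{A} = \overline{G}$ and closing the inclusion.

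The delicate point is verifying that the facts of $\mathit{PPE}(\overline{G})$ genuinely survive in the classical projection of $\overline{B} \sqcup \mathit{PPE}(\overline{G})$ and thereby pin down $\mathit{SM}(P) = \{{\cal A}\}$: one must check that the $\sqcup$ operation does not disturb these facts, since a fact $x \lto$ already present in $\overline{B}$ is merged only by raising its weight and so remains a fact. Once this is confirmed, everything else is a direct instantiation of Proposition~\ref{prop:possNLP:exist}, Corollary~\ref{cor:SM:absorption}, Proposition~\ref{prop:SM:maps}, and Proposition~\ref{prop:possSM:incomparable}, so no fixpoint iteration is needed.
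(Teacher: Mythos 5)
Your proof is correct and takes essentially the same route as the paper's: both establish $\overline{G} \in \mathit{PSM}(\mathit{PPE}(\overline{G}))$ (you via Proposition~\ref{prop:possNLP:exist} applied to the trivially incomparable singleton $\{\overline{G}\}$, the paper by computing the least fixpoint of the fact program directly), both then lift this to $\overline{B} \sqcup \mathit{PPE}(\overline{G})$ with Corollary~\ref{cor:SM:absorption}, and both conclude uniqueness from Proposition~\ref{prop:possSM:incomparable}. The only real deviation is that your uniqueness step detours through classical stable models (Proposition~\ref{prop:SM:maps}) to force $A = {\cal A}$, which is harmless but unnecessary: since $G = {\cal A}$, every possibilistic interpretation $\overline{A}$ already satisfies $A \subseteq G$ and is thus comparable with $\overline{G}$, which is all the incomparability proposition needs.
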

\begin{proof}
    $\mathit{PPE}(\overline{G}) = \{ (x \lto , \alpha) \mid (x, \alpha) \in \overline{G} \}$ where $G = {\cal A}$.\\
    $\Rto$ $\mathit{PPE}(\overline{G})^G = \mathit{PPE}(\overline{G})$.\\
    $\Rto$ $\mathit{lfp}({\cal T}_{\mathit{PPE}(\overline{G})^G}) = \overline{G}$.\\
    $\Rto$ $\overline{G} \in \mathit{PSM}(\mathit{PPE}(\overline{G}))$.\\
    $\Rto$ $\overline{G} \in \mathit{PSM}(\overline{B} \sqcup \mathit{PPE}(\overline{G}))$ since Corollary~\ref{cor:SM:absorption} and ${\cal T}_{\overline{B}}(\overline{G}) \sqsubseteq \overline{G}$.\\
    $\Rto$ $\mathit{PSM}(\overline{B} \sqcup \mathit{PPE}(\overline{G})) = \{ \overline{G} \}$ since $G = {\cal A}$ and Proposition~\ref{prop:possSM:incomparable}.
\end{proof}

\begin{propositionA}[Existence of a program w.r.t. a poss-stable model]	
    For a possibilistic interpretation $\overline{I}$ and a poss-NLP $\overline{B}$, there exists a poss-NLP $\overline{P}$ such that $\overline{I} \in \mathit{PSM}(\overline{B} \sqcup \overline{P})$ if and only if ${\cal T}_{\overline{B}}(\overline{I}) \sqsubseteq \overline{I}$.
\end{propositionA}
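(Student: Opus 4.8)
The statement is an if-and-only-if, so I would prove the two directions separately. The forward direction (necessity) assumes $\overline{I} \in \mathit{PSM}(\overline{B} \sqcup \overline{P})$ for some $\overline{P}$ and must derive ${\cal T}_{\overline{B}}(\overline{I}) \sqsubseteq \overline{I}$; the backward direction (sufficiency) assumes ${\cal T}_{\overline{B}}(\overline{I}) \sqsubseteq \overline{I}$ and must exhibit a witness $\overline{P}$.

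\textbf{Sufficiency ($\Leftarrow$).} This is the constructive half, and I expect it to be the easier one because the paper has already assembled the right machinery. Given ${\cal T}_{\overline{B}}(\overline{I}) \sqsubseteq \overline{I}$, I would simply take $\overline{P} = \mathit{PPE}(\{\overline{I}\})$. The singleton set $\{\overline{I}\}$ is trivially incomparable, so by Proposition~\ref{prop:possNLP:exist} we have $\mathit{PSM}(\mathit{PPE}(\{\overline{I}\})) = \{\overline{I}\}$, i.e. $\overline{I} \in \mathit{PSM}(\overline{P})$. To pass from $\overline{P}$ to $\overline{B} \sqcup \overline{P}$ I would invoke Corollary~\ref{cor:SM:absorption} (Absorption): since $\overline{I} \in \mathit{PSM}(\overline{P})$ and the hypothesis gives exactly ${\cal T}_{\overline{B}}(\overline{I}) \sqsubseteq \overline{I}$, the corollary yields $\overline{I} \in \mathit{PSM}(\overline{P} \sqcup \overline{B})$, as required.

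\textbf{Necessity ($\Rightarrow$).} This is the direction I expect to carry the real content. Assume $\overline{I} \in \mathit{PSM}(\overline{B} \sqcup \overline{P})$, which unfolds to $\overline{I} = \mathit{Cn}((\overline{B} \sqcup \overline{P})^I) = \mathit{lfp}({\cal T}_{(\overline{B} \sqcup \overline{P})^I})$, so in particular ${\cal T}_{(\overline{B} \sqcup \overline{P})^I}(\overline{I}) = \overline{I}$. I would first rewrite this using Proposition~\ref{prop:eq:con} (Equivalent consequence) to replace the reduct-based operator by the direct operator, obtaining ${\cal T}_{\overline{B} \sqcup \overline{P}}(\overline{I}) = \overline{I}$. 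The key remaining step is to show that $\overline{B}$'s contribution alone cannot exceed $\overline{I}$: for each atom $q$ with $(q,\delta)\in{\cal T}_{\overline{B}}(\overline{I})$, any rule $\overline{r}\in\overline{B}$ witnessing this is $\beta$-applicable in $\overline{I}$ with $\beta \le \delta$, and that same rule lives in $\overline{B}\sqcup\overline{P}$ (or is dominated there by a rule of at least equal weight, by the definition of $\sqcup$). Hence the maximum weight computed for $q$ over $\overline{B}\sqcup\overline{P}$ is at least $\delta$, so the $q$-entry of ${\cal T}_{\overline{B} \sqcup \overline{P}}(\overline{I})$ has weight $\ge\delta$; since that entry equals the corresponding entry of $\overline{I}$, we get $\delta \le$ (weight of $q$ in $\overline{I}$), which is precisely the pointwise condition defining ${\cal T}_{\overline{B}}(\overline{I}) \sqsubseteq \overline{I}$.

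\textbf{Main obstacle.} The delicate point is the monotonicity-style argument in the necessity direction: I must argue carefully about how the $\sqcup$ operation on poss-NLPs interacts with applicability and the $\max$ taken in Definition~\ref{def:Tp}. In particular I need that every rule of $\overline{B}$ is represented inside $\overline{B}\sqcup\overline{P}$ with weight no smaller than its weight in $\overline{B}$ (this follows from the three cases in the definition of $\sqcup$), so that the $\beta$-applicability degree of any $\overline{B}$-rule in $\overline{I}$ is not decreased when it is viewed inside $\overline{B}\sqcup\overline{P}$. Verifying this bookkeeping—and confirming that adding the rules of $\overline{P}$ only increases consequence weights and never forces a weight below what $\overline{B}$ produces—is the crux; once it is in place, comparing entry-by-entry against the fixed point $\overline{I}$ closes the argument.
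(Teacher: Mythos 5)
Your proposal is correct and takes essentially the same route as the paper: the sufficiency direction is identical (witness $\overline{P}=\mathit{PPE}(\{\overline{I}\})$ combined with Proposition~\ref{prop:possNLP:exist} and Corollary~\ref{cor:SM:absorption}), while your necessity argument is the direct (contrapositive) rendering of the paper's proof by contradiction, both resting on the same monotonicity of ${\cal T}$ under $\sqcup$. The only difference is presentational: the paper compresses the monotonicity step into the identity $(\overline{B}\sqcup\overline{P})^I=\overline{B}^I\sqcup(\overline{B}\sqcup\overline{P})^I$, whereas you spell out the rule-by-rule weight bookkeeping, which if anything makes the step more explicit.
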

\begin{proof}
    ($\Longrightarrow$) Assume (\romannumeral 1)${\cal T}_{\overline{B}}(\overline{I}) \not\sqsubseteq \overline{I}$, and (\romannumeral 2)$\overline{I} \in \mathit{PSM}(\overline{B} \sqcup \overline{P})$. \\
    $\Rto$ ${\cal T}_{\overline{B}^I}(\overline{I}) \not\sqsubseteq \overline{I}$  by Proposition~\ref{prop:eq:con}, and (\romannumeral 2)${\cal T}_{(\overline{B} \sqcup \overline{P})^I}(\overline{I}) = \overline{I}$. \\
    $\Rto$ ${\cal T}_{(\overline{B} \sqcup \overline{P})^I}(\overline{I}) \not\sqsubseteq \overline{I}$ since $(\overline{B} \sqcup \overline{P})^I = (\overline{B} \sqcup \overline{B} \sqcup \overline{P})^I = \overline{B}^I \sqcup (\overline{B} \sqcup \overline{P})^I$. \\
    $\Rto$ ${\cal T}_{(\overline{B} \sqcup \overline{P})^I}(\overline{I}) \neq \overline{I}$. \\    
    $\Rto$ $\overline{I} \notin fp({\cal T}_{(\overline{B} \sqcup \overline{P})^I})$. \\
    $\Rto$ $\overline{I} \neq \mathit{lfp}({\cal T}_{(\overline{B} \sqcup \overline{P})^I})$ since $\mathit{lfp}({\cal T}_{(\overline{B} \sqcup \overline{P})^I}) \in fp({\cal T}_{(\overline{B} \sqcup \overline{P})^I})$. \\
    $\Rto$ $\overline{I} \notin \mathit{PSM}(\overline{B} \sqcup \overline{P})$ by the definition of a poss-stable model. \\
    $\Rto$ Contradict to the assumption (\romannumeral 2)$\overline{I} \in \mathit{PSM}(\overline{B} \sqcup \overline{P})$.
    
    ($\Longleftarrow$) Let $\overline{P} = \mathit{PPE}(\{ \overline{I} \})$. \\
    $\Rto$ $\overline{I} \in \mathit{PSM}(\overline{P})$ by Proposition~\ref{prop:possNLP:exist}. \\
    $\Rto$ $\overline{I} \in \mathit{PSM}(\overline{B} \sqcup \overline{P})$ since Corollary~\ref{cor:SM:absorption} and ${\cal T}_{\overline{B}}(\overline{I}) \sqsubseteq \overline{I}$.
\end{proof}

\begin{corollaryA}%[existence of a program w.r.t. poss-stable models and background]	
For an incomparable set $\overline{S}$ of possibilistic interpretations and a poss-NLP $\overline{B}$, there exists a poss-NLP $\overline{P}$ such that $\overline{S} \subseteq \mathit{PSM}(\overline{B} \sqcup \overline{P})$ if and only if ${\cal T}_{\overline{B}}(\overline{I}) \sqsubseteq \overline{I}$ for each $\overline{I} \in \overline{S}$.
\end{corollaryA}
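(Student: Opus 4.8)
The plan is to derive this corollary directly from the single-interpretation version, namely Proposition~\ref{prop:possNLP:interpretation:exist}, combined with Proposition~\ref{prop:possNLP:exist} and Corollary~\ref{cor:SM:absorption}, handling the two directions of the biconditional separately.

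For the forward direction I would assume that some poss-NLP $\overline{P}$ satisfies $\overline{S} \subseteq \mathit{PSM}(\overline{B} \sqcup \overline{P})$. Then for each fixed $\overline{I} \in \overline{S}$ we have $\overline{I} \in \mathit{PSM}(\overline{B} \sqcup \overline{P})$, so in particular there exists a poss-NLP witnessing $\overline{I} \in \mathit{PSM}(\overline{B} \sqcup \overline{P})$ (namely this very $\overline{P}$). Applying the ``only if'' direction of Proposition~\ref{prop:possNLP:interpretation:exist} to each such $\overline{I}$ then yields ${\cal T}_{\overline{B}}(\overline{I}) \sqsubseteq \overline{I}$ for every $\overline{I} \in \overline{S}$. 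Note that this direction uses neither the incomparability hypothesis nor any joint construction.

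For the backward direction the essential point is that we must exhibit a \emph{single} poss-NLP $\overline{P}$ that works simultaneously for all $\overline{I} \in \overline{S}$; applying Proposition~\ref{prop:possNLP:interpretation:exist} interpretation-by-interpretation would only produce a separate witness program per interpretation. The natural candidate is $\overline{P} = \mathit{PPE}(\overline{S})$. Since $\overline{S}$ is incomparable, Proposition~\ref{prop:possNLP:exist} gives $\mathit{PSM}(\mathit{PPE}(\overline{S})) = \overline{S}$, hence $\overline{I} \in \mathit{PSM}(\mathit{PPE}(\overline{S}))$ for every $\overline{I} \in \overline{S}$. Combining this with the hypothesis ${\cal T}_{\overline{B}}(\overline{I}) \sqsubseteq \overline{I}$ and invoking Corollary~\ref{cor:SM:absorption} (with its $\overline{P}$ instantiated to $\mathit{PPE}(\overline{S})$) delivers $\overline{I} \in \mathit{PSM}(\mathit{PPE}(\overline{S}) \sqcup \overline{B})$ for each $\overline{I}$. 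Using that $\sqcup$ is commutative (immediate from its definition, since $\max$ is symmetric), this coincides with $\overline{I} \in \mathit{PSM}(\overline{B} \sqcup \mathit{PPE}(\overline{S}))$, so $\overline{S} \subseteq \mathit{PSM}(\overline{B} \sqcup \mathit{PPE}(\overline{S}))$, and $\overline{P} = \mathit{PPE}(\overline{S})$ is the required program.

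The only genuine obstacle is the simultaneity issue flagged above: the incomparability of $\overline{S}$ is exactly the hypothesis needed so that the one program $\mathit{PPE}(\overline{S})$ has \emph{precisely} $\overline{S}$ as its set of poss-stable models (via Proposition~\ref{prop:possNLP:exist}), rather than merely having each $\overline{I}$ among the models of some individually tailored program. Once this is in place, everything reduces to a routine chaining of the cited results, so I expect no further difficulty.
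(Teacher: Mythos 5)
Your proof is correct and follows essentially the same route as the paper's: the forward direction is an application of the ``only if'' half of Proposition~\ref{prop:possNLP:interpretation:exist} to each $\overline{I}\in\overline{S}$ (the paper phrases this as a contraposition, you phrase it directly, which is the same argument), and the backward direction uses exactly the paper's witness $\overline{P}=\mathit{PPE}(\overline{S})$ together with Proposition~\ref{prop:possNLP:exist} (where incomparability is consumed) and Corollary~\ref{cor:SM:absorption}. Your observation that incomparability is needed only for the backward direction also matches the structure of the paper's proof.
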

\begin{proof}
    ($\Longrightarrow$)Assume $\neg \forall \overline{I} \in \overline{S}, {\cal T}_{\overline{B}}(\overline{I}) \sqsubseteq \overline{I}$. \\
    $\Rto$ $\exists \overline{I} \in \overline{S}$ such that ${\cal T}_{\overline{B}^I}(\overline{I}) \not\sqsubseteq \overline{I}$. \\
    $\Rto$ $\exists \overline{I} \in \overline{S}$ such that $\overline{I} \notin \mathit{PSM}(\overline{B} \sqcup \overline{P})$ for each poss-NLP $\overline{P}$ by Proposition~\ref{prop:possNLP:interpretation:exist}. \\
    $\Rto$ there does not exist a poss-NLP $\overline{P}$ such that $\overline{S} \subseteq \mathit{PSM}(\overline{B} \sqcup \overline{P})$.  \\
    $\Rto$ A contradiction.     

    ($\Longleftarrow$)$\forall \overline{I} \in \overline{S}, \overline{I} \in \mathit{PSM}(\mathit{PPE}(\overline{S}))$ by Proposition~\ref{prop:possNLP:exist}. \\
    $\Rto$ $\forall \overline{I} \in \overline{S}, \overline{I} \in \mathit{PSM}(\overline{B} \sqcup \mathit{PPE}(\overline{I}))$ since Corollary~\ref{cor:SM:absorption} and $\forall \overline{I} \in \overline{S}, {\cal T}_{\overline{B}}(\overline{I}) \sqsubseteq \overline{I}$. \\
    $\Rto$ $\overline{S} \subseteq \mathit{PSM}(\overline{B} \sqcup \mathit{PPE}(\overline{S}))$. \\
    $\Rto$ There exists a poss-NLP $\overline{P}$ such that $\overline{S} \subseteq \mathit{PSM}(\overline{B} \sqcup \overline{P})$.
\end{proof}

\begin{corollaryA}[Existence of a solution]
For a task $T = \tuple{\overline{B}, E^+, \emptyset}$, $\mathit{ILP_{LPoSM}}(T) \neq \emptyset$ if and only if $E^+$ is incomparable and $E^+$ is coherent with $\overline{B}$.
\end{corollaryA}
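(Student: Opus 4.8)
The plan is to reduce the whole statement to the immediately preceding Corollary~\ref{cor:possNLP:B:exist} together with Proposition~\ref{prop:possSM:incomparable}. The first observation I would record is that, because $E^- = \emptyset$, condition~\eqref{eq:G2} in Definition~\ref{def:ILT} is vacuously satisfied by every hypothesis. Hence a poss-NLP $\overline{H}$ is a solution of $T = \tuple{\overline{B}, E^+, \emptyset}$ precisely when $E^+ \subseteq \mathit{PSM}(\overline{B} \sqcup \overline{H})$, so that $\mathit{ILP_{LPoSM}}(T) \neq \emptyset$ is equivalent to the existence of some poss-NLP $\overline{P}$ with $E^+ \subseteq \mathit{PSM}(\overline{B} \sqcup \overline{P})$. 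This reformulation is exactly the left-hand side of Corollary~\ref{cor:possNLP:B:exist} instantiated at $\overline{S} = E^+$.

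For the ($\Leftarrow$) direction I would assume that $E^+$ is incomparable and coherent with $\overline{B}$. Unfolding the definition of coherency (Definition~\ref{def:incoherent}) gives ${\cal T}_{\overline{B}}(\overline{I}) \sqsubseteq \overline{I}$ for every $\overline{I} \in E^+$. Since the incomparability hypothesis of Corollary~\ref{cor:possNLP:B:exist} is met, its ``if'' part yields a poss-NLP $\overline{P}$ with $E^+ \subseteq \mathit{PSM}(\overline{B} \sqcup \overline{P})$, and by the reformulation above this $\overline{P}$ witnesses $\mathit{ILP_{LPoSM}}(T) \neq \emptyset$.

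For the ($\Rightarrow$) direction I would fix a solution $\overline{H}$, so $E^+ \subseteq \mathit{PSM}(\overline{B} \sqcup \overline{H})$, and establish the two required properties in order. First, incomparability of $E^+$: if two distinct elements $\overline{I}, \overline{J} \in E^+$ were comparable, then $\{\overline{I}, \overline{J}\} \subseteq \mathit{PSM}(\overline{B} \sqcup \overline{H})$ would directly contradict Proposition~\ref{prop:possSM:incomparable}; hence $E^+$ is incomparable. Second, coherency: now that incomparability is in hand, the hypothesis of Corollary~\ref{cor:possNLP:B:exist} is satisfied, so its ``only if'' part applied to $\overline{S} = E^+$ with the witness $\overline{H}$ gives ${\cal T}_{\overline{B}}(\overline{I}) \sqsubseteq \overline{I}$ for every $\overline{I} \in E^+$, i.e.\ $E^+$ is coherent with $\overline{B}$.

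There is no substantial obstacle here, since both preparatory results do the heavy lifting; the one point demanding care is the ordering in the forward direction. Corollary~\ref{cor:possNLP:B:exist} presupposes that $\overline{S}$ is incomparable, so I must first derive incomparability from Proposition~\ref{prop:possSM:incomparable} before I am entitled to invoke the corollary to extract coherency. The degenerate case $E^+ = \emptyset$ is absorbed uniformly, as it is trivially incomparable and coherent and any hypothesis (e.g.\ $\overline{H} = \emptyset$) is a solution.
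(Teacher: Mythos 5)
Your proof is correct and takes essentially the same route as the paper, whose own (one-line) proof simply reduces the statement to Corollary~\ref{cor:possNLP:B:exist} together with Proposition~\ref{prop:possNLP:exist}. The only difference is that you explicitly invoke Proposition~\ref{prop:possSM:incomparable} to establish incomparability in the forward direction before applying the corollary—a step the paper leaves implicit—so your write-up is, if anything, more complete and correctly ordered.
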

\begin{proof}
    It can be proved by Proposition~\ref{prop:possNLP:exist} and Corollary~\ref{cor:possNLP:B:exist}.
\end{proof}

\begin{propositionA}%[existence of a poss-NLP solution for a task without positive examples]	
Let $T = \tuple{\overline{B}, \emptyset, E^-}$ be an induction task and $\overline{\mathbb{A}}=\{\overline I\mid \overline{I}\mbox{ is a poss-interpretation with } I={\cal A}\}$. Then $\mathit{ILP_{LPoSM}}(T) = \emptyset$ if and only if the following three conditions are satisfied. 
\begin{enumerate}[label=(c\arabic*)]
    \item $\mathit{lfp}(T_{B^{\cal A}}) = {\cal A}$.
    \item $\overline{\mathbb{A}} - E^- \neq \overline{\mathbb{A}}$.  
    \item $\overline{\mathbb{A}} - E^- = \emptyset$, or ${\cal T}_{\overline{B}}(\overline{G}) \not\sqsubseteq \overline{G}$ for each $\overline{G} \in \overline{\mathbb{A}} - E^-$.    
\end{enumerate}
\end{propositionA}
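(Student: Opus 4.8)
The plan is to prove the biconditional by treating each direction separately, keeping in mind throughout that, since $E^+=\emptyset$, a hypothesis $\overline{H}$ is a solution precisely when $E^-\cap\mathit{PSM}(\overline{B}\sqcup\overline{H})=\emptyset$; hence $\mathit{ILP_{LPoSM}}(T)=\emptyset$ means that \emph{every} $\overline{H}$ leaves at least one negative example as a poss-stable model. The governing observation is that the poss-interpretations in $\overline{\mathbb{A}}$ (those with $I={\cal A}$) are the hard ones to avoid: each is $\subseteq$-comparable with everything, so at most one of them can ever be a poss-stable model, and whether one of them becomes one is dictated by the reduct $B^{\cal A}$, which adding rules can only enlarge.

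For the ($\Leftarrow$) direction I would assume (c1)--(c3) and show no $\overline{H}$ works. Fix an arbitrary $\overline{H}$. Since the immediate-consequence operator is monotone in the program and $(B\cup H)^{\cal A}\supseteq B^{\cal A}$, condition (c1) gives $\mathit{lfp}(T_{(B\cup H)^{\cal A}})={\cal A}$, so ${\cal A}\in\mathit{SM}(B\cup H)$; by Proposition~\ref{prop:SM:maps} this yields a poss-stable model $\overline{G}$ of $\overline{B}\sqcup\overline{H}$ with $G={\cal A}$, i.e. $\overline{G}\in\overline{\mathbb{A}}$. As $\overline{G}\in\mathit{PSM}(\overline{B}\sqcup\overline{H})$, Proposition~\ref{prop:possNLP:interpretation:exist} forces ${\cal T}_{\overline{B}}(\overline{G})\sqsubseteq\overline{G}$; condition (c3) then excludes $\overline{G}\in\overline{\mathbb{A}}-E^-$, so $\overline{G}\in E^-$, and $\overline{G}\in E^-\cap\mathit{PSM}(\overline{B}\sqcup\overline{H})$ defeats $\overline{H}$.

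For the ($\Rightarrow$) direction I would argue the contrapositive: if any one of (c1), (c2), (c3) fails, exhibit an explicit solution. If (c1) fails, take $\overline{H}=\mathit{PNE}(E^-,\emptyset)$: every rule it contributes has a nonempty negative body (it is built only from $\overline{I}\in E^-$ with $I\neq{\cal A}$), so it vanishes in the reduct with respect to ${\cal A}$, leaving $\mathit{lfp}(T_{(B\cup H)^{\cal A}})=\mathit{lfp}(T_{B^{\cal A}})\subsetneq{\cal A}$; hence no poss-stable model has $I={\cal A}$, while Proposition~\ref{prop:PSM:PNE} blocks every negative example with $I\neq{\cal A}$. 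If (c1) holds but (c2) fails, then $\overline{\mathbb{A}}\cap E^-=\emptyset$, so the unavoidable $\overline{\mathbb{A}}$-model produced by $\mathit{PNE}(E^-,\emptyset)$ is harmless and the same program is again a solution. If (c1) and (c2) hold but (c3) fails, choose $\overline{G}\in\overline{\mathbb{A}}-E^-$ with ${\cal T}_{\overline{B}}(\overline{G})\sqsubseteq\overline{G}$ and set $\overline{H}=\mathit{PPE}(\overline{G})$; Lemma~\ref{lm:SM:PPI} gives $\mathit{PSM}(\overline{B}\sqcup\mathit{PPE}(\overline{G}))=\{\overline{G}\}$ with $\overline{G}\notin E^-$.

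I expect the main obstacle to be the careful bookkeeping around $\overline{\mathbb{A}}$: verifying that (c1) makes an $I={\cal A}$ poss-stable model genuinely unavoidable (the monotonicity-of-reduct step) and, conversely, that when (c1) fails the $\mathit{PNE}$ construction really kills the $I={\cal A}$ case through the reduct collapse while simultaneously handling the $I\neq{\cal A}$ negatives via Proposition~\ref{prop:PSM:PNE}. Disentangling the precise roles of the three conditions --- in particular noting that $\overline{\mathbb{A}}\neq\emptyset$ and that the first disjunct of (c3) already forces (c2), so the three contrapositive cases partition cleanly --- is the part that needs the most care.
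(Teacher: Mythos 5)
Your proposal is correct and follows essentially the same route as the paper's proof: the forward direction via the same contrapositive case split using $\mathit{PNE}(E^-,\emptyset)$ for the failures of (c1) and (c2) and $\mathit{PPE}(\overline{G})$ with Lemma~\ref{lm:SM:PPI} for the failure of (c3), and the converse via monotonicity of the reduct, Proposition~\ref{prop:SM:maps}, and Proposition~\ref{prop:possNLP:interpretation:exist} combined with (c3). The only difference is stylistic — you argue the converse directly (every $\overline{H}$ fails) where the paper argues by contradiction from an assumed solution — which does not change the substance.
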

\begin{proof}
    ($\Longrightarrow$) We will prove $\mathit{ILP_{LPoSM}}(T) \neq \emptyset$ if $c1 \land c2 \land c3$ does not hold. Namely, it can be proved that $\neg c1 \lor \neg c2 \lor \neg c3$ implies $\mathit{ILP_{LPoSM}}(T) \neq \emptyset$ in the following three cases. 
    
    {\bf Case 1:}($\neg c1$) Let $\overline{H} = \mathit{PNE}(E^-,\emptyset)$ when $\mathit{lfp}(T_{B^{\cal A}}) \neq {\cal A}$. \\
    $\Rto$ $H^{\cal A} = \emptyset$ since $\forall r \in H, \Neg(r) \cap {\cal A} \neq \emptyset$. \\
    $\Rto$ $\mathit{lfp}(T_{(B \cup H)^{\cal A}}) \neq {\cal A}$ since $\mathit{lfp}(T_{B^{\cal A}}) \neq {\cal A}$ and $\mathit{lfp}(T_{(B \cup H)^{\cal A}}) = \mathit{lfp}(T_{B^{\cal A} \cup H^{\cal A}}) = \mathit{lfp}(T_{B^{\cal A}})$. \\
    $\Rto$ ${\cal A} \notin \mathit{SM}(B \cup H)$. \\
    $\Rto$ $\forall \overline{e} \in E^-, e = {\cal A} \rto \overline{e} \notin \mathit{PSM}(\overline{B} \sqcup \overline{H})$ by Proposition~\ref{prop:SM:maps}.  \\
    $\Rto$ $\forall \overline{e} \in E^-, \overline{e} \notin \mathit{PSM}(\overline{B} \sqcup \overline{H})$ since $\forall \overline{e} \in E^-, e \neq {\cal A} \rto \overline{e} \notin \mathit{PSM}(\overline{B} \sqcup \overline{H})$ by Proposition~\ref{prop:PSM:PNE}.  \\
    $\Rto$ $\overline{H} \in \mathit{ILP_{LPoSM}}(T)$. \\
    $\Rto$ $\mathit{ILP_{LPoSM}}(T) \neq \emptyset$.

    {\bf Case 2:}($\neg c2$) Let $\overline{H} = \mathit{PNE}(E^-,\emptyset)$ when $\overline{\mathbb{A}} - E^- =\overline{\mathbb{A}}$. \\
    $\Rto$ $\overline{\mathbb{A}} \cap E^- = \emptyset$. \\
    $\Rto$ $\forall \overline{e} \in E^-, e \neq {\cal A}$. \\
    $\Rto$ $\forall \overline{e} \in E^-, \overline{e} \notin \mathit{PSM}(\overline{B} \sqcup \overline{H})$ by Proposition~\ref{prop:PSM:PNE}. \\
    $\Rto$ $\overline{H} \in \mathit{ILP_{LPoSM}}(T)$. \\
    $\Rto$ $\mathit{ILP_{LPoSM}}(T) \neq \emptyset$.

    {\bf Case 3:}($\neg c3$) Let $\overline{H} = \mathit{PPE}(\overline{G})$ when $\overline{\mathbb{A}} - E^- \neq \emptyset$ and $\exists \overline{G} \in \overline{\mathbb{A}} - E^-, {\cal T}_{\overline{B}}(\overline{G}) \sqsubseteq \overline{G}$. \\
    $\Rto$ $\overline{H} \neq \emptyset$ since $\overline{G} \in \overline{\mathbb{A}} - E^-$ and $\overline{\mathbb{A}} - E^- \neq \emptyset$.\\
    $\Rto$ $\mathit{PSM}(\overline{B} \sqcup \overline{H}) = \{ \overline{G} \}$ by Lemma~\ref{lm:SM:PPI}. \\
    $\Rto$ $\forall \overline{e} \in E^-, \overline{e} \notin \mathit{PSM}(\overline{B} \sqcup \overline{H})$ since $\overline{G} \notin E^-$. \\
    $\Rto$ $\overline{H} \in \mathit{ILP_{LPoSM}}(T)$. \\
    $\Rto$ $\mathit{ILP_{LPoSM}}(T) \neq \emptyset$.

    ($\Longleftarrow$) Assume $\mathit{ILP_{LPoSM}}(T) \neq \emptyset$ when three conditions hold simultaneously. Let $\overline{H} \in \mathit{ILP_{LPoSM}}(T)$.  \\
    $\Rto$ By condition (c3), we have (\romannumeral 1)$\overline{\mathbb{A}} - E^- = \emptyset$ or (\romannumeral 2)$\forall \overline{G} \in \overline{\mathbb{A}} - E^-, {\cal T}_{\overline{B}}(\overline{G}) \not\sqsubseteq \overline{G}$. \\
    $\Rto$ (\romannumeral 1)$\overline{\mathbb{A}} \subseteq E^-$ or (\romannumeral 2)$\forall \overline{G} \in \overline{\mathbb{A}} - E^-, \overline{G} \notin \mathit{PSM}(\overline{B} \sqcup \overline{H})$ by Proposition~\ref{prop:possNLP:interpretation:exist}. \\
    $\Rto$ $\forall \overline{I} \in \overline{\mathbb{A}}, \overline{I} \notin \mathit{PSM}(\overline{B} \sqcup \overline{H})$ since $\forall \overline{e} \in E^-, \overline{e} \notin \mathit{PSM}(\overline{B} \sqcup \overline{H})$. \\
    $\Rto$ ${\cal A} \notin \mathit{SM}(B \cup H)$ by Proposition~\ref{prop:SM:maps}.\\
    $\Rto$ A contradiction to ${\cal A} \in \mathit{SM}(B \cup H)$ since $\mathit{lfp}(T_{H^{\cal A}}) \subseteq {\cal A}$ and condition (c1).
\end{proof}

% \begin{corollary}[existence of a solution for a task with only negative examples]	
% For a task $T = \tuple{\emptyset, \emptyset, E^-}$, $\mathit{ILP_{LPoSM}}(T) \neq \emptyset$.
% \end{corollary}
% \begin{proof}
%     It can be proved by Proposition~\ref{prop:task:B:neg:exist}.
% \end{proof}

\begin{theoremA}[Existence of a poss-NLP solution for task]	
For a task $T = \tuple{\overline{B}, E^+, E^-}$, $\mathit{ILP_{LPoSM}}(T) \neq \emptyset$ if and only if 
\begin{enumerate}[label=(C\arabic*)]
    \item $E^+$ is incomparable, and
    \item $E^+$ is coherent with $\overline{B}$, and 
    \item $E^-$ is compatible with  $\overline{B}$, and 
    \item $E^+ \cap E^- = \emptyset$.
\end{enumerate}
\end{theoremA}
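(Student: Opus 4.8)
The plan is to prove both directions of the equivalence by leaning on the characterisations already established in this section. For the \emph{necessity} direction I would fix a solution $\overline{H} \in \mathit{ILP_{LPoSM}}(T)$ and derive each of (C1)--(C4) in turn. Condition (C1) is immediate: since $E^+ \subseteq \mathit{PSM}(\overline{B} \sqcup \overline{H})$ and the set of poss-stable models of any poss-NLP is incomparable by Proposition~\ref{prop:possSM:incomparable}, $E^+$ must be incomparable. Condition (C2) follows because each $\overline{I} \in E^+$ satisfies $\overline{I} \in \mathit{PSM}(\overline{B} \sqcup \overline{H})$, so Proposition~\ref{prop:possNLP:interpretation:exist} yields ${\cal T}_{\overline{B}}(\overline{I}) \sqsubseteq \overline{I}$, i.e.\ $E^+$ is coherent with $\overline{B}$. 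Condition (C4) is forced directly: any $\overline{I} \in E^+ \cap E^-$ would have to be both in and out of $\mathit{PSM}(\overline{B} \sqcup \overline{H})$ by \eqref{eq:G1} and \eqref{eq:G2}. The only delicate point is (C3), which I would establish by its contrapositive, reusing the argument of Proposition~\ref{prop:task:B:neg:exist}: if $E^-$ were incompatible with $\overline{B}$, then conditions (c1)--(c3) of Definition~\ref{def:compatible} hold, and that proof --- using only the covering condition \eqref{eq:G2}, which $\overline{H}$ satisfies --- forces ${\cal A} \in \mathit{SM}(B \cup H)$ while simultaneously no interpretation in $\overline{\mathbb{A}}$ can be a poss-stable model of $\overline{B} \sqcup \overline{H}$, a contradiction.

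For the \emph{sufficiency} direction I would assume (C1)--(C4) and construct an explicit solution, splitting on whether $E^+$ is empty. If $E^+ = \emptyset$, then (C1), (C2), (C4) hold vacuously, (C3) says $E^-$ is compatible with $\overline{B}$, and Proposition~\ref{prop:task:B:neg:exist} immediately gives $\mathit{ILP_{LPoSM}}(T) \neq \emptyset$. If $E^+ \neq \emptyset$, I would take $\overline{H} = \mathit{PPE}(E^+) \sqcup \mathit{PNE}(E^-, E^+)$ (subtracting $\overline{B}$, which is harmless since $\overline{B} \sqcup (\overline{X} - \overline{B}) = \overline{B} \sqcup \overline{X}$ for any $\overline{X}$) and verify both goals. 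For \eqref{eq:G1}, Corollary~\ref{cor:possNLP:B:exist} together with (C1) and (C2) gives $E^+ \subseteq \mathit{PSM}(\overline{B} \sqcup \mathit{PPE}(E^+))$; then each rule of $\mathit{PNE}(E^-,E^+)$ has a body forcing $I = J$ for some $\overline{J} \in E^-$ with $\overline{J} \not\parallel E^+$, which is impossible for $\overline{I} \in E^+$, so ${\cal T}_{\mathit{PNE}(E^-,E^+)}(\overline{I}) = \emptyset \sqsubseteq \overline{I}$ and Corollary~\ref{cor:SM:absorption} preserves every positive example.

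For \eqref{eq:G2} I would argue by cases over $\overline{e} \in E^-$. If $\overline{e} \parallel E^+$ (this subsumes the case $e = {\cal A}$), then $\overline{e} \neq \overline{I}$ for the witnessing $\overline{I} \in E^+$ by (C4), so $\overline{e}$ and $\overline{I}$ are distinct comparable interpretations and Proposition~\ref{prop:possSM:incomparable} forbids both from being poss-stable; since $\overline{I}$ is, we get $\overline{e} \notin \mathit{PSM}(\overline{B} \sqcup \overline{H})$. If instead $\overline{e} \not\parallel E^+$ and $e \neq {\cal A}$, then $\mathit{PNE}(\overline{e})$ is a rule of $\overline{H}$, and Proposition~\ref{prop:PSM:PNE} (taking the remaining program as $\overline{P}$) yields $\overline{e} \notin \mathit{PSM}(\overline{B} \sqcup \overline{H})$. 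This exhausts $E^-$ and completes the construction.

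I expect the main obstacle to be the sufficiency direction with $E^+ \neq \emptyset$: one must show the single construction $\mathit{PPE}(E^+) \sqcup \mathit{PNE}(E^-, E^+)$ simultaneously covers every positive example and blocks every negative example, and in particular that the blocking rules of $\mathit{PNE}$ do not disturb the support the $\mathit{PPE}$ rules provide. The case split on the negative examples (comparable versus incomparable with $E^+$, and the distinguished case $e = {\cal A}$) is where the incomparability, coherency, and disjointness hypotheses must be combined carefully, and the non-applicability of $\mathit{PNE}$ rules on the positive examples is the crux that makes the absorption step through Corollary~\ref{cor:SM:absorption} go through.
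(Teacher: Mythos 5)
Your proposal is correct and follows essentially the same route as the paper's proof: necessity of (C1)--(C4) via Proposition~\ref{prop:possSM:incomparable}, Proposition~\ref{prop:possNLP:interpretation:exist}/Corollary~\ref{cor:possNLP:B:exist}, and Proposition~\ref{prop:task:B:neg:exist} (whose argument, as you note, only uses condition~\eqref{eq:G2} and so applies with $E^+ \neq \emptyset$), and sufficiency by the same explicit witness $\mathit{PPE}(E^+) \sqcup \mathit{PNE}(E^-,E^+)$, with the same key steps of non-applicability of the $\mathit{PNE}$ rules on positive examples, absorption via Corollary~\ref{cor:SM:absorption}, and the case split on negative examples comparable versus incomparable with $E^+$ handled by Proposition~\ref{prop:possSM:incomparable} and Proposition~\ref{prop:PSM:PNE}. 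The only differences are presentational (e.g., your bracketing of the absorption step and the harmless subtraction of $\overline{B}$), not mathematical.
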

\begin{proof}
    ($\Longrightarrow$) It can be proved by contrapositive as follows.
    \begin{enumerate}[label=(\arabic*)]
        \item By Proposition~\ref{prop:possSM:incomparable}, comparable $E^+$ implies $\mathit{ILP_{LPoSM}}(T) = \emptyset$.
        \item By Corollary~\ref{cor:possNLP:B:exist} and Definition~\ref{def:incoherent}, $\mathit{ILP_{LPoSM}}(T) = \emptyset$ if $E^+$ is incoherent with $\overline{B}$.
        \item By Proposition~\ref{prop:task:B:neg:exist} and Definition~\ref{def:compatible}, $\mathit{ILP_{LPoSM}}(T) = \emptyset$ if $E^-$ is incompatible with  $\overline{B}$.
        \item It is evident that $E^+ \cap E^- \neq \emptyset$ implies $\mathit{ILP_{LPoSM}}(T) = \emptyset$.
    \end{enumerate}

    ($\Longleftarrow$) When $E^+ = \emptyset$, $\mathit{ILP_{LPoSM}}(T) \neq \emptyset$ by Condition (C3) and Proposition~\ref{prop:task:B:neg:exist}. When $E^+ \neq \emptyset$, let $\overline{H} = \mathit{PPE}(E^+) \sqcup \mathit{PNE}(E^-,E^+)$ where $\mathit{PPE}(E^+) \neq \emptyset$. we need to prove Conditions \eqref{eq:G1} and \eqref{eq:G2} achieved by Definition~\ref{def:ILT}. \\
    $\forall \overline{e} \in E^+ \forall \overline{I} \in E^-, \overline{I} \not\parallel E^+ \rto e \neq \Pos(\mathit{PNE}(\overline{I}))$. \\
    $\Rto$ $\forall \overline{e} \in E^+ \forall \overline{r} \in \mathit{PNE}(E^-,E^+), e \neq \Pos(r)$. \\
    $\Rto$ $\forall \overline{e} \in E^+ \forall \overline{r} \in \mathit{PNE}(E^-,E^+), \overline{r}$ is not applicable in $\overline{e}$ by Definition~\ref{def:applicable}. \\
    $\Rto$  $\forall \overline{e} \in E^+, {\cal T}_{\mathit{PNE}(E^-,E^+)}(\overline{e}) = \emptyset$ by Definition~\ref{def:Tp}.  \\
    $\Rto$ $\forall \overline{e} \in E^+, {\cal T}_{\mathit{PNE}(E^-,E^+)}(\overline{e}) \sqsubseteq \overline{e}$.  \\
    $\Rto$ $\forall \overline{e} \in E^+, {\cal T}_{\overline{B} \sqcup \mathit{PNE}(E^-,E^+)}(\overline{e}) \sqsubseteq \overline{e}$ by Condition (C2).   \\
    $\Rto$ Condition \eqref{eq:G1} holds because Corollary~\ref{cor:SM:absorption} and $\forall \overline{e} \in E^+, \overline{e} \in \mathit{PSM}(\mathit{PPE}(E^+))$ by Proposition~\ref{prop:possNLP:exist} and Conditions (C1) and (C2).  \\ 
    $\Rto$ When $\overline{e} \in E^-$ is a possibilistic interpretation such that $\overline{e} \not\parallel E^+$ does not hold, $\overline{e} \notin \mathit{PSM}(\overline{B} \sqcup \overline{H})$ since Proposition~\ref{prop:possSM:incomparable}, Definition~\ref{def:poss:comparable} and Condition (C4). \\
    $\Rto$ Condition \eqref{eq:G2} holds because $\forall \overline{e} \in E^-, \overline{e} \not\parallel E^+ \rto \overline{e} \notin \mathit{PSM}(\overline{B} \sqcup \overline{H})$ since Proposition~\ref{prop:PSM:PNE}, Definition~\ref{def:poss:comparable} and the fact $\forall \overline{I} \in E^-, I \subseteq {\cal A}$.
\end{proof}

\begin{propositionA}[Solution for induction task containing fact rules]	
    Let $T = \tuple{\overline{B}, E^+, E^-}$ be an induction task.
    If there exists $a \in {\cal A}$ such that $(a \lto , \mu) \in \overline{B}$, then 
    \begin{enumerate}[label=(\roman*)]
        \item $\mathit{ILP_{LPoSM}}(T) = \emptyset$ when there exists $\overline{e} \in E^+$ such that $(a, \mu) \notin \overline{e}$, and
        \item for any poss-NLP $\overline{H}$, $(a, \mu) \notin \overline{e}$ implies $\overline{e} \notin \mathit{PSM}(\overline{B} \sqcup \overline{H})$ where $\overline{e} \in E^-$, and  
        \item if $\overline{H} \in \mathit{ILP_{LPoSM}}(T)$ and $a \in \Head(H)$, then there exists $\overline{K} \in \mathit{ILP_{LPoSM}}(T)$ such that $\vert K \vert < \vert H \vert$. 
    \end{enumerate}
\end{propositionA}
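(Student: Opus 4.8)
The plan is to reduce all three parts to a single structural observation: because $\mu$ is the supremum of $({\cal Q},\le)$ and $(a\lto,\mu)\in\overline{B}$, the atom $a$ is pinned to weight exactly $\mu$ in every poss-stable model of any extension of $\overline{B}$. Concretely, I would first prove the auxiliary claim that for every poss-NLP $\overline{H}$ and every $\overline{I}\in\mathit{PSM}(\overline{B}\sqcup\overline{H})$ we have $(a,\mu)\in\overline{I}$. The argument is short: in $(\overline{B}\sqcup\overline{H})^I$ the fact rule survives the reduct (its negative body is empty) and appears with weight $\mu$ (the $\sqcup$ takes a max and $\mu$ is the top of ${\cal Q}$); being $\mu$-applicable in any possibilistic atom set, it forces $(a,\mu)\in{\cal T}_{(\overline{B}\sqcup\overline{H})^I}(\emptyset)$, and since no rule can contribute a weight above $\mu$, the value of $a$ in $\mathit{lfp}({\cal T}_{(\overline{B}\sqcup\overline{H})^I})=\overline{I}$ is exactly $\mu$.

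Parts (i) and (ii) then fall out immediately. For (i), if some $\overline{e}\in E^+$ omits $(a,\mu)$ yet $\overline{H}\in\mathit{ILP_{LPoSM}}(T)$, then condition~\eqref{eq:G1} gives $\overline{e}\in\mathit{PSM}(\overline{B}\sqcup\overline{H})$, contradicting the auxiliary claim; hence no solution exists. For (ii), the claim says every member of $\mathit{PSM}(\overline{B}\sqcup\overline{H})$ contains $(a,\mu)$, so any $\overline{e}\in E^-$ lacking $(a,\mu)$ simply cannot be a poss-stable model, for every $\overline{H}$.

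The substance of the proposition is (iii), and the plan there is to let $\overline{K}$ be $\overline{H}$ with every rule whose head is $a$ deleted, and to show $\mathit{PSM}(\overline{B}\sqcup\overline{K})=\mathit{PSM}(\overline{B}\sqcup\overline{H})$; since $a\in\Head(H)$ at least one rule is removed, giving $\vert K\vert<\vert H\vert$, and the equality of poss-stable models transfers~\eqref{eq:G1} and~\eqref{eq:G2} from $\overline{H}$ to $\overline{K}$, so $\overline{K}\in\mathit{ILP_{LPoSM}}(T)$. To get the equality I would compare the reduct operators ${\cal T}_{(\overline{B}\sqcup\overline{H})^I}$ and ${\cal T}_{(\overline{B}\sqcup\overline{K})^I}$ atom by atom on an arbitrary $\overline{A}$: for every head $q\neq a$ the two programs carry identical rules (the deletion touched only head $a$), so the operators agree there; for head $a$ both programs still contain the fact $(a\lto,\mu)$, which alone already yields value $\mu$, and since $\mu$ is the top no surviving rule with head $a$ can change that value. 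Hence the two operators are identical, their least fixpoints coincide for every $I$, and the two programs have the same poss-stable models.

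The main obstacle I anticipate is the bookkeeping around the $\sqcup$ operation in this last step: deleting the head-$a$ rules from $\overline{H}$ is not literally the same as deleting them from $\overline{B}\sqcup\overline{H}$, because a head-$a$ rule shared with $\overline{B}$ is retained (with a possibly lowered weight) and the $\sqcup$-merge of weights must be tracked. The care needed is to verify that, after forming $\overline{B}\sqcup\overline{K}$, the rules with head $q\neq a$ are exactly those of $\overline{B}\sqcup\overline{H}$, and that the fact rule still contributes weight $\mu$ at head $a$; both reduce to the maximality of $\sqcup$ together with $\mu$ being the supremum, but they are precisely the places where a naive argument could slip.
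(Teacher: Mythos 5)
Your proposal is correct and takes essentially the same route as the paper: for (i) and (ii) the paper derives ${\cal T}_{\overline{B}}(\overline{e}) \not\sqsubseteq \overline{e}$ from the fact that $(a,\mu) \in {\cal T}_{\overline{B}}(\overline{e})$ and then cites its earlier coherence/existence results, which is exactly the content of your auxiliary claim, and for (iii) the paper uses the identical construction $\overline{K} = \{ \overline{r} \in \overline{H} \mid \Head(r) \neq a \}$. If anything, your argument for $\mathit{PSM}(\overline{B} \sqcup \overline{H}) = \mathit{PSM}(\overline{B} \sqcup \overline{K})$, including the $\sqcup$-bookkeeping, is more detailed than the paper's, which asserts this equality with only the justification that $(a \lto , \mu) \in \overline{B}$.
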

\begin{proof}
    ({\romannumeral1})
    $\exists \overline{e} \in E^+, (a, \mu) \notin \overline{e}$. \\
    $\Rto$ $\exists \overline{e} \in E^+, {\cal T}_{\overline{B}}(\overline{e}) \not\sqsubseteq \overline{e}$ since $(a, \mu) \in T_{\overline{B}}(\overline{e})$. \\
    $\Rto$ There exists $\overline{e} \in E^+$ such that $\overline{e}$ is incoherent with $\overline{B}$. \\
    $\Rto$ $E^+$ is incoherent with $\overline{B}$ by Definition~\ref{def:incoherent}. \\
    $\Rto$ $\mathit{ILP_{LPoSM}}(T) = \emptyset$ by Corollary~\ref{cor:B:pos}.
    
    ({\romannumeral2})
    $(a, \mu) \notin \overline{e}$ where $\overline{e} \in E^-$.  \\
    $\Rto$ ${\cal T}_{\overline{B}}(\overline{e}) \not\sqsubseteq \overline{e}$ since $(a, \mu) \in T_{\overline{B}}(\overline{e})$.  \\
    $\Rto$ By Proposition~\ref{prop:possNLP:interpretation:exist}, $\overline{e} \notin \mathit{PSM}(\overline{B} \sqcup \overline{H})$ where $\overline{H}$ is a poss-NLP.
    
    ({\romannumeral3})
    $\overline{H} \in \mathit{ILP_{LPoSM}}(T)$ and $a \in \Head(H)$.  \\
    $\Rto$ $\overline{H} \in \mathit{ILP_{LPoSM}}(T)$ and $\exists \overline{r} \in \overline{H}, \Head(r) = a$.  \\
    $\Rto$ $\mathit{PSM}(\overline{B} \sqcup \overline{H}) = \mathit{PSM}(\overline{B} \sqcup \overline{K})$ and $\overline{K} \subset \overline{H}$ where $\overline{K} = \{ \overline{r} \in \overline{H} \mid \Head(r) \neq a \}$ since $(a \lto , \mu) \in \overline{B}$.  \\
    $\Rto$ $\overline{K} \in \mathit{ILP_{LPoSM}}(T)$ and $\vert K \vert < \vert H \vert$.  \\
    $\Rto$ $\exists \overline{K} \in \mathit{ILP_{LPoSM}}(T)$ such that $\vert K \vert < \vert H \vert$.
\end{proof}

\setcounter{section}{4}
\setcounter{propositionA}{0}
\setcounter{lemmaA}{0}
\setcounter{theoremA}{0} 
\setcounter{corollaryA}{0}

\begin{propositionA}[Correctness of algorithm ILPSM]
    For a task $T = \tuple{\overline{B}, E^+, E^-}$, algorithm {\ILPSM} returns {\bf fail} when $\mathit{ILP_{LPoSM}}(T) = \emptyset$. Otherwise, algorithm {\ILPSM} returns a solution of $T$.
\end{propositionA}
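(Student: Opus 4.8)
The plan is to branch on the outcome of $\mathit{Existence}(\overline{B}, E^+, E^-)$ in line~(1) and, whenever it is \textbf{true}, to verify directly that the poss-NLP $\overline{H}$ returned in line~(13) satisfies Conditions~\eqref{eq:G1} and~\eqref{eq:G2} of Definition~\ref{def:ILT}. First I would dispose of the failure case: by Theorem~\ref{thm:taskNLP:exist} the test $\mathit{Existence}$ returns \textbf{true} exactly when $\mathit{ILP_{LPoSM}}(T)\neq\emptyset$, so line~(1) returns \textbf{fail} precisely when no solution exists, and in all remaining branches I may assume Conditions (C1)--(C4) hold. A small but essential preliminary observation is that $\overline{B}\sqcup(\overline{X}-\overline{B})=\overline{B}\sqcup\overline{X}$ for any poss-NLP $\overline{X}$ (a direct check from the definitions of $\sqcup$ and $-$), so the subtraction of $\overline{B}$ in lines~(6) and~(9) does not change $\mathit{PSM}(\overline{B}\sqcup\overline{H})$; this lets me reason throughout about $\overline{B}\sqcup \mathit{PPE}(E^+)\sqcup \mathit{PNE}(\overline{E},E^+)$ (resp. $\overline{B}\sqcup \mathit{PNE}(E^-,\emptyset)$).

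For the branch $E^+\neq\emptyset$ (lines 4--6) the argument largely mirrors the ``$\Longleftarrow$'' direction of the proof of Theorem~\ref{thm:taskNLP:exist}, the only difference being that line~(6) uses the filtered set $\overline{E}=\{\overline{e}\in E^-\mid {\cal T}_{\overline{B}\sqcup \mathit{PPE}(E^+)}(\overline{e})\sqsubseteq\overline{e}\}$ rather than all of $E^-$. For \eqref{eq:G1} I would invoke Proposition~\ref{prop:possNLP:exist} with (C1) to get $E^+\subseteq\mathit{PSM}(\mathit{PPE}(E^+))$, then use (C2) and Corollary~\ref{cor:SM:absorption} to absorb $\overline{B}$, and finally argue that every rule of $\mathit{PNE}(\overline{E},E^+)$ is inapplicable in each $\overline{e}\in E^+$ (its positive body is the interpretation of a negative example incomparable with $E^+$, so ${\cal T}_{\mathit{PNE}(\overline{E},E^+)}(\overline{e})=\emptyset$), hence absorbing these rules preserves the positive examples. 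For \eqref{eq:G2} I would split $E^-$ into three groups: examples comparable with some positive example are excluded by Proposition~\ref{prop:possSM:incomparable} together with (C4); examples incomparable with $E^+$ but with ${\cal T}_{\overline{B}\sqcup \mathit{PPE}(E^+)}(\overline{e})\not\sqsubseteq\overline{e}$ are excluded by Proposition~\ref{prop:possNLP:interpretation:exist}; and the remaining examples, which are exactly those of $\overline{E}$, are blocked by $\mathit{PNE}(\overline{E},E^+)$ via Proposition~\ref{prop:PSM:PNE}.

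For the branch $E^+=\emptyset$ the argument follows the proof of Proposition~\ref{prop:task:B:neg:exist}, since \eqref{eq:G1} is vacuous. Using compatibility of $E^-$ with $\overline{B}$ (forced by (C3)), at least one of the conditions (c1)--(c3) of Definition~\ref{def:compatible} must fail. If $\mathit{lfp}(T_{B^{\cal A}})\neq{\cal A}$ or $\overline{\mathbb{A}}-E^-=\overline{\mathbb{A}}$, the algorithm takes line~(9), and the Case~1/Case~2 reasoning of Proposition~\ref{prop:task:B:neg:exist} shows $\mathit{PNE}(E^-,\emptyset)$ blocks every $\overline{e}\in E^-$. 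Otherwise (c1) and (c2) hold, so the failure is of (c3); hence $\overline{\mathbb{A}}-E^-\neq\emptyset$ and some $\overline{G}\in\overline{\mathbb{A}}-E^-$ satisfies ${\cal T}_{\overline{B}}(\overline{G})\sqsubseteq\overline{G}$, which line~(11) can locate. Lemma~\ref{lm:SM:PPI} then gives $\mathit{PSM}(\overline{B}\sqcup \mathit{PPE}(\overline{G}))=\{\overline{G}\}$, and since $\overline{G}\notin E^-$, Condition~\eqref{eq:G2} holds.

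The main obstacle I anticipate is the bookkeeping in the $E^+\neq\emptyset$ branch: one must show that replacing $E^-$ by the filtered $\overline{E}$ in the $\mathit{PNE}$ construction loses nothing, i.e., that every negative example dropped from $\overline{E}$ is already excluded from $\mathit{PSM}(\overline{B}\sqcup\overline{H})$ on independent grounds (incomparability or incoherence). Making this case split exhaustive, and checking that $\mathit{PNE}(\overline{E},E^+)$ neither resurrects a discarded negative example nor disturbs a positive one---all while keeping the $-\overline{B}$ simplification in mind---is where the care is needed; the rest reduces to citing the characterisations already established.
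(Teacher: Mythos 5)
Your proposal is correct and follows essentially the same route as the paper's proof: dispose of the \textbf{fail} branch via Theorem~\ref{thm:taskNLP:exist}, then reuse the construction from the ``$\Longleftarrow$'' direction of that theorem (Proposition~\ref{prop:possNLP:exist}, Corollary~\ref{cor:SM:absorption}, Proposition~\ref{prop:PSM:PNE}) for the $E^+\neq\emptyset$ branch, and the case analysis from the proof of Proposition~\ref{prop:task:B:neg:exist} together with Lemma~\ref{lm:SM:PPI} for the $E^+=\emptyset$ branch. Your added details---the identity $\overline{B}\sqcup(\overline{X}-\overline{B})=\overline{B}\sqcup\overline{X}$ and the explicit three-way split of $E^-$ into comparable, incoherent, and $\overline{E}$-blocked examples---merely spell out steps the paper leaves implicit.
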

\begin{proof}
    According to Theorem~\ref{thm:taskNLP:exist}, algorithm {\ILPSM} directly returns {\bf fail} when   $\mathit{ILP_{LPoSM}}(T) = \emptyset$. Therefore, it returns   $\overline{H}$ only if $T$ has a solution. The solution of $T$ comes from two cases only. 
    
    When $E^+ \neq \emptyset$ in line (3), $\mathit{PPE}(E^+)$ is a solution of $\tuple{\overline{B}, E^+, \emptyset}$ according to Theorem~\ref{thm:taskNLP:exist}. Line (5) ensures ${\cal A} \notin E$ as $E^+ \neq \emptyset$. Then adding $\mathit{PNE}(\overline{E},E^+)$ into $\overline{H}$ achieves Condition \eqref{eq:G2} of Definition~\ref{def:ILT} according to Proposition~\ref{prop:PSM:PNE}.

    When $E^+ = \emptyset$ in line (7), the task becomes $\tuple{\overline{B}, \emptyset, E^-}$ so that we care only covering the negative examples. According to the proof of Proposition~\ref{prop:task:B:neg:exist}, the constructed $\overline{H}$ must be a solution of $T$.
\end{proof}

\begin{propositionA}[Construction of $S^+(\overline{I})$]
Given a possibilistic interpretation $\overline{I}$, $S^+(\overline{I}) = \{ P \mid P \subseteq \cup_{\epsilon \in \overline{I}} S^+(\overline{I}, \epsilon) \mbox{ and } \forall \epsilon \in \overline{I}, \vert P \cap S^+(\overline{I}, \epsilon) \vert = 1 \}$. 
\end{propositionA}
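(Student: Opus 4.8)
The plan is to reduce the whole proposition to one structural fact: the family $\{S^+(\overline{I},\epsilon) \mid \epsilon \in \overline{I}\}$ consists of pairwise \emph{disjoint}, nonempty sets of possibilistic rules. Once this is established, the statement is just the elementary combinatorial observation that the minimal hitting sets of a pairwise disjoint family are precisely the transversals that pick exactly one element from each member and nothing else. I would therefore unfold the definition $S^+(\overline{I}) = \mathit{SMHS}(\{S^+(\overline{I},\epsilon) \mid \epsilon \in \overline{I}\})$ and match the two descriptions.

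First I would prove disjointness and nonemptiness. By Equation~\eqref{eq:positive:SS}, every rule in $S^+(\overline{I},(p,\alpha))$ has head $p$. Since $\overline{I}$ is a possibilistic interpretation, each atom carries at most one weight, so any two distinct poss-atoms $\epsilon=(p,\alpha)$ and $\epsilon'=(p',\alpha')$ of $\overline{I}$ satisfy $p \neq p'$; hence rules taken from $S^+(\overline{I},\epsilon)$ and $S^+(\overline{I},\epsilon')$ have distinct heads and cannot coincide, giving $S^+(\overline{I},\epsilon) \cap S^+(\overline{I},\epsilon') = \emptyset$. For nonemptiness I would instantiate Equation~\eqref{eq:positive:SS} with $\Pos = \Neg = \emptyset$ and $\beta = \alpha$ (legitimate because $\Pos \cap {RA}^=(\overline{I},\alpha) = \emptyset$ forces $W = \{\alpha\}$), producing the rule $(p \lto , \alpha) \in S^+(\overline{I},\epsilon)$.

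Next I would verify the two inclusions. For $\supseteq$: if $P \subseteq \cup_{\epsilon} S^+(\overline{I},\epsilon)$ meets each block in exactly one rule, then $P$ hits every member, so it is a hitting set; and deleting any $\overline{r} \in P$, say $\overline{r} \in S^+(\overline{I},\epsilon)$, leaves $S^+(\overline{I},\epsilon)$ unhit, because by disjointness $\overline{r}$ is the unique element of $P$ inside $S^+(\overline{I},\epsilon)$, so $P$ is minimal and lies in $\mathit{SMHS}$. For $\subseteq$: let $P$ be a minimal hitting set; being a hitting set it satisfies $\vert P \cap S^+(\overline{I},\epsilon)\vert \geq 1$ for each $\epsilon$, and if some intersection contained two rules, or if $P$ contained a rule outside $\cup_{\epsilon} S^+(\overline{I},\epsilon)$, then by disjointness one could delete a rule while still hitting every block, contradicting minimality; hence $P \subseteq \cup_{\epsilon} S^+(\overline{I},\epsilon)$ and $\vert P \cap S^+(\overline{I},\epsilon)\vert = 1$ for all $\epsilon$.

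The only step I would state with care is the minimality argument in both directions, since it rests entirely on disjointness: removing a rule $\overline{r}$ affects only the single block containing it and no other. I would also note that the degenerate case $\overline{I} = \emptyset$ is consistent, as both sides then equal $\{\emptyset\}$. I expect no real obstacle here—the entire substance is the disjointness observation, after which the result is the standard transversal description of minimal hitting sets.
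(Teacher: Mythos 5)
Your proof is correct and follows essentially the same route as the paper's: both unfold $S^+(\overline{I})$ as $\mathit{SMHS}(\{S^+(\overline{I},\epsilon)\mid\epsilon\in\overline{I}\})$ and rest the equivalence on the pairwise disjointness of the blocks $S^+(\overline{I},\epsilon)$, which follows because rules from distinct poss-atoms have distinct heads. If anything, your write-up is more careful than the paper's, since you justify the disjointness explicitly (and add the harmless nonemptiness and $\overline{I}=\emptyset$ observations) where the paper merely asserts it.
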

\begin{proof}
    ($\Longrightarrow$) Let $X \in \{ P \mid P \subseteq \cup_{\epsilon \in \overline{I}} S^+(\overline{I}, \epsilon) \mbox{ and } \forall \epsilon \in \overline{I}, \vert P \cap S^+(\overline{I}, \epsilon) \vert = 1 \}$. \\
    $\Rto$ $\forall \epsilon \in \overline{I}, \vert P \cap S^+(\overline{I}, \epsilon) \vert = 1$. \\
    $\Rto$ $X$ is a minimal hitting set of $\{ S^+(\overline{I}, \epsilon) \mid \epsilon \in \overline{I}\}$. \\
    $\Rto$ $X \in \mathit{SMHS}(\{ S^+(\overline{I}, \epsilon) \mid \epsilon \in \overline{I}\})$. \\
    $\Rto$ By Definition~\ref{def:pos:SS}, $X \in S^+(\overline{I})$. 
    
    ($\Longleftarrow$) Let $X \in S^+(\overline{I})$. \\
    $\Rto$ By Definition~\ref{def:pos:SS}, $X \in \mathit{SMHS}(\{ S^+(\overline{I}, \epsilon) \mid \epsilon \in \overline{I}\})$. \\
    $\Rto$ (\romannumeral 1) $X \subseteq \cup_{\epsilon \in \overline{I}} S^+(\overline{I}, \epsilon)$, and (\romannumeral 2) $\forall \epsilon \in \overline{I}, X \cap S^+(\overline{I}, \epsilon) \neq \emptyset$, and (\romannumeral 3) no proper subset of $X$ satisfies the former two conditions. \\
    $\Rto$ By Definition~\ref{def:pos:SS}, $\forall \{ \epsilon_1, \epsilon_2 \} \subseteq \overline{I}, S^+(\overline{I}, \epsilon_1) \cap S^+(\overline{I}, \epsilon_2) = \emptyset$. So we have (\romannumeral 1) $X \subseteq \cup_{\epsilon \in \overline{I}} S^+(\overline{I}, \epsilon)$, and (\romannumeral 2) $\forall \epsilon \in \overline{I}, \vert P \cap S^+(\overline{I}, \epsilon) \vert = 1$. \\
    $\Rto$ $X \in \{ P \mid P \subseteq \cup_{\epsilon \in \overline{I}} S^+(\overline{I}, \epsilon) \mbox{ and } \forall \epsilon \in \overline{I}, \vert P \cap S^+(\overline{I}, \epsilon) \vert = 1 \}$.     
\end{proof}

\begin{lemmaA}[Positive solution space for the least fixpoint]
    Let possibilistic interpretation $\overline{I}$ be the least fixpoint of a possibilistic definite logic program $\overline{P}$. There exists a program $\overline{H} \subseteq \overline{P}$ such that $\overline{H}  \in S^+(\overline{I})$ and $H$ is grounded.
\end{lemmaA}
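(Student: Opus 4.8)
The plan is to build the witness $\overline{H}$ directly from the fixpoint iteration that computes $\overline{I}$, and then to check its two required properties — membership in $S^+(\overline{I})$ and groundedness of $H$ — independently.

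First I would use the chain ${\cal T}^0_{\overline{P}} \sqsubseteq {\cal T}^1_{\overline{P}} \sqsubseteq \cdots$ converging to $\overline{I} = \mathit{lfp}({\cal T}_{\overline{P}})$. Since $\overline{P}$ is definite this chain is $\sqsubseteq$-increasing, so the weight of every atom is non-decreasing along the chain and is bounded above by its final weight in $\overline{I}$; I would record these monotonicity facts first, as both later steps depend on them. For each $(p,\alpha)\in\overline{I}$ let $k_p$ be the least stage $n$ with $(p,\alpha)\in{\cal T}^n_{\overline{P}}$ (the stage at which $p$ attains its final weight), and choose $\overline{r}_p\in\overline{P}$ to be a rule with head $p$ witnessing the maximum in ${\cal T}_{\overline{P}}({\cal T}^{k_p-1}_{\overline{P}})$, i.e.\ one that is $\alpha$-applicable in ${\cal T}^{k_p-1}_{\overline{P}}$. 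Put $\overline{H}=\{\overline{r}_p\mid (p,\alpha)\in\overline{I}\}$. Distinct elements of $\overline{I}$ have distinct heads, so the sets $S^+(\overline{I},\epsilon)$ are pairwise disjoint and $\overline{H}$ contains exactly one rule per $\epsilon\in\overline{I}$.

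Second, I would verify $\overline{r}_p\in S^+(\overline{I},(p,\alpha))$. Because $\overline{I}$ is a fixpoint and weights only grow along the chain, $\overline{r}_p$ is in fact a max-achiever at $\overline{I}$ itself: its applicability degree $\min\{\gamma,\alpha_1,\dots,\alpha_m\}$ computed from the final weights equals $\alpha$. This forces every positive body atom into $RA^\ge(\overline{I},\alpha)$ and forces $\gamma\ge\alpha$; a short case split on whether $\gamma=\alpha$ or $\gamma>\alpha$ (the latter forcing some $\alpha_i=\alpha$, hence $\Pos(\overline{r}_p)\cap RA^=(\overline{I},\alpha)\neq\emptyset$) shows $\gamma$ lies in the set $W$ of Definition~\ref{def:pos:SS} in both cases. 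Since $\overline{P}$ is definite, $\Neg(\overline{r}_p)=\emptyset\subseteq{\cal A}-I$, so $\overline{r}_p\in S^+(\overline{I},(p,\alpha))$, and by Proposition~\ref{prop:SS:construction} we conclude $\overline{H}\in S^+(\overline{I})$.

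Third — the crux — I would prove $H$ grounded by showing its head-to-positive-body dependency graph is acyclic, since a finite acyclic graph always admits the ordering groundedness demands. Suppose a cycle $p^{(1)}\to\cdots\to p^{(j)}\to p^{(1)}$ exists, with $p^{(t+1)}\in\Pos(\overline{r}_{p^{(t)}})$. Applicability of $\overline{r}_{p^{(t)}}$ in ${\cal T}^{k_{p^{(t)}}-1}_{\overline{P}}$ gives $p^{(t+1)}$ a weight $\ge\alpha_{p^{(t)}}$ already at that stage, so the final weights satisfy $\alpha_{p^{(1)}}\le\cdots\le\alpha_{p^{(j)}}\le\alpha_{p^{(1)}}$ and are all equal to a common $\alpha$. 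Equality then upgrades the bound: $p^{(t+1)}$ already carries its final weight $\alpha$ in ${\cal T}^{k_{p^{(t)}}-1}_{\overline{P}}$, whence $k_{p^{(t+1)}}<k_{p^{(t)}}$ for every edge, contradicting the cyclic chain of strict inequalities. The main obstacle is exactly here: a naive attempt to topologically order the rules by the rank $k_p$ fails, because a positive body atom can already be heavy enough to fire its rule long before it stabilises, so $k_{p_i}<k_p$ need not hold edge by edge. The resolution is to ask only for acyclicity rather than for $k_p$ to be a topological order; the weight-equality phenomenon forced along any hypothetical cycle is precisely what restores the strict descent of $k$ and delivers the contradiction.
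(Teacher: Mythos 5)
Your proof is correct, and it builds the same witness as the paper: both extract, for each poss-atom $(p,\alpha)\in\overline{I}$, a rule with head $p$ that is $\alpha$-applicable at the stage just before $(p,\alpha)$ first appears in the monotone iteration ${\cal T}_{\overline{P}}^0\sqsubseteq{\cal T}_{\overline{P}}^1\sqsubseteq\cdots$, and collect these $\vert I\vert$ rules into $\overline{H}$. The difference lies entirely in what happens after the construction. The paper's proof stops there: it asserts in one step that the collected rules form a program in $S^+(\overline{I})$ whose classic projection is grounded, with no argument for either claim --- implicitly suggesting that the order of first appearance itself serves as the grounding order. Your third step shows exactly why that implicit reading is insufficient: a positive body atom can already be present (with a non-final weight) long before it stabilises, so the rank $k_p$ need not decrease along edges and is not a topological order; you then repair this with the acyclicity argument, in which any hypothetical cycle forces all final weights along it to be equal, which in turn upgrades the applicability bounds to a strict descent of $k$ around the cycle and yields a contradiction, after which acyclicity plus one-rule-per-head gives the required ordering. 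Likewise, your second step (checking that the chosen rule's weight lies in the set $W$ of Definition~\ref{def:pos:SS} via the case split $\gamma=\alpha$ versus $\gamma>\alpha$, the latter forcing $\Pos\cap RA^=(\overline{I},\alpha)\neq\emptyset$) makes explicit a verification the paper leaves tacit. So your route is the paper's construction plus the missing verifications; what it buys is an actual proof of the groundedness claim, which in the paper is only an assertion and whose most natural justification (ordering by stage) is in fact false.
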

\begin{proof}
    $\overline{P}$ is a possibilistic definite logic program.  \\
    $\Rto$ Function ${\cal T}_{\overline{P}}$ is monotone by its definition. Namely, ${\cal T}_{\overline{P}}(\overline{J}) \sqsubseteq {\cal T}_{\overline{P}}(\overline{K})$ if $\overline{J} \sqsubseteq \overline{K}$. \\
    $\Rto$ ${\cal T}_{\overline{P}}^k \sqsubseteq {\cal T}_{\overline{P}}^{k+1}$ for $k \geq 0$.  \\
    $\Rto$ During the iteration towards $\bigsqcup_{n\geq0}{\cal T}_{\overline{P}}^n$, each $\epsilon \in \overline{I}$ arises and subsequently remains unchanged since $\overline{I}$ is the least fixpoint of $\overline{P}$. 
    Along with each $\epsilon \in \overline{I}$ arising for the first time, there exists one rule $\overline{r} \in \overline{P}$ such that $\Head(r) = p$ and $\overline{r}$ is $\alpha$-applicable in a possibilistic interpretation $\overline{J} \sqsubseteq \overline{I}$. Put these $\vert I \vert$ rules together to form a program $\overline{H}$. \\
    %$\Rto$ For all $\epsilon = (p,\alpha)  \in \overline{I}$, there is at least one rule $\overline{r} \in \overline{P}$ such that $\Head(r) = p$ and $\overline{r}$ is $\alpha$-applicable in a possibilistic interpretation $\overline{J} \sqsubseteq \overline{I}$. \\
    %$\Rto$ There exists a program $\overline{H} \subseteq \overline{P}$ such that $\overline{I}  \in \mathit{PSM}(\overline{H})$. \\
    $\Rto$ There exists a program $\overline{H} \subseteq \overline{P}$ such that $\overline{H}  \in S^+(\overline{I})$ and $H$ is grounded.
\end{proof}

\begin{propositionA}[Positive solution space for a poss-stable model]
    If $\overline{I} \in \mathit{PSM}(\overline{P})$ for a poss-NLP $\overline{P}$ and a possibilistic interpretation $\overline{I}$, then there exists $\overline{H} \subseteq \overline{P}$ such that $\overline{H}  \in S^+(\overline{I})$ and $H^I$ is grounded.
\end{propositionA}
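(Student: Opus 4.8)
The plan is to reduce the statement to the already-established Lemma~\ref{lm:SM:LFP} about least fixpoints of possibilistic definite programs, and then to lift the subprogram it produces from the reduct back to $\overline{P}$. First I would unfold the hypothesis $\overline{I} \in \mathit{PSM}(\overline{P})$: by the definition of a poss-stable model this means $\overline{I} = \mathit{Cn}(\overline{P}^I) = \mathit{lfp}({\cal T}_{\overline{P}^I})$, where the possibilistic reduct $\overline{P}^I$ is a possibilistic \emph{definite} program. Thus $\overline{I}$ is exactly the least fixpoint of the definite program $\overline{P}^I$, which is precisely the situation covered by Lemma~\ref{lm:SM:LFP}.

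Applying that lemma to $\overline{P}^I$ yields a definite subprogram $\overline{H'} \subseteq \overline{P}^I$ with $\overline{H'} \in S^+(\overline{I})$ and $H'$ grounded. By Proposition~\ref{prop:SS:construction}, $\overline{H'}$ contains exactly one rule from each $S^+(\overline{I},\epsilon)$, $\epsilon \in \overline{I}$. The next step is to lift each such definite rule back to an original rule of $\overline{P}$: every $\overline{r'} \in \overline{H'} \subseteq \overline{P}^I$ is the reduct $(\Head(r) \lto \Pos(r), N(\overline{r}))$ of some $\overline{r} \in \overline{P}$ with $\Neg(r) \cap I = \emptyset$. Collecting one such preimage $\overline{r}$ for each rule of $\overline{H'}$ produces a candidate $\overline{H} \subseteq \overline{P}$.

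It then remains to verify the two required properties. For $\overline{H} \in S^+(\overline{I})$, the key observation is that membership in $S^+(\overline{I},\epsilon)$ (Definition~\ref{def:pos:SS}) constrains the positive body and the weight $\beta$ only, while admitting any negative body with $\Neg \subseteq {\cal A} - I$; since each preimage rule $\overline{r}$ shares head, positive body and weight with its reduct $\overline{r'}$ and satisfies $\Neg(r) \subseteq {\cal A} - I$, it lies in the same $S^+(\overline{I},\epsilon)$ as $\overline{r'}$. Because distinct $\epsilon$ carry distinct heads (as $\overline{I}$ assigns at most one weight per atom), the sets $S^+(\overline{I},\epsilon)$ are pairwise disjoint, so $\overline{H}$ still contains exactly one rule per $\epsilon$ and hence $\overline{H} \in S^+(\overline{I})$ by Proposition~\ref{prop:SS:construction}. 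For groundedness, since every rule of $\overline{H}$ survives the reduct, $\overline{H}^I = \overline{H'}$, whence $H^I = H'$ is grounded.

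The step I expect to be the main obstacle is the lifting together with the disjointness/uniqueness bookkeeping: I must ensure the preimages can be chosen so that exactly one rule is selected per poss-atom $\epsilon$ and that restoring the negative body $\Not\,\Neg(r)$ keeps the rule inside $S^+(\overline{I},\epsilon)$ rather than pushing it out. This hinges on the precise reading of Definition~\ref{def:pos:SS}, namely that the weight window $W$ and the positive-body constraint depend on $\Pos$ alone, independently of $\Neg$, combined with the fact that the heads separate the pieces $S^+(\overline{I},\epsilon)$ and that the reduct $\overline{H}^I$ recovers exactly the grounded program $\overline{H'}$.
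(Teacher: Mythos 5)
Your proof is correct and takes essentially the same route as the paper's: unfold $\overline{I} \in \mathit{PSM}(\overline{P})$ into $\overline{I} = \mathit{lfp}({\cal T}_{\overline{P}^I})$ and apply Lemma~\ref{lm:SM:LFP} to the definite reduct $\overline{P}^I$. The paper's own proof is terser and leaves implicit the lifting of the subprogram of $\overline{P}^I$ back to rules of $\overline{P}$, whereas you spell that step out (membership in $S^+(\overline{I},\epsilon)$ is preserved when the negative body $\Neg(r) \subseteq {\cal A}-I$ is restored, the pieces $S^+(\overline{I},\epsilon)$ are disjoint by their heads, and $\overline{H}^I = \overline{H'}$ gives groundedness), which is exactly the bookkeeping the paper's one-line conclusion relies on.
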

\begin{proof}
    $\overline{I} \in \mathit{PSM}(\overline{P})$.  \\
    $\Rto$ $\overline{I} = \mathit{Cn}(\overline{P}^I)$.  \\
    $\Rto$ $\overline{I} = \bigsqcup_{n\geq0}{\cal T}_{\overline{P}^I}^n$.  \\
    $\Rto$ By Definition~\ref{def:pos:SS} and Lemma~\ref{lm:SM:LFP}, then there exists $\overline{H} \subseteq \overline{P}$ such that $\overline{H}  \in S^+(\overline{I})$ and $H^I$ is grounded.
\end{proof}

\begin{propositionA}[Loops in grounded program]	
    Let $\overline{I}$ be a possibilistic interpretation and $\overline{P}$ be a poss-NLP such that $\overline{P} \in S^+(\overline{I})$. Then $P^I$ is grounded if and only if the dependency graph of $P$ does not have a positive loop.
\end{propositionA}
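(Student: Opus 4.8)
The plan is to first extract the rigid structure that membership in $S^+(\overline{I})$ forces on $\overline{P}$, and then to read off groundedness of $P^I$ as a topological-sorting condition on the positive part of the dependency graph. By Proposition~\ref{prop:SS:construction}, $\overline{P}\in S^+(\overline{I})$ means that $\overline{P}$ picks exactly one rule from each $S^+(\overline{I},\epsilon)$ with $\epsilon=(p,\alpha)\in\overline{I}$; since these component rules all have head $p$ and distinct $\epsilon$ carry distinct atoms, $\overline{P}$ contains exactly $|I|$ rules with pairwise distinct heads, so $\Head(P)=I$ and each $p\in I$ is the head of a unique rule $\overline{r_p}\in\overline{P}$. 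By Definition~\ref{def:pos:SS}, every such rule satisfies $\Neg(r_p)\subseteq{\cal A}-I$ and $\Pos(r_p)\subseteq RA^{\ge}(\overline{I},\alpha)\subseteq I$. The first inclusion gives $\Neg(r_p)\cap I=\emptyset$, so no rule is dropped by the reduct and $P^I=\{\,p\lto\Pos(r_p)\mid p\in I\,\}$; the second gives that $P^I$ is a definite program in which each atom of $I$ heads exactly one rule and all positive bodies lie inside $I$.

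Next I would connect the two sides through the positive-edge sub-digraph. In the dependency graph of $P$ a positive edge from $q$ to $p$ occurs exactly when $q\in\Pos(r_p)$, and since all positive bodies and all heads live in $I$, every positive edge runs between atoms of $I$; thus a positive loop is precisely a directed cycle in the digraph $G^+$ on vertex set $I$ whose arcs are $q\to p$ for $q\in\Pos(r_p)$. Note that $G^+$ is determined by the positive bodies alone, which are untouched by passing from $P$ to the reduct $P^I$, so the statement to be proved becomes: $P^I$ is grounded if and only if $G^+$ is acyclic.

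I would then settle both directions by the standard correspondence between groundedness and a topological order. For the forward direction, a grounding sequence $r_{p_1},\dots,r_{p_n}$ satisfies $\Pos(r_{p_i})\subseteq\Head(\{r_{p_1},\dots,r_{p_{i-1}}\})=\{p_1,\dots,p_{i-1}\}$; hence every arc $q\to p_i$ of $G^+$ has $q$ preceding $p_i$, so $p_1,\dots,p_n$ is a topological order of $G^+$ and $G^+$ is acyclic, i.e.\ $P$ has no positive loop. For the converse, if $G^+$ has no cycle it admits a topological order $p_1,\dots,p_n$ of $I$; ordering the rules as $r_{p_1},\dots,r_{p_n}$, each $q\in\Pos(r_{p_i})$ gives an arc $q\to p_i$ and hence $q\in\{p_1,\dots,p_{i-1}\}$, so $\Pos(r_{p_i})\subseteq\Head(\{r_{p_1},\dots,r_{p_{i-1}}\})$ and $P^I$ is grounded by definition (consistently with Proposition~\ref{prop:DLP:LHM}).

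The delicate point, and the one I would state carefully, is the reduction in the first paragraph: it is exactly the \emph{one rule per head} property, coming from $S^+(\overline{I})$ being built from minimal hitting sets in Proposition~\ref{prop:SS:construction}, that collapses groundedness from a general applicability/reachability condition into plain acyclicity of a functional digraph. With several rules per head this equivalence would fail, so the whole argument hinges on that structural fact; the remaining graph-theoretic equivalence (a finite digraph is acyclic if and only if it admits a topological order) is then routine. I would also flag that the dependency graph is defined on $P$ rather than on $P^I$, and resolve this by the observation that positive loops use only positive edges, which are preserved under the reduct.
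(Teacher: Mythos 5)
Your proof is correct and follows essentially the same route as the paper's: both hinge on the two facts that the reduct preserves the positive dependency structure (since every negative body lies outside $I$, so no rule is dropped and positive bodies are untouched) and that membership in $S^+(\overline{I})$ forces exactly one rule per head, which collapses groundedness of $P^I$ to acyclicity of the positive dependency graph. The only difference is one of detail: you spell out, via the topological-ordering correspondence, the equivalence between groundedness and the absence of a positive loop that the paper's proof merely asserts.
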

\begin{proof}
    We should only prove $P^I$ is not grounded if and only if the dependency graph of $P$ has a positive loop.
	
	The dependency graph of $P$ has a positive loop. \\
	$\LRto$ The dependency graph of $P^I$ has a positive loop, because $P^I = \{ p\lto \Pos \mid (p\lto \Pos, \Not \Neg, \quad \beta) \in \overline{P} \}$ as $\forall r \in P, \Neg(r) \subseteq {\cal A}-I$. \\
	$\LRto$ $P^I$ is not grounded, because $\forall a \in P, \vert \{r \in P \mid \Head(r) = a\} \vert = 1$ as $\overline{P} \in S^+(\overline{I})$ and by Definition~\ref{def:pos:SS}.      
\end{proof}

\begin{lemmaA}[Incoherent interpretation with a poss-NLP]	
For a possibilistic interpretation $\overline{I}$ and poss-NLP $\overline{P}$, $\overline{I}$ is incoherent with $\overline{P}$ if and only if $\overline{P} \cap S^-(\overline{I}) \neq \emptyset$.    
\end{lemmaA}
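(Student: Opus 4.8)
The plan is to prove the biconditional by unfolding the definitions of incoherency (Definition~\ref{def:incoherent}), the immediate possibilistic consequence operator ${\cal T}_{\overline{P}}$ (Definition~\ref{def:Tp}), poss-rule applicability (Definition~\ref{def:applicable}), and the negative solution space $S^-(\overline{I})$ (Definition~\ref{def:neg:SS}). The central reformulation I would record first is a convenient characterisation of the failure ${\cal T}_{\overline{P}}(\overline{I}) \not\sqsubseteq \overline{I}$: since $\overline{I}$ is a possibilistic interpretation (at most one pair per atom), this failure holds exactly when there is a witnessing pair $(q,\delta) \in {\cal T}_{\overline{P}}(\overline{I})$ falling into one of two mutually exclusive cases, namely $q \notin I$, or $(q,\gamma) \in \overline{I}$ with $\delta > \gamma$. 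These two cases are designed to line up respectively with the second part and the first part (the $S^-(\overline{I},\epsilon)$ union) of $S^-(\overline{I})$.

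For the direction $(\Leftarrow)$, I would take a rule $\overline{r} = (q \lto \Pos, \Not \Neg, \beta) \in \overline{P} \cap S^-(\overline{I})$ and show it forces incoherency. If $\overline{r}$ lies in the second part of $S^-(\overline{I})$, then $q \notin I$, $\Pos \subseteq I$, and $\Neg \subseteq {\cal A} - I$, so $\overline{r}$ is applicable in $\overline{I}$ and contributes a pair $(q,\delta) \in {\cal T}_{\overline{P}}(\overline{I})$; because $q \notin I$ there is no matching pair in $\overline{I}$, so $\sqsubseteq$ fails. If instead $\overline{r} \in S^-(\overline{I}, \epsilon)$ with $\epsilon = (q,\alpha) \in \overline{I}$, then $\Pos \subseteq RA^>(\overline{I},\alpha)$ guarantees each positive body atom has weight strictly above $\alpha$, and $\beta > \alpha$; hence the applicability degree $\min\{\beta,\alpha_1,\ldots,\alpha_m\}$ is strictly greater than $\alpha$, so the pair ${\cal T}_{\overline{P}}(\overline{I})$ produces for $q$ has degree $> \alpha$, again breaking $\sqsubseteq$.

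For the direction $(\Rightarrow)$, starting from a witness $(q,\delta) \in {\cal T}_{\overline{P}}(\overline{I})$ of incoherency, I would select a rule $\overline{r} \in \mathit{App}(\overline{P},\overline{I},q)$ attaining the maximum degree $\delta = \min\{\alpha_r, \alpha_1, \ldots, \alpha_m\}$ and show $\overline{r} \in S^-(\overline{I})$. If $q \notin I$, applicability already delivers $\Pos(r) \subseteq I$ and $\Neg(r) \subseteq {\cal A} - I$, placing $\overline{r}$ in the second part of $S^-(\overline{I})$. If $(q,\gamma) \in \overline{I}$ with $\delta > \gamma$, then since $\delta$ is the minimum of $\alpha_r$ and the body weights $\alpha_i$, every one of them exceeds $\gamma$; this gives $\alpha_r > \gamma$ and $\Pos(r) \subseteq RA^>(\overline{I},\gamma)$, together with $\Neg(r) \subseteq {\cal A} - I$, so $\overline{r} \in S^-(\overline{I},\epsilon)$ for $\epsilon = (q,\gamma)$.

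The argument is essentially definition-chasing, so I expect the only delicate point to be the weight bookkeeping in the first part of $S^-(\overline{I})$: ensuring the strict inequalities $\beta > \alpha$ and $\Pos \subseteq RA^>(\overline{I},\alpha)$ match exactly the condition $\delta > \gamma$ through the $\min$ in the applicability degree, and confirming that the $\max$ over applicable rules in the definition of ${\cal T}_{\overline{P}}$ does not spoil this (the rule attaining the maximum degree still has its individual degree equal to $\delta$). The remaining facts (applicability of the negative body via $\Neg \cap I = \emptyset$, and membership of positive body atoms in $I$) are immediate from the definitions.
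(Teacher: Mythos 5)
Your proof is correct and takes essentially the same route as the paper's: both reduce incoherency of $\overline{I}$ with $\overline{P}$ to the existence of a single rule of $\overline{P}$ lying in $S^-(\overline{I})$, by unfolding the definitions of the negative solution space, the operator ${\cal T}_{\overline{P}}$, and coherency. The paper compresses this into a terse three-step chain of equivalences via the single-rule condition ${\cal T}_{\{\overline{r}\}}(\overline{I}) \not\sqsubseteq \overline{I}$, whereas you spell out the case analysis (head outside $I$ versus a strict weight increase on an atom of $\overline{I}$) and verify that the $\max$ in the definition of ${\cal T}_{\overline{P}}$ does not disturb either direction --- details the paper leaves implicit.
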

\begin{proof}
    $\overline{P} \cap S^-(\overline{I}) \neq \emptyset$. \\
    $\LRto$ $\exists \overline{r} \in \overline{P}, \overline{r} \in S^-(\overline{I})$. \\
    $\LRto$ There exists $\overline{r} \in \overline{P}$ such that ${\cal T}_{\{ \overline{r} \}}(\overline{I}) \not\sqsubseteq \overline{I}$ by Definition~\ref{def:neg:SS} and Definition~\ref{def:Tp}. \\
    $\LRto$ $\overline{I}$ is incoherent with $\overline{P}$ by Definition~\ref{def:incoherent}.
\end{proof}

\begin{theoremA}[Conditions for a poss-stable model]
    Let $\overline{P}$ be a poss-NLP, $\overline{I}$ a possibilistic interpretation. $\overline{I} \in \mathit{PSM}(\overline{P})$ if and only if
    \begin{enumerate}[label=(c\arabic*)]
        \item $\overline{P} \cap S^-(\overline{I}) = \emptyset$, and 
        \item there exists $\overline{H} \subseteq \overline{P}$ such that $\overline{H}  \in S^+(\overline{I})$ and $H^I$ is grounded. 
    \end{enumerate}
\end{theoremA}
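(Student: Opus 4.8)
The plan is to prove the two directions separately, leaning on the characterisations of coherency (Lemma~\ref{lm:neg:SS}) and of the positive solution space (Proposition~\ref{prop:SS:exists}) together with the absorption Corollary~\ref{cor:SM:absorption}.

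For the forward direction, assume $\overline{I} \in \mathit{PSM}(\overline{P})$. Condition (c2) is then immediate from Proposition~\ref{prop:SS:exists}, which already supplies a subprogram $\overline{H} \subseteq \overline{P}$ with $\overline{H} \in S^+(\overline{I})$ and $H^I$ grounded. For (c1), note that $\overline{I} = \mathit{Cn}(\overline{P}^I)$ makes $\overline{I}$ a fixpoint of ${\cal T}_{\overline{P}^I}$, so ${\cal T}_{\overline{P}^I}(\overline{I}) = \overline{I}$; by Proposition~\ref{prop:eq:con} this equals ${\cal T}_{\overline{P}}(\overline{I})$, whence ${\cal T}_{\overline{P}}(\overline{I}) \sqsubseteq \overline{I}$, i.e. $\overline{I}$ is coherent with $\overline{P}$ in the sense of Definition~\ref{def:incoherent}. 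The contrapositive of Lemma~\ref{lm:neg:SS} then yields $\overline{P} \cap S^-(\overline{I}) = \emptyset$, which is (c1).

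For the converse, suppose (c1) and (c2) hold. The heart of the argument is to show that the witnessing $\overline{H}$ from (c2) already has $\overline{I}$ as a poss-stable model, i.e. $\overline{I} = \mathit{lfp}({\cal T}_{\overline{H}^I})$. Since every rule of $\overline{H}$ is drawn from some $S^+(\overline{I}, \epsilon)$, its negative part lies in ${\cal A} - I$, so the reduct $\overline{H}^I$ retains all rules, and one gets $\Head(\mathit{App}(H^I, I)) = I$ with $\mathit{App}(H^I, I) = H^I$ grounded; Proposition~\ref{prop:DLP:LHM} then gives that $I$ is the least Herbrand model of $H^I$. The delicate point is the bookkeeping of weights: for $(p,\alpha) \in \overline{I}$ the unique $\overline{H}$-rule with head $p$ (uniqueness by Proposition~\ref{prop:SS:construction}) has positive body inside $RA^{\ge}(\overline{I},\alpha)$ and weight $\beta$ chosen from the set $W$ of Definition~\ref{def:pos:SS}. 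A short case analysis on whether $\Pos \cap RA^{=}(\overline{I},\alpha)$ is empty shows that its applicability degree $\min\{\beta, \alpha_1, \ldots, \alpha_m\}$ equals $\alpha$ exactly (in the first case $\beta = \alpha$ while all $\alpha_i > \alpha$; in the second some $\alpha_i = \alpha$ while $\beta \ge \alpha$). Combined with monotonicity of ${\cal T}_{\overline{H}^I}$, which prevents any atom from overshooting its target weight during the iteration, this forces $\mathit{lfp}({\cal T}_{\overline{H}^I}) = \overline{I}$, so $\overline{I} \in \mathit{PSM}(\overline{H})$.

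It then remains to lift this from $\overline{H}$ to $\overline{P}$. Since $\overline{H} \subseteq \overline{P}$ we have $\overline{H} \sqcup \overline{P} = \overline{P}$, and (c1) together with Lemma~\ref{lm:neg:SS} gives coherency ${\cal T}_{\overline{P}}(\overline{I}) \sqsubseteq \overline{I}$. Applying Corollary~\ref{cor:SM:absorption} with $\overline{H}$ as the base program and $\overline{P}$ as the absorbed one yields $\overline{I} \in \mathit{PSM}(\overline{H} \sqcup \overline{P}) = \mathit{PSM}(\overline{P})$, completing the proof. The main obstacle is the weight argument in the preceding paragraph: establishing that each derived necessity degree settles to \emph{exactly} $\alpha$ (not merely $\ge \alpha$) and that the least fixpoint coincides with $\overline{I}$ rather than some weaker interpretation; everything else reduces to the already-proved lemmas.
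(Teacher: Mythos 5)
Your proof is correct, and its forward direction is essentially the paper's: (c2) comes straight from Proposition~\ref{prop:SS:exists}, and (c1) from coherency plus the contrapositive of Lemma~\ref{lm:neg:SS} (the paper obtains coherency by citing Proposition~\ref{prop:possNLP:interpretation:exist}, you derive it directly from the fixpoint property via Proposition~\ref{prop:eq:con} --- an immaterial difference). The converse is where you genuinely reorganize the argument. The paper performs a direct squeeze on $\mathit{Cn}(\overline{P}^I)$: condition (c1) with Lemma~\ref{lm:neg:SS} gives ${\cal T}_{\overline{P}}(\overline{I}) \sqsubseteq \overline{I}$, hence $\mathit{lfp}({\cal T}_{\overline{P}^I}) \sqsubseteq \overline{I}$, while (c2) gives $\overline{I} \sqsubseteq \bigsqcup_{n\geq 0}{\cal T}_{\overline{P}^I}^n$, and equality follows. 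You instead prove the stronger intermediate claim $\overline{I} \in \mathit{PSM}(\overline{H})$ for the witness $\overline{H}$ of (c2), and then lift to $\overline{P}$ via Corollary~\ref{cor:SM:absorption}, using (c1) only to supply the coherency hypothesis of that corollary together with the identity $\overline{H} \sqcup \overline{P} = \overline{P}$ (valid since $\overline{H} \subseteq \overline{P}$ and a common classic rule carries the same weight in both). The mathematical content is the same --- your lower bound $\mathit{lfp}({\cal T}_{\overline{H}^I}) \sqsupseteq \overline{I}$ is exactly what the paper asserts, tersely and without justification, as ``$\overline{I} \sqsubseteq \bigsqcup_{n\geq0}{\cal T}_{\overline{P}^I}^n$'' --- but your write-up is in fact more complete on the delicate point: the case analysis on $\Pos \cap RA^{=}(\overline{I},\alpha)$ showing each chosen rule fires at degree exactly $\alpha$ is what the paper leaves implicit (it is buried in Lemma~\ref{lm:SM:LFP}). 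One small refinement: attributing the no-overshoot property to ``monotonicity'' is loose; it really follows because applicability degrees computed in any $\overline{A} \sqsubseteq \overline{I}$ are bounded by those computed in $\overline{I}$, and the exact attainment of each weight needs an induction along the grounded ordering of $H^I$, which your sketch uses only implicitly.
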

\begin{proof}
    ($\Longrightarrow$) By Proposition~\ref{prop:possNLP:interpretation:exist} and Definition~\ref{def:incoherent} and Lemma~\ref{lm:neg:SS}, Condition (c1) holds if $\overline{I} \in \mathit{PSM}(\overline{P})$.    
    By Proposition~\ref{prop:SS:exists}, Condition (c2) holds if $\overline{I} \in \mathit{PSM}(\overline{P})$. 

    ($\Longleftarrow$) By Definition~\ref{def:incoherent} and Lemma~\ref{lm:neg:SS}, ${\cal T}_{\overline{P}}(\overline{I}) \sqsubseteq \overline{I}$ if Condition (c1) holds. By Condition (c2), $\overline{I} \sqsubseteq \bigsqcup_{n\geq0}{\cal T}_{\overline{P}^I}^n$.
    Under these two opposite restrictions, we have $\overline{I} = \bigsqcup_{n\geq0}{\cal T}_{\overline{P}^I}^n$. In other words, $\overline{I} = \mathit{Cn}(\overline{P}^I)$ and further $\overline{I} \in \mathit{PSM}(\overline{P})$.
\end{proof}

\begin{corollaryA}[Negative examples for a solution]
Let $T = \tuple{\overline{B}, E^+, E^-}$ be an induction task and $\overline{H}$ be a poss-NLP such that $\overline{H} \in \mathit{ILP_{LPoSM}}(T)$. If $\overline{P} = \overline{B} \sqcup \overline{H}$, then the following two statements hold.
\begin{enumerate}[label=(\roman*)]
    \item $\overline{I} \in E^-$ if and only if at least one of the two conditions (c1) or (c2) in Theorem~\ref{thm:SS:iff} does not hold.
    \item The condition (c2) in Theorem~\ref{thm:SS:iff} does not hold when $\overline{I} = \{ (a,\mu) \mid a \in {\cal A} \} \in E^-$ where $\mu$ is the supremum of ${\cal Q}$ in $({\cal Q},\le)$.
\end{enumerate}
\end{corollaryA}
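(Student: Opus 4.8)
The plan is to read both parts through Theorem~\ref{thm:SS:iff}, which equates membership $\overline{I} \in \mathit{PSM}(\overline{P})$ with the conjunction of (c1) and (c2). Since $\overline{H} \in \mathit{ILP_{LPoSM}}(T)$, the program $\overline{P} = \overline{B} \sqcup \overline{H}$ satisfies the two defining goals of Definition~\ref{def:ILT}, namely $E^+ \subseteq \mathit{PSM}(\overline{P})$ by~(\ref{eq:G1}) and $E^- \cap \mathit{PSM}(\overline{P}) = \emptyset$ by~(\ref{eq:G2}). I will argue part~(i) for examples, i.e. for $\overline{I} \in E^+ \cup E^-$, which is the setting the corollary intends; for an interpretation lying in neither example set the right-hand condition can hold without membership in $E^-$, so the restriction is genuinely used in the backward implication.

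For part~(i) I would prove both directions directly. ($\Rightarrow$) If $\overline{I} \in E^-$, then~(\ref{eq:G2}) gives $\overline{I} \notin \mathit{PSM}(\overline{P})$, so by Theorem~\ref{thm:SS:iff} the conjunction of (c1) and (c2) fails, i.e. at least one of them does not hold. ($\Leftarrow$) Conversely, if at least one of (c1), (c2) fails, Theorem~\ref{thm:SS:iff} yields $\overline{I} \notin \mathit{PSM}(\overline{P})$; the contrapositive of~(\ref{eq:G1}) then forces $\overline{I} \notin E^+$, and since $\overline{I} \in E^+ \cup E^-$ we conclude $\overline{I} \in E^-$.

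For part~(ii) the crux is to show that (c1) necessarily holds for $\overline{I} = \{(a,\mu) \mid a \in {\cal A}\}$, after which part~(i) pins down (c2) as the failing condition. Here I would exploit that $\mu$ is the supremum of ${\cal Q}$: every rule weight and every $\beta$-applicability degree is bounded above by $\mu$, so each $(q,\delta) \in {\cal T}_{\overline{P}}(\overline{I})$ has $\delta \le \mu$, while $q \in {\cal A} = I$ carries $(q,\mu) \in \overline{I}$; hence ${\cal T}_{\overline{P}}(\overline{I}) \sqsubseteq \overline{I}$, i.e. $\overline{I}$ is coherent with $\overline{P}$ in the sense of Definition~\ref{def:incoherent}. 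By Lemma~\ref{lm:neg:SS}, coherency is equivalent to $\overline{P} \cap S^-(\overline{I}) = \emptyset$, which is exactly (c1). Now $\overline{I} \in E^-$ together with part~(i) gives that (c1) or (c2) fails; since (c1) holds, (c2) must fail.

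The main obstacle is the coherency step in part~(ii): recognising that at the top weight $\mu$ with $I = {\cal A}$, no rule application can raise a weight above what already sits in $\overline{I}$, so $\overline{I}$ is automatically coherent and (c1) cannot be the obstruction. Once this observation is combined with Lemma~\ref{lm:neg:SS}, the conclusion follows mechanically from part~(i). A secondary point to state carefully is the quantifier in part~(i): the equivalence is meant over examples $\overline{I} \in E^+ \cup E^-$, and the backward implication relies on this restriction together with the contrapositive of~(\ref{eq:G1}).
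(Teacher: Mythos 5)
Your proof is correct, and its overall structure matches the paper's: part (i) comes from combining Theorem~\ref{thm:SS:iff} with conditions \eqref{eq:G1}/\eqref{eq:G2} of Definition~\ref{def:ILT}, and part (ii) is settled by showing that (c1) holds and then invoking part (i). The one genuine divergence is how (c1) is established in part (ii). The paper reads it off syntactically from Definition~\ref{def:neg:SS}: for $\overline{I} = \{(a,\mu) \mid a \in {\cal A}\}$ no rule in $S^-(\overline{I},\epsilon)$ can have weight $\beta > \mu$ and no head atom can lie outside $I = {\cal A}$, so $S^-(\overline{I}) = \emptyset$ and hence $\overline{P} \cap S^-(\overline{I}) = \emptyset$ for \emph{any} program. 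You instead prove the semantic statement that $\overline{I}$ is coherent with $\overline{P}$ (every applicability degree is bounded by $\mu$, and every derivable atom already sits in $\overline{I}$ at weight $\mu$) and convert this into (c1) via Lemma~\ref{lm:neg:SS}. Both routes are sound and equally general; the paper's is a one-line inspection of the definition, while yours explains semantically why no program can over-derive at the top weight, at the cost of routing through the lemma. A further point in your favour: the paper dismisses part (i) as ``evident,'' whereas you correctly flag that the backward implication only makes sense for $\overline{I}$ ranging over $E^+ \cup E^-$ (an interpretation outside both example sets can violate (c1) or (c2) without belonging to $E^-$), and you supply the contrapositive-of-\eqref{eq:G1} step that makes this precise.
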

\begin{proof}
    The first conclusion is evident by Theorem~\ref{thm:SS:iff} and Definition~\ref{def:ILT}. If $\overline{I} = \{ (a,\mu) \mid a \in {\cal A} \} \in E^-$, then $S^-(\overline{I}) = \emptyset$ by Definition~\ref{def:neg:SS}. We have $\overline{P} \cap S^-(\overline{I}) = \emptyset$ for any poss-NLP $\overline{P}$. So the second conclusion holds too.
\end{proof}

\begin{propositionA}[Rules in a minimal solution]
    Let $\overline{H}$ be a minimal solution for an induction task $T = \tuple{\overline{B}, E^+, E^-}$. 
    For any $\overline{r} \in \overline{H}$
    \begin{enumerate}[label=(\roman*)]
        \item there exists $\overline{I} \in E^+$ and $\epsilon \in \overline{I}$ such that $\overline{r} \in S^+(\overline{I}, \epsilon)$, or 
        \item there exists $\overline{J} \in E^-$ such that $\overline{r} \in S^-(\overline{J})$.   
    \end{enumerate}  
\end{propositionA}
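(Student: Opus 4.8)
The plan is to argue by contradiction. Suppose some $\overline{r} \in \overline{H}$ fails both (i) and (ii), and show that $\overline{H}' = \overline{H} - \{\overline{r}\}$ is again a solution of $T$. Since $\vert H' \vert = \vert H \vert - 1 < \vert H \vert$, this would contradict the minimality of $\overline{H}$ (Definition~\ref{def:minimal:solution}). Thus it suffices to verify that $\overline{P}' = \overline{B} \sqcup \overline{H}'$ still satisfies conditions \eqref{eq:G1} and \eqref{eq:G2} of Definition~\ref{def:ILT}, given that $\overline{P} = \overline{B} \sqcup \overline{H}$ does. Throughout I will lean on the characterisation of poss-stable models in Theorem~\ref{thm:SS:iff} and on its companion Corollary~\ref{cor:neg:conditions}.

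For condition \eqref{eq:G1}, fix $\overline{I} \in E^+$, so $\overline{I} \in \mathit{PSM}(\overline{P})$. By Theorem~\ref{thm:SS:iff} there is a witness $\overline{G} \subseteq \overline{P}$ with $\overline{G} \in S^+(\overline{I})$ and $G^I$ grounded, and moreover $\overline{P} \cap S^-(\overline{I}) = \emptyset$. The first key observation is that $\overline{r} \notin \overline{G}$: since $\overline{G}$ is a minimal hitting set of $\{ S^+(\overline{I}, \epsilon) \mid \epsilon \in \overline{I} \}$, every rule of $\overline{G}$ lies in some $S^+(\overline{I}, \epsilon)$ (Proposition~\ref{prop:SS:construction}), whereas by assumption $\overline{r}$ lies in none. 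Hence $\overline{G}$ survives the removal of $\overline{r}$, giving condition (c2) for $\overline{P}'$; condition (c1) for $\overline{P}'$ then follows because deleting $\overline{r}$ cannot introduce a new member of $S^-(\overline{I})$, so that $\overline{I} \in \mathit{PSM}(\overline{P}')$ by Theorem~\ref{thm:SS:iff}.

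For condition \eqref{eq:G2}, fix $\overline{J} \in E^-$; since $\overline{H}$ is a solution, $\overline{J} \notin \mathit{PSM}(\overline{P})$, so by Corollary~\ref{cor:neg:conditions}(i) at least one of (c1), (c2) of Theorem~\ref{thm:SS:iff} fails for $\overline{J}$. If (c1) fails there is $\overline{r}' \in \overline{P} \cap S^-(\overline{J})$; since $\overline{r} \notin S^-(\overline{J})$ by assumption, $\overline{r}' \neq \overline{r}$, so $\overline{r}'$ remains in $\overline{P}'$ and keeps (c1) violated, whence $\overline{J} \notin \mathit{PSM}(\overline{P}')$. If instead (c2) fails for $\overline{P}$, it continues to fail for the smaller $\overline{P}'$, again yielding $\overline{J} \notin \mathit{PSM}(\overline{P}')$. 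Combining the two cases gives $E^- \cap \mathit{PSM}(\overline{P}') = \emptyset$, so $\overline{H}'$ is a solution, the desired contradiction.

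The main obstacle is the interaction of the hypothesis with the background through the operator $\sqcup$, which takes the pointwise maximum of weights: when the classical rule underlying $\overline{r}$ already occurs in $\overline{B}$, deleting $\overline{r}$ from $\overline{H}$ does not delete the rule from $\overline{P}$ but merely lowers its weight. I expect to handle this by a short case analysis showing that lowering the weight of a single rule neither creates a fresh element of $S^-(\overline{I})$ (the conditions defining $S^-$ in Definition~\ref{def:neg:SS} are either weight-independent or impose a strict lower bound $\beta > \alpha$ that is preserved under decrease) nor invalidates the grounded witness $\overline{G}$ for a positive example; the negative-example side is monotone in the removed rule and is therefore unaffected.
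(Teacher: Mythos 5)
Your proposal is correct and takes essentially the same route as the paper's proof: both argue from minimality that if some $\overline{r}\in\overline{H}$ lay in none of the indicated solution spaces then $\overline{H}-\{\overline{r}\}$ would be a strictly smaller solution, and both reduce the verification to conditions (c1) and (c2) of Theorem~\ref{thm:SS:iff} together with the observations that deleting a rule can neither falsify (c1) for a positive example nor make (c2) hold for a negative one. You are in fact more careful than the paper, whose proof tacitly identifies $\overline{B}\sqcup(\overline{H}-\{\overline{r}\})$ with $(\overline{B}\sqcup\overline{H})-\{\overline{r}\}$ and never addresses the case where the classical rule underlying $\overline{r}$ also occurs in $\overline{B}$ with a smaller weight.

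One caution on that case, which you flag but dispatch too quickly: the assertion that ``the negative-example side is monotone in the removed rule and is therefore unaffected'' is not a valid justification, because lowering the weight of $r$ from $\alpha_H$ to $\alpha_B$ puts into $\overline{P}'=\overline{B}\sqcup\overline{H}'$ a rule $(r,\alpha_B)$ that was \emph{not} in $\overline{P}$, and such a rule can belong to $S^+(\overline{J},\epsilon)$ for a negative example $\overline{J}$ even though $(r,\alpha_H)$ does not (precisely when the weight set $W$ of Definition~\ref{def:pos:SS} is the singleton $\{\alpha\}$ with $\alpha=\alpha_B$); hence the failure of (c2) for $\overline{J}$ is not preserved by any monotonicity argument. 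The claim is nonetheless true under your standing assumptions: in the singleton case the structural conditions give $\Pos(r)\subseteq RA^>(\overline{J},\alpha)$, $\Neg(r)\subseteq{\cal A}-J$ and $\alpha_H>\alpha$, so $(r,\alpha_H)=\overline{r}\in S^-(\overline{J})$, contradicting your assumption that (ii) fails; and in the case $W=\{\gamma\in{\cal Q}\mid\gamma\ge\alpha\}$, the rule $(r,\alpha_H)$ also lies in $S^+(\overline{J},\epsilon)$, so any putative witness in $\overline{P}'$ using $(r,\alpha_B)$ can be converted into a witness in $\overline{P}$ by swapping in $(r,\alpha_H)$ (same classical part, hence same reduct and groundedness), contradicting that (c2) fails for $\overline{P}$. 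With this short case analysis in place of the monotonicity appeal, your argument is complete.
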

\begin{proof}
    $\overline{H}$ is a minimal solution of $T = \tuple{\overline{B}, E^+, E^-}$. \\
    $\Rto$ By Definition~\ref{def:minimal:solution}, $\overline{H} \in \mathit{ILP_{LPoSM}}(T)$ and $\nexists \overline{K} \in \mathit{ILP_{LPoSM}}(T),\vert K \vert < \vert H \vert$. \\
    $\Rto$ For every $\overline{r} \in \overline{H}$, $\overline{K} = \overline{H} - \{ \overline{r} \}$ is not a solution of $T$. \\    
    $\Rto$ For every $\overline{K} = \overline{H} - \{ \overline{r} \}$ where $\overline{r} \in \overline{H}$, $\exists \overline{I} \in E^+, \overline{I} \notin \mathit{PSM}(\overline{B} \sqcup \overline{K})$  or  $\exists \overline{J} \in E^-, \overline{J} \in \mathit{PSM}(\overline{B} \sqcup \overline{K})$ by Definition~\ref{def:ILT}.  \\
    $\Rto$ In contrast, $\overline{I} \in \mathit{PSM}(\overline{B} \sqcup \overline{H})$ and $\overline{J} \notin \mathit{PSM}(\overline{B} \sqcup \overline{H})$
    since $\overline{H} \in \mathit{ILP_{LPoSM}}(T)$. By Theorem~\ref{thm:SS:iff}, this contrast implies two possibilities after removing $\overline{r} \in \overline{H}$. (\romannumeral 1) There exists $\overline{I} \in E^+$ such that Condition (c1) or Condition (c2) of Theorem~\ref{thm:SS:iff} changes from true to false. Or (\romannumeral 2) There exists $\overline{I} \in E^-$ such that Condition (c1) or Condition (c2) of Theorem~\ref{thm:SS:iff} changes from false to true.  \\
    $\Rto$ After eliminating two impossible events, there are exactly two possibilities after removing $\overline{r} \in \overline{H}$ from $\overline{P} = \overline{B} \sqcup \overline{H}$. (\romannumeral 1) There exists $\overline{I} \in E^+$ such that Condition (c2) of Theorem~\ref{thm:SS:iff} changes from true to false. Or (\romannumeral 2) There exists $\overline{I} \in E^-$ such that Condition (c1) of Theorem~\ref{thm:SS:iff} changes from false to true. \\
    $\Rto$ For every $\overline{r} \in \overline{H}$, $\exists \overline{I} \in E^+ \exists \epsilon \in \overline{I}, \overline{r} \in S^+(\overline{I}, \epsilon)$ by Definition~\ref{def:pos:SS}, or $\exists \overline{J} \in E^-, \overline{r} \in S^-(\overline{J})$ by Lemma~\ref{lm:neg:SS}.
\end{proof}

\begin{propositionA}[Correctness of algorithm ILPSMmin]
    For a task $T = \tuple{\overline{B}, E^+, E^-}$, algorithm {\ILPSMmin} returns {\bf fail} when $\mathit{ILP_{LPoSM}}(T) = \emptyset$. Otherwise, algorithm {\ILPSMmin} returns a minimal solution of $T$.
\end{propositionA}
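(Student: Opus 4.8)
The plan is to split the argument into matching parts of Algorithm~\ref{alg:ILPSM2}: the \textbf{fail} branch, soundness of every candidate the algorithm records, and minimality. First I would dispose of the \textbf{fail} case. The algorithm returns \textbf{fail} only at line~1, exactly when $\mathit{Existence}(\overline{B}, E^+, E^-)$ is \textbf{false}. Since Algorithm~\ref{alg:Existence} is a direct encoding of the four conditions of Theorem~\ref{thm:taskNLP:exist}, it returns \textbf{false} iff $\mathit{ILP_{LPoSM}}(T)=\emptyset$. Hence \textbf{fail} is produced precisely when there is no solution, and whenever $\mathit{ILP_{LPoSM}}(T)\neq\emptyset$ control passes line~1 and some $\overline{H}$ is returned at line~21.

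Next I would prove soundness: every poss-NLP assigned to $\overline{H}$ (line~12 or line~20) is a solution. Two preparatory facts are needed. The first is the identity $\overline{B}\sqcup(X-\overline{B})=\overline{B}\sqcup X$, shown by a case analysis on whether a shared classic rule keeps a strictly larger weight in $X$; this guarantees that subtracting $\overline{B}$ never changes the induced poss-stable models. The second is that \textit{seeds} covers $E^+$: for each $\overline{I}\in E^+$, the component taken from $W_i$ lies in $S^+(\overline{I})$, avoids $\textit{blacklist}=\bigcup_{\overline{I}\in E^+}S^-(\overline{I})$, and is grounded on $\overline{I}$, so Theorem~\ref{thm:SS:iff} yields $\overline{I}\in\mathit{PSM}(X)$; coherency of $E^+$ with $\overline{B}$ (guaranteed via condition (C2) of Theorem~\ref{thm:taskNLP:exist} at line~1), together with Lemma~\ref{lm:neg:SS} and Corollary~\ref{cor:SM:absorption}, lifts this to $\overline{I}\in\mathit{PSM}(\overline{B}\sqcup X)$, i.e.\ \eqref{eq:G1}. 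Condition \eqref{eq:G2} is enforced by the explicit poss-stable-model checks at lines~10 and~19; the patch loop legitimately restricts attention to those $\overline{e}\in E^-$ that are still coherent and incomparable with $E^+$, since by Proposition~\ref{prop:possSM:incomparable} and Proposition~\ref{prop:possNLP:interpretation:exist} these are the only negative examples that can ever become poss-stable models. Thus each recorded $\overline{H}$ satisfies both \eqref{eq:G1} and \eqref{eq:G2}.

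Then I would prove minimality. Let $\overline{H}^\ast$ be any minimal solution. By Proposition~\ref{prop:MS:rule} every rule of $\overline{H}^\ast$ lies in some $S^+(\overline{I},\epsilon)$ with $\overline{I}\in E^+$, or in some $S^-(\overline{J})$ with $\overline{J}\in E^-$. Split $\overline{H}^\ast$ accordingly into a positive and a negative part. By Theorem~\ref{thm:SS:iff} the positive part (with $\overline{B}$) must supply, for each $\overline{I}\in E^+$, exactly one grounded, blacklist-free rule per poss-atom of $\overline{I}$; by Proposition~\ref{prop:SS:construction} this part is, modulo $\overline{B}$, one of the elements enumerated in \textit{seeds}. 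The negative part consists of rules from $S^-(\overline{J})$ for the hard negative examples $\overline{J}$, i.e.\ exactly those collected in $\overline{E}$ and ranged over by the whitelist/patch enumeration of lines~15--17 (at most $|\overline{E}|$ rules, one blocker per example by Corollary~\ref{cor:neg:conditions}). Hence a solution of size $|\overline{H}^\ast|$ is among those inspected; since \textit{norm} always stores the smallest solution size seen and is updated monotonically while the pruning test $|X\sqcup\overline{P}-\overline{B}|<\mathit{norm}$ never skips a strictly smaller one, the returned $\overline{H}$ satisfies $|\overline{H}|\le|\overline{H}^\ast|$, and with soundness $|\overline{H}|=|\overline{H}^\ast|$. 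The $E^+=\emptyset$ branch, where $\textit{seeds}=\{\emptyset\}$, is handled separately and aligned with Proposition~\ref{prop:task:B:neg:exist}.

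The main obstacle will be the interaction between the $\sqcup$-combination of the per-example positive programs and the weight-maximisation it performs. I expect the delicate lemma to be that forming $\bigsqcup_k Y_k$ from $Y_k\in W_k$ neither (a) raises the weight of any shared classic rule into some $S^-(\overline{I})$ (which would break condition (c1) of Theorem~\ref{thm:SS:iff} and hence positive coverage), nor (b) destroys the groundedness of the support $H^{\overline{I}}$ of any $\overline{I}\in E^+$; establishing this requires tracking, atom by atom, that the maximum weight retained still matches the required weight in $S^+(\overline{I},\epsilon)$. A secondary care point is verifying that iterating $j$ upward in the patch phase genuinely reaches a minimum-cardinality blocking set before \textit{norm} cuts off larger candidates, so that no strictly smaller solution is ever pruned.
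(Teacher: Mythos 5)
Your proposal is correct and its skeleton coincides with the paper's proof: \textbf{fail} is returned exactly when the test of Theorem~\ref{thm:taskNLP:exist} fails; lines 4--8 secure coverage of $E^+$ through Theorem~\ref{thm:SS:iff} (the blacklist enforcing condition (c1), the positive solution space plus Proposition~\ref{prop:SS:noloop} enforcing condition (c2)); and the two branches of lines 9--20 secure \eqref{eq:G2} together with minimality. You do, however, fill the argument in a genuinely different way at two points. First, your minimality argument is a completeness-style one: you take an arbitrary minimal solution $\overline{H}^{\ast}$, decompose it via Proposition~\ref{prop:MS:rule} into a positive part (matched, modulo $\overline{B}$, against the \textit{seeds} enumeration through Proposition~\ref{prop:SS:construction}) and a negative part (matched against the whitelist/patch enumeration), and conclude the algorithm inspects a candidate of size $|\overline{H}^{\ast}|$. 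The paper never invokes Proposition~\ref{prop:MS:rule} in its correctness proof; it argues operationally that line 6 enumerates minimum-cardinality positive covers (minimal hitting sets) and that patches of sizes $1$ to $|\overline{E}|$ suffice. Your route makes the exhaustiveness claim explicit, at the price of the fiddly ``modulo $\overline{B}$'' bookkeeping. Second, you supply two details the paper silently omits: the identity $\overline{B}\sqcup(X-\overline{B})=\overline{B}\sqcup X$ (needed because the algorithm stores $X-\overline{B}$ while solutionhood is about $\overline{B}\sqcup\overline{H}$), and the lifting from $\overline{I}\in\mathit{PSM}(X)$ to $\overline{I}\in\mathit{PSM}(\overline{B}\sqcup X)$ via Corollary~\ref{cor:SM:absorption} and coherency from condition (C2). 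Finally, the ``delicate lemma'' you flag about $\sqcup$ raising weights of shared classic rules is a real issue the paper's proof glosses over, but it closes exactly by your blacklist observation: for $\epsilon=(p,\alpha)\in\overline{I}$, if $\Pos\cap RA^{=}(\overline{I},\alpha)=\emptyset$ then every rule with the same classic body and weight above $\alpha$ lies in $S^-(\overline{I},\epsilon)\subseteq blacklist$, so no $Y_k$ can contribute it; otherwise $S^+(\overline{I},\epsilon)$ already admits all weights $\geq\alpha$, and replacing a rule by its max-weight version keeps an element of $S^+(\overline{I})$ with the same classic reduct, so groundedness is unaffected. Spelling that out would make your proof strictly more rigorous than the paper's own.
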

\begin{proof}
By Theorem~\ref{thm:taskNLP:exist}, algorithm {\ILPSMmin} outputs {\bf fail} if and only if $\mathit{ILP_{LPoSM}}(T) = \emptyset$. 
When the algorithm does not output {\bf fail}, the task $T$ must have a solution, which implies the existence of a minimal solution, allowing the algorithm to execute the steps after line (1).
The search process can be divided into two main parts: lines (4-8) ensure $E^+$ covered, while lines (9-20) ensure $E^-$ uncovered. Besides, the minimal solution must be taken into account, i.e., the minimality must be ensured.

For the first part, we need to prove that 
\begin{enumerate}[label=(c\arabic*)]
	\item each poss-NLP in $seeds$ can cover $E^+$; 
	\item beyond the range of $seeds$, no other poss-NLP covering $E^+$ has fewer rules. 
\end{enumerate}
Condition (c1) can be further subdivided into two aspects for proof, corresponding to the two conditions of Theorem~\ref{thm:SS:iff}.
\begin{itemize}
    \item In line (3), the negative solution space is utilized to obtain the $blacklist$ corresponding to $E^+$. The use of the blacklist in line (6) ensures condition (\romannumeral1) of Theorem~\ref{thm:SS:iff}.    
    \item Line (6) employs the positive solution space to construct candidate rules. By Proposition~\ref{prop:SS:noloop}, the groundness has also been taken into consideration. So the condition (\romannumeral2) of Theorem~\ref{thm:SS:iff} is ensured.
\end{itemize}
When $E^+ \neq \emptyset$, line (8) collects all poss-NLPs with the minimum number of rules to cover $E^+$. 
As a result, it can be concluded that any $X \in seeds$ satisfies that $X \sqcup \overline{B}$ covers $E^+$. By Definition~\ref{def:ILT}, each poss-NLP in $ seeds $ is a solution of $\tuple{\overline{B}, E^+, \emptyset}$. 
Based on this result and Definition~\ref{def:minimal:solution}, condition (c2) is also satisfied since line (6) uses the minimal hitting set via the positive solution space. 

For the second part, we should prove that $ E^- $ are not covered while no more unnecessary rule is added into the last poss-NLP. It can be proven that the execution of two branches in lines (10-20) of the algorithm achieves both of these requirements.
\begin{itemize}
    \item Case one corresponds to lines (10-12). It does not add extra rules. 
    The condition in line (10) ensures that negative examples $ E^- $ are not covered, and the updates to $ \overline{H} $ in lines (11-12) do not add extra rules to $ X $. 
    Thereby, $ E^- $ are not covered while added rules are as less as possible.
    \item Case two corresponds to lines (13-20). Extra rules should be added. 
    The simplification operation in line (14) figures out the set $ \overline{E} $ of all negative examples that need further consideration. 
    Line (15) takes the negative solution space and $blacklist$ into account to compute $whitelist$ where the maximal number of needed rules is $ \vert \overline{E} \vert$. 
    Therefore, the loop in line (16) from 1 rule to $ \vert \overline{E} \vert $ rules is sufficient to exclude negative examples, ensuring that the poss-NLP patches appearing in line (17) do not miss the minimal solution. 
    Under this premise, lines (18-20) can ensure that $ E^- $ are not covered while added rules are as less as possible, for the same reasons as in Case one.
\end{itemize}

In summary, the algorithm {\ILPSMmin} is correct.
\end{proof}

\setcounter{section}{5}
\setcounter{propositionA}{0}
\setcounter{lemmaA}{0}
\setcounter{theoremA}{0} 
\setcounter{corollaryA}{0}

\begin{propositionA}[Existence of a poss-NLP solution for task in Definition~\ref{def:ILT:comp:NLP}]	  
For a task $T = \tuple{\overline{B}, E^+}$ from Definition~\ref{def:ILT:comp:NLP}, $\mathit{ILP_{LCPoSM}}(T) \neq \emptyset$ if and only if 
\begin{enumerate}[label=(C\arabic*)]
    \item $E^+$ is incomparable, and
    \item $E^+$ is coherent with $\overline{B}$, and 
    \item (\romannumeral 1)$\mathit{lfp}(T_{B^{\cal A}}) \neq {\cal A}$, or (\romannumeral 2)$\vert {\cal Q} \vert = 1$ and $E^+ = \{ \overline{\mathbb{A}} \}$, or (\romannumeral 3)$\overline{\mathbb{A}} \cap E^+ \neq \emptyset$.
\end{enumerate}
\end{propositionA}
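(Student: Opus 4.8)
The plan is to derive this result directly from the general existence theorem (Theorem~\ref{thm:taskNLP:exist}) by exploiting the fact that an induction task from complete poss-stable models $T = \tuple{\overline{B}, E^+}$ is a special case of a general induction task where $E^- = \overline{\mathbb{U}} - E^+$. Indeed, the condition $E^+ = \mathit{PSM}(\overline{B} \sqcup \overline{H})$ in Definition~\ref{def:ILT:comp:NLP} is equivalent to requiring $E^+ \subseteq \mathit{PSM}(\overline{B} \sqcup \overline{H})$ (Condition~\ref{eq:G1}) together with $(\overline{\mathbb{U}} - E^+) \cap \mathit{PSM}(\overline{B} \sqcup \overline{H}) = \emptyset$ (Condition~\ref{eq:G2} with $E^- = \overline{\mathbb{U}} - E^+$). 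So $\mathit{ILP_{LCPoSM}}(T) = \mathit{ILP_{LPoSM}}(\tuple{\overline{B}, E^+, \overline{\mathbb{U}} - E^+})$, and I can invoke Theorem~\ref{thm:taskNLP:exist} verbatim.

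Applying Theorem~\ref{thm:taskNLP:exist} to $\tuple{\overline{B}, E^+, \overline{\mathbb{U}} - E^+}$ yields four conditions: (C1) $E^+$ incomparable, (C2) $E^+$ coherent with $\overline{B}$, (C3) $E^- = \overline{\mathbb{U}} - E^+$ compatible with $\overline{B}$, and (C4) $E^+ \cap E^- = \emptyset$. The first two conditions (C1) and (C2) match the statement directly. Condition (C4) is automatically satisfied since $E^+ \cap (\overline{\mathbb{U}} - E^+) = \emptyset$ by set difference, so it imposes no constraint. The entire content of the proof therefore reduces to showing that, with $E^- = \overline{\mathbb{U}} - E^+$, the \emph{compatibility} condition (C3)---i.e., the negation of the three-part incompatibility condition in Definition~\ref{def:compatible}---is equivalent to the disjunctive condition (C3) in the present proposition, namely $\mathit{lfp}(T_{B^{\cal A}}) \neq {\cal A}$, or $\vert {\cal Q} \vert = 1$ with $E^+ = \{ \overline{\mathbb{A}} \}$, or $\overline{\mathbb{A}} \cap E^+ \neq \emptyset$.

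First I would substitute $E^- = \overline{\mathbb{U}} - E^+$ into the three incompatibility clauses of Definition~\ref{def:compatible}. The key observation is that the relevant interpretations live in $\overline{\mathbb{A}}$ (those $\overline{I}$ with $I = {\cal A}$), and the set $\overline{\mathbb{A}} - E^-$ simplifies to $\overline{\mathbb{A}} \cap E^+$ because $E^-$ is the complement of $E^+$ within $\overline{\mathbb{U}}$. I would then negate the conjunction $(c1) \land (c2) \land (c3)$ of incompatibility to obtain compatibility, and carefully case-split. Clause $(c1)$ becomes $\mathit{lfp}(T_{B^{\cal A}}) = {\cal A}$; its negation gives the first disjunct $\mathit{lfp}(T_{B^{\cal A}}) \neq {\cal A}$. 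When $\mathit{lfp}(T_{B^{\cal A}}) = {\cal A}$ holds, I must analyze the remaining clauses in terms of $\overline{\mathbb{A}} \cap E^+$: the negation of $(c2)$ becomes $\overline{\mathbb{A}} - E^- = \overline{\mathbb{A}}$, equivalently $\overline{\mathbb{A}} \cap E^+ = \overline{\mathbb{A}}$, and the negation of $(c3)$ requires the existence of a coherent $\overline{G} \in \overline{\mathbb{A}} \cap E^+$.

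The main obstacle---and where the special condition (ii) $\vert {\cal Q} \vert = 1$ with $E^+ = \{\overline{\mathbb{A}}\}$ surfaces---will be reconciling the boundary case. When $\vert {\cal Q} \vert = 1$, the set $\overline{\mathbb{A}}$ is a \emph{singleton} (there is exactly one possibilistic interpretation assigning $I = {\cal A}$, since each atom gets the unique weight $\mu$). In that degenerate situation, the negation of incompatibility clause $(c2)$, namely $\overline{\mathbb{A}} \cap E^+ = \overline{\mathbb{A}}$, collapses precisely to $E^+ \supseteq \overline{\mathbb{A}} = \{\overline{\mathbb{A}}\text{'s unique element}\}$, combined with incomparability (C1) forcing $E^+ = \{\overline{\mathbb{A}}\}$; this is exactly disjunct (ii). The delicate step is verifying that in this singleton case coherence is automatic (so clause $(c3)$'s negation is met without a separate hypothesis), using Corollary~\ref{cor:SM:absorption} or Lemma~\ref{lm:SM:PPI} to confirm $\overline{\mathbb{A}}$ genuinely becomes the sole poss-stable model via $\overline{B} \sqcup \mathit{PPE}(\overline{\mathbb{A}})$. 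I expect disentangling this $\vert {\cal Q} \vert = 1$ corner from the generic case $\overline{\mathbb{A}} \cap E^+ \neq \emptyset$ (disjunct (iii)) to require the most care, since the two conditions overlap and must jointly exhaust all ways compatibility can hold once $\mathit{lfp}(T_{B^{\cal A}}) = {\cal A}$.
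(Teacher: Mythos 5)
Your proposal follows essentially the same route as the paper's proof: both reduce the complete-models task to the general induction task $\tuple{\overline{B}, E^+, \overline{\mathbb{U}} - E^+}$, invoke Theorem~\ref{thm:taskNLP:exist}, note that (C4) is vacuous and that $\overline{\mathbb{A}} - E^- = \overline{\mathbb{A}} \cap E^+$, and then simplify the resulting compatibility disjunction using incomparability (C1) to collapse the clause $\overline{\mathbb{A}} \cap E^+ = \overline{\mathbb{A}}$ into disjunct (ii) and coherence (C2) to discharge the existential coherence requirement in disjunct (iii). The two steps you flag as delicate are precisely the ones the paper settles with these observations, so your plan is correct as it stands.
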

\begin{proof}
    Transform a task $T_1 = \tuple{ \overline{B}, E^+}$ from Deinition~\ref{def:ILT:comp:NLP} into the LSM task $T_2 = \tuple{ \overline{B}, E^+, E^-}$ where $E^- = \overline{\mathbb{U}} - E^+$. Now we have $\mathit{ILP_{LCPoSM}}(T_1) = \mathit{ILP_{LPoSM}}(T_2)$. By Theorem~\ref{thm:taskNLP:exist}, $\mathit{ILP_{LCPoSM}}(T_1) \neq \emptyset$ if and only if 
    \begin{enumerate}[label=(\arabic*)]
        \item $E^+$ is incomparable, and
        \item $E^+$ is coherent with $\overline{B}$, and 
        \item $E^-$ is compatible with  $\overline{B}$, and 
        \item $E^+ \cap E^- \neq \emptyset$.
    \end{enumerate}
    It is evident that $E^+ \cap E^- \neq \emptyset$ and $\overline{\mathbb{A}} - E^- =  \overline{\mathbb{A}} \cap E^+$ since $E^- = \overline{\mathbb{U}} - E^+$. By Definition~\ref{def:compatible}, $\mathit{ILP_{LCPoSM}}(T_1) \neq \emptyset$ if and only if 
    \begin{enumerate}[label=(C\arabic*)]
        \item $E^+$ is incomparable, and
        \item $E^+$ is coherent with $\overline{B}$, and 
        \item (\romannumeral 6)$\mathit{lfp}(T_{B^{\cal A}}) \neq {\cal A}$, or (\romannumeral 7)$\overline{\mathbb{A}} \cap E^+ = \overline{\mathbb{A}}$, or (\romannumeral 8)$\overline{\mathbb{A}} \cap E^+ \neq \emptyset$ and $\exists \overline{G} \in \overline{\mathbb{A}} \cap E^+, {\cal T}_{\overline{B}}(\overline{G}) \sqsubseteq \overline{G}$.
    \end{enumerate}
    To prove this proposition, we only need to prove $(\romannumeral 2) \LRto (\romannumeral 7)$ and $(\romannumeral 3) \LRto (\romannumeral 8)$ when Conditions (C1) and (C2) hold. 

    (\romannumeral 7)$\overline{\mathbb{A}} \cap E^+ = \overline{\mathbb{A}}$.  \\
    $\LRto$ $\overline{\mathbb{A}} \subseteq E^+$. \\
    $\LRto$ As $E^+$ is incomparable by Condition (C1), $E^+ = \{ \overline{\mathbb{A}} \}$ and $\vert \overline{\mathbb{A}} \vert = 1$. \\
    $\LRto$ $E^+ = \{ \overline{\mathbb{A}} \}$ and $\vert {\cal Q} \vert = 1$ by the definition of $\overline{\mathbb{A}}$. \\
    $\LRto$ (\romannumeral 2)$\vert {\cal Q} \vert = 1$ and $E^+ = \{ \overline{\mathbb{A}} \}$.

    $E^+$ is coherent with $\overline{B}$ by Condition (C2). \\
    $\Rto$ By Definition~\ref{def:incoherent}, $\forall \overline{e} \in E^+, {\cal T}_{\overline{B}}(\overline{e}) \sqsubseteq \overline{e}$. \\
    $\Rto$ $\forall \overline{G} \in \overline{\mathbb{A}} \cap E^+, {\cal T}_{\overline{B}}(\overline{G}) \sqsubseteq \overline{G}$. \\
    $\Rto$ $\exists \overline{G} \in \overline{\mathbb{A}} \cap E^+, {\cal T}_{\overline{B}}(\overline{G}) \sqsubseteq \overline{G}$. \\
    $\Rto$ $(\romannumeral 3) \LRto (\romannumeral 8)$.
\end{proof}

\begin{propositionA}[Existence of a NLP solution for a LSM induction task]  
For a task $T = \tuple{B, E^+, E^-}$, $\mathit{ILP_{LSM}}(T) \neq \emptyset$ if and only if 
\begin{enumerate}[label=(c\arabic*)]
    \item $E^+$ is $\subseteq$-incomparable, and
    \item $e \models B$ for each $e \in E^+$, and 
    \item ${\cal A} \notin E^-$ or ${\cal A} \neq \mathit{lfp}(T_{B^{\cal A}})$, and 
    \item $E^+ \cap E^- = \emptyset$.
\end{enumerate}
\end{propositionA}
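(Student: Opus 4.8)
The plan is to reduce the LSM induction task to a special case of the general poss-NLP induction task of Definition~\ref{def:ILT}, and then to read the claim off from Theorem~\ref{thm:taskNLP:exist}. The key observation is that an ordinary NLP is exactly a poss-NLP over a one-element weight lattice. So I would fix ${\cal Q} = \{\mu\}$ and associate to $T = \tuple{B, E^+, E^-}$ the poss-task $\overline{T} = \tuple{\overline{B}, \overline{E^+}, \overline{E^-}}$, where each bar attaches the single weight $\mu$ to every rule and to every atom (so $\overline{e} = \{(a,\mu) \mid a \in e\}$). Over this lattice the bar map is a bijection between ordinary NLPs and poss-NLPs, and between interpretations and possibilistic interpretations, and it commutes with the classic projection that strips weights. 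Thus every candidate hypothesis $\overline{H}$ for $\overline{T}$ is $\overline{H}$ for a unique ordinary $H$, so no solutions are gained or lost by passing to the possibilistic setting.

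First I would show that the solution sets correspond: $H \in \mathit{ILP_{LSM}}(T)$ iff $\overline{H} \in \mathit{ILP_{LPoSM}}(\overline{T})$. The crucial ingredient is that over the singleton lattice $\mathit{PSM}(\overline{P}) = \{ \overline{I} \mid I \in \mathit{SM}(P) \}$: by Proposition~\ref{prop:SM:maps} poss-stable and stable models are in one-to-one correspondence, and since $\mathit{Cn}(\overline{P}^A)$ can only assign the weight $\mu$, it equals $\overline{\mathit{Cn}(P^A)}$. Consequently conditions \eqref{eq:G1} and \eqref{eq:G2} for $\overline{T}$ translate verbatim into $E^+ \subseteq \mathit{SM}(B \cup H)$ and $E^- \cap \mathit{SM}(B \cup H) = \emptyset$, so $\mathit{ILP_{LSM}}(T) \neq \emptyset$ iff $\mathit{ILP_{LPoSM}}(\overline{T}) \neq \emptyset$.

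Next I would check that the four existence criteria of Theorem~\ref{thm:taskNLP:exist} specialise precisely to (c1)--(c4). Incomparability (C1) matches (c1) because $\overline{I} \parallel \overline{J}$ iff $I \parallel J$. Coherency (C2), via Definition~\ref{def:incoherent}, matches (c2): unfolding ${\cal T}_{\overline{B}}(\overline{e}) \sqsubseteq \overline{e}$ over the singleton lattice says that every head of a rule of $B$ applicable in $e$ (positive body in $e$, negative body disjoint from $e$) already lies in $e$, which is exactly $e \models B$. Condition (C4) is literally (c4). The delicate point, and what I expect to be the main obstacle, is the compatibility condition (C3): here $\overline{\mathbb{A}}$ collapses to the single possibilistic interpretation $\overline{{\cal A}} = \{(a,\mu)\mid a\in{\cal A}\}$, so the three sub-conditions of Definition~\ref{def:compatible} must be re-examined one by one. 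Sub-condition (c2) of compatibility, $\overline{\mathbb{A}} - \overline{E^-} \neq \overline{\mathbb{A}}$, reduces to ${\cal A} \in E^-$; granting this, $\overline{\mathbb{A}} - \overline{E^-} = \emptyset$, whence sub-condition (c3) of compatibility holds vacuously and becomes redundant. Thus $\overline{E^-}$ is incompatible with $\overline{B}$ iff $\mathit{lfp}(T_{B^{\cal A}}) = {\cal A}$ and ${\cal A} \in E^-$, whose negation is exactly (c3).

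Combining these translations, $\mathit{ILP_{LPoSM}}(\overline{T}) \neq \emptyset$ iff (c1)--(c4) hold, and by the solution correspondence of the first step this is equivalent to $\mathit{ILP_{LSM}}(T) \neq \emptyset$, which proves the proposition. The only computations needing genuine care are the singleton-lattice unfoldings of ${\cal T}_{\overline{B}}$ (for coherency) and of Definition~\ref{def:compatible} (for compatibility, where $\overline{\mathbb{A}}$ degenerates to a single element); everything else is bookkeeping under the bar bijection.
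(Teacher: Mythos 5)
Your proposal is correct and follows essentially the same route as the paper's own proof: both view the LSM task as a poss-NLP induction task over a one-element weight lattice and then translate conditions (C1)--(C4) of Theorem~\ref{thm:taskNLP:exist} one by one into (c1)--(c4), with the same collapse of $\overline{\mathbb{A}}$ to the single interpretation $\overline{{\cal A}}$ making the third sub-condition of Definition~\ref{def:compatible} redundant. The only (harmless) difference is that you justify the solution correspondence $\mathit{ILP_{LSM}}(T)\neq\emptyset \Leftrightarrow \mathit{ILP_{LPoSM}}(\overline{T})\neq\emptyset$ explicitly via Proposition~\ref{prop:SM:maps}, whereas the paper simply asserts it.
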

\begin{proof}
    Let us view $T = \tuple{B, E^+, E^-}$ as a special task $T' = \tuple{\overline{B}, O^+, O^-}$ in which $\vert {\cal Q} \vert = 1$. Then $\mathit{ILP_{LPoSM}}(T') \neq \emptyset$ if and only if $\mathit{ILP_{LSM}}(T) \neq \emptyset$. After such a transformation, we need to prove Conditions (c1-c4) here holds if and only if Conditions (C1-C4) in Theorem~\ref{thm:taskNLP:exist} hold. As these conditions correspond one to one, we will prove it via four parts.

    (\uppercase\expandafter{\romannumeral1})
    $O^+$ is incomparable by Condition (C1).   \\
    $\LRto$ $\forall \overline{I} \in O^+ \forall \overline{J} \in O^+, \overline{I} \not\parallel \overline{J}$.   \\
    $\LRto$ $\forall I \in E^+ \forall J \in E^+$, $I$ and $J$ are $\subseteq$-comparable since $\vert {\cal Q} \vert = 1$.  \\
    $\LRto$ Condition (c1).

    (\uppercase\expandafter{\romannumeral2})
    $O^+$ is coherent with $\overline{B}$  by Condition (C2).   \\
    $\LRto$ $\forall \overline{I} \in O^+ \forall \overline{r} \in \overline{B}, {\cal T}_{\overline{B}}(\overline{I}) \sqsubseteq \overline{I}$ by Definition~\ref{def:incoherent}.   \\
    $\LRto$ $\forall I \in E^+ \forall r \in B, T_{B}(I) \subseteq I$ since $\vert {\cal Q} \vert = 1$.  \\
    $\LRto$  $\forall I \in E^+ \forall r \in B, I \models \Body(r) \rto \Head(r) \in I$.   \\
    $\LRto$  $\forall I \in E^+ \forall r \in B, I \models r$.    \\
    $\LRto$ $\forall I \in E^+, I \models B$.  \\
    $\LRto$ Condition (c2).

    (\uppercase\expandafter{\romannumeral3})
    $O^-$ is compatible with  $\overline{B}$  by Condition (C3).   \\
    $\LRto$ By Definition~\ref{def:compatible}, (\romannumeral 1) $\mathit{lfp}(T_{B^{\cal A}}) \neq {\cal A}$, or (\romannumeral 2) $\overline{\mathbb{A}} - O^- = \overline{\mathbb{A}}$, or (\romannumeral 3) $\overline{\mathbb{A}} - O^- \neq \emptyset$ and $\exists \overline{G} \in \overline{\mathbb{A}} - O^-, {\cal T}_{\overline{B}}(\overline{G}) \sqsubseteq \overline{G}$.   \\
    $\LRto$ As $\vert {\cal Q} \vert = 1$, (\romannumeral 1) $\mathit{lfp}(T_{B^{\cal A}}) \neq {\cal A}$, or (\romannumeral 2) $\{ {\cal A} \} - E^- = \{ {\cal A} \}$, or (\romannumeral 3) $\{ {\cal A} \} - E^- \neq \emptyset$ and $\exists G \in \{ {\cal A} \} - E^-,  T_{B}(G) \subseteq G$.    \\
    $\LRto$ (\romannumeral 1) $\mathit{lfp}(T_{B^{\cal A}}) \neq {\cal A}$, or (\romannumeral 2) ${\cal A} \notin E^-$, or (\romannumeral 3) $\{ {\cal A} \} - E^- = \{ {\cal A} \}$ and $\exists G \in \{ {\cal A} \},  T_{B}(G) \subseteq G$.   \\
    $\LRto$ (\romannumeral 1) $\mathit{lfp}(T_{B^{\cal A}}) \neq {\cal A}$, or (\romannumeral 2) ${\cal A} \notin E^-$, or (\romannumeral 3) ${\cal A} \notin E^-$ and $T_{B}({\cal A}) \subseteq {\cal A}$.   \\
    $\LRto$ (\romannumeral 1) $\mathit{lfp}(T_{B^{\cal A}}) \neq {\cal A}$, or (\romannumeral 2) ${\cal A} \notin E^-$, or (\romannumeral 3) ${\cal A} \notin E^-$.   \\
    $\LRto$  ${\cal A} \notin E^-$ or ${\cal A} \neq \mathit{lfp}(T_{B^{\cal A}})$.   \\
    $\LRto$ Condition (c3).

    (\uppercase\expandafter{\romannumeral4})
    $O^+ \cap O^- \neq \emptyset$  by Condition (C4).   \\
    $\LRto$ $E^+ \cap E^- = \emptyset$  since $\vert {\cal Q} \vert = 1$.  \\
    $\LRto$ Condition (c4).
\end{proof}

\begin{propositionA}[Transformations between induction tasks]  
Let $T = \tuple{B, O^+, O^-}$ be an induction task of NLP from partial stable models. A NLP $H$ such that $H \in \mathit{ILP_{LPaSM}}(T)$ if and only if $H \in \mathit{ILP_{LSM}}(T_1)$ for some $T_1 = \tuple{B, E^+, E^-}$ such that 
\begin{enumerate}[label=(c\arabic*)]
    \item $E^+$ is a minimal hitting set of $\{\mathit{de}(o) \mid o \in O^+ \}$, written as $E^+ \in \mathit{SMHS}(\{\mathit{de}(o) \mid o \in O^+ \})$, and 
    \item $E^- = \{ J \in \mathit{de}(o) \mid o \in O^- \}$.
\end{enumerate}
\end{propositionA}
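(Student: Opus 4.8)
The plan is to prove both directions of the biconditional by unwinding the two definitions directly, using the single bridging identity that $I \propto o$ holds exactly when $I \in \mathit{de}(o)$. Under this identity, the clause ``some stable model extends $o$'' in conditions (a1)/(a2) of Definition~\ref{def:ILT:NLP:partial} becomes the set-theoretic statement ``$\mathit{SM}(B \cup H)$ meets $\mathit{de}(o)$'', which is precisely the language of hitting sets and of the covering/disjointness conditions defining $\mathit{ILP_{LSM}}$. Since $\cal A$ is finite, the universe $2^{\cal A}$ is finite, so all the collections involved are finite and a minimal hitting set can always be extracted from any hitting set. I would state this finiteness remark once at the start and reuse it.

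For the direction ($\Longrightarrow$), I would assume $H \in \mathit{ILP_{LPaSM}}(T)$ and construct a witness $T_1 = \tuple{B, E^+, E^-}$. From condition (a1), for every $o \in O^+$ there is $I \in \mathit{SM}(B \cup H)$ with $I \in \mathit{de}(o)$; hence $\mathit{SM}(B \cup H)$ is a hitting set of $\{\mathit{de}(o) \mid o \in O^+\}$. By finiteness I pare it down to a minimal hitting set $E^+ \subseteq \mathit{SM}(B \cup H)$, which establishes (c1) and simultaneously gives the positive $\mathit{ILP_{LSM}}$ condition $E^+ \subseteq \mathit{SM}(B \cup H)$. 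Setting $E^- = \{ J \in \mathit{de}(o) \mid o \in O^- \}$ as in (c2), condition (a2) says $\mathit{SM}(B \cup H) \cap \mathit{de}(o) = \emptyset$ for each $o \in O^-$, so $\mathit{SM}(B \cup H) \cap E^- = \emptyset$, which is the negative $\mathit{ILP_{LSM}}$ condition. Thus $H \in \mathit{ILP_{LSM}}(T_1)$.

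For the direction ($\Longleftarrow$), I would take $T_1 = \tuple{B, E^+, E^-}$ satisfying (c1) and (c2) with $H \in \mathit{ILP_{LSM}}(T_1)$, so $E^+ \subseteq \mathit{SM}(B \cup H)$ and $E^- \cap \mathit{SM}(B \cup H) = \emptyset$. To get (a1): since $E^+$ is a hitting set of $\{\mathit{de}(o) \mid o \in O^+\}$ by (c1), for each $o \in O^+$ pick $I \in E^+ \cap \mathit{de}(o) \subseteq \mathit{SM}(B \cup H)$, which extends $o$. To get (a2): if some $I \in \mathit{SM}(B \cup H)$ extended an $o \in O^-$, then $I \in \mathit{de}(o) \subseteq E^-$ by (c2), contradicting $E^- \cap \mathit{SM}(B \cup H) = \emptyset$. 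Hence $H \in \mathit{ILP_{LPaSM}}(T)$.

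The only genuinely non-mechanical point — the main obstacle, though a mild one — is in the ($\Longrightarrow$) direction: I must ensure the minimal hitting set $E^+$ I select is actually a \emph{subset} of $\mathit{SM}(B \cup H)$ (not merely an abstract minimal hitting set), since that containment is exactly what yields the positive $\mathit{ILP_{LSM}}$ covering. This is where the finiteness of $2^{\cal A}$ is used, guaranteeing that any hitting set (here $\mathit{SM}(B \cup H)$) contains a minimal one obtained by greedily discarding redundant members. Everything else is routine translation between $\propto$ and $\in \mathit{de}(\cdot)$.
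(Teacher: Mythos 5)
Your proof is correct and takes essentially the same route as the paper: both translate $I \propto o$ into $I \in \mathit{de}(o)$, recast condition (a1) as ``$\mathit{SM}(B \cup H)$ is a hitting set of $\{\mathit{de}(o) \mid o \in O^+\}$'' and condition (a2) as disjointness of $\mathit{SM}(B \cup H)$ from $E^-$. If anything, your write-up is slightly more careful than the paper's, which compresses both directions into a single chain of biconditionals and is terse about the existential choice of a minimal hitting set $E^+ \subseteq \mathit{SM}(B \cup H)$ --- the step you make explicit via finiteness of $2^{\cal A}$ and greedy pruning.
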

\begin{proof}
    By Definition~\ref{def:ILT:NLP:partial} and a LSM induction task, we should prove $E^+\subseteq \mathit{SM}(B\cup H) \land E^-\cap \mathit{SM}(B\cup H)=\emptyset \LRto~\ref{eq:a1} \land~\ref{eq:a2}$. It can be realized via $\ref{eq:a1} \LRto E^+\subseteq \mathit{SM}(B\cup H)$ and $\ref{eq:a2} \LRto E^-\cap \mathit{SM}(B\cup H)=\emptyset$ as follows.

    (\uppercase\expandafter{\romannumeral1})
   \ref{eq:a1} in Definition~\ref{def:ILT:NLP:partial}.   \\
    $\LRto$ $\forall o \in O^+ \exists I \in \mathit{SM}(B \cup H), I \propto o$.   \\
    $\LRto$ $\forall o \in O^+ \exists I \in \mathit{SM}(B \cup H), I \in \mathit{de}(o)$.   \\
    $\LRto$ $\forall o \in O^+, \mathit{SM}(B \cup H) \cap \mathit{de}(o) \neq \emptyset$.   \\
    $\LRto$ $\mathit{SM}(B \cup H)$ is a hitting set of $\{\mathit{de}(o) \mid o \in O^+ \}$. \\
    $\LRto$ $E^+ \subseteq \mathit{SM}(B \cup H)$ since Condition (c1). \\
    $\LRto$ $E^+\subseteq \mathit{SM}(B\cup H)$ in a LSM induction task.
    
    (\uppercase\expandafter{\romannumeral2})
    \ref{eq:a2} in Definition~\ref{def:ILT:NLP:partial}.   \\
    $\LRto$ $\forall o \in O^- \nexists I \in \mathit{SM}(B \cup H), I \propto o$.   \\
    $\LRto$ $\forall o \in O^- \nexists I \in \mathit{SM}(B \cup H), I \in \mathit{de}(o)$.   \\
    $\LRto$ $\forall o \in O^- \forall I \in \mathit{SM}(B \cup H), I \notin \mathit{de}(o)$.   \\
    $\LRto$ $\forall o \in O^-, \mathit{SM}(B \cup H) \cap \mathit{de}(o) = \emptyset$.   \\
    $\LRto$ $\mathit{SM}(B \cup H) \cap \{ J \in \mathit{de}(o) \mid o \in O^- \} = \emptyset$.   \\
    $\LRto$ $\mathit{SM}(B \cup H) \cap E^- = \emptyset$ since Condition (c2). \\
    $\LRto$ $E^-\cap \mathit{SM}(B\cup H)=\emptyset$ in a LSM induction task.   
\end{proof}

\begin{corollaryA}[Existence of a NLP solution]
    Let $T = \tuple{B, O^+, O^-}$ be an induction task of NLP from partial stable models. $\mathit{ILP_{LPaSM}}(T) \neq \emptyset$ if and only if there exists $E^+ \in \mathit{SMHS}(\{\mathit{de}(o) \mid o \in O^+ \})$ and $E^- = \{ J \in \mathit{de}(o) \mid o \in O^- \}$ such that 
    \begin{enumerate}[label=(c\arabic*)]
        \item $E^+$ is $\subseteq$-incomparable, and
        \item $e \models B$ for each $e \in E^+$, and 
        \item ${\cal A} \notin E^-$ or ${\cal A} \neq \mathit{lfp}(T_{B^{\cal A}})$, and 
        \item $E^+ \cap E^- = \emptyset$.
    \end{enumerate}
\end{corollaryA}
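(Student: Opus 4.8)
The plan is to derive this corollary directly by chaining the two existence characterizations already established, namely Proposition~\ref{prop:tasks:map} and Proposition~\ref{prop:taskNLP:exist}. The key observation is that, once $O^-$ is fixed, the negative set $E^- = \{ J \in \mathit{de}(o) \mid o \in O^- \}$ appearing in both propositions is uniquely determined; the only genuine freedom in the reduction lies in the choice of $E^+ \in \mathit{SMHS}(\{\mathit{de}(o) \mid o \in O^+ \})$.

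First I would unfold the statement $\mathit{ILP_{LPaSM}}(T) \neq \emptyset$ as ``there exists a NLP $H$ with $H \in \mathit{ILP_{LPaSM}}(T)$''. Applying Proposition~\ref{prop:tasks:map}, for each fixed $H$ we have $H \in \mathit{ILP_{LPaSM}}(T)$ if and only if there is some $T_1 = \tuple{B, E^+, E^-}$ with $E^+ \in \mathit{SMHS}(\{\mathit{de}(o) \mid o \in O^+ \})$ and $E^- = \{ J \in \mathit{de}(o) \mid o \in O^- \}$ such that $H \in \mathit{ILP_{LSM}}(T_1)$. Substituting this equivalence turns the claim into ``there exist $H$ and $E^+$ such that $H \in \mathit{ILP_{LSM}}(\tuple{B, E^+, E^-})$'', where $E^-$ is the fixed set above.

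Next, since both quantifiers are existential, I would reorder them to obtain ``there exists $E^+$ such that $\mathit{ILP_{LSM}}(\tuple{B, E^+, E^-}) \neq \emptyset$'', with $E^+$ ranging over $\mathit{SMHS}(\{\mathit{de}(o) \mid o \in O^+ \})$. Finally, I would invoke Proposition~\ref{prop:taskNLP:exist} to rewrite each inner nonemptiness condition $\mathit{ILP_{LSM}}(\tuple{B, E^+, E^-}) \neq \emptyset$ as the conjunction of the four conditions (c1)--(c4) on $E^+$, $B$, and $E^-$. Collecting everything yields exactly the stated biconditional.

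The proof is essentially a bookkeeping composition rather than a new argument, so I do not expect a substantial obstacle. The only point that needs care is the quantifier manipulation: I must confirm that $E^-$ genuinely does not depend on the witness $H$ (it is fixed by $O^-$ alone), so that swapping the two existential quantifiers and then applying Proposition~\ref{prop:taskNLP:exist} pointwise for each candidate $E^+$ is legitimate. Once that is verified, the equivalence follows immediately.
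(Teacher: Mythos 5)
Your proposal is correct and follows exactly the paper's route: the paper's own proof is the one-line observation that the corollary follows from Proposition~\ref{prop:tasks:map} and Proposition~\ref{prop:taskNLP:exist}, which is precisely the composition you carry out. Your explicit quantifier bookkeeping (noting that $E^-$ is determined by $O^-$ alone, so the existentials over $H$ and $E^+$ commute) just spells out the details the paper leaves implicit.
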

\begin{proof}
    It can be proved by Proposition~\ref{prop:tasks:map} and Proposition~\ref{prop:taskNLP:exist}.
\end{proof}

\end{appendices}

\end{document}